\documentclass{article}


\usepackage[final]{neurips_2024}




\usepackage[utf8]{inputenc} 
\usepackage[T1]{fontenc}    
\usepackage{hyperref}       
\usepackage{url}            
\usepackage{booktabs}       
\usepackage{amsfonts}       
\usepackage{nicefrac}       
\usepackage{microtype}      
\usepackage{xcolor}         

\usepackage{amsthm}
\usepackage{amssymb}
\usepackage{textcmds}

\usepackage{svg}
\usepackage{aligned-overset}
\usepackage{enumitem}
\usepackage{mathtools}
\usepackage{geometry}
\usepackage{faktor}

\usepackage{amsfonts}
\usepackage{amsmath}
\usepackage{bbm}
\usepackage{subfig}

\newcommand{\Prob}{\mathbb{P}}
\newcommand{\R}{\mathbb{R}}
\newcommand{\N}{\mathbb{N}}

\newcommand{\EW}{\mathbb{E}}

\newcommand{\Cov}{\mathrm{Cov}}

\newcommand{\del}{\textnormal{d}}

\newcommand{\sign}{\textnormal{sign}}
\newcommand{\Normal}{\mathcal{N}}

\newcommand{\Id}{\mathrm{I}}

\newcommand{\Loss}{\mathcal{L}}
\newcommand{\LossFunc}{\mathfrak{L}}
\newcommand{\llangle}{\left\langle}
\newcommand{\rrangle}{\right\rangle}
\newcommand{\llVert}{\left\lVert}
\newcommand{\rrVert}{\right\rVert}

\newcommand{\mathx}{\mathcal{X}}
\newcommand{\mathy}{\mathcal{Y}}
\newcommand{\bracket}[1]{(#1)}
\newcommand{\erf}{\mathrm{erf}}

\def\app#1#2{%
  \mathrel{%
    \setbox0=\hbox{$#1\sim$}%
    \setbox2=\hbox{%
      \rlap{\hbox{$#1\propto$}}%
      \lower1.2\ht0\box0%
    }%
    \raise0.25\ht2\box2%
  }%
}
\def\asympropto{\mathpalette\app\relax}

\newtheorem{theorem}{Theorem}[section]

\newtheorem{corollary}{Corollary}[section]
\newtheorem{lemma}[theorem]{Lemma}
\newtheorem{definition}{Definition}[section]
\newtheorem{remark}{Remark}[section]

\title{A generalized neural tangent kernel\\for surrogate gradient learning}

%

\author{%
  Luke~Eilers\thanks{Corresponding author} \\
    Department of Physiology, University of Bern, Switzerland\\
    Institute for Applied Mathematics, University of Bonn, Germany\\
  \texttt{luke.eilers@unibe.ch} \\
  \AND
  Raoul-Martin Memmesheimer\\
  Institute of Genetics, University of Bonn, Germany\\
  \texttt{rm.memmesheimer@uni-bonn.de} \\
  \And
  Sven Goedeke\\
  Bernstein Center Freiburg, University of Freiburg, Germany\\
  Institute of Genetics, University of Bonn, Germany\\
  \texttt{sven.goedeke@bcf.uni-freiburg.de}
}

\begin{document}

\maketitle

\begin{abstract}
    State-of-the-art neural network training methods depend on the gradient of the network function. Therefore, they cannot be applied to networks whose activation functions do not have useful derivatives, such as binary and discrete-time spiking neural networks. 
    To overcome this problem, the activation function's derivative is commonly substituted with a surrogate derivative, giving rise to surrogate gradient learning (SGL). This method works well in practice but lacks theoretical foundation.
    
    The neural tangent kernel (NTK) has proven successful in the analysis of gradient descent. Here, we provide a generalization of the NTK, which we call the surrogate gradient NTK, that enables the analysis of SGL.
    First, we study a naive extension of the NTK to activation functions with jumps, demonstrating that gradient descent for such activation functions is also ill-posed in the infinite-width limit.
    To address this problem, we generalize the NTK to gradient descent with surrogate derivatives, i.e., SGL. We carefully define this generalization and expand the existing key theorems on the NTK with mathematical rigor.
    Further, we illustrate our findings with numerical experiments. Finally, we numerically compare SGL in networks with sign activation function and finite width to kernel regression with the surrogate gradient NTK; the results confirm that the surrogate gradient NTK provides a good characterization of SGL. 
\end{abstract}

\section{Introduction}
\label{sec:intro}

Artificial neural networks (ANNs) originate from the biologically inspired perceptron \citep{rosenblatt1958perceptron}. While the perceptron has a binary output that is faithful to the all-or-none behavior of spiking neurons in the nervous system, most activation functions considered nowadays for ANNs are smooth (like the logistic function) or at least semi-differentiable (like the ReLU function). Differentiable network functions enable the learning of network weights with methods that leverage the gradient of the network function with respect to the network weights like backpropagation \citep{rumelhart1986learning}. These methods are very successful \citep{lecun2015deep}, but cannot be used without well-defined gradients.

This is a problem when considering more biologically plausible ANNs, which are typically used in computational neuroscience to understand how the networks of spiking neurons in our nervous system work. These include spiking neural networks (SNNs). Motivated by the energy efficiency of our brain, SNNs and similar networks, such as binary neural networks (BNNs), are considered in the context of neuromorphic computing \citep{merolla2014million}. Both discrete-time SNNs and BNNs have in common that their activation functions do not have useful derivatives, which renders standard gradient-descent training impossible \citep{neftci2019surrogate, taherkhani2020review, tavanaei2019deep, roy2019spikebased,pfeiffer2018deep}. 
For the scope of this paper, these activation functions can be thought of as step-like functions like the sign function, in which cases the derivative vanishes almost everywhere and is thus no longer informative about the shape of the activation function.

Surrogate gradient learning resolves this issue by providing the missing information about the activation function in the form of a surrogate derivative \citep{hinton2012video,bengio2013estimating,zenke2018superspike}. As a result, the gradient-based methods for classical ANNs can be leveraged with great success \citep{zenke2021remarkable}.
However, a theoretical basis underpinning the intuition is missing and it is often unclear which surrogate derivative should be chosen for a particular network. For a review focusing on surrogate gradient learning methods, which we are most interested in, see \cite{neftci2019surrogate}. For a comprehensive review of other learning methods for SNNs, we refer to \cite{taherkhani2020review}, \cite{tavanaei2019deep}, and \cite{eshraghian2021training}.

The neural tangent kernel (NTK) introduced by \cite{jacot2018neural} allows to formulate gradient descent as a kernel method. Just as ANNs with randomly initialized weights converge under certain conditions to Gaussian processes (GPs) in the infinite-width limit \citep{matthews2018gaussian, lee2018deep}, the NTK converges at initialization and during training to a deterministic kernel in the same limit. Moreover, the NTK then describes how the network function changes under gradient descent in the infinite-width regime. This led to both a better theoretical understanding of gradient descent and practical applications to neural network training; see Section \ref{subsec:related_work} for more details.

\subsection{Contribution}
We study the NTK for networks with sign function as activation function. As the NTK theory is not directly applicable due to an ill-defined derivative, we consider the NTK for a sequence of activation functions that approximates the sign function and then derive a principled way of generalizing the NTK to gain more theoretical insight into surrogate gradient learning. Our contributions are as follows:
\begin{itemize}[leftmargin=*]
    \item We provide a clear definition of the infinite-width limit in Section \ref{subsec:inf_width_limit}, capturing the different notions used in the literature due to the different choices of rates at which the layer widths increase. This definition is used consistently in all mathematical statements and the respective proofs.
    \item In Section \ref{subsec:direct_approach}, we demonstrate that the direct extension of the NTK for the sign activation function using an approximating sequence is not well-defined due to the divergence of the kernel's diagonal. This illustrates, from the NTK perspective, that gradient descent is ill-defined for activation functions with jumps and how this problem will be mitigated by surrogate gradient learning. Moreover, we connect this divergence to results by \cite{radhakrishnan2023wide} in Theorem \ref{theorem:paper:radhak} and show that the direct extension of the NTK can be seen as a well-defined kernel for classification.
    \item We define a generalized version of the NTK in Section \ref{subsec:gen_ntk} using so-called quasi-Jacobian matrices and prove the convergence to a deterministic, in general asymmetric, kernel in the infinite width limit at initialization in Theorem \ref{theorem:paper:gen2}. Using the generalized NTK, we formulate surrogate gradient learning in terms of the generalized NTK for networks with differentiable activation functions. This novel NTK is named the SG-NTK and we prove its convergence to a deterministic kernel during training in Theorem \ref{theorem:paper:gen3}.
    \item For both the diverging direct extension of the NTK and the SG-NTK with sign activation function and arbitrary surrogate derivative, we derive exact analytical expressions in sections \ref{subsec:erf_activation}, \ref{subsec:nadaraya} and \ref{subsec:surrogate_ntk_sign}. In particular, we identify the terms that emerge from SGL and that prevent divergence. 
    \item In Section \ref{sec:num_experiments}, we illustrate our findings, in particular Theorem \ref{theorem:paper:gen2} and Theorem \ref{theorem:paper:gen3}, using simulations. Numerical experiments show that the distribution of networks trained with SGL shows agreement with the distribution given by the SG-NTK.    
\end{itemize}
Mathematically precise versions of all statements can be found in the appendix, which is self-contained to ensure rigor and a consistent notation throughout all theorems and proofs.

\subsection{Related work}
\label{subsec:related_work}
\paragraph{The neural tangent kernel.}
The convergence of randomly initialized ANNs in the infinite-width limit to Gaussian processes under appropriate scaling of the weights has first been described by \cite{neal1996bayesian} for a single hidden layer and has been extended to multiple hidden layers and other network architectures like CNNs \citep{matthews2018gaussian, garriga2018deep, yang2019wide, lee2018deep}. The NTK was introduced by \cite{jacot2018neural} with first results on its convergence at network initialization and during training. Theoretical results on the implication of the convergence of NTKs on the behaviour of trained wide ANNs were given by \cite{lee2019wide, arora2019exact, allen2019convergence}. In Section \ref{subsec:key_theorems}, we review central theorems on the NTK to enable a clear comparison to our theoretical results.

The NTK has been generalized to all kinds of ANN standard architectures \cite{yang2020tensor} such as CNNs \cite{arora2019exact} and attention layers \cite{hron2020infinite}. In particular, it has been generalized to RNNs \cite{alemohammad2020recurrent}, which are particularly interesting in light of SNNs due to their shared temporal dimension. \cite{bordelon2022influence} derive a generalization of the NTK called effective NTK for different learning rules using an approach similar to ours. Note that all of these extensions require well-defined gradients.

The ability of overparameterized neural networks to converge during training and to generalize can be explained using the NTK \citep{allen2019convergence, bordelon2020spectrum, bietti2019inductive, du2019gradient}. 
The NTK has also been used in more applied areas such as neural architecture search \citep{chen2021neural} and dataset distillation \citep{nguyen2021dataset}.

\paragraph{Surrogate gradient learning.} In the context of computational neuroscience, the idea of replacing the derivative of the output of a spiking neuron with a surrogate derivative was introduced by \cite{bohte2011error}. To deal with the temporal component in SNNs or more generally RNNs, the resulting gradient is usually combined with backpropagation through time (BPTT). Prominent examples of surrogate gradient approaches include SuperSpike by \cite{zenke2018superspike} and a number of works with different surrogate derivatives \citep{wu2018spatio, wu2019direct, shrestha2018slayer, bellec2018long, steven2016convolutional, wozniak2020deep}. In the general ANN literature, the method is better known as straight-through estimation (STE) and was introduced by \cite{hinton2012video} and by \cite{bengio2013estimating} in more detail. It was successfully applied by \cite{hubara2016binarized} and \cite{cai2017deep}.

Surrogate gradient learning or STE is only heuristically motivated and it is hence desirable to derive a theoretical basis. The influence of different surrogate derivatives on the method has been analyzed through systematic numerical simulations \cite{zenke2021remarkable}, revealing that the particular shape has a minor impact compared to the scale. In a more theoretical work, \cite{yin2019understanding} analyzed the convergence of STE for a Heaviside activation function with three different surrogate derivatives and found that the descent directions of the respective surrogate gradients reduce the population loss when the clipped ReLU function is used as surrogate derivative. \cite{gygax2024elucidating} examine how SGL is connected to smoothed probabilistic models \citep{bengio2013estimating,neftci2019surrogate,jang2019introduction} and to stochastic automatic differentiation \citep{arya2022automatic}. In particular, they consider SGL for differentiable activation functions, as we do in our derivation of the SG-NTK.

\section{Theoretical results}
\label{gen_inst}

\subsection{Notation and NTK parametrization}
\label{subsec:NTK_param}

We consider multilayer perceptrons with so-called neural tangent kernel parametrization. For a network with depth $L$, layer width $n_l$ for $0 \leq l \leq L$, activation function $\sigma$, weight matrices $W^{(l)} \in \R^{n_l \times n_{l-1}}$, and biases $b^{(l)} \in \R^{n_l}$, the preactivations with NTK parametrization are given by
\begin{align*}
    h^{(1)}\left(x\right) &= \frac{\sigma_w}{\sqrt{n_0}} W^{(1)} x + \sigma_b \, b^{(1)}, \\
    h^{(l+1)}\left(x\right) &= \frac{\sigma_w}{\sqrt{n_l}} W^{(l+1)} \sigma\left( h^{(l)}\left(x\right) \right) + \sigma_b \, b^{(l+1)} \quad \textnormal{for } l=1,\dots,L-1 ,
\end{align*}
where \( \sigma_w > 0 \), \( \sigma_b \geq 0 \), and we initialize \( W_{ij}^{(l)}, b_i^{(l)}\overset{\text{iid}}{\sim} \Normal(0,1) \) for all $i,j$. The NTK parametrization differs from the standard parametrization by a rescaling factor of $1 / \sqrt{n_l}$ in layer $l+1$. The network function is then given by $f(\,\cdot\,) = h^{(L)}(\,\cdot\,) \colon \R^{n_0} \to \R^{n_L}$ and notably the last layer is a preactivation layer. We denote the total number of weights by $P$. More details can be found in Section \ref{subsec:intro_ntk}.

\paragraph{Notation (see Remark \ref{rem:notation1} and \ref{rem:notation2})} By default, we will interpret any vector as a column vector, i.e., we identify \( \R^n \) with \( \R^{n \times 1} \). This is the case even when writing \( x = (x_1,\dots,x_n) \in \R^n \) for handier notation. Row vectors will be indicated within calculations using the transpose operator, \( x^\intercal \). For a function \( f \colon \R^{n_1} \to \R^{n_2} \) and \( \mathcal{X} = (x_1,\dots,x_d) \in \R^{d \cdot n_1} \), we define the vector
\( f(\mathcal{X}) \coloneqq \left( f(x_1),\cdots,f(x_d) \right) \in \R^{d \cdot n_2} .\) For \(n_2 = 1\), we denote the gradient of $f$ by \(\nabla f(x) \in \R^{n_1} \) for all \( x \in \R^{n_1} \). If \(n_2>1\), we denote the Jacobian matrix of \( f \) by \( Jf \colon \R^{n_1} \to \R^{n_2 \times n_1}\). A subscript of the form $J_\theta f(x)$ denotes Jacobian matrices with respect to a subset of variables. We write $f(n) \asympropto g(n)$ to denote that $f(n)$ and $g(n)$ are asymptotically proportional, i.e., $f(n) \sim K g(n)$ for some constant $K \not= 0$.

\subsection{The infinite-width limit}
\label{subsec:inf_width_limit}

We will consider neural networks in the limit of infinitely many hidden layer neurons, i.e., $n_l \to \infty$ for all $1 \leq l \leq L-1$. We will see later, when paraphrasing existing results from the literature, that different ways of taking the number of hidden neurons to infinity can be found. To formalize these notions, we use the definition of a width function from \cite{matthews2018gaussian} with slight modifications:

\begin{definition}[Width function]
\label{def:paper:width_func}
    For every layer \( l = 0,\dots, L \) and any \( m \in \N \), the number of neurons in that layer is given by \(r_l(m) \), and we call \( r_l \colon \N \to \N \) the width function of layer \(l\). We say that a width function \( r_l \) is strictly increasing if \(r_l(m) < r_l(m+1) \) for all \( m \geq 1 \).
    We set
    \begin{align*}
        \mathcal{R}_L \coloneqq \left\{ (r_l)_{l=1}^{L-1}  \mid r_l \text{ is a strictly increasing width function for all } 0 < l < L \right\},
    \end{align*}
    the set of collections of strictly increasing width functions for network depth \(L\).
\end{definition}

Every element of \( \mathcal{R}_L \) provides a way to take the widths of the hidden layers to infinity by setting \( n_l = r_l(m) \) for any \( 1 \leq l < L \) and considering \( m \to \infty \). The notions of infinite-width limits used in the literature will now correspond to classes $R \subseteq \mathcal{R}_L$ for which the respective limiting statements hold. This is captured in the following definition.

\begin{definition}[Types of infinite-width limits]
\label{def:paper:network_conv_notions}
    Consider a statement \( \mathcal{S} \) of the form \qq{Let an ANN have depth \( L \) and network layer widths defined by \(n_0\), \(n_L\), and \( n_l \coloneqq r_l(m) \) for \( 1 \leq l < L \) and some \((r_l)_{l=1}^{L-1} \in \mathcal{R}_L\). Then, for fixed $n_0$ and any $n_L$, the statement \(\mathcal{P}\) holds as \( m \to \infty\).} We also write the statement \( \mathcal{S} \) as \(\mathcal{S}(r)\).
    \begin{enumerate}[label=(\roman*), leftmargin=*]
        \item We say that such a statement \( \mathcal{S} \) holds strongly, if \( \mathcal{S}(r) \) holds for any \( r \in \mathcal{R}_L \). This can be interpreted as requiring that the statement holds as \( \min_{1\leq l < L}(n_l) \to \infty \). We will also write \qq{\( \mathcal{P}\) holds as \(n_1,\dots,n_{L-1} \to \infty \) strongly}.
        \item We say that such a statement \( \mathcal{S} \) holds for \( (n_l)_{1\leq l \leq L-1} \asympropto n \), if \( \mathcal{S}\) holds for all \( r \in \mathcal{R}_L\) with \( r_l(m) \asympropto m \) for all \(1 \leq l < L \). This means that \(\mathcal{S}(r)\) holds for all \(r \in \mathcal{R}_L\) such that \( r_p(m) / r_q(m) \to \alpha_{p,q} \in (0,\infty) \) as \( m \to \infty\). We will also write \qq{\( \mathcal{P}\) holds as \( (n_l)_{1\leq l < L} \asympropto n \)}.
        \item We say that such a statement \( \mathcal{S} \) holds weakly, if \( \mathcal{S}\) holds for at least one \( r \in \mathcal{R}_L \). We will also write \qq{\( \mathcal{P}\) holds as \(n_1,\dots,n_{L-1} \to \infty \) weakly}. This type of infinite-width limit is tightly connected to the sequential infinite-width limit.
    \end{enumerate}
\end{definition}

\begin{remark}[Connection between weak and sequential infinite-width limits]
    In the sequential infinite-width limit, meaning that \( n_1 \to \infty, \dots, n_{L-1} \to \infty \) sequentially, the layer widths are not strictly finite. This is opposed to applications, where layer widths may be large but finite. Hence, the infinite-width limits using width functions as explained above are more meaningful in practice. For a sequential limit $\lim_{n_1 \to \infty} \lim_{n_2 \to \infty} f(n_1,n_2) = f^*$, we can find functions $n_1(m)$ and $n_2(m)$ such that $\lim_{m \to \infty} f(n_1(m),n_2(m)) = f^*$. However, the rate at which $n_1(m), n_2(m)$ diverge as $m\to\infty$ cannot generally be controlled. Since the weak infinite-width limit allows for arbitrary rates, any sequential infinite-width limit can hence be turned into a weak infinite-width limit as defined in Definition \ref{def:paper:network_conv_notions} (iii).
\end{remark}

We use Definition \ref{def:paper:network_conv_notions} to paraphrase the convergence of ANNs to GPs in the infinite-width limit:

\begin{theorem}[Theorem 4 from \cite{matthews2018gaussian}]
\label{theorem:paper:ann_conv_gp}
    Any network function \( f \) of depth \( L \) defined as in Section \ref{subsec:NTK_param} with continuous activation function \( \sigma \) that satisfies the \textit{linear envelope property}, i.e., there exist \(c,m \geq 0 \) with \( |\sigma(u)| \leq c + m|u| \) for all $u \in \R$,
    converges in distribution as \( n_1, \dots,n_{L-1} \to \infty \) strongly to a multidimensional Gaussian process \( (X_j)_{j=1}^{n_L} \) for any fixed countable input set \( (x_i)_{i=1}^\infty \). It holds \( X_j \overset{\text{iid}}{\sim} \Normal(0,\Sigma^{(L)}) \) where the covariance function \( \Sigma^{(L)} \) is recursively given by
    \begin{align}
        &\Sigma^{(1)}(x,x') = \frac{\sigma_w^2}{n_0} \langle x,x' \rangle + \sigma_b^2, 
        \quad \Sigma^{(L)}(x,x') = \sigma_w^2 \, \EW_{g \sim \Normal(0,\Sigma^{(L-1)})}[\sigma(g(x)) \, \sigma(g(x'))] + \sigma_b^2 .
        \label{eq:def_nngp}
    \end{align}
\end{theorem}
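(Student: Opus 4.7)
The plan is to proceed by induction on the depth $L$, since the recursive definition of $\Sigma^{(L)}$ mirrors the recursive definition of the network.

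\textbf{Base case} ($L=1$): The preactivation $h^{(1)}(x) = \frac{\sigma_w}{\sqrt{n_0}} W^{(1)} x + \sigma_b\, b^{(1)}$ is a linear function of i.i.d.\ standard Gaussians, hence exactly Gaussian. For any finite collection of inputs $x_1,\dots,x_d$, the vector $(h^{(1)}_j(x_i))_{i,j}$ is jointly Gaussian with mean zero, independent across output coordinates $j$ (since distinct $j$ correspond to distinct rows of $W^{(1)}$ and distinct entries of $b^{(1)}$), and covariance $\Sigma^{(1)}$. No limit is needed.

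\textbf{Inductive step:} Suppose the claim holds at depth $L-1$. Condition on the full layer $h^{(L-1)}$. Then $h^{(L)}(x) = \frac{\sigma_w}{\sqrt{n_{L-1}}} W^{(L)} \sigma(h^{(L-1)}(x)) + \sigma_b b^{(L)}$ is a sum of independent Gaussians, so conditionally Gaussian with mean zero, independent across output coordinates, and empirical covariance
\[
K_{n_{L-1}}(x,x') \;=\; \frac{\sigma_w^2}{n_{L-1}} \sum_{i=1}^{n_{L-1}} \sigma\bigl(h^{(L-1)}_i(x)\bigr)\,\sigma\bigl(h^{(L-1)}_i(x')\bigr) \;+\; \sigma_b^2.
\]
The heart of the proof is to show that $K_{n_{L-1}}(x,x') \to \Sigma^{(L)}(x,x')$ in probability, jointly on any finite input set, as $n_1,\dots,n_{L-1} \to \infty$ strongly. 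Once this is established, applying Lévy's continuity theorem twice (once to the conditional characteristic function of $h^{(L)}$ given $h^{(L-1)}$, then to its expectation via dominated convergence) yields convergence of the finite-dimensional distributions of $h^{(L)}$ to those of the GP with kernel $\Sigma^{(L)}$. Extension from finite to countably infinite input sets follows by Kolmogorov extension/projective limits.

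\textbf{Main obstacle:} The rows $\{h^{(L-1)}_i\}_{i=1}^{n_{L-1}}$ are \emph{exchangeable but not independent} for $L-1 \geq 2$, so a plain LLN does not immediately give $K_{n_{L-1}} \to \Sigma^{(L)}$. For $L=2$ the rows are i.i.d.\ and the classical CLT suffices. For $L \geq 3$, the standard device is to condition further on $h^{(L-2)}$: the rows of $h^{(L-1)}$ are then conditionally i.i.d.\ Gaussians, and a conditional LLN gives convergence of the inner empirical covariance to the prescribed integral, provided one can control second moments of $\sigma(h^{(L-1)}_i)$. This is where the linear envelope property enters, yielding $\mathbb{E}[\sigma(h^{(l)}_i)^2] \leq 2c^2 + 2m^2 \mathbb{E}[(h^{(l)}_i)^2]$, and the latter is bounded uniformly in the $n_l$ by the recursive formula for $\Sigma^{(l)}$ combined with the inductive hypothesis.

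\textbf{Handling the strong limit:} The subtlety addressed in Matthews et al.\ is that ``strong'' convergence permits arbitrary rates $r_l(m)$ for the widths. I would follow their strategy of expressing the characteristic function of $h^{(L)}$ on a finite input set as a nested conditional expectation along the layers, then bounding the deviation of each inner empirical covariance from its limit uniformly in the widths of layers above. Exchangeability within each layer is preserved under the NTK parametrization, which is the structural input that replaces the need for a rate-dependent argument and allows the outer minimum $\min_l n_l \to \infty$ to drive the convergence regardless of how the other widths grow.
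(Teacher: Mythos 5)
First, note that the paper does not actually prove Theorem \ref{theorem:paper:ann_conv_gp}: it is imported wholesale from \cite{matthews2018gaussian}, and Remark \ref{remark:discussion_networktogp} explicitly states that the strong version ``takes a lot of effort'' and that only the \emph{weak} version is reproved internally (as the generalization in Theorem \ref{theorem:gen1}), by a different mechanism than yours --- an induction in which the previous layer is first sent to its i.i.d.\ Gaussian limit, after which Lemma \ref{lemma:weakconv} extracts a width function $r_L(m)$ by a diagonal argument against the Hilbert-space CLT / law of large numbers for the limiting i.i.d.\ variables. Your route (condition on $h^{(L-1)}$, show the conditional covariance $K_{n_{L-1}} \to \Sigma^{(L)}$ in probability, conclude via the conditional characteristic function and dominated convergence) is the other standard decomposition and is sound in outline; the base case, the $L=2$ case, and the reduction to finite marginals via Kolmogorov extension are all fine.

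The gap is at the one step that actually distinguishes the strong limit from the weak one, and it is precisely the step you defer. In the strong limit $n_{L-2}$ may grow arbitrarily slowly relative to $n_{L-1}$, so when you condition on $h^{(L-2)}$ the rows of $h^{(L-1)}$ are conditionally i.i.d.\ Gaussian with covariance $K_{n_{L-2}}$, \emph{not} $\Sigma^{(L-1)}$; your conditional LLN therefore delivers $K_{n_{L-1}} \to \sigma_w^2\,\EW_{g\sim\Normal(0,K_{n_{L-2}})}[\sigma(g(x))\sigma(g(x'))] + \sigma_b^2$, a random quantity. To close the recursion you must additionally prove that the functional $\Sigma \mapsto \EW_{\Normal(0,\Sigma)}[\sigma(g(x))\sigma(g(x'))]$ is continuous on the relevant set of covariance matrices (which does follow from continuity of $\sigma$ plus the uniform integrability supplied by the linear envelope property, but must be stated and used), and you must propagate the resulting error bounds through all layers uniformly in the unconstrained widths. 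Your final paragraph replaces this with the assertion that exchangeability ``is the structural input that replaces the need for a rate-dependent argument''; exchangeability alone gives neither an LLN nor a CLT without verified decay conditions on cross-correlations, and this is exactly why \cite{matthews2018gaussian} resort to an exchangeable-triangular-array CLT with explicitly checked moment conditions rather than nested conditioning. As written, the hardest part of the theorem is cited rather than proved, so the proposal is an outline of a plausible alternative proof rather than a proof.
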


We also write $\Sigma^{(L)} = \Sigma_\sigma$. The theorem states that the distribution of the network function, which is given by its randomly initialized weights, approaches the distribution of independent GPs as the hidden layer widths increase. Hence, a large-width network will approximately be a realization of the respective GPs.
Note that this result can be generalized to non-continuous activation functions without well-defined derivatives that fulfil the linear envelope property, like $\sigma(z) = \sign(z)$. We provide a rigorous proof of this kind of generalization for the case $n_1,\dots,n_{L-1} \to \infty$ weakly in Theorem \ref{theorem:gen1}. $\Sigma_\sigma$ remains well-defined in this case since the expectation in Equation \eqref{eq:def_nngp} does. When approximating the sign function with scaled error function, i.e., $ \sigma(z) = \erf_m(z) \coloneqq \erf(m\cdot z)$, it holds that $\lim_{m \to \infty} \Sigma_{\erf_m} = \Sigma_{\sign}$ due to the dominated convergence theorem.

\subsection{Direct extension of the neural tangent kernel in the infinite-width limit}
\label{subsec:direct_approach}

We consider a dataset \( \mathcal{D} = (\mathcal{X}, \mathcal{Y})\) with \( \mathcal{X} = (x_1,\dots,x_d) \in \R^{d \cdot n_0} \) and \( \mathcal{Y} = (y_1,\dots,y_d) \in \R^{d \cdot n_L}\). To solve the regression problem, i.e., to find weights \( \theta \in \R^P \) such that \( f(x_i; \theta) = y_i \) for all \( i=1,\dots,d \), we apply gradient descent in continuous time, also know as gradient flow, with learning rate $\eta$ and loss function $\mathcal{L} \colon \R^{d \cdot n_L} \times \R^{d \cdot n_L} \to \R $. This means that, using the chain rule, the learning rule can then be written as
\begin{equation}
    \frac{\del}{\del t} \theta_t = -\eta\, \nabla_\theta \Loss(f(\mathcal{X}; \theta_t);\mathcal{Y}) = -\eta\, J_\theta f(\mathcal{X};\theta_t)^\intercal \, \nabla_{f(\mathcal{X};\theta_t)}\Loss(f(\mathcal{X};\theta_t);\mathcal{Y}) .
    \label{eq:deriv_ntk01}
\end{equation}

To derive the NTK, we observe that the network function then evolves according to
\begin{align}
    \frac{\del}{\del t} f(x; \theta_t) 
    &= J_\theta f(x;\theta_t) \, \frac{\del}{\del t}\theta_t 
    \overset{(\ref{eq:deriv_ntk01})}{=} - \eta\, J_\theta f(x;\theta_t) J_\theta f(\mathcal{X};\theta_t)^\intercal \, \nabla_{f(\mathx;\theta_t)}\Loss(f(\mathcal{X};\theta_t);\mathcal{Y}) \label{eq:deriv_ntk02} \\
    &\eqqcolon -\eta \, \hat{\Theta}_t (x,\mathx) \nabla_{f(\mathx;\theta_t)} \,\Loss(f(\mathcal{X};\theta_t);\mathcal{Y}), \notag
\end{align}
where we implicitly defined the empirical neural tangent kernel as $\hat{\Theta}(x,x') \coloneqq J_\theta f(x;\theta) J_\theta f(x';\theta)^\intercal $, which depends on the current weights $\theta = \theta_t$. This means that the learning dynamics of the network functions during training are given by a kernel whose entries are the scalar products between the gradients of the network's output neuron activity, $\hat{\Theta}_{i\,j}(x,x') = \langle \nabla_\theta f_i(x;\theta), \nabla_\theta f_j(x';\theta) \rangle  $. The key result on the NTK is that the empirical NTK converges in the infinite-width limit to a constant kernel, the analytic NTK, at initialization, $\theta = \theta_0$, and during training, $\theta = \theta_t$:

\begin{theorem}[Theorem 1 from \cite{jacot2018neural} for general $\sigma_w > 0$]
\label{theorem:paper:ntk_conv_init}
    For any network function of depth \(L\) defined as in Section \ref{subsec:NTK_param} with Lipschitz continuous activation function \(\sigma\), \( \hat{\Theta} \eqqcolon \Hat{\Theta}^{(L)}\) converges in probability to a constant kernel \( \Theta^{(L)} \otimes \Id_{n_L} \) as \( n_1,\dots,n_{L-1} \to \infty \) weakly. This means that for all \( x,x' \in \R^{n_0} \) and \( 1 \leq i,j \leq n_L \), it holds
    $
        \Hat{\Theta}^{(L)}_{i\,j}(x,x') \xrightarrow{} \delta_{i j} \, \Theta^{(L)}(x,x')
    $
    in probability, where $\delta_{ij}$ denotes the Kronecker delta.
    We call \(\Theta^{(L)}\) the analytic neural tangent kernel of the network, which is recursively given by
    \begin{equation}
        \Theta^{(1)}(x,x') = \Sigma^{(1)}(x,x') ,
        \quad \Theta^{(L)}(x,x') = \Sigma^{(L)}(x,x') + \Theta^{(L-1)}(x,x') \cdot \dot{\Sigma}^{(L)}(x,x') , \notag
    \end{equation}
    where \( \Sigma^{(l)} \) are defined as in Theorem \ref{theorem:paper:ann_conv_gp} and
    we define
    \begin{equation}
        \dot{\Sigma}^{(L)}(x,x') = \sigma_w^2 \, \EW_{g \sim \Normal(0,\Sigma^{(L-1)})}\left[\dot{\sigma}(g(x)) \, \dot{\sigma}(g(y))\right].
        \label{eq:def:sigma_dot}
    \end{equation}
\end{theorem}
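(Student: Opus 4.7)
The plan is to proceed by induction on the depth $L$. For the base case $L=1$, the network is affine in $\theta=(W^{(1)},b^{(1)})$, so a direct computation gives
\begin{equation*}
    \hat{\Theta}^{(1)}_{ij}(x,x') = \delta_{ij}\left( \frac{\sigma_w^2}{n_0}\langle x,x'\rangle + \sigma_b^2 \right) = \delta_{ij}\,\Sigma^{(1)}(x,x'),
\end{equation*}
which is already deterministic, so the claim holds trivially.

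For the inductive step, split the parameter vector of the depth-$L$ network into the last-layer parameters $(W^{(L)},b^{(L)})$ and the remaining parameters $\vartheta$ (weights and biases of layers $1,\dots,L-1$), and decompose
\begin{equation*}
    \hat{\Theta}^{(L)}_{ij}(x,x') = \langle \nabla_{W^{(L)}} f_i(x), \nabla_{W^{(L)}} f_j(x') \rangle + \langle \nabla_{b^{(L)}} f_i(x), \nabla_{b^{(L)}} f_j(x') \rangle + \langle \nabla_{\vartheta} f_i(x), \nabla_{\vartheta} f_j(x') \rangle.
\end{equation*}
The first two terms, obtained by differentiating $h^{(L)}$ directly, equal
\begin{equation*}
    \delta_{ij}\left( \frac{\sigma_w^2}{n_{L-1}}\sum_{k=1}^{n_{L-1}} \sigma(h^{(L-1)}_k(x))\,\sigma(h^{(L-1)}_k(x')) + \sigma_b^2 \right).
\end{equation*}
By Theorem~\ref{theorem:paper:ann_conv_gp}, the preactivations $h^{(L-1)}_k$ converge jointly to i.i.d.\ samples from a Gaussian process with covariance $\Sigma^{(L-1)}$; combined with Lipschitz continuity of $\sigma$ (which supplies the linear envelope property) and a law of large numbers, this yields convergence in probability to $\delta_{ij}\,\Sigma^{(L)}(x,x')$.

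For the third term, I would apply the chain rule through $h^{(L-1)}$ to get
\begin{equation*}
    \langle \nabla_{\vartheta} f_i(x), \nabla_{\vartheta} f_j(x') \rangle = \frac{\sigma_w^2}{n_{L-1}} \sum_{k,l=1}^{n_{L-1}} W^{(L)}_{ik}\,W^{(L)}_{jl}\,\dot\sigma(h^{(L-1)}_k(x))\,\dot\sigma(h^{(L-1)}_l(x'))\,\hat{\Theta}^{(L-1)}_{kl}(x,x'),
\end{equation*}
where $\hat{\Theta}^{(L-1)}$ is the empirical NTK of the depth-$(L-1)$ subnetwork $h^{(L-1)}$, with output width $n_{L-1}$. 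The inductive hypothesis gives $\hat{\Theta}^{(L-1)}_{kl}(x,x')\to \delta_{kl}\,\Theta^{(L-1)}(x,x')$ in probability for each fixed pair $(k,l)$. Combining this with independence of $W^{(L)}$ from $\vartheta$, together with $\mathbb{E}[W^{(L)}_{ik}W^{(L)}_{jk}]=\delta_{ij}$, the bilinear sum should collapse, conditional on $\vartheta$, to a Gaussian expectation and produce $\delta_{ij}\,\dot\Sigma^{(L)}(x,x')\,\Theta^{(L-1)}(x,x')$. Adding the contributions then gives the recursion $\Theta^{(L)} = \Sigma^{(L)} + \dot\Sigma^{(L)}\cdot\Theta^{(L-1)}$.

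The main obstacle will be the last step: the bilinear form in $W^{(L)}$ has $n_{L-1}^2$ summands whose weights $\hat{\Theta}^{(L-1)}_{kl}$ are random and only controlled by the inductive hypothesis entry-wise. Here I would exploit that the statement is to be proved only for the \emph{weak} infinite-width limit in the sense of Definition~\ref{def:paper:network_conv_notions}(iii): it suffices to exhibit one width function, so I can choose $r_{L-1}(m)$ to grow arbitrarily faster than $r_1(m),\dots,r_{L-2}(m)$. Conditioning on $\vartheta$ then reduces the sum over $k\neq l$ to a mean-zero quadratic form in the independent Gaussian entries $W^{(L)}_{ik},W^{(L)}_{jl}$, whose variance I would bound using that $\hat{\Theta}^{(L-1)}_{kl}$ has vanishing off-diagonal mass; the diagonal sum over $k$ is then a conditional LLN in the $W^{(L)}_{ik}W^{(L)}_{jk}$, whose limit yields the factor $\delta_{ij}\,\dot\Sigma^{(L)}(x,x')\,\Theta^{(L-1)}(x,x')$ by the same Gaussian-convergence argument used for $\Sigma^{(L)}$, now applied to $\dot\sigma$ rather than $\sigma$.
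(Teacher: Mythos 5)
Your proposal follows essentially the same route as the paper's own argument: the paper does not reprove Theorem \ref{theorem:paper:ntk_conv_init} from scratch but establishes the generalized version, Theorem \ref{theorem:gen2}, by exactly this induction --- base case by direct computation, and in the inductive step the same decomposition into a last-layer term converging to $\Sigma^{(L)}$ and a propagated bilinear form $\frac{\sigma_w^2}{n_{L-1}}\sum_{k,l}W^{(L)}_{ik}W^{(L)}_{jl}\,\dot\sigma(h^{(L-1)}_k(x))\,\dot\sigma(h^{(L-1)}_l(x'))\,\hat\Theta^{(L-1)}_{kl}(x,x')$ (compare the terms \eqref{eq:proof_ownthm2_1} and \eqref{eq:proof_ownthm2_2} in that proof). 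The present theorem is the specialization $\sigma_1=\sigma_2=\sigma$, $\Tilde\sigma_1=\Tilde\sigma_2=\dot\sigma$.

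There is, however, a genuine problem in your treatment of what you correctly identify as the main obstacle. First, the direction of your rate separation is backwards: to make the average over $k$ (and the off-diagonal quadratic form over $k\neq l$) converge, the width $n_{L-1}$ of the layer being summed over must grow \emph{slowly} relative to $n_1,\dots,n_{L-2}$, because the summands only become i.i.d.\ --- and the off-diagonal entries $\hat\Theta^{(L-1)}_{kl}$ only become negligible --- in the limit of the inner widths. Letting $r_{L-1}$ grow ``arbitrarily faster'' than the inner widths gives you ever more summands whose joint distribution has not yet converged. This is precisely what Lemma \ref{lemma:weakconv} encodes: the new width $k'(m)$ is defined as the largest value for which the Prokhorov distance to the i.i.d.\ limit is already below $1/k'(m)$, i.e., the outer width is subordinated to the inner ones. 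Second, your plan to bound the variance of the off-diagonal part ``using that $\hat\Theta^{(L-1)}_{kl}$ has vanishing off-diagonal mass'' needs second-moment control of the empirical NTK entries, which the inductive hypothesis (entry-wise convergence in probability) does not supply. The paper avoids moment bounds altogether: it first passes to the distributional limit of the inner layers via the continuous mapping theorem together with Lemma \ref{lemma:conv_props}, so that the off-diagonal terms are killed by the factor $\delta_{kl}$ in the limit rather than by a variance estimate, and only then applies the Hilbert-space weak law of large numbers to the genuinely i.i.d.\ limiting variables along the diagonal sequence. Your architecture is right, but as written the key step would not go through.
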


We also write $\Theta^{(L)} = \Theta_\sigma$. If $\theta_t$ are the weights during gradient flow learning at time $t \geq 0$ as before, Theorem \ref{theorem:paper:ntk_conv_init} shows that $\hat{\Theta}^{(L)}_{t} \to \Theta^{(L)} \otimes \Id_{n_L}$ for $t=0$ in the infinite-width limit. This reveals that the gradients of the output neurons, $\nabla_\theta f_i(x;\theta)$, are mutually orthogonal. Under additional assumptions this convergence also holds for the whole training duration, $t > 0$, see Theorem 2 of \cite{jacot2018neural} for the case $n_1, \dots, n_{L-1} \to \infty$ weakly, and Chapter G of \cite{lee2019wide} for $(n_l)_{1\leq l < L} \asympropto n$. Hence, the kernel that describes the learning dynamics stays constant in the infinite-width limit. This implies that the distribution of the network function during training also converges to GPs \citep[Theorem 2.2]{lee2019wide}. Then, any output neuron after training has mean $m(x) = \Theta^{(L)}(x,\mathx) \Theta^{(L)}(\mathx,\mathx)^{-1}\mathy$ under the assumption of a mean squared error (MSE) loss,  see Section \ref{subsubsec:gradientflow_for_infwidth}. The expression for the mean is equivalent to kernel regression with the NTK.

The above results show that gradient flow for networks with randomly initialized weights is characterized by the analytic NTK $\Theta^{(L)}$ in the infinite-width limit. Since we are interested in gradient flow for networks with activation functions inadequate for gradient descent training, we want to know whether the analytic NTK can be extended to such activation functions. We see that the derivative of the activation function does not have to be defined point-wise for Equation \eqref{eq:def:sigma_dot}. In particular, activation functions with distributional derivatives, e.g., $\frac{\del}{\del z} \sign(z) = 2 \,\delta(z)$ can be taken into consideration, where $\delta$ denotes the Dirac delta distribution. If we again approximate the sign function with scaled error functions $\erf_m$, $m \in \N$, it holds
\begin{equation}
    \EW_{g \sim \Normal(0,\Sigma_{\erf_m})}\left[\dot{\erf}_m(g(x)) \, \dot{\erf}_m(g(y))\right]
    \xrightarrow{m \to \infty}
    \EW_{g \sim \Normal(0,\Sigma_{\sign})}\left[2\,\delta(g(x)) \cdot 2\,\delta(g(y))\right],
    \label{eq:bad_limit}
\end{equation}
in case the right-hand side exists. A rigorous analysis of this limit can be found in Section \ref{subsec:erf_activation}. Heuristically, a simple observation suffices: if $x=y$, we have a one-dimensional integral over two delta distributions, which yields infinity. On the other hand, if $x \not= y$ and $\Sigma_{\sign}$ is non-degenerate, a two-dimensional integral over two delta distributions yields a finite value. We derive analytic expressions for $\lim_{m \to \infty} \Theta_{\erf_m} \eqqcolon \Theta_{\sign}$ in Lemma \ref{lemma:ntk_sign_formula}. We call this kernel singular, because $\Theta_{\sign}(x,x) = \infty$ and $\Theta_{\sign}(x,y) \in \R$ if $x \not= y$. By the same reasoning, this divergence occurs for any activation function with jumps. Note that for the mean given by kernel regression, this implies $m(x_i) = y_i$ and $m(x)=0$ if $x$ is not a training point. Intuitively, this is because the network is initialized with zero mean and different input points are uncorrelated under the singular kernel, so only the training points are learned. A limit kernel with this property also arises if the activation function is fixed but the depth of the network goes to infinity as was shown by \cite{radhakrishnan2023wide}. We adopt the ideas of their proof to show that the sign of $m$ is still useful for classification:

\begin{theorem}[Inspired by Lemma 5 of \cite{radhakrishnan2023wide}; see Theorem \ref{theorem:radhak}]
\label{theorem:paper:radhak}
    Let \( \sigma_b^2 > 0\) or let all \( x_i \in \R^{n_0} \) be pairwise non-parallel. Let \( L \ge 2 \) and \( x_i \in \mathcal{S}^{n_0-1}_R \) for all \( i=1,\dots,d\), where \( \mathcal{S}^{n_0-1}_R \subseteq \R^{n_0} \) is the sphere of radius \( R \). Assuming that \( \Theta^{(L)}_\infty(x,\mathx) \mathy \not= 0 \) for almost all \( x \in \mathcal{S}^{n_0-1}_R \), it holds
    \begin{equation*}
        \lim_{m \to \infty} \sign\left( \Theta^{(L)}_m(x,\mathx) \Theta^{(L)}_m(\mathx,\mathx)^{-1} \mathy \right) = \sign\left( \Theta^{(L)}_\infty(x,\mathx) \mathy \right) \quad \text{a.e. on } \mathcal{S}^{n_0-1}_R.
    \end{equation*}
\end{theorem}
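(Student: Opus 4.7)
The strategy is to exploit the singular structure of $\Theta^{(L)}_\infty$: as $m\to\infty$, the diagonal entries of $\Theta^{(L)}_m(\mathx,\mathx)$ blow up while the off-diagonal entries remain bounded. Hence $\Theta^{(L)}_m(\mathx,\mathx)^{-1}$ is dominated by a multiple of the identity, and the sign of $\Theta^{(L)}_m(x,\mathx)\,\Theta^{(L)}_m(\mathx,\mathx)^{-1}\mathy$ will be governed by the finite limit of the vector $\Theta^{(L)}_m(x,\mathx)\mathy$.

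First, I would show that the diagonal of $\Theta^{(L)}_m(\mathx,\mathx)$ is constant across the sphere. Since $\Sigma^{(1)}(x,x)=\sigma_w^2\|x\|^2/n_0+\sigma_b^2$ depends only on $\|x\|$, and the recursions for $\Sigma^{(l)}(x,x)$, $\dot\Sigma^{(l)}(x,x)$, and $\Theta^{(l)}(x,x)$ all take $\Sigma^{(l-1)}(x,x)$ as their sole input, the value $c_m := \Theta^{(L)}_m(x_i,x_i)$ is independent of $i$; moreover $c_m\to\infty$, the divergence already pointed out below Eq.~\eqref{eq:bad_limit} and made explicit in Lemma \ref{lemma:ntk_sign_formula}. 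The hypothesis that $\sigma_b^2>0$ or that the $x_i$ are pairwise non-parallel ensures that the $2\times 2$ covariance $\Sigma^{(L-1)}$ restricted to $\{x_i,x_j\}$ is non-degenerate for $i\neq j$, so every off-diagonal entry $\Theta^{(L)}_m(x_i,x_j)$ converges to the finite value $\Theta^{(L)}_\infty(x_i,x_j)$. Writing $K_m := \Theta^{(L)}_m(\mathx,\mathx) = c_m I_d + A_m$ with $A_m$ having zero diagonal, this gives $A_m \to A_\infty$ in operator norm. For $m$ large enough that $c_m>2\|A_m\|_{\mathrm{op}}$, a Neumann series yields
\begin{equation*}
    K_m^{-1} \;=\; c_m^{-1}\sum_{k\ge 0}(-c_m^{-1}A_m)^k \;=\; c_m^{-1}I_d + \bigO(c_m^{-2}).
\end{equation*}

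Second, for $x\in\mathcal{S}^{n_0-1}_R$ non-parallel to every $x_i$ (which excludes only a measure-zero subset of the sphere), the same non-degeneracy argument gives $k_m(x) := \Theta^{(L)}_m(x,\mathx) \to k_\infty(x) := \Theta^{(L)}_\infty(x,\mathx)$ entrywise. Combining with the expansion of $K_m^{-1}$,
\begin{equation*}
    c_m \cdot k_m(x)\, K_m^{-1}\mathy \;=\; k_m(x)\mathy \,+\, \bigO(c_m^{-1}) \;\longrightarrow\; k_\infty(x)\mathy.
\end{equation*}
By hypothesis $k_\infty(x)\mathy\neq 0$ for a.e.\ $x$, so at such $x$ the sign of $c_m\cdot k_m(x)K_m^{-1}\mathy$ equals $\sign(k_\infty(x)\mathy)$ for all sufficiently large $m$. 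Since $c_m>0$, multiplying by $c_m$ does not change the sign, giving the claim.

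The main obstacle is coupling the divergence rate $c_m\to\infty$ with the convergence of the off-diagonal block tightly enough that the perturbation $A_m$ is uniformly controllable on the fixed finite index set of pairs $(x_i,x_j)$; this in turn rests on the non-degeneracy condition guaranteeing that the delta-distribution integrals of Eq.~\eqref{eq:bad_limit} are finite off the diagonal. Once the analytic expressions provided by Lemma \ref{lemma:ntk_sign_formula} are in hand, the remainder is a routine finite-dimensional perturbation argument.
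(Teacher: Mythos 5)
Your proposal is correct and follows essentially the same route as the paper's proof of Theorem \ref{theorem:radhak}: both isolate the common diverging diagonal scale $c_m \sim C(R)\,m^{L-1}$, show that the rescaled Gram matrix tends to $\Id_d$ so that its rescaled inverse does too, and use positivity of the scale factor together with $\Theta^{(L)}_\infty(x,\mathx)\mathy \neq 0$ to conclude the sign stabilizes. The only cosmetic difference is that you control $c_m K_m^{-1} - \Id_d$ via a Neumann series, whereas the paper invokes continuity of matrix inversion on the compact set $\{\Id_d\}\cup\{a_m^{-1}\Theta_m\}$; both close the same gap.
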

The estimator $\Theta^{(L)}_\infty(x,\mathx) \mathy = \sum_{i=1}^n \Theta^{(L)}_\infty(x,x_i) \, y_i $ has the form of a so-called Nadaraya-Watson estimator and is well-defined for singular kernels such as $\Theta^{(L)}$. If we assume a classification problem in the sense that $\mathy \subseteq \{-1,1\}^n$, it holds $\sign\big( \Theta^{(L)}_\infty(x_i,\mathx) \mathy \big) = \Theta^{(L)}_\infty(x_i,\mathx) \mathy $ at training point $x_i$.

\subsection{Generalization of the neural tangent kernel and application to surrogate gradient learning}
\label{subsec:gen_ntk}

The above singularity of the limit kernel can be avoided by considering surrogate gradient learning instead of gradient descent. First, we introduce a generalization of the NTK that later leads to the surrogate gradient NTK.

By definition and originally due to the chain rule, the empirical NTK consists of two Jacobian matrices of the network function with respect to the weights. The Jacobian matrix can be thought of as a recursive formula, $J_\theta f(x;\theta) = G(x, \theta ; \sigma, \dot\sigma)$, which is given by the input $x$ and the architecture of the network, including the activation function $\sigma$ and its derivative $\dot\sigma$. This formula can be modified to define a quasi-Jacobian matrix, $J^{\sigma_1, \Tilde{\sigma}_1}(x;\theta) \coloneqq G(x,\theta;\sigma_1, \Tilde{\sigma}_1)$, where $\Tilde\sigma_1$ does not have to be the derivative of $\sigma_1$. Analogous to the definition of the empirical NTK we define the empirical generalized NTK to be $\hat{I}(x,x') \coloneqq J^{\sigma_1, \Tilde{\sigma}_1}(x;\theta) \, J^{\sigma_2, \Tilde{\sigma}_2}(x';\theta)^{\intercal}$, where $\sigma_1, \Tilde{\sigma}_1, \sigma_2, \Tilde{\sigma}_2$ are specified in the following theorem.
\vspace{.15cm}

\begin{theorem}[Generalized version of Theorem 1 by \cite{jacot2018neural}; see Theorem \ref{theorem:gen2}]
\label{theorem:paper:gen2}
    For activation functions \( \sigma_1 \), \( \sigma_2 \) and so-called \textit{surrogate derivatives} \( \Tilde{\sigma}_1 \), \( \Tilde{\sigma}_2 \) such that \(\sigma_1, \sigma_2,\Tilde\sigma_1 \), and \( \Tilde\sigma_2\) satisfy the \textit{linear envelope property} and are continuous except for finitely many jump points, denote the empirical generalized neural tangent kernel
    \begin{equation*}
        \hat{I}^{(L)}(x,x') = J^{(L),\sigma_1,\Tilde{\sigma}_1}(x;\theta) \, J^{(L),\sigma_2,\Tilde{\sigma}_2}(x';\theta)^\intercal \quad \text{for } x,x'\in \R^{n_0},
    \end{equation*}
    as before. Then, for any \(x,x' \in \R^{n_0}\) and \( 1\leq i,j \leq n_L \),
    it holds
    $
        \hat{I}_{ij}^{(L)}(x,x') \xrightarrow{\mathcal{P}} \delta_{ij} I^{(L)}(x,x'),
    $
    as \( n_1,\dots, n_{L-1} \to \infty \) weakly. We call \( I^{(L)} \) the analytic generalized neural tangent kernel, which is recursively given by
    \begin{align*}
        &I^{(1)}(x,x') = \Sigma^{(1)}_{1,2}(x,x'),\ I^{(L)}(x,x') = \Sigma^{(L)}_{1,2}(x,x') + I^{(L-1)}(x,x') \cdot \Tilde{\Sigma}^{(L)}_{1,2}(x,x') \  \text{for } L \ge 2, \ \text{with} \\
        &\Sigma^{(L)}_{1,2}(x,x') = \sigma_w^2 \, \EW[\sigma_1(Z_1)\, \sigma_2(Z_2)] + \sigma_b^2 \ \  \text{for } L \geq 2 \text{ and} \ \  
        \Sigma_{1,2}(x,x') = \frac{\sigma_w^2}{n_0} \langle x,x'\rangle + \sigma_b^2, \ \text{where}
         \\
        &\Tilde{\Sigma}^{(L)}_{1,2}(x,x') = \sigma_w^2 \, \EW[\Tilde{\sigma}_1(Z_1)\, \Tilde{\sigma}_2(Z_2)] \ \ \text{and} \ \ 
        (Z_1,Z_2) \sim \Normal\left( 0, \begin{psmallmatrix}
        \Sigma_1^{(L-1)}(x,x) & \Sigma_{1,2}^{(L-1)}(x,x') \\
        \Sigma_{1,2}^{(L-1)}(x,x') & \Sigma_2^{(L-1)}(x',x')
    \end{psmallmatrix} \right).
    \end{align*}
\end{theorem}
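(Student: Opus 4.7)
The plan is to prove the theorem by induction on the depth $L$, following the structural outline of the Jacot et al.\ argument for Theorem \ref{theorem:paper:ntk_conv_init}, with two essential modifications: the two quasi-Jacobians at $x$ and $x'$ use different activation/surrogate pairs, and both $\sigma_i$ and $\tilde\sigma_i$ are only assumed continuous off a finite set while satisfying the linear envelope bound.

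For the base case $L=1$, the preactivation $h^{(1)}(x)$ is linear in $\theta^{(1)} = (W^{(1)},b^{(1)})$, so the quasi-Jacobian coincides with the ordinary Jacobian and does not depend on $\sigma_i$ or $\tilde\sigma_i$ at all. A direct computation yields $\hat I^{(1)}_{ij}(x,x') = \delta_{ij}\bigl(\tfrac{\sigma_w^2}{n_0}\langle x,x'\rangle + \sigma_b^2\bigr) = \delta_{ij}\Sigma^{(1)}_{1,2}(x,x')$ deterministically. For the inductive step, I would split $\theta$ into $(\theta^{(<L)}, W^{(L)}, b^{(L)})$ and write $J^{(L),\sigma,\tilde\sigma}(x;\theta)$ as a block matrix whose $(W^{(L)}, b^{(L)})$-block is the ordinary partial derivative and whose $\theta^{(<L)}$-block is, by definition of the quasi-Jacobian, $\tfrac{\sigma_w}{\sqrt{n_{L-1}}}\, W^{(L)} \operatorname{diag}\bigl(\tilde\sigma(h^{(L-1)}(x))\bigr) J^{(L-1),\sigma,\tilde\sigma}(x;\theta)$. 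Multiplying by the transposed quasi-Jacobian at $x'$ decomposes $\hat I^{(L)}(x,x')$ into a direct piece
\begin{equation*}
\tfrac{\sigma_w^2}{n_{L-1}}\, \sigma_1(h^{(L-1)}(x))^\intercal \sigma_2(h^{(L-1)}(x'))\, \Id_{n_L} + \sigma_b^2\, \Id_{n_L}
\end{equation*}
and a recursive piece of the form
\begin{equation*}
\tfrac{\sigma_w^2}{n_{L-1}}\, W^{(L)}\, D_1(x)\, \hat I^{(L-1)}(x,x')\, D_2(x')\, (W^{(L)})^\intercal,
\end{equation*}
where $D_k(\cdot) = \operatorname{diag}(\tilde\sigma_k(h^{(L-1)}(\cdot)))$.

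To pass to the weak infinite-width limit I would exploit the flexibility of Definition \ref{def:paper:network_conv_notions}(iii): one may choose width functions in which $n_{L-1}$ grows fastest, so that at each inductive step the outer average over the $n_{L-1}$ neurons concentrates by a law of large numbers once the hypothesis has been installed for the inner quantity $\hat I^{(L-1)}$. Joint convergence of $(h^{(L-1)}(x), h^{(L-1)}(x'))$ to a centered Gaussian with cross-covariance $\Sigma^{(L-1)}_{1,2}$ follows from the extension of Theorem \ref{theorem:paper:ann_conv_gp} to activations with jumps (Theorem \ref{theorem:gen1}). This identifies the weak limits of $\tfrac{1}{n_{L-1}}\sum_i \sigma_1(h^{(L-1)}_i(x))\sigma_2(h^{(L-1)}_i(x'))$ and $\tfrac{1}{n_{L-1}}\sum_i \tilde\sigma_1(h^{(L-1)}_i(x))\tilde\sigma_2(h^{(L-1)}_i(x'))$ with $\EW[\sigma_1(Z_1)\sigma_2(Z_2)]$ and $\EW[\tilde\sigma_1(Z_1)\tilde\sigma_2(Z_2)]$ respectively. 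For the recursive piece I would additionally use the independence of $W^{(L)}$ from the lower layers: conditioning on $\theta^{(<L)}$ and invoking the inductive hypothesis, the sandwich $\tfrac{1}{n_{L-1}} W^{(L)} D_1 \hat I^{(L-1)} D_2 (W^{(L)})^\intercal$ concentrates on $\tilde\Sigma^{(L)}_{1,2}(x,x') \cdot I^{(L-1)}(x,x')\, \Id_{n_L}$, producing both the Kronecker factor $\delta_{ij}$ and the correct scalar. Summing the two contributions reproduces the recursion for $I^{(L)}$.

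The main obstacle is the law-of-large-numbers step in the presence of the discontinuities in $\sigma_i$ and $\tilde\sigma_i$; the Lipschitz argument of the standard NTK proof is unavailable here. I would handle this by exploiting that the limiting Gaussian $(Z_1, Z_2)$ admits a density whenever its covariance is non-degenerate (which is ensured by $\sigma_b^2 > 0$, or by mild genericity of the inputs) and therefore assigns mass zero to the finite set of jump points of $\sigma_i$ and $\tilde\sigma_i$. Combined with the linear envelope bound, which furnishes uniform $L^2$ control on the summands and hence uniform integrability, the Portmanteau theorem together with a standard weak law for triangular arrays yields the required convergence in probability of the empirical averages to the claimed expectations, and the induction closes.
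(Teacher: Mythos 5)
Your overall skeleton --- induction on $L$, the same split of $\hat I^{(L)}$ into a direct $\Sigma^{(L)}_{1,2}$-piece plus a sandwich $\tfrac{\sigma_w^2}{n_{L-1}} W^{(L)} D_1(x)\, \hat I^{(L-1)}(x,x')\, D_2(x') (W^{(L)})^\intercal$, the appeal to Theorem \ref{theorem:gen1} for the joint Gaussian limit of $(h^{(L-1)}(x),h^{(L-1)}(x'))$, and the observation that the finitely many jump points carry zero mass under the limiting Gaussian so the continuous mapping theorem still applies --- is exactly the route taken in the paper's proof of Theorem \ref{theorem:gen2}, including the base case and the use of the independence of $W^{(L)}$ to produce the Kronecker factor $\delta_{ij}$.

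However, there is a genuine gap in your law-of-large-numbers step, in two respects. First, the direction of your width-function choice is backwards: you propose to let $n_{L-1}$ grow \emph{fastest}, but the summands indexed by $i=1,\dots,n_{L-1}$ are only approximately iid --- they are correlated through the shared inner weights, and that correlation vanishes only as $n_1,\dots,n_{L-2}\to\infty$. Averaging over ever more still-correlated terms does not concentrate. The correct choice, which is the content of the paper's Lemma \ref{lemma:weakconv}, is the opposite: the outer width $n_{L-1}=k(m)$ must grow \emph{slowly enough} relative to the inner widths that the joint law of the first $k(m)$ summands is already within Prokhorov distance $1/k(m)$ of the product of the limiting iid laws. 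Second, and relatedly, a ``standard weak law for triangular arrays'' does not apply here, because within each row the summands are neither independent nor asymptotically uncorrelated at fixed $m$; uniform integrability from the linear envelope bound does not repair this. The paper's Lemma \ref{lemma:weakconv} is precisely the missing device: it transfers the weak law (and the CLT, needed for Theorem \ref{theorem:gen1}) from the genuinely iid limit variables $Z_i$ back to the finite-width quantities $H_i^m$ via a diagonal argument in the Prokhorov metric, and it is what makes the phrase ``weakly'' in the statement precise. Without this lemma or an equivalent substitute, your induction does not close.
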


$\Sigma_1$ and $\Sigma_2$ denote the covariance functions of the Gaussian processes that arise from network functions $f_1,f_2$ with activation functions $\sigma_1,\sigma_2$, respectively, in the infinite-width limit. The covariance between these two Gaussian processes is denoted by $\Sigma_{1,2}$. This covariance function is asymmetric in the sense that $\Cov[f_1(x_1),f_2(x_2)] \not = \Cov[f_1(x_2),f_2(x_1)]$ in general.

We show in Theorem \ref{theorem:paper:gen2} that the generalized empirical NTK tends to a generalized analytic NTK at initialization in an infinite-width limit. Now, the SGL rule can be written using the generalized NTK, similar to Equation \eqref{eq:deriv_ntk02}:
\begin{align}
    \frac{\del}{\del t} f(x_;\theta_t) 
    &= J_\theta f(x; \theta_t) \, \frac{\del}{\del t} \theta_t 
    = -\eta \, J_\theta f(x; \theta_t) J^{\sigma, \Tilde{\sigma}}(x;\theta_t) ^\intercal \, \nabla_{f(\mathx;\theta_t)}\Loss(f(\mathx;\theta_t);\mathy) \label{eq:dynamics_sgl0} \\
    &\eqqcolon -\eta \, \hat{I}_t(x,\mathx) \, \nabla_{f(\mathx;\theta_t)}\Loss(f(\mathx;\theta_t);\mathy)
    \label{eq:dynamics_sgl}
\end{align}
Here, we chose $\sigma_1=\sigma_2=\sigma$, $\Tilde{\sigma}_1 = \dot\sigma$ and $\Tilde\sigma_2 = \Tilde{\sigma}$ in the previous definition of the generalized NTK. This we call the surrogate gradient NTK (SG-NTK) with activation function $\sigma$ and surrogate derivative $\Tilde{\sigma}$. Compared to the classical NTK, one of the true Jacobian matrices is replaced by the quasi-Jacobian matrix, since the learning rule is SGL instead of gradient flow. Note that we assume that the derivative of the activation function, $\dot\sigma$, exists. Theorem \ref{theorem:paper:gen2} shows convergence at time $t=0$. We extend this convergence to $t>0$ in the following theorem:

\begin{theorem}[Based on Theorem G.2 from \cite{lee2019wide}; see Theorem \ref{theorem:gen3}]
\label{theorem:paper:gen3}
    Let \( \sigma, \dot\sigma,\Tilde\sigma \) as before, all Lipschitz continuous, and $\dot\sigma, \Tilde{\sigma}$ bounded. Let \( f_t \) be a network function with depth \( L \) initialized as in Section \ref{subsec:NTK_param} and trained with MSE loss and SGL with surrogate derivative \( \Tilde{\sigma} \). Assume that the generalized NTK converges in probability to the analytic generalized NTK of Theorem \ref{theorem:paper:gen2}, 
    $
        \hat{I}^{(L)} \to I^{(L)} \otimes \Id_{n_L} ,
    $ as \( (n_l)_{l=1}^{L-1} \asympropto n \).
    Furthermore, assume that the smallest and largest eigenvalue of the symmetrization of $I^{(L)}(\mathx, \mathx)$,
    \(
        S^{(L)} \coloneqq \big( I^{(L)}(\mathx, \mathx) + I^{(L)}(\mathx, \mathx)^\intercal \big)/2, 
    \)
    are given by \( 0 <  \lambda_{\mathrm{min}} \leq \lambda_{\mathrm{max}} < \infty \) and that the learning rate is given by \( \eta > 0 \).
    Then, for any \( \delta > 0 \) there exist \( R > 0, N \in \N \) and \( K > 1 \) such that for every \( n \geq N \), the following holds with probability at least \( 1 - \delta \) over random initialization:
    \begin{equation*}
        \sup_{t \in [0,\infty)} \llVert \hat{I}_t^{(L)}(\mathx, \mathx) - I^{(L)}(\mathx,\mathx) \rrVert_F \leq \frac{6 K^3 R}{\lambda_{\mathrm{min}}} n^{-\frac{1}{2}}, \ \  \text{where } \lVert \,\cdot \, \rVert_F \text{ denotes the Frobenius norm.}
    \end{equation*}
\end{theorem}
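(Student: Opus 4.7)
The plan is to adapt the strategy of Lee et al.\ (2019, Appendix G) for the classical NTK to the surrogate setting, routing the argument through the symmetrization $\hat{S}_t \coloneqq \tfrac{1}{2}\bigl(\hat{I}_t^{(L)}(\mathcal{X},\mathcal{X}) + \hat{I}_t^{(L)}(\mathcal{X},\mathcal{X})^\intercal\bigr)$ because $\hat{I}_t^{(L)}$ is generally asymmetric. \textbf{Stage 1 (local regularity).} On a ball $B_R(\theta_0)$ around initialization measured in the appropriate $1/\sqrt{n}$-scaled norm, I would show that both the true Jacobian $J_\theta f(\mathcal{X};\theta)$ and the quasi-Jacobian $J^{\sigma,\tilde{\sigma}}(\mathcal{X};\theta)$ are $O(1)$-bounded in operator norm and $O(1)$-Lipschitz in $\theta$, on an event of probability at least $1-\delta$ for $n \geq N$. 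The ingredients are Lipschitz continuity of $\sigma,\dot{\sigma},\tilde{\sigma}$, boundedness of $\dot{\sigma},\tilde{\sigma}$, and standard Gaussian concentration of the preactivations at initialization. Composing the two factors yields a $\theta$-Lipschitz constant $K$, independent of $n$, for $\hat{I}^{(L)}(x,x';\theta) = J_\theta f(x;\theta)\,J^{\sigma,\tilde{\sigma}}(x';\theta)^\intercal$. By Theorem \ref{theorem:paper:gen2} and the convergence hypothesis in the statement, $\lVert \hat{I}_0^{(L)}(\mathcal{X},\mathcal{X}) - I^{(L)}(\mathcal{X},\mathcal{X})\rVert_F = o_P(1)$, so $\hat{S}_0 \succeq \tfrac{\lambda_{\min}}{2}\Id$ on the same event.

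\textbf{Stage 2 (contraction).} Setting $u(t) \coloneqq f(\mathcal{X};\theta_t) - \mathcal{Y}$, Equation \eqref{eq:dynamics_sgl} with MSE loss gives $\tfrac{\del}{\del t} u(t) = -\eta\, \hat{I}_t^{(L)}(\mathcal{X},\mathcal{X})\, u(t)$, so
\begin{equation*}
    \tfrac{\del}{\del t}\lVert u(t) \rVert^2
    = -2\eta\, u(t)^\intercal \hat{I}_t^{(L)}(\mathcal{X},\mathcal{X})\, u(t)
    = -2\eta\, u(t)^\intercal \hat{S}_t\, u(t),
\end{equation*}
using $u^\intercal A u = u^\intercal A^\intercal u$. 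As long as $\hat{S}_t \succeq \tfrac{\lambda_{\min}}{2}\Id$, this yields $\lVert u(t)\rVert \le \lVert u(0)\rVert e^{-\eta \lambda_{\min} t/2}$, with $\lVert u(0)\rVert = O(1)$ since $n_L$ and the dataset are fixed and $f(\mathcal{X};\theta_0)$ concentrates on a Gaussian limit. \textbf{Stage 3 (bootstrap).} Let $T^\ast \coloneqq \sup\{t \ge 0 : \theta_s \in B_R(\theta_0) \text{ and } \hat{S}_s \succeq \tfrac{\lambda_{\min}}{2}\Id \text{ for all } s \le t\}$. On $[0,T^\ast)$, the SGL update combined with the exponential decay of $u$ bounds
\begin{equation*}
    \lVert \theta_t - \theta_0\rVert
    \le \eta \int_0^t \lVert J^{\sigma,\tilde{\sigma}}(\mathcal{X};\theta_s)\rVert_{\mathrm{op}}\, \lVert u(s)\rVert\, \del s
    \le \frac{2 K^2 \lVert u(0)\rVert}{\lambda_{\min}}\, n^{-1/2},
\end{equation*}
the $n^{-1/2}$ factor emerging from the NTK parametrization's per-layer $1/\sqrt{n_l}$ rescalings exactly as in Lee et al. Combining with the Lipschitz estimate $\lVert \hat{I}_t^{(L)} - \hat{I}_0^{(L)}\rVert_F \le K \lVert \theta_t - \theta_0\rVert$ and the initialization bound from Stage 1 gives $\lVert \hat{I}_t^{(L)} - I^{(L)}\rVert_F \le \tfrac{6 K^3 R}{\lambda_{\min}} n^{-1/2}$ for suitable $R, K$, which in particular keeps $\hat{S}_t$ strictly above $\tfrac{\lambda_{\min}}{2}\Id$. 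A standard continuity/maximality argument then upgrades the open set $[0,T^\ast)$ to $T^\ast = \infty$ and yields the stated uniform-in-$t$ bound.

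\textbf{Main obstacle.} The hardest part is handling the asymmetry of $\hat{I}_t^{(L)}$: Lee et al.'s argument exploits positive semidefiniteness of $\hat{\Theta}_t$ directly, whereas here the contraction must be routed through $\hat{S}_t$, and one must show that $\hat{S}_t$ stays uniformly well-conditioned on the entire bootstrap interval. This requires coupling the mixed-pair concentration of $\Sigma_{1,2}$ and $\tilde{\Sigma}_{1,2}$ from Theorem \ref{theorem:paper:gen2} with the Stage-3 perturbation analysis on a single high-probability event of measure at least $1-\delta$. A secondary but tractable issue is that the two sides of $\hat{I}^{(L)}$ use different backward passes ($\dot{\sigma}$ versus $\tilde{\sigma}$); their Lipschitz and boundedness constants combine multiplicatively and are absorbed into $K$.
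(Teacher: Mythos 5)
Your proposal follows essentially the same route as the paper's proof: local boundedness and $\theta$-Lipschitzness of both the Jacobian and the quasi-Jacobian (the paper's Lemma \ref{lemma:jacobi_estimates}), a contraction estimate routed through the symmetrization $\hat{S}_t$ so that $u^\intercal \hat{I}_t u = u^\intercal \hat{S}_t u$, Gr\"onwall decay of the residual, and a bootstrap/continuity argument keeping $\theta_t$ in a ball around $\theta_0$ for all time. The only slip is bookkeeping: the parameter displacement $\lVert \theta_t - \theta_0\rVert_2$ is $O(1)$ (bounded by $3KR_0/\lambda_{\min}$, not $O(n^{-1/2})$), and the $n^{-1/2}$ in the final bound instead comes from the kernel's $\theta$-Lipschitz constant $2K^2/\sqrt{n}$ inherited from the $K/\sqrt{n}$-Lipschitzness of the (quasi-)Jacobians — your final bound is the same, but the factor lives in the other term.
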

We also write $I^{(L)} = I_{\sigma,\Tilde\sigma}$ or simply $I_\sigma$. The explicit analytic expression of the SG-NTK is derived in Section \ref{subsec:surrogate_ntk_sign} for activation function $\sigma = \erf_m$, $m \in \N$ and surrogate derivative $\Tilde\sigma = \erf$ as well as general surrogate derivatives. $I_{\erf_m,\Tilde\sigma}$ converges as $m \to \infty$, because compared to Equation \eqref{eq:bad_limit} we obtain $\EW[\dot\erf_m(g(x)) \, \Tilde{\sigma}(g(y))] \to \EW[2\delta(g(x)) \, \Tilde{\sigma}(g(y))]$ and the delta distribution yields a finite value. In Section \ref{subsec:surrogate_ntk_sign}, we show this rigorously and derive the analytic expressions. Hence, we define $I_{\sign,\Tilde\sigma} \coloneqq \lim_{m \to \infty} I_{\erf_m,\Tilde\sigma}$.

For any $m \in \N$ we can consider the SGL dynamics given by Equation \eqref{eq:dynamics_sgl} in the infinite-width limit to obtain
\begin{equation*}
    \frac{\del}{\del t} f(x_;\theta_t) 
    = -\eta \, I_{\erf_m}(x,\mathx) \, \nabla_{f(\mathx;\theta_t)}\Loss(f(\mathx;\theta_t);\mathy).
\end{equation*}
With MSE error, this equation is solved by a GP with mean $m(x) = I_{\erf_m}(x,\mathx)I_{\erf_m}(\mathx,\mathx)^{-1}\mathy$ for $t\to\infty$ and it is natural to assume that the networks trained with SGL converge in distribution to this GP in the infinite-width limit, analogous to the standard NTK case \citep[Theorem 2.2]{lee2019wide}. However, the empirical counterpart to $I_\sign$ does not exist as we cannot formulate an empirical SG-NTK for the sign activation function due to the missing Jacobian matrix, compare Equation \eqref{eq:dynamics_sgl0}. We suggest that even in this case the network trained with SGL will converge in distribution to the GP given by $I_\sign$, since SGL with activation function $\erf_m$ will approach SGL with activation function $\sign$ as $m \to \infty$ and the GP given by $I_{\erf_m}$ will approach the GP given by $I_\sign$ as $m \to \infty$. Numerical experiments in Section \ref{sec:num_experiments} indicate that this is indeed the case. We conclude that, remarkably, the SG-NTK can be defined for network functions without well-defined gradients and is informative about their learning dynamics under SGL.

\section{Numerical experiments}
\label{sec:num_experiments}

We numerically illustrate the divergence of the analytic NTK, $\Theta_{\erf_m}$, shown in Section \ref{subsec:direct_approach} and the convergence of the SG-NTK in the infinite-width limit, $\hat{I}^{(L)} \to I^{(L)}$, at initialization and during training shown in Section \ref{subsec:gen_ntk}. Simultaneously, we visualize the convergence of the analytic SG-NTK, $I_{\erf_m} \to I_\sign$. We consider a regression problem on the unit sphere $\mathcal{S}^1 = \{ x \in \R^2 : \lVert x \rVert = 1\}$ with $|\mathx|=15$ training points, which is shown in Figure \ref{fig:objective_fun}, and train 10 fully connected feedforward networks with two hidden layers, and activation function $\erf_m$ for $t=10000$ time steps and with MSE loss. The NTK only depends on the dot product \citep{radhakrishnan2023wide} and thus the angle between its two arguments, $\Delta \alpha = \sphericalangle (x,x')$. Hence, we plot the NTKs as functions of this angle, where $\Delta \alpha = 0$ corresponds to $x=x'$.

In Figure \ref{fig:divergence_ntk}, the empirical and analytic NTKs for the networks described above and trained with gradient descent are plotted for $m \in \{2, 5, 20\}$ and hidden layer widths $n \in \{10,100,500,1000\}$. In addition, the analytic NTK for $m\to\infty$ is plotted. Note that the steep slope of $\erf_m$ for $m=20$ results in $\erf_m$ being very close to the $\sign$ function. For any $m$, we observe that the empirical NTKs converge to the analytic NTK both at initialization and after training as NTK theory states. Figure \ref{fig:mse_convergene_ntk} illustrates this further by displaying the mean squared errors between the empirical NTKs and the respective analytic NTK. The convergence slows down for larger $m$. Further, the plots confirm that the analytic NTKs diverge as $m \to \infty$ if and only if $\Delta \alpha = 0$. To show this more clearly, we scaled the y-axis with the inverse hyperbolic sine (asinh), which is approximately linear for small absolute values and logarithmic for large absolute values.

\begin{figure}[htb]
  \centering
  \includegraphics[width=.86\linewidth]{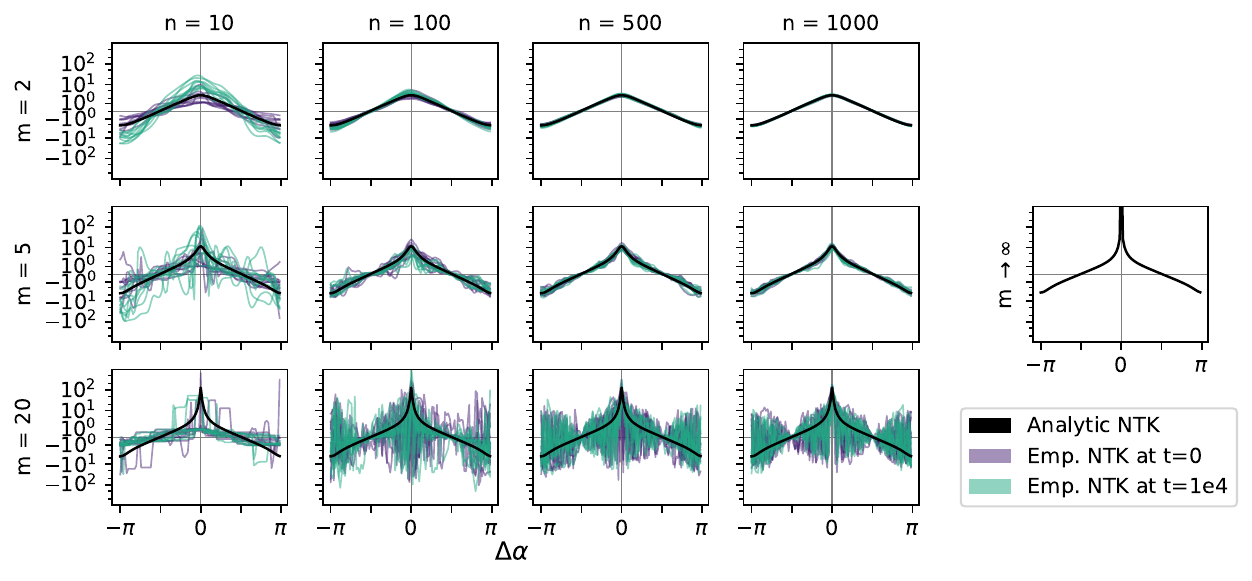}
  \caption{We plot empirical and analytic NTKs of 10 networks for different hidden layer widths $n$ and activation functions $\erf_m$. The kernels are plotted at initialization and after gradient descent training with $t=1\mathrm{e}{4}$ time steps, learning rate $\eta = 0.1$, and MSE error. The y-axis is asinh-scaled.}
  \label{fig:divergence_ntk}
\end{figure}

For Figure \ref{fig:convergene_sg_ntk}, we use the same setup as before, but train the networks using SGL with surrogate derivative $\Tilde{\sigma} = \dot\erf$, and compare the empirical and analytic SG-NTKs instead of NTKs. We observe that the empirical SG-NTKs converge to the analytic SG-NTK as $n \to \infty$ both at initialization and after training in accordance with Theorem \ref{theorem:paper:gen2} and Theorem \ref{theorem:paper:gen3}. Figure \ref{fig:mse_convergene_sg_ntk} illustrates this further by displaying the mean squared errors between empirical SG-NTKs and respective analytic SG-NTK. Moreover, we observe that the analytic SG-NTKs indeed converge to a finite limit as $m \to \infty$, as shown in Section \ref{subsec:gen_ntk}. 

\begin{figure}[htb]
  \centering
  \includegraphics[width=.9\linewidth]{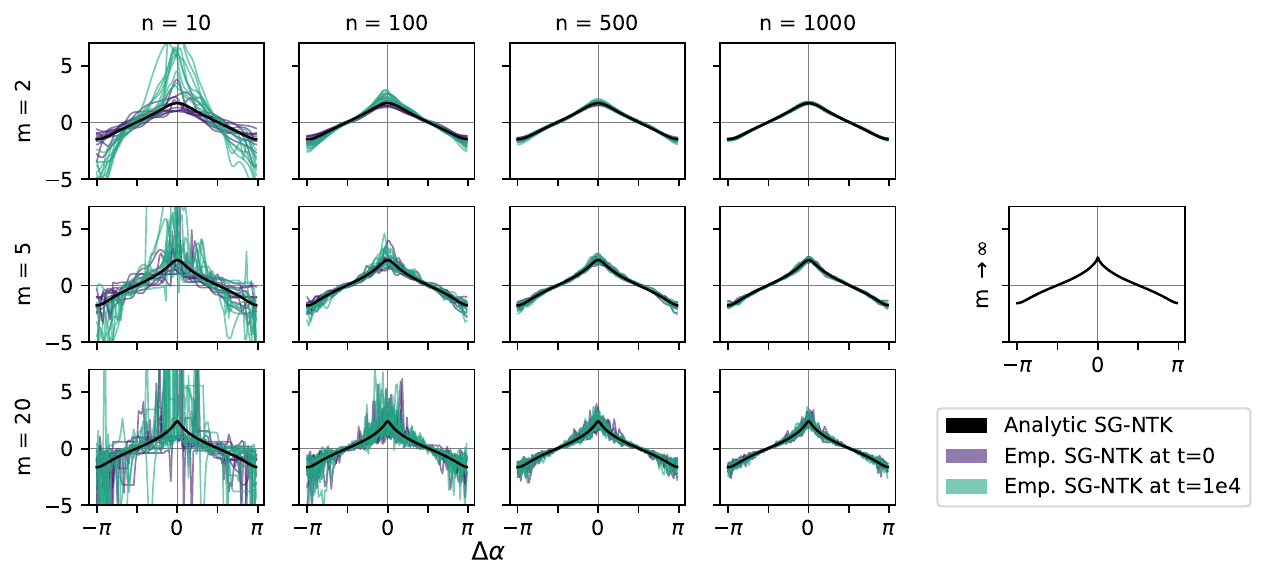}
  \caption{We plot empirical and analytic SG-NTKs of ten networks for different hidden layer widths $n$ and activation functions $\erf_m$. The kernels are plotted at initialization and after surrogate gradient learning with $t=1\mathrm{e}{4}$ time steps, learning rate $\eta = 0.1$, MSE error, and surrogate derivative $\Tilde\sigma = \dot\erf$.}
  \label{fig:convergene_sg_ntk}
\end{figure}

Finally, we consider SGL for networks with the same architecture and training objective as before, but with sign activation function, which can be seen as the case $m \to \infty$ of the setups above. We examine whether the distribution of network functions trained with SGL agrees with the distribution of the GP given by the limit kernel $I_\sign$. Specifically, we compare 500 networks trained with SGL for $t=30000$ time steps, which represent the distribution of the network function after training, to the mean and confidence band of the GP. The mean of the GP is given by kernel regression using the SG-NTK, $m(x) = I_\sign(x,\mathx) I_\sign(\mathx,\mathx)^{-1}\mathy$, and the confidence band is given by $m(x) \pm 2\sigma_\text{GP}(x)$, where $\sigma_\text{GP}(x)$ is the standard deviation at $x$. We observe in Figure \ref{fig:A} that the mean of the trained networks is close to the GP's mean for network width $n = 500$ and that most networks lie within the confidence band. The mean of the networks differs from the kernel regression using the kernel $\Sigma_\sign$. Figure \ref{fig:B} shows that this agreement between SGL and the SG-NTK already exists for a network width of $n = 100$, demonstrating that the SG-NTK predicts the SGL dynamics of networks with moderate width. 
Note that the variance in the networks' output and the confidence band can be reduced (see \cite{arora2019exact} and Section \ref{sec:kappa}).

\begin{figure}[htb]
    \centering
    \subfloat[]{\includegraphics[width=0.48\linewidth]{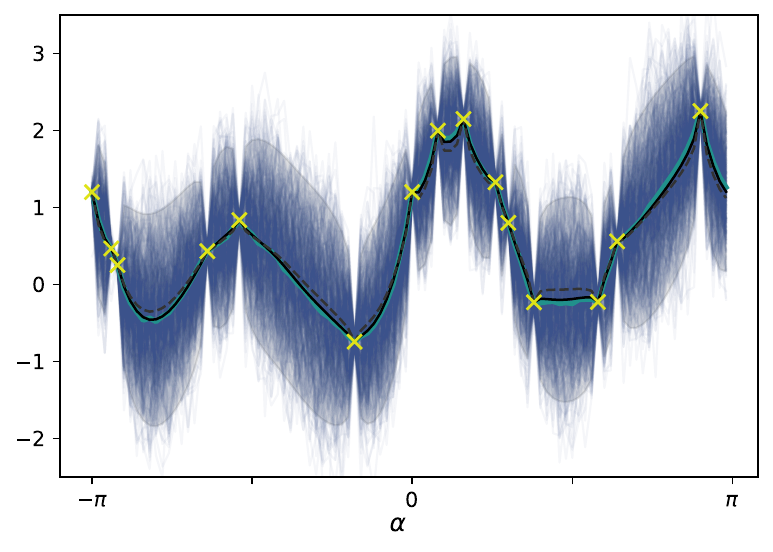}\label{fig:A}}
    \hspace{0em}
    \subfloat[]{\includegraphics[width=0.48\linewidth]{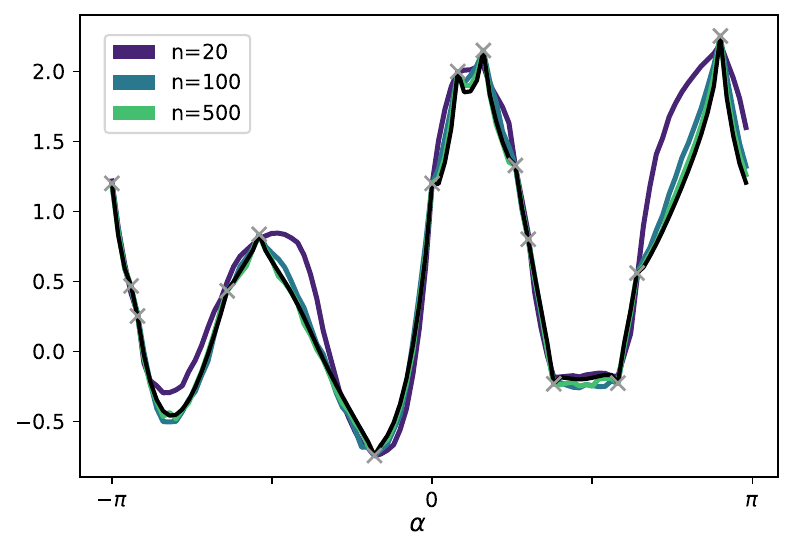}\label{fig:B}}
    \caption{Comparison of SGL learning in networks with different hidden layer widths with SG-NTK predictions. \textbf{(a)} 500 networks (blue) with sign activation function and hidden layer widths $n=500$ trained with SGL using the surrogate derivative $\Tilde{\sigma} = \dot\erf$ for $t=3\mathrm{e}{4}$ time steps plotted together with their mean (cyan), the SG-NTK-GP's mean (black) and confidence band (grey), and the $\Sigma_\sign$ kernel regression (dashed). Training points are indicated with crosses. \textbf{(b)} The mean of ensembles of 500 networks is plotted as in (a) for different hidden layer widths.}
    \label{fig:sgl_vs_sgntk}
\end{figure}

\section{Conclusion}
\label{sec:conclusions}

Gradient descent training is not applicable to networks with sign activation function. In the present study, we have first shown that this is even true for the infinite-width limit in the sense that the NTK diverges to a singular kernel. We found 
that this singular kernel still has interesting properties and allows for classification, but it is unusable for regression.

We then studied SGL, which is applicable to networks with sign activation function. We defined a generalized version of the NTK that can be applied to SGL and derived a novel SG-NTK. We proved that the convergence of the NTK in the infinite-width limit extends to the SG-NTK, both at initialization and during training. Strikingly, we were able to derive an SG-NTK for the sign activation function, $I_\sign$, by approximating the sign function with error functions. We suggest that this SG-NTK predicts the learning dynamics of SGL, and support this claim with heuristic arguments and numerical simulations.

A limitation of our work is that due to the considered NTK framework, our results are naturally only applicable to sufficiently wide networks with random initialization. Further, we only prove the convergence of the SG-NTK during training for activation functions with well-defined derivatives. More rigorous analysis should be carried out on how the connection between SGL and the SG-NTK carries over to activation functions with jumps, as shown by our simulations.

Our derivation of the SG-NTK opens a novel path towards addressing the many unanswered questions regarding the training of binary networks, in particular regarding the class of functions that SGL learns for wide networks and how that class differs for different activation functions and surrogate derivatives.

\begin{ack}
We thank Andreas Eberle for helpful discussions. We thank the German Federal Ministry of Education and Research (BMBF) for support via the Bernstein Network (Bernstein Award 2014, 01GQ1710).
\end{ack}

\medskip

\bibliographystyle{plainnat}
\bibliography{library}

\newpage
\appendix
\setcounter{section}{0}
\renewcommand{\thesection}{\Alph{section}}%
\renewcommand{\theequation}{S\arabic{equation}}

\counterwithin{figure}{section}

\section{Additional remarks on the numerical experiments}
\label{sec:kappa}

\paragraph{Weight initialization and implementation.} All networks are initialized with $\sigma_w=1, \sigma_b=0.1$ For the implementation of the NTK and SG-NTK we use the JAX package \citep{jax2018github} and Neural Tangents package \citep{neuraltangents2020, novak2022fast, han2022fast, sohl2020on, hron2020infinite} with modifications. Computations were done using an Intel Core i7-1355U CPU and 16 GB RAM. The simulations for Figure \ref{fig:divergence_ntk} and Figure \ref{fig:convergene_sg_ntk} took two hours each. The simulations for Figure \ref{fig:sgl_vs_sgntk} took 12 hours.

\paragraph{Variance reduction trick.} The variance in the outputs of the networks trained with gradient descent or SGL and the confidence band given the NTK or SG-NTK respectively can be reduced by multiplying the weights of the last layer with a constant $\kappa < 1$ at initialization \citep{arora2019exact}. This is explained in detail at the end of Section \ref{subsubsec:gradientflow_for_infwidth}. We can see from Figure \ref{fig:kappa} that the agreement between the distribution of the trained networks and the distribution given by the SG-NTK still holds; however, network width and training time have to be increased.

\section{Additional figures}

\begin{figure}[htb]
  \centering
  \includegraphics[width=0.5\linewidth]{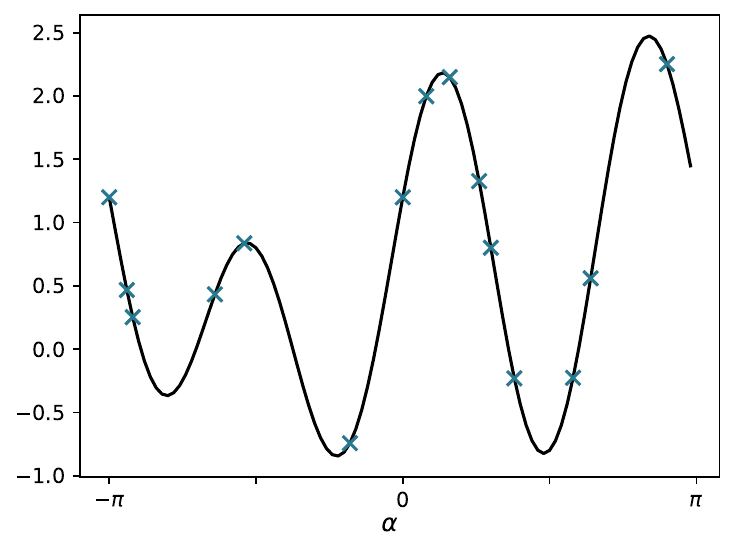}
  \caption{Target function and training points for the numerical experiments, $f(x,y) = 4 x y^2 - 0.8 x^3 + 1.2 y^2 - 0.8 x^2 y$.}
  \label{fig:objective_fun}
\end{figure}

\begin{figure}[htb]
    \centering
    \includegraphics[width=0.6\linewidth]{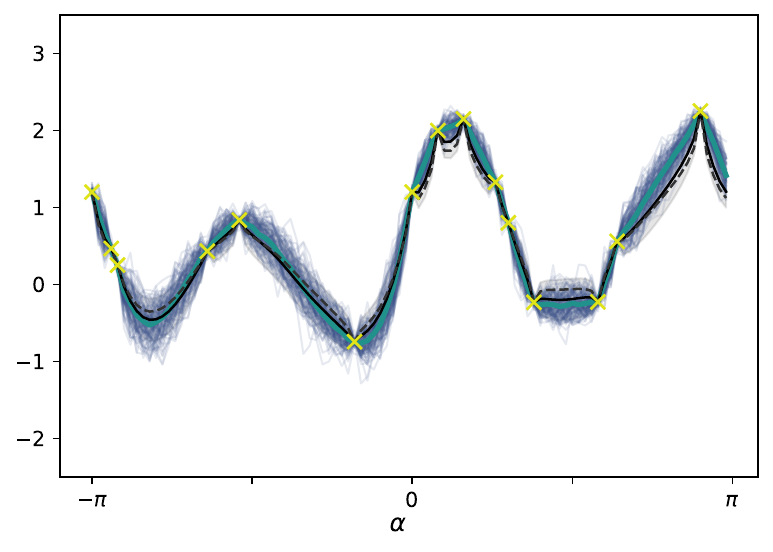}
    \caption{Network functions of 100 networks with sign activation function and hidden layer widths $n=500$ trained with SGL using the surrogate derivative $\Tilde{\sigma} = \dot\erf$ for $t=2\mathrm{e}{4}$ time steps plotted together with the SG-NTK-GP's mean and confidence band. The parameters of the last layer have been multiplied with $\kappa = 0.2$ at initialization.}
    \label{fig:kappa}
\end{figure}

\begin{figure}[!htb]
  \centering
  \includegraphics[width=.8\linewidth]{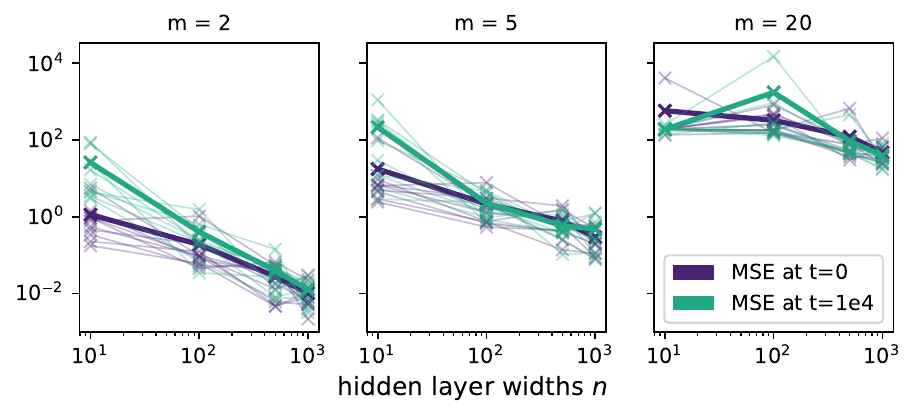}
  \caption{Mean squared errors between empirical NTKs and analytic NTKs in Figure \ref{fig:divergence_ntk} for all 10 networks (thin lines) and averaged over the 10 networks (thick lines).}
  \label{fig:mse_convergene_ntk}
\end{figure}

\begin{figure}[!htb]
  \centering
  \includegraphics[width=.8\linewidth]{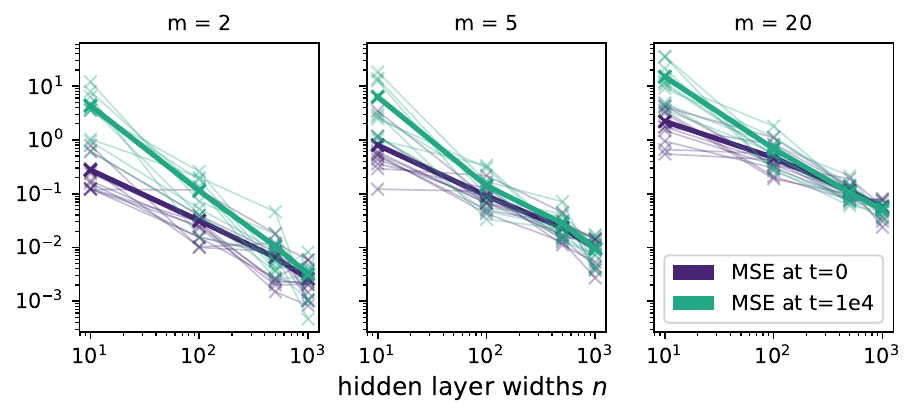}
  \caption{Mean squared errors between empirical SG-NTKs and analytic SG-NTKs in Figure \ref{fig:convergene_sg_ntk} for all 10 networks (thin lines) and averaged over the 10 networks (thick lines).}
  \label{fig:mse_convergene_sg_ntk}
\end{figure}

\section{Neural tangent kernel theory}
\label{section:theory}

\subsection{Introduction of the neural tangent kernel}
\label{subsec:intro_ntk}

We begin by defining an artificial neural network with a parameterization suitable for considering the limit of infinitely many hidden neurons. This parameterization is called \textit{neural tangent kernel parameterization} and differs from standard parameterization of multilayer perceptrons by a rescaling factor of \( 1/\sqrt{n_l} \) in layer \( l+1 \), where \( n_l \) is the width of layer \( l \). We follow the slightly more general definition in \cite[Equation (1)]{lee2019wide}. A discussion of this kind of parameterization can be found in \cite[Remark 1]{jacot2018neural}.

\begin{definition}[Artificial neural network with NTK parameterization]
\label{def:ntk_param}
    Let \( L \) be the depth of the network, \( n_k \) for \( k=0,\dots,L\) the widths of the layers, and \( \sigma \colon \R \to \R \) an activation function. We draw network weight matrices \( W^{(l)} \in \R^{n_l \times n_{l-1}} \) and biases \( b^{(l)} \in \R^{n_l} \) for \( l=1,\dots,L \) from a probability distribution such that \( W_{ij}^{(l)}, b_i^{(l)}\overset{\text{iid}}{\sim} \Normal(0,1) \). For the parameters, we denote by \( \theta^{(l)} = (W^{(l)}, b^{(l)}) \) the parameters of layer \(l\) and by \( \theta^{(1 \colon l)} = (\theta^{(1)}, \theta^{(2)},\dots,\theta^{(l)}) \) the parameters of layers \(1\) up to and including \(l\). Given some \( \sigma_w > 0 \) and \( \sigma_b \geq 0 \) we then define for all \( x \in \R^{n_0} \)
    \begin{align*}
        h^{(1)}\left(x; \theta^{(1)}\right) &= \frac{\sigma_w}{\sqrt{n_0}} W^{(1)} x + \sigma_b \, b^{(1)}, \\
        h^{(l+1)}\left(x; \theta^{(1 \colon l+1)}\right) &= \frac{\sigma_w}{\sqrt{n_l}} W^{(l+1)} \sigma\left( h^{(l)}\left(x;\theta^{(1\colon l)}\right) \right) + \sigma_b \, b^{(l+1)} \quad \textnormal{for } l=1,\dots,L-1 .
    \end{align*}
    Therefore, \( h^{(l)}(\,\cdot\,; \theta^{(1 \colon l)})\) is a map from \( \R^{n_0} \) to \( \R^{n_l} \) and we use the short-hand notation \( h^{(l)}(x; \theta) = h^{(l)}(x; \theta^{(1 \colon l)})\). Finally, we define our network function 
    \[ f(\,\cdot\, ; \theta) = h^{(L)}(\,\cdot\, ; \theta) \colon \R^{n_0} \to \R^{n_L} .\] 
\end{definition}
With this definition, the total number of parameters, \( P = |\theta| \), is
\[ P = \sum_{l=1}^L \left|\theta^{(l)}\right| = \sum_{l=1}^L n_l(n_{l-1} + 1). \] 

Given such a network function with NTK parameterization, we consider a dataset \( \mathcal{D} = (\mathcal{X}, \mathcal{Y})\) with \( \mathcal{X} = (x_1,\dots,x_d) \in \R^{d \cdot n_0} \) and \( \mathcal{Y} = (y_1,\dots,y_d) \in \R^{d \cdot n_L}\). First, we want to solve the regression problem, i.e., find parameters \( \theta' \) such that \( f(x_i; \theta') = y_i \) for all \( i=1,\dots,d \). Later, we will also consider the classification problem, i.e., we assume \( y_i \in \{-1,1\} \) and want to solve \( \sign(f(x_i; \theta'))=y_i \) for all \(i=1,\dots,d\). Tackling both cases from the regression perspective, we define the so-called loss functional and loss function. The following two definitions are similarly formulated by \cite{jacot2018neural}.

\begin{definition}[Loss functional and loss function]
\label{def:loss}
    Let \( \mathcal{F} = \{ g \,|\, g \colon \R^{n_0} \to \R^{n_L}\} \) and \(\LossFunc \colon \mathcal{F} \to \R \) a so-called loss functional. In addition, let \(\LossFunc \) be convex, i.e., for all \(\lambda \in [0,1] \) and \( g_1, g_2 \in \mathcal{F} \) it holds
    \[  \LossFunc(\lambda g_1 + (1-\lambda)g_2)\leq \lambda \LossFunc(g_1) + (1-\lambda) \LossFunc(g_2) .\]
    We will assume that \(\LossFunc\) can be written as
    \[ \LossFunc(f) = \frac{1}{d} \sum_{i=1}^{d} \ell(f(x_i; \theta); y_i) \eqqcolon \Loss(f(\mathcal{X}); \mathcal{Y}), \]
    so that the so-called loss function \( \Loss(\,\cdot\,; \mathcal{Y}) \colon \R^{d \cdot n_L} \to \R \) is differentiable.
\end{definition}

\begin{remark}[Function evaluation at sets and vector notation]
    \label{rem:notation1}
    We want to detail the notation used in Definition \ref{def:loss}. For a function \( f \colon \R^{n_0} \to \R^{n_L} \) and \( \mathcal{X} = (x_1,\dots,x_d) \in \R^{d \cdot n_0} \) we define the vector
    \[ f(\mathcal{X}) \coloneqq \left( f(x_1),\cdots,f(x_d) \right) \in \R^{d \cdot n_L} .\]

    By default, we will interpret any vector as a column vector, i.e., we identify \( \R^n \) with \( \R^{n \times 1} \). This is the case even when writing \( x = (x_1,\dots,x_n) \in \R^n \) for handier notation. Row vectors will be indicated within calculations using the transpose operator, \( x^\intercal \).
\end{remark}

Let us first consider regular gradient descent learning in continuous time, also known as gradient flow. For this, we assume that our network function is differentiable with respect to its parameters.

\begin{remark}[Gradient and Jacobian matrix notation]
    \label{rem:notation2}
    Let \( f \colon \R^n \to \R^m \). For \(m = 1\), we denote the gradient by \(\nabla f(x) \in \R^n \) for all \( x \in \R^n \). If \(m>1\) we denote the Jacobian matrix of \( f \) by \( Jf \colon \R^n \to \R^{m \times n}\). Therefore, \( Jf(x) \) is a \( m \times n \) matrix for all \( x \in \R^n \). We do not always want to consider the gradient or Jacobian matrix with respect to all variables. We indicate this with subscripts as follows. Let \( f \colon \R^n \times \R^P \to \R^m \) and \( g_x(\theta) = f(x;\theta) \) for fixed \(x \in \R^n \). Then, we write \( \nabla_\theta f(x;\theta) \coloneqq \nabla g_x (\theta) \) and \( J_\theta f(x;\theta) \coloneqq Jg_x(\theta) \). In particular, for a map \( f \colon \R^P \to \R \), the gradient with respect to the \( j \)-th variable, \( 1 \leq j \leq P \), is a scalar and denoted by \( \partial_j f(\theta) \coloneqq \nabla_{\theta_j} f(\theta) \). This is called the partial derivative of \( f \) with respect to the \(j\)-th variable.
\end{remark}

With this notation, we consider the gradient flow method with learning rate \( \eta > 0 \) and recall the derivation of the neural tangent kernel. We move the weights in the opposite direction of the gradient of the loss function with respect to the parameters of the network evaluated at the training points:
\begin{align}
    \frac{\del}{\del t} \theta_t &= -\eta\, \nabla_\theta \Loss(f(\mathcal{X}; \theta_t);\mathcal{Y})) = -\eta\, J_\theta f(\mathcal{X};\theta_t)^\intercal \, \nabla_{f(\mathcal{X};\theta_t)}\Loss(f(\mathcal{X};\theta_t);\mathcal{Y}) \label{eq:deriv_ntk1} \\
    &= -\eta\, \frac{1}{d} \sum_{i=1}^d J_\theta f(x_i;\theta_t)^\intercal \, \nabla_{f(x_i;\theta_t)} \ell(f(x_i;\theta_t);y_i) , \notag
\end{align}
using the chain rule for the second equality and with \( A^\intercal \) denoting the transpose of a matrix \( A \). Again using the chain rule, this then implies for any \( x \in \R^{n_0} \)
\begin{align}
    \frac{\del}{\del t} f(x; \theta_t) 
    &= J_\theta f(x;\theta_t) \, \frac{\del}{\del t}\theta_t 
    \overset{(\ref{eq:deriv_ntk1})}{=} - \eta\, J_\theta f(x;\theta_t) J_\theta f(\mathcal{X};\theta_t)^\intercal \nabla_{f(\mathcal{X};\theta_t)}\Loss(f(\mathcal{X};\theta_t);\mathcal{Y}) \label{eq:deriv_ntk2} \\
    &= -\eta \, \frac{1}{d} \sum_{i=1}^d J_\theta f(x;\theta_t) J_\theta f(x_i;\theta_t)^\intercal \, \nabla_{f(x_i;\theta_t)} \ell(f(x_i;\theta_t);y_i) \label{eq:deriv_ntk3} .
\end{align}
We therefore define the neural tangent kernel as follows:
\begin{definition}[Neural tangent kernel]
\label{def:ntk}
    Let f be a network function of depth \(L\) as in Definition \ref{def:ntk_param} with parameters \( \theta \), not necessarily drawn randomly. Then, we define the neural tangent kernel (NTK) as:
    \begin{alignat*}{2}
        \hat\Theta^{(L)} \colon \R^{n_0} &\times \R^{n_0} &&\to \R^{n_L \times n_L} \\
        (x&,y) &&\mapsto J_\theta f(x;\theta) J_\theta f(y;\theta)^\intercal .\notag
    \end{alignat*}
    Therefore, it holds for all \(x,y \in \R^{n_0} \) and \( 1 \leq i,j \leq n_L \)
    \[ \hat\Theta^{(L)}_{i\,j}(x,y) = \sum_{p=1}^P \partial_{\theta_p}f_i(x;\theta) \, \partial_{\theta_p} f_j(y;\theta) . \]
\end{definition}
Notice that the NTK depends on the parameters of the networks. It is therefore initialized randomly and varies over the course of the training. With notation \( f_t(x) = f(x; \theta_t) \) and \(\hat{\Theta}^{(L)}_t = \hat{\Theta}^{(L)}\) for parameters \(\theta_t\) at training time \(t\) we can now rewrite Equations (\ref{eq:deriv_ntk2}) and (\ref{eq:deriv_ntk3}) as follows:
\begin{align}
    \frac{\del}{\del t} f_t(x) &= -\eta\, \hat{\Theta}_t^{(L)}(x, \mathcal{X}) \nabla_{f_t(\mathcal{X})}\Loss(f_t(\mathcal{X});\mathcal{Y}) \label{eq:deriv_ntk4} \\
    &= -\eta\, \frac{1}{d} \sum_{i=1}^d \hat{\Theta}_t^{(L)}(x,x_i) \, \nabla_{f_t(x_i)} \ell(f_t(x_i);y_i). \notag
\end{align}
We are hence able to express the change of the network function during training in a kernel fashion. Later, we will consider this change of the network function in the infinite-width limit, i.e., \( n_1,\dots,n_{L-1} \to \infty \).

Before doing so, we will generalize the NTK definition in order to apply the NTK to surrogate gradient learning later. In particular, we will break the symmetry of the above definition and generalize the Jacobian matrices to quasi-Jacobian matrices by replacing the derivatives of the activation function by surrogate derivatives. Let us write the recursive formula of the Jacobian matrix of the network function given by the chain rule as \( J_\theta f(x;\theta) = G(\sigma; \dot\sigma; x; \theta) \), where \( \dot\sigma \) is the derivative of the activation function \( \sigma \). Then, surrogate gradient learning replaces \( J_\theta f(x;\theta) \) with \( G(\sigma; \Tilde\sigma; x; \theta) \) for the surrogate derivative \( \Tilde{\sigma} \) of the activation function \(\sigma\). We call this the quasi-Jacobian matrix:
\begin{definition}[Quasi-Jacobian matrices for neural networks]
\label{def:quasi_jacobian}
    Let \( L \) be the depth of the network, \(n_k\) for \( k=0,\dots,L\) the width of the layers, \(\sigma \colon \R \to \R \) the activation function, and \( \Tilde{\sigma} \colon \R \to \R \) the so-called surrogate derivative of the activation function. Let \(f\) be the network function, \( h^{(l)} \), $l=1,\dots,L-1$, the intermediate layers as in Definition \ref{def:ntk_param} and \( \theta \) the network parameters. We then define the quasi-Jacobian matrix \(J^{(L)}\) of \( f \) at point \( x \) recursively as follows:
    \begin{align}
        &J^{(1)}\left(x;\theta^{(1)}\right) \in \R^{n_1 \times |\theta^{(1)}|} \quad \text{with} \quad J^{(1)}_{k \, \theta_p}\left(x;\theta^{(1)}\right) =
        \begin{cases}
            \delta_{ki} \,\frac{\sigma_w}{\sqrt{n_0}} x_j &\text{if } \theta_p = W_{ij}^{(1)} \\
            \delta_{ki} \,\sigma_b &\text{if } \theta_p = b_i^{(1)}
        \end{cases} \label{eq:quasi_jac_L1} \\
        &J^{(l)}\left(x;\theta^{(1\colon l)}\right) \in \R^{n_{l} \times |\theta^{(1 \colon l)}|} \quad \text{for } 2 \leq l \leq L \text{ with} \notag \\
        &J^{(l)}_{k \,\theta_p}\left(x;\theta^{(1\colon l)}\right) =
        \begin{cases}
            \delta_{ki} \, \frac{\sigma_w}{\sqrt{n_{l-1}}} \sigma\left( h^{(l-1)}_j(x;\theta) \right) &\text{if } \theta_p = W_{ij}^{(l)} \\
            \delta_{ki} \, \sigma_b &\text{if } \theta_p = b_i^{(l)} \\
            \frac{\sigma_w}{\sqrt{n_{l-1}}} \sum_{j=1}^{n_{l-1}} W_{ij}^{(l)} \,\Tilde{\sigma}\left( h^{(l-1)}_j (x;\theta) \right) J^{(l-1)}_{j\,\theta_p}\left(x;\theta^{(l-1)} \right)
            &\text{if } \theta_p \in \theta^{(l-1)} .
        \end{cases}
    \end{align}
\end{definition}

\begin{remark}[Notations for the quasi-Jacobian]
    With the above definition of the quasi-Jacobian matrix of the network function \( f \) with activation function \( \sigma \) and surrogate derivative \(\Tilde{\sigma}\) we write
    \[
        J^{(L),\sigma,\Tilde{\sigma}}(x;\theta) \coloneqq J^{(L)}\left( x; \theta^{(L)} \right) .
    \]
    It then holds
    \[
        J^{(L),\sigma,\dot{\sigma}}(x;\theta) = J_\theta f(x;\theta) .
    \]
    For a data set \(\mathcal{X}\) instead of a single point \( x \), we concatenate the matrices row-wise as before, namely \(J^{(L)}(\mathcal{X};\theta) \in \R^{d \cdot n_L \times |\theta|} \).
\end{remark}

\begin{definition}[The generalized neural tangent kernel]
\label{def:ntk_general}
    Let \(\sigma_1, \sigma_2\) be activation functions and \(\Tilde{\sigma}_1, \Tilde{\sigma}_2\) the surrogate derivatives respectively. Given a network depth \( L \) and parameters \( \theta \) we define the generalized neural tangent kernel as:
    \begin{align}
        \hat{I}^{(L)} \colon \R^{n_0} \times \R^{n_0} &\to \R^{n_L \times n_L} \\
        (x,y) &\mapsto J^{(L),\sigma_1,\Tilde{\sigma}_1} (x;\theta) \, J^{(L), \sigma_2, \Tilde{\sigma}_2} (y;\theta)^\intercal .\notag
    \end{align}
\end{definition}
\begin{remark}
    The generalized neural tangent kernel agrees with the neural tangent kernel in the case where \( \sigma = \sigma_1 = \sigma_2 \) and \( \dot\sigma = \Tilde{\sigma}_1 = \Tilde{\sigma}_2 \).
\end{remark}

\subsection{Notation for the infinite-width limit and review of key theorems for the NTK}
\label{subsec:key_theorems}
In this section we will formulate all important theorems on the NTK that we will need for our later analysis using the introduced notation. Furthermore, we will discuss and remark their proofs, in particular in view of the generalizations that will be proved in Section \ref{subsec:generalize_ntk}.

\textbf{Convergence of networks to Gaussian processes in the infinite-width limit.} We will consider neural networks in the limit of infinitely many hidden layer neurons. The fact that such networks converge to Gaussian processes was first mentioned by \cite{neal1996bayesian}. We follow and present the formulations of \cite{matthews2018gaussian} for the general mathematical statement. First, we formalize the limit of infinitely many hidden neurons.

\begin{definition}[Width function, as in Definition 3 of \cite{matthews2018gaussian} with modifications]
\label{def:width_func}
    For every layer \( l = 0,\dots, L \) and any \( m \in \N \), the number of neurons at that layer is given by \(r_l(m) \), and we call \( r_l \colon \N \to \N \) the width function of layer \(l\). We say that a width function \( r_l \) is strictly increasing if \(r_l(m) < r_l(m+1) \) for all \( m \geq 1 \).
    We set
    \begin{align*}
        \mathcal{R}_L \coloneqq \left\{ (r_l)_{l=1}^{L-1}  \mid r_l \text{ is a strictly increasing width function for all } 1 \leq l < L \right\},
    \end{align*}
    the set of collections of strictly increasing width functions for network depth \(L\).
\end{definition}

Every element of \( \mathcal{R}_L \) provides a way to take the widths of the hidden layers to infinity by setting \( n_0 \) and \( n_L \) to some constant, setting \( n_l = r_l(m) \) for any \( 1 \leq l < L \) and considering \( m \to \infty \). To formally define for which ways of taking the widths of hidden layers to infinity a statement holds, we can now state the set \( R \subseteq \mathcal{R}_L \) such that the statement holds for widths given by any \( r \in R \) as \( m \to \infty \). Clearly, the claim \qq{The statement holds for all \( r \in R_1 \).} is stronger than \qq{The statement holds for all \( r \in R_2 \).} if \( R_2 \subseteq R_1\). On the basis of these considerations, we define three types of infinite-width limits using the previous definition in order to structure the different types of limits in this thesis as well as in the literature.
\begin{definition}[Types of infinite-width limits]
\label{def:network_conv_notions}
    Consider a statement \( \mathcal{S} \) of the form \qq{Let an ANN have depth \( L \) and network layer widths defined by \(n_0,n_L\) and \( n_l \coloneqq r_l(m) \) for \( 1 \leq l < L \) and some \((r_l)_{l=1}^{L-1} \in \mathcal{R}_L\). Then, for fixed \( n_0 \) and any \(n_L \), the statement \(\mathcal{P}\) holds as \( m \to \infty\).} We also write the statement \( \mathcal{S} \) as \(\mathcal{S}(r)\).
    \begin{enumerate}[label=(\roman*)]
        \item We say that such a statement \( \mathcal{S} \) holds strongly, if \( \mathcal{S}(r) \) holds for any \( r \in \mathcal{R}_L \). This can be interpreted as requiring that the statement holds as \( \min_{1\leq l < L}(n_l) \to \infty \). We will also write \qq{\( \mathcal{P}\) holds as \(n_1,\dots,n_{L-1} \to \infty \) strongly}.
        \item We say that such a statement \( \mathcal{S} \) holds for \( (n_l)_{1\leq l \leq L-1} \asympropto n \), if \( \mathcal{S}\) holds for all \( r \in \mathcal{R}_L\) with \( r_l(m) \asympropto m \) for all \(1 \leq l < L \). In other words \(\mathcal{S}(r)\) holds for all \(r \in \mathcal{R}_L\) such that \( r_p(m) / r_q(m) \to \alpha_{p,q} \in (0,\infty) \) as \( m \to \infty\). We will also write \qq{\( \mathcal{P}\) holds as \( (n_l)_{1\leq l < L} \asympropto n \)}.
        \item We say that such a statement \( \mathcal{S} \) holds weakly, if \( \mathcal{S}\) holds for at least one \( r \in \mathcal{R}_L \). This can be read as requiring that the statement holds as \( n_1 \to \infty, \dots, n_{L-1} \to \infty \) sequentially. We will also write \qq{\( \mathcal{P}\) holds as \(n_1,\dots,n_{L-1} \to \infty \) weakly}.
    \end{enumerate}
\end{definition}

\begin{theorem}[Theorem 4 from \cite{matthews2018gaussian}]
\label{theorem:ann_conv_gp}
    Any network function \( f \) of depth \( L \) defined as in Definition \ref{def:ntk_param} with continuous activation function \( \sigma \) that satisfies the \textit{linear envelope property}, i.e., there exist \(c,m \geq 0 \) with
    \[ |\sigma(u)| \leq c + m|u| \quad \forall u \in \R ,\]
    converges in distribution as \( n_1, \dots,n_{L-1} \to \infty \) strongly to a multidimensional Gaussian process \( (X_j)_{j=1}^{n_L} \) for any fixed countable input set \( (x_i)_{i=1}^\infty \). It holds \( X_j \overset{\text{iid}}{\sim} \Normal(0,\Sigma^{(L)}) \) where the covariance function \( \Sigma^{(L)} \) is recursively given by
    \begin{align*}
        &\Sigma^{(1)}(x,x') = \frac{\sigma_w^2}{n_0} \langle x,x' \rangle + \sigma_b^2, \\
        &\Sigma^{(L)}(x,x') = \sigma_w^2 \, \EW_{g \sim \Normal(0,\Sigma^{(L-1)})}[\sigma(g(x)) \, \sigma(g(x'))] + \sigma_b^2 .
    \end{align*}
\end{theorem}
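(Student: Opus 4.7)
The plan is induction on the network depth $L$, reducing convergence in distribution of the stochastic process indexed by the countable input set to convergence of all finite-dimensional marginals (a standard projective-limit argument), and then at each inductive step applying an exchangeable central limit theorem (in the spirit of Blum--Chernoff--Rosenblatt--Teicher) to pass from layer $L-1$ to layer $L$.

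For the base case $L = 1$, the preactivation
\[
h^{(1)}_j(x) = \frac{\sigma_w}{\sqrt{n_0}} \sum_{k=1}^{n_0} W^{(1)}_{jk} x_k + \sigma_b \, b^{(1)}_j
\]
is a linear combination of i.i.d.\ standard normals, so for any finite collection of inputs $(x_i)_{i=1}^{d}$ and output indices $j$ the joint distribution is exactly multivariate Gaussian with covariance $\Sigma^{(1)}(x,x') = \frac{\sigma_w^2}{n_0}\langle x, x'\rangle + \sigma_b^2$, and different output neurons $j$ are mutually independent because they involve disjoint weights.

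For the inductive step, fix finite collections of inputs and indices and write
\[
h^{(L)}_j(x) = \frac{\sigma_w}{\sqrt{n_{L-1}}} \sum_{k=1}^{n_{L-1}} W^{(L)}_{jk} \, \sigma\bigl(h^{(L-1)}_k(x)\bigr) + \sigma_b \, b^{(L)}_j.
\]
Across $k$, the tuples $\bigl(W^{(L)}_{jk},\, \sigma(h^{(L-1)}_k(x_1)),\ldots,\sigma(h^{(L-1)}_k(x_d))\bigr)$ form an exchangeable sequence, since for each $k$ the preactivation $h^{(L-1)}_k$ depends only on the $k$-th row of $W^{(L-1)}$ and on $b^{(L-1)}_k$, and these rows and biases are i.i.d.\ in $k$. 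The linear envelope property ensures that $\sigma(h^{(L-1)}_k(x))$ has a uniformly bounded second moment (propagated through the inductive hypothesis and the Gaussian moment bounds on the limiting process). An exchangeable CLT then yields a centered Gaussian limit; the limiting covariance is identified using the inductive hypothesis, since the empirical second moment
\[
\frac{1}{n_{L-1}} \sum_{k=1}^{n_{L-1}} \sigma\bigl(h^{(L-1)}_k(x)\bigr)\, \sigma\bigl(h^{(L-1)}_k(x')\bigr)
\]
converges in probability to $\EW_{g \sim \Normal(0,\Sigma^{(L-1)})}[\sigma(g(x))\sigma(g(x'))]$ by a law-of-large-numbers argument that likewise exploits exchangeability. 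Multiplying by $\sigma_w^2$ and adding $\sigma_b^2$ gives the recursion for $\Sigma^{(L)}$; independence across output indices $j$ in the limit is automatic because, conditional on the earlier layers, different $j$'s use disjoint Gaussian weights $W^{(L)}_{j\cdot}$ and $b^{(L)}_j$.

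The main obstacle, and the reason one cannot use a plain Lindeberg CLT, is that at any finite width the preactivations $h^{(L-1)}_k$ are only approximately Gaussian; the inductive hypothesis is a distributional limit, not exact Gaussianity. Conditioning on the previous layer and invoking a standard CLT would require the conditional second moments to concentrate at specific deterministic values, but in the strong limit the earlier widths $n_1,\ldots,n_{L-2}$ grow simultaneously with $n_{L-1}$ at an arbitrary relative rate, so the argument must be uniform in these rates. The exchangeable CLT is precisely designed for this situation: it only requires the empirical second moment of the summands to converge in probability to a deterministic limit, which the inductive hypothesis together with the linear envelope property delivers, and this is exactly what yields the strong (rather than merely sequential/weak) infinite-width limit.
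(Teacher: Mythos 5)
The paper does not actually prove this statement: it is quoted verbatim as Theorem 4 of \cite{matthews2018gaussian}, and Remark \ref{remark:discussion_networktogp} explicitly defers the proof of the strong limit to that reference, noting that the paper itself only reproves a \emph{weak}-limit generalization (Theorem \ref{theorem:gen1}) by a sequential induction using the Hilbert-space CLT and a diagonal subsequence extraction (Lemma \ref{lemma:weakconv}). Your proposal is therefore not comparable to anything in the paper; what it reconstructs is the architecture of the Matthews et al.\ proof itself, and you have correctly identified its two essential ingredients: the reduction to finite-dimensional marginals under the product $\sigma$-algebra, and the use of an exchangeable CLT for a triangular array of summands that are conditionally i.i.d.\ given the earlier layers. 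You are also right about \emph{why} an exchangeable CLT is needed for the strong limit: in the paper's own weak-limit proof the previous layers are sent to infinity first, so the summands can be replaced by exact i.i.d.\ Gaussians before applying a CLT, whereas for arbitrary width functions the summand distributions change with $m$ and one must control the empirical second moment directly.

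The genuine gap is in how you feed the induction. You claim the uniform second-moment bound on $\sigma(h^{(L-1)}_k(x))$ and the convergence in probability of $\frac{1}{n_{L-1}}\sum_k \sigma(h^{(L-1)}_k(x))\sigma(h^{(L-1)}_k(x'))$ both follow from ``the inductive hypothesis and the Gaussian moment bounds on the limiting process.'' But the inductive hypothesis as you state it is only convergence in distribution of layer $L-1$, which yields neither finite-width moment bounds nor convergence of expectations of the unbounded functional $\sigma(\cdot)\sigma(\cdot)$; for that you need uniform integrability of the squared summands, which must be established separately. The fix is standard but has to be part of the induction: propagate moment bounds directly through the recursion, e.g.
\begin{equation*}
\EW\bigl[(h^{(L)}_j(x))^2\bigr] \;=\; \frac{\sigma_w^2}{n_{L-1}}\sum_{k=1}^{n_{L-1}} \EW\bigl[\sigma(h^{(L-1)}_k(x))^2\bigr] + \sigma_b^2 \;\leq\; \sigma_w^2\bigl(c + m\,\EW[|h^{(L-1)}_1(x)|]\bigr)^2\text{-type bounds} + \sigma_b^2 ,
\end{equation*}
and more generally bound higher moments uniformly over all widths using the linear envelope property, so that the exchangeable CLT's Lindeberg-type condition and the law of large numbers for the empirical covariance can be verified. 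In other words, the inductive hypothesis must be strengthened to include (i) uniform-in-width moment bounds and (ii) convergence in probability of the random conditional covariance of layer $L-1$ to $\Sigma^{(L-1)}$, not merely convergence in distribution of the preactivations; this is precisely where the bulk of the technical effort in \cite{matthews2018gaussian} is spent, and as written your sketch assumes it rather than proves it.
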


\begin{remark}
\label{remark:discussion_networktogp}
    First, a proof of the above theorem can be found in the paper of \cite{matthews2018gaussian}. While it takes a lot of effort to show that the statement holds strongly in the sense of Definition \ref{def:network_conv_notions}, the weak version of the statement can be proved via induction. This has been done by \cite{jacot2018neural} and we will later adapt their proof to show a generalized version, Theorem \ref{theorem:gen1}.

    Second, in the context of analyzing the network behavior, we are interested in the finite-dimensional distributions first of all, since neural networks are trained and tested on a finite number of data points. From the convergence of the marginal distributions, we can infer the convergence to an stochastic process via the Kolmogorov extension theorem. However, this assumes the product \( \sigma \)-algebra, which is why Theorem \ref{theorem:ann_conv_gp} assumes a fixed countable input set. \cite{matthews2018gaussian} have discussed these formal restrictions in more detail (Chapter 2.2). If one does not want to be restricted to a countable index set, one could, for example, consider the condition by \citet[Theorem 2.1]{prokhorov1956convergence}. A similar approach was taken by \cite{bracale2021large}, which applied the Kolmogorov-Chentsov criterion \cite[Theorem 4.23]{kallenberg1997foundations}.

    Finally, note that the theorem assumes continuity of the activation function. In the proof of \cite{matthews2018gaussian} this is only used in order to apply the continuous mapping theorem. However, it is sufficient for the limiting process to attain possible points of discontinuity with probability zero for the continuous mapping theorem to be applicable. The theorem is thus also valid for activation functions that are continuous except at finitely many jump points, such as step-like activation functions.
\end{remark}

\textbf{Convergence of the NTK at initialization in the infinite-width limit.} \cite{jacot2018neural} showed that the previously defined empirical NTK converges to a deterministic limit, which we will call the analytic NTK.

\begin{theorem}[Theorem 1 from \cite{jacot2018neural}, slightly generalized]
\label{theorem:ntk_conv_init}
    For any network function of depth \(L\) defined as in Definition \ref{def:ntk_param} with Lipschitz continuous activation function \(\sigma\), the empirical neural tangent kernel \(\Hat{\Theta}^{(L)}\) converges in probability to a constant kernel \( \Theta^{(L)} \otimes \Id_{n_L} \) as \( n_1,\dots,n_{L-1} \to \infty \) weakly. For all \( x,x' \in \R^{n_0} \) and \( 1 \leq i,j \leq n_L \), it holds
    \begin{equation*}
        \Hat{\Theta}^{(L)}_{i\,j}(x,x') \overset{\mathcal{P}}{\xrightarrow{\,\quad}} \delta_{i j} \, \Theta^{(L)}(x,x'),
    \end{equation*}
    which we also write as
    \begin{equation*}
        \Hat{\Theta}^{(L)} \overset{\mathcal{P}}{\xrightarrow{\,\quad}} \Theta^{(L)} \otimes \Id_{n_L} .
    \end{equation*}
    We call \(\Theta^{(L)}\) the analytic neural tangent kernel of the network, which is recursively given by
    \begin{align}
        &\Theta^{(1)}(x,x') = \Sigma^{(1)}(x,x') \notag \\
        &\Theta^{(L)}(x,x') = \Sigma^{(L)}(x,x') + \Theta^{(L-1)}(x,x') \cdot \dot{\Sigma}^{(L)}(x,x') , \notag
    \end{align}
    where \( \Sigma^{(l)} \) are defined as in Theorem \ref{theorem:ann_conv_gp} and
    we define
    \begin{equation*}
        \dot{\Sigma}^{(L)}(x,x') = \sigma_w^2 \, \EW_{g \sim \Normal(0,\Sigma^{(L-1)})}\left[\dot{\sigma}(g(x)) \, \dot{\sigma}(g(x'))\right] .
    \end{equation*}
\end{theorem}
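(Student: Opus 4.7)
The plan is to prove the statement by induction on the depth $L$, decomposing the NTK into contributions from the last-layer parameters and from the preceding parameters, and invoking Theorem \ref{theorem:ann_conv_gp} for the distribution of the preactivations at depth $L-1$. For the base case $L=1$, direct differentiation from Definition \ref{def:ntk} gives
\[
\hat\Theta^{(1)}_{ij}(x,x') = \delta_{ij}\Big(\tfrac{\sigma_w^2}{n_0}\langle x,x'\rangle + \sigma_b^2\Big) = \delta_{ij}\Sigma^{(1)}(x,x'),
\]
which is already deterministic and matches the claim without any limiting argument.

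Assuming the statement at depth $L-1$ for arbitrary output width, the inductive step splits the parameter set as $\theta = \theta^{(1:L-1)} \cup \theta^{(L)}$ and writes $\hat\Theta^{(L)}_{ij}(x,x') = T^{\text{last}}_{ij} + T^{\text{prev}}_{ij}$. Differentiating with respect to $(W^{(L)},b^{(L)})$ yields
\[
T^{\text{last}}_{ij} = \delta_{ij}\Big(\tfrac{\sigma_w^2}{n_{L-1}}\sum_{l} \sigma(h^{(L-1)}_l(x))\sigma(h^{(L-1)}_l(x')) + \sigma_b^2\Big),
\]
which by Theorem \ref{theorem:ann_conv_gp} and a conditional law of large numbers over the (asymptotically) iid hidden units $l$ converges in probability to $\delta_{ij}\Sigma^{(L)}(x,x')$, using the linear envelope property of $\sigma$ for uniform integrability. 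For the remaining parameters, the chain rule gives
\[
T^{\text{prev}}_{ij} = \tfrac{\sigma_w^2}{n_{L-1}}\sum_{m,m'} W^{(L)}_{im}W^{(L)}_{jm'}\,\dot\sigma(h^{(L-1)}_m(x))\dot\sigma(h^{(L-1)}_{m'}(x'))\,\hat\Theta^{(L-1)}_{m,m'}(x,x').
\]
The induction hypothesis lets $\hat\Theta^{(L-1)}_{m,m'} \to \delta_{m,m'}\Theta^{(L-1)}(x,x')$ in probability as $n_1,\dots,n_{L-2}\to\infty$, collapsing the double sum to its diagonal. Exploiting the independence of $W^{(L)}$ from $\theta^{(1:L-1)}$ together with $\EW[W^{(L)}_{im}W^{(L)}_{jm}]=\delta_{ij}$, a second law of large numbers in $n_{L-1}$ then yields $T^{\text{prev}}_{ij} \xrightarrow{\mathcal{P}} \delta_{ij}\,\Theta^{(L-1)}(x,x')\,\dot\Sigma^{(L)}(x,x')$. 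Adding the two pieces reproduces the stated recursion.

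The main obstacle is the rigorous coordination of the two nested limits in $T^{\text{prev}}$: the randomness of $W^{(L)}$ scales like $1/n_{L-1}$, while the randomness of $\hat\Theta^{(L-1)}$ depends jointly on all widths $n_1,\dots,n_{L-2}$. The weak limit of Definition \ref{def:network_conv_notions}(iii) gives the freedom to choose width functions so that the inner widths diverge arbitrarily fast relative to $n_{L-1}$; this legitimizes the sequential reading "first let $n_1,\dots,n_{L-2}\to\infty$ to apply the inductive hypothesis, then let $n_{L-1}\to\infty$ for the LLN over the last hidden index", which is exactly the flexibility that makes the weak-limit version tractable. Justifying this rigorously requires conditioning on the sigma-algebra generated by $\theta^{(1:L-1)}$ and establishing uniform integrability via the linear envelope property of $\sigma$ together with the almost-everywhere boundedness of $\dot\sigma$ implied by Lipschitz continuity; this is the essential technical content of the original weak-limit argument of \cite{jacot2018neural}, extended here to arbitrary output width $n_L$ through the factorization $\delta_{ij}\,\Theta^{(L)}$.
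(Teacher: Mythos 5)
Your proposal is correct and follows essentially the same route as the paper: the paper defers the proof of this exact statement to \cite{jacot2018neural}, but its own detailed proof of the generalized version (Theorem \ref{theorem:gen2}) uses precisely your induction on $L$ — direct computation at $L=1$, a split into last-layer and previous-layer contributions, a Hilbert-space law of large numbers for the $\Sigma^{(L)}$ term, and the induction hypothesis plus a second LLN collapsing the double sum to the diagonal for the $\Theta^{(L-1)}\dot\Sigma^{(L)}$ term. Your concern about coordinating the nested weak limits is also the same issue the paper resolves via its width-function construction (Lemma \ref{lemma:weakconv} and the discussion of combining weakly-holding statements in the induction step).
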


Compared to Theorem 1 of \cite{jacot2018neural}, the statement is slightly generalized in the sense that it allows for arbitrary \( \sigma_w > 0 \). The arguments in the proof work the same way.

\begin{remark}[Versions of Theorem \ref{theorem:ntk_conv_init} in the literature]
    A proof of this theorem for \( (n_l)_{1\leq l < L} \asympropto n\) is given by \cite{yang2019scaling} and his proof is also referenced by \cite{lee2019wide}. However, the proof is given in terms of so-called \textit{tensor programs} and therefore harder to follow. For the ReLU activation function, a proof for \( n_1, \dots, n_{L-1} \to \infty \) strongly is provided by \citet[Theorem 3.1]{arora2019exact}. We will later prove a version of this theorem for the generalized NTK.
\end{remark}

\textbf{Convergence of the NTK during training in the infinite-width limit.} Not only does the NTK converge to a constant kernel in the infinite-width limit, even the kernel during training, \(\hat{\Theta}_t^{(L)} \), converges to this constant kernel. This was also discovered by \cite{jacot2018neural}.

\begin{theorem}[Theorem 2 by \cite{jacot2018neural}]
\label{theorem:ntk_conv_training}
    Assume any network function of depth \(L\) defined as in Definition \ref{def:ntk_param} with Lipschitz continuous activation function \(\sigma\), twice differentiable with bounded second derivative, and trained with gradient flow as in Equation \ref{eq:deriv_ntk3}. Let \(T>0\) such that
    \begin{equation}
        \int_0^T \lVert \nabla \ell(f_t(\,\cdot\,);f^*(\,\cdot\,)) \rVert_{p_{\textnormal{emp}}} \,\del t 
        = \int_0^T \sqrt{d}\left\lVert \nabla_{f_t(\mathcal{X})} \Loss(f_t(\mathcal{X});f^*(\mathcal{X})) \right\rVert_2 \,\del t 
        \in \mathcal{O}_p(1),
        \label{eq:condition_T} \tag{\textasteriskcentered}
    \end{equation}
    where \( X \in \mathcal{O}_p(1)\) denotes that \(X\) is stochastically bounded. Then, as \( n_1, \dots, n_{L-1} \to \infty \) weakly, the empirical NTK \( \hat{\Theta}_t^{(L)} \) converges in probability to the analytic NTK \( \Theta^{(L)} \otimes \Id_{n_L} \) in probability uniformly for \( t \in [0,T]\). We therefore write
    \begin{equation*}
        \Hat{\Theta}^{(L)}_t \overset{\mathcal{P}}{\xrightarrow{\,\quad}} \Theta^{(L)} \otimes \Id_{n_L} .
    \end{equation*}
\end{theorem}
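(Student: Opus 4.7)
The plan is to leverage Theorem \ref{theorem:ntk_conv_init} at $t=0$ and prove that the empirical NTK varies negligibly during training in the infinite-width limit, so that the convergence propagates uniformly to all $t \in [0,T]$. Concretely, I would establish a bound of the form
\begin{equation*}
    \sup_{t \in [0,T]} \llVert \hat\Theta^{(L)}_t(x,x') - \hat\Theta^{(L)}_0(x,x') \rrVert \xrightarrow{\mathcal{P}} 0
    \quad \text{as } n_1,\dots,n_{L-1} \to \infty \text{ weakly},
\end{equation*}
and combine it with Theorem \ref{theorem:ntk_conv_init} via the triangle inequality.

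First I would compute $\frac{\del}{\del t}\hat\Theta^{(L)}_t(x,x')$ using the chain rule. Writing $\hat\Theta^{(L)}_t(x,x') = J_\theta f(x;\theta_t) J_\theta f(x';\theta_t)^\intercal$, the time derivative produces Hessian-type objects $\partial_p \partial_q f(x;\theta_t)$ contracted with $\dot\theta_t$ and a Jacobian at $x'$, plus the symmetric term. The assumption that $\sigma$ is twice differentiable with bounded second derivative makes these objects well-defined and opens them up to the same layerwise scaling analysis used for the Jacobians in Theorem \ref{theorem:ntk_conv_init}. The goal of this step is to argue inductively on $l$ that in the NTK parametrization the Hessian contributions $\partial_p \partial_q f^{(l)}(x;\theta_t)$ are of order $1/\sqrt{n}$ in the relevant operator sense, so that contraction with $\dot\theta_t$ leaves a factor that vanishes as $m \to \infty$.

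Second, I would control $\dot\theta_t$ via Equation \eqref{eq:deriv_ntk1}, namely $\dot\theta_t = -\eta\, J_\theta f(\mathx;\theta_t)^\intercal \nabla_{f_t(\mathx)} \Loss(f_t(\mathx);\mathy)$. Substituting and applying Cauchy--Schwarz produces integrands of the form (Hessian term) $\times$ $J_\theta f(\mathx;\theta_t) \times \nabla_{f_t(\mathx)}\Loss$. The Jacobian factor is controlled by the convergence of $\hat\Theta^{(L)}_0$ (up to the fluctuations being bounded), and the loss-gradient factor is controlled precisely by hypothesis \eqref{eq:condition_T}, which furnishes the required stochastic boundedness of $\int_0^T \lVert \nabla_{f_t(\mathx)}\Loss \rVert_2 \del t$. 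Combining these estimates yields an upper bound on $\sup_{t\in [0,T]} \lVert \hat\Theta^{(L)}_t - \hat\Theta^{(L)}_0 \rVert$ of order $1/\sqrt{n}$ with high probability.

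The main obstacle is the layerwise scaling argument for the Hessian of $f$ with respect to parameters: naively the total parameter count is $P = \Theta(\sum_l n_l n_{l-1})$, so controlling $\lVert \dot\theta_t \rVert$ alone is insufficient; one needs that the relevant contractions $\sum_{p,q} \partial_p \partial_q f \cdot \dot\theta_{t,q}$ exhibit the cancellations that the NTK parametrization is designed to produce. This can be done by induction on depth, mirroring the recursive structure $\Theta^{(L)} = \Sigma^{(L)} + \Theta^{(L-1)} \dot\Sigma^{(L)}$: at each layer one shows that the new contributions from second-order perturbations of the weights $W^{(l)}$ and biases $b^{(l)}$ average out by a law-of-large-numbers effect governed by the Gaussian-process limit of Theorem \ref{theorem:ann_conv_gp}, so that each layerwise increment of $\hat\Theta^{(L)}_t - \hat\Theta^{(L)}_0$ is $o_\Prob(1)$ uniformly on $[0,T]$.
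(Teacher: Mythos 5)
Your overall architecture is sound, but it is a genuinely different route from the one the paper takes. Note first that the paper does not prove Theorem \ref{theorem:ntk_conv_training} itself: it reviews it and defers to \cite{jacot2018neural} and to Chapter G of \cite{lee2019wide}, and the closest in-paper argument is the proof of the generalized analogue, Theorem \ref{theorem:gen3}. That proof avoids Hessians entirely. Its key ingredient is Lemma \ref{lemma:jacobi_estimates}, which gives a \emph{local Lipschitz bound on the Jacobian map} $\theta \mapsto J(\mathcal{X};\theta)$ with constant $K/\sqrt{n}$ on a ball $B(\theta_0,C)$, together with boundedness $\lVert J \rVert_F \le K$ there; the change of the kernel is then controlled by $\lVert \hat\Theta_t - \hat\Theta_0\rVert \le 2K^2 n^{-1/2}\lVert \theta_t - \theta_0\rVert_2$, the residual decay is obtained from Gr\"onwall's inequality using positive definiteness of the limit kernel (rather than assuming \eqref{eq:condition_T}), and a continuity/bootstrap argument keeps $\theta_t$ inside $B(\theta_0,C)$ for all time. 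Your proposal instead differentiates $\hat\Theta_t$ in $t$ and controls the resulting Hessian contractions by an $O(1/\sqrt{n})$ spectral bound on $\partial_p\partial_q f$ — this is precisely the alternative route the paper attributes to \cite{liu2020linearity} in the remark following the theorem, and it is the natural way to exploit the "twice differentiable with bounded second derivative" hypothesis and the condition \eqref{eq:condition_T} as stated. What the Jacobian-Lipschitz route buys is that one never has to prove the (nontrivial) Hessian scaling; what your route buys is a quantitative $n^{-1/2}$ rate directly from the hypothesis \eqref{eq:condition_T} without needing MSE loss or positive definiteness of $\Theta^{(L)}(\mathcal{X},\mathcal{X})$.

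The one concrete gap is trajectory localization: your bounds $\lVert J_\theta f(\cdot;\theta_t)\rVert = O(1)$ and $\lVert \partial^2 f(\cdot;\theta_t)\rVert = O(1/\sqrt{n})$ are asserted "along the trajectory", but you only justify the Jacobian bound at $t=0$ via the convergence of $\hat\Theta_0$. Both bounds are only available (with high probability) on a fixed ball $B(\theta_0,C)$, so you must close a loop: condition \eqref{eq:condition_T} plus $\lVert\dot\theta_t\rVert_2 \le \eta \lVert J(\mathcal{X};\theta_t)\rVert\,\lVert\nabla_{f_t(\mathcal{X})}\Loss\rVert_2$ gives $\lVert\theta_t-\theta_0\rVert_2 = \mathcal{O}_p(1)$ \emph{provided} the Jacobian bound already holds up to time $t$, which requires the same continuity argument the paper uses in the proof of Theorem \ref{theorem:gen3} (the $t_1 = \inf\{t : \lVert\theta_t - \theta_0\rVert_2 \ge C\}$ contradiction). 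Without that step the uniform-in-$t$ estimates are circular. The inductive Hessian-scaling claim itself is correct but is the bulk of the technical work in \cite{liu2020linearity}; as written it is a plan rather than a proof.
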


\begin{remark}[Versions of Theorem \ref{theorem:ntk_conv_training} in the literature]
    The proof of \cite{jacot2018neural} relies heavily on a function space perspective. Since this formulation tends to lack mathematical rigor, we will rely on the proof of the theorem for the case \( (n_l)_{1\leq l < L} \asympropto n \) given by \citet[Chapter G]{lee2019wide}. In particular, the first inequality of (S51) in Theorem G.2 of \cite{lee2019wide} implies the condition (\textasteriskcentered). Furthermore, a different approach to proving the above statement for the case \( (n_l)_{1\leq l < L} \asympropto n \) using the Hessian matrix of the network function was taken by \citet[Proposition 2.3, Theorem 3.2]{liu2020linearity}. A partial proof of a version of this theorem for the generalized NTK will be given later. Only an auxiliary lemma remains to be proved.
\end{remark}

\subsubsection{Gradient flow in the infinite-width limit}
\label{subsubsec:gradientflow_for_infwidth}

Given the results of the previous section, we can formulate an infinite width version of Equation \eqref{eq:deriv_ntk3} by replacing the empirical with the analytic NTK. This allows us to analyze the learning dynamics of networks in the infinite-width limit, which yields connections to kernel methods and reproducing kernel Hilbert spaces. We then discuss how far the resulting functions and solutions in the infinite-width limit deviate from the finite width networks. This is essential to evaluate to what extend the results in the infinite-width limit can inform us about the behavior of gradient flow in the finite-width networks. First, we state the infinite-width version of Equation \eqref{eq:deriv_ntk4} using Theorem \ref{theorem:ntk_conv_training},
\begin{align}
    \frac{\del}{\del t} f_t(x) 
    &= -\eta\, \left( \Theta^{(L)} \otimes \textnormal{I}_{n_L} \right)(x, \mathcal{X}) \, \nabla_{f_t(\mathcal{X})} \Loss(f_t(\mathcal{X});\mathcal{Y}) \notag \\
    &= -\eta \, \frac{1}{d} \sum_{i=1}^d \Theta^{(L)}(x,x_i) \cdot \textnormal{I}_{n_L} \, \nabla \ell(f_t(x_i);y_i) \notag \\
    &= -\eta \, \sum_{i=1}^d \Theta^{(L)}(x,x_i) \, \frac{1}{d} \nabla \ell(f_t(x_i);y_i) \label{eq:deriv_ntk5} \\
    &= -\eta \, \Theta^{(L)}(x,\mathcal{X}) \, \frac{1}{d}\left[ \nabla \ell(f_t(x_1);y_1), \, \dots \,, \nabla \ell(f_t(x_d);y_d)  \right]^\intercal \notag \\
    &\eqqcolon -\eta \, \Theta^{(L)}(x,\mathcal{X}) \, \nabla_{f_t(\mathcal{X})} \Loss(f_t(\mathcal{X});\mathcal{Y}) , \label{eq:deriv_ntk6}
\end{align}
where in the last line we interpret \( \nabla_{f_t(\mathcal{X})} \Loss(f_t(\mathcal{X});\mathcal{Y}) \in \R^{d \times n_L} \) as a matrix of size \( d \times n_L \) with entries
\( \left[ \nabla \Loss_{f_t(\mathcal{X})}(f_t(\mathcal{X});\mathcal{Y}) \right]_{i\,j} = 1/d \cdot \partial_{j} \ell(f_t(x_i);y_i) \). Recall that \(\partial_{j} \ell(f_t(x_i);y_i)\) is the partial derivative of \( \ell(\,\cdot\,;y_i) \) with respect to its \(j\)-th entry, i.e., with respect to \( f_{t,j}(x_i) \). Note that the last line is a row vector, which we can identify as a column vector. The fact that the NTK is now time-independent and non-random has two interesting implications:
\begin{itemize}
    \item Equation \eqref{eq:deriv_ntk6} is now an differential equation that can be solved explicitly or numerically for certain loss functions.
    \item According to Equation \eqref{eq:deriv_ntk5}, the time derivative of \(f_t(x)\) can now be expressed element-wise as a linear combination of functions of the type \( \Theta^{(L)}(\,\cdot\, , \Tilde{x}) \colon \R^{n_0} \to \R \). For an arbitrary symmetric and positive definite kernel \(k(\,\cdot\, , \,\cdot\,) \), the completion of the linear span of functions of this type is called the reproducing kernel Hilbert space (RKHS) of \(k\). Assuming that the solution of Equation \eqref{eq:deriv_ntk5} is an element of the RKHS of \(\Theta^{(L)}\), one can ask what the space looks like.
\end{itemize}
The ODE of Equation \eqref{eq:deriv_ntk6} has already been considered by \citet[Chapter 5]{jacot2018neural} and \citet[Chapter 2.2]{lee2019wide}, and we will follow the observations made there. To do this, we will assume the mean squared error (MSE) loss, 
\[ \Loss(\Tilde{\mathcal{Y}}; \mathcal{Y}) = \frac{1}{2} \lVert \Tilde{\mathcal{Y}} - \mathcal{Y} \rVert_2^2 , \]
implying \( \nabla_{f_t(\mathcal{X})} \Loss(f_t(\mathcal{X});\mathcal{Y}) = f_t(\mathcal{X}) - \mathcal{Y} \), where \(f_t(\mathcal{X})\) and \(\mathcal{Y}\) are again interpreted as matrices of dimension \(d \times n_L\). This gives us the following ODE
\begin{equation*}
    \frac{\del}{\del t} f_t(x) = -\eta \, \Theta^{(L)}(x,\mathcal{X}) \, \left( f_t(\mathcal{X}) - \mathcal{Y} \right) .
\end{equation*}
Now, for simplicity, we denote \( \Theta(x,y) \coloneqq \Theta^{(L)}(x,y) \) and \( \Theta \coloneqq \Theta(\mathcal{X},\mathcal{X}) \). Furthermore, we consider an arbitrary set of test points \(\mathcal{X}_T \). The solution of the ODE is then given by
\begin{align*}
    f_t(\mathcal{X}_T) &= \mu_t(\mathcal{X}_T) + \gamma_t(\mathcal{X}_T) \quad \text{for} \\
    \mu_t(\mathx_T) &= \Theta(\mathx_T,\mathx)\Theta^{-1} \left(\textnormal{I}_{d} - e^{-\eta\Theta t} \right)\mathy \quad \text{and} \\
    \gamma_t(\mathx_T) &= f_0(\mathx_T) - \Theta (\mathx_T,\mathx)\Theta^{-1}\left(\textnormal{I}_{d} - e^{-\eta\Theta t} \right)f_0(\mathx) .
\end{align*}
Recall that by Theorem \ref{theorem:ann_conv_gp}, the components \(f_{0,j}\) are independent and identically distributed Gaussian processes with mean zero and covariance function \( \Sigma \coloneqq \Sigma^{(L)}\). Hence, \(\gamma_t\) has mean zero and the mean of \(f_t\) is given by \(\mu_t\). By looking at the components of \(f_t\),
\begin{equation*}
    f_{t,j}(\mathx_T) = f_{0,j}(\mathx_T) - \Theta(\mathx_T,\mathx)\Theta^{-1}\left(\textnormal{I}_{d} - e^{-\eta\Theta t} \right) \left( f_{0,j}(\mathx_T) - \mathy_j \right) ,
\end{equation*}
we can conclude that they are independent and identically distributed as well.  One can show that the components are indeed Gaussian processes again with mean \(\mu_t\). Using \(\gamma_t\) we can also compute the covariance matrix for our arbitrary set of test points \(\mathx_T\),
\begin{align*}
    &\Gamma_t(\mathx_T,\mathx_T) \coloneqq \EW\left[ \gamma_{t,j}(\mathx_T) \, \gamma_{t,j}(\mathx_T)^\intercal \right]
    = \EW\left[ f_{0,j}(\mathx_T) \, f_{0,j}(\mathx_T)^\intercal \right] \\
    &- \EW\left[ f_{0,j}(\mathx_T) f_{0,j}(\mathx)^\intercal \left(\textnormal{I}_{d} - e^{-\eta\Theta t} \right) \Theta^{-1} \Theta(\mathx,\mathx_T) \right] \\
    &- \EW\left[  \Theta(\mathx_T,\mathx)\Theta^{-1}\left(\textnormal{I}_{d} - e^{-\eta\Theta t} \right) f_{0,j}(\mathx) f_{0,j}(\mathx_T)^\intercal \right] \\
    &+ \EW\left[  \Theta(\mathx_T,\mathx)\Theta^{-1}\left(\textnormal{I}_{d} - e^{-\eta\Theta t} \right) f_{0,j}(\mathx) f_{0,j}(\mathx)^\intercal \left(\textnormal{I}_{d} - e^{-\eta\Theta t} \right) \Theta^{-1} \Theta(\mathx,\mathx_T) \right] \\
    &= \Sigma(\mathx_T,\mathx_T) 
    - \Sigma(\mathx_T,\mathx) \left(\textnormal{I}_{d} - e^{-\eta\Theta t} \right) \Theta^{-1} \Theta(\mathx,\mathx_T) \\
    &- \Theta(\mathx_T,\mathx)\Theta^{-1}\left(\textnormal{I}_{d} - e^{-\eta\Theta t} \right) \Sigma(\mathx,\mathx_T) \\
    &+ \Theta(\mathx_T,\mathx)\Theta^{-1}\left(\textnormal{I}_{d} - e^{-\eta\Theta t} \right) \Sigma(\mathx,\mathx) \left(\textnormal{I}_{d} - e^{-\eta\Theta t} \right) \Theta^{-1} \Theta(\mathx,\mathx_T) .
\end{align*}

Assuming that \( \Theta \) is positive definite immediately leads to pointwise convergence of the mean and covariance functions. This implies that the gradient flow solution for networks in the infinite-width limit converges to a Gaussian process as \( t \to \infty \) with mean function \(\mu_\infty\) and covariance function \( \Gamma_\infty \) given below. This follows from the weak convergence of the finite-dimensional marginal distributions by Lévy's convergence theorem \citep[Section 18.1]{williams1991probability}. Again, the discussions of Remark \ref{remark:discussion_networktogp} applies. We have
\begin{align}
    &\mu_\infty(\mathx_T) = \Theta(\mathx_T,\mathx)\Theta^{-1} \mathy \quad \text{and} \label{eq:deriv_ntk7} \\
    &\Gamma_\infty(\mathx_T,\mathx_T) 
    = \Sigma(\mathx_T,\mathx_T) 
    - \Sigma(\mathx_T,\mathx) \Theta^{-1} \Theta(\mathx,\mathx_T) \notag \\
    &- \Theta(\mathx_T,\mathx)\Theta^{-1} \Sigma(\mathx,\mathx_T) 
    + \Theta(\mathx_T,\mathx)\Theta^{-1} \Sigma(\mathx,\mathx)  \Theta^{-1} \Theta(\mathx,\mathx_T) . \label{eq:deriv_ntk8}
\end{align}
\cite{lee2019wide} state that a network trained with gradient flow will indeed  converge in distribution to this Gaussian process as the width goes to infinity:

\begin{theorem}[Theorem 2.2 from \cite{lee2019wide}]
\label{theorem:nn_to_gp}
    Let the learning rate \(\eta < \eta_{\textnormal{critical}} \) for 
    \[ \eta_{\textnormal{critical}} \coloneqq 2(\lambda_\textnormal{min}(\Theta^{(L)}(\mathx, \mathx)) + \lambda_\textnormal{max}(\Theta^{(L)}(\mathx, \mathx))) \]
    with a network function \(f_t\) as in Theorem \ref{theorem:ntk_conv_training} with hidden layer widths \( n_1=\dots=n_{L-1}=n \) and restricted to \( x \in \R^{n_0} \) with \(\lVert x \rVert_2 \leq 1 \). If \( \lambda_\textnormal{min}(\Theta^{(L)}(\mathx, \mathx)) > 0\), then the components of \(f_t\) converge in distribution to independent, identically distributed Gaussian processes \(\Normal(\mu_t,\Gamma_t)\) as \( n \to \infty \) for all \( t \in [0,\infty) \cup \{\infty\}\).
\end{theorem}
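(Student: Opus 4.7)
The plan is to prove the theorem via the linearization strategy of Lee et al.: replace the network function $f_t$ by its first-order Taylor expansion in the parameters around initialization,
\begin{equation*}
f_t^{\mathrm{lin}}(x) \coloneqq f_0(x) + J_\theta f(x;\theta_0)\,(\theta_t - \theta_0),
\end{equation*}
and argue in two stages: (i) the linearized process $f_t^{\mathrm{lin}}$ converges in distribution to $\Normal(\mu_t,\Gamma_t)$ as $n\to\infty$; (ii) the remainder $R_t \coloneqq f_t - f_t^{\mathrm{lin}}$ vanishes in probability, uniformly over the relevant time window.

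For step (i), note that on the training set the linearized gradient flow with MSE loss is an exact linear ODE whose closed-form solution is
\begin{equation*}
f_t^{\mathrm{lin}}(\mathx) - \mathy = e^{-\eta \,\hat{\Theta}_0^{(L)} t}\bigl(f_0(\mathx)-\mathy\bigr),
\end{equation*}
and the value on test points $\mathx_T$ is an affine functional of the initial random vectors $f_0(\mathx_T), f_0(\mathx)$ with coefficients determined by the empirical NTK $\hat{\Theta}_0^{(L)}$. By Theorem \ref{theorem:paper:ann_conv_gp}, the joint distribution of $(f_0(\mathx_T), f_0(\mathx))$ converges to that of iid Gaussian processes with covariance $\Sigma^{(L)}$, and by Theorem \ref{theorem:ntk_conv_init} the (random) coefficient matrices $\hat{\Theta}_0^{(L)}(\cdot,\cdot)$ converge in probability to the deterministic $\Theta^{(L)}\otimes \Id_{n_L}$. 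A Slutsky-type argument then gives the finite-dimensional convergence $f_t^{\mathrm{lin}}(\mathx_T) \Rightarrow \Normal(\mu_t,\Gamma_t)$ for each $t$, with the formulas (\ref{eq:deriv_ntk7}), (\ref{eq:deriv_ntk8}) recovered by the explicit affine calculation and by letting $t\to\infty$ using that $\Theta^{(L)}(\mathx,\mathx)$ is positive definite.

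For step (ii), the key input is Theorem \ref{theorem:ntk_conv_training}: since $\hat{\Theta}^{(L)}_t \to \Theta^{(L)}\otimes\Id_{n_L}$ uniformly in $t$ (in probability), a Grönwall/bootstrap argument combined with an operator-norm bound on the Hessian $\nabla_\theta^2 f$ yields $\|\theta_t-\theta_0\|_2 = O_p(1)$ in NTK parametrization while the per-parameter displacement decays like $n^{-1/2}$, so that $R_t(\mathx_T)\to 0$ in probability. To extend this control from $t\in[0,T]$ to $t\in[0,\infty)\cup\{\infty\}$, I would use the assumption $\eta < \eta_{\mathrm{critical}}$ together with $\lambda_{\min}(\Theta^{(L)}(\mathx,\mathx))>0$: these imply exponential contraction of the linearized residual $f_t^{\mathrm{lin}}(\mathx)-\mathy$, hence integrability of $\nabla \Loss$ over $[0,\infty)$ (i.e. condition (\textasteriskcentered) is satisfied for all $T$ with a uniform constant), which in turn lets one pass to $t=\infty$.

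The main obstacle is exactly this uniform-in-time control. Pointwise in $t$, the statement is largely a packaging of linearization plus the two NTK theorems; the difficulty lies in the joint large-$n$/large-$t$ limit, where one must simultaneously ensure that the discretization error $R_t$ does not accumulate and that the exponential-in-$t$ contraction dominates. The input-norm restriction $\|x\|_2\leq 1$ and the equal-width condition $n_1=\cdots=n_{L-1}=n$ are used precisely to obtain the Hessian and gradient bounds with explicit dependence on $n$ needed for this argument, as in the proof of Theorem G.2 of \cite{lee2019wide}. Once $R_t\to 0$ in probability uniformly on $[0,\infty)\cup\{\infty\}$ is established, the conclusion follows from step (i) and the continuous mapping theorem applied componentwise; independence of the $n_L$ components is inherited from independence of the components of $f_0$ and the fact that the linearized update decouples across output dimensions through the $\Id_{n_L}$ factor in the limiting kernel.
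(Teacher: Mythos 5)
Your two-stage plan (convergence of the linearized process, then uniform-in-time control of the remainder $R_t = f_t - f_t^{\mathrm{lin}}$) is essentially the proof strategy of \cite{lee2019wide}, which is exactly where the paper takes this theorem from: the paper itself does not prove Theorem \ref{theorem:nn_to_gp} but imports it verbatim, and only carries out the deterministic part of your step (i) — the explicit ODE solution yielding $\mu_t$ and $\Gamma_t$ in Equations \eqref{eq:deriv_ntk7} and \eqref{eq:deriv_ntk8} — in Section \ref{subsubsec:gradientflow_for_infwidth}. Your outline is therefore consistent with the source; the only technical remark worth making is that \cite{lee2019wide} control the remainder via local Lipschitzness of the Jacobian (their Lemma 1, mirrored in this paper's Lemma \ref{lemma:jacobi_estimates}) rather than an operator-norm bound on the Hessian, which matters if one wants the result for activations that are only Lipschitz, though under the twice-differentiability hypothesis of Theorem \ref{theorem:ntk_conv_training} either route is available.
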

Hence, the result of training a finite-width network with gradient flow for an infinite amount of time will be arbitrarily close in distribution to a Gaussian process with mean function \(\mu_\infty\) and covariance function \(\Gamma_\infty \), if the width is sufficiently large. Note that by Equations (\ref{eq:deriv_ntk7}) and (\ref{eq:deriv_ntk8}) the variance at the training points \(\mathx\) is zero and the mean at the training points is exactly \( \mathcal{Y} \).

Since we will focus on the mean, we will first sketch a trick introduced in Chapter 3 of \cite{arora2019exact} to make the variance term arbitrarily small. If \(f_0\) had mean variance, this would consequently also be the case for all \( f_t \) and for the solution \(f_\infty \coloneqq \lim_{t\to\infty}f_t\). This can be achieved by multiplying \(f_0\) by a small constant \(\kappa > 0\) and considering the network function \(g_0 = \kappa f_0\) instead. It then holds 
\begin{equation*}
    \hat\Theta_g(x,y) = J_\theta g_0(x;\theta) J_\theta g_0(y;\theta)^\intercal = J_\theta \kappa f_0(x;\theta) J_\theta \kappa f_0(y;\theta)^\intercal = \kappa^2 \hat\Theta_f(x,y) ,
\end{equation*}
and thus we have \(\Theta_g = \kappa^2 \Theta_f \). In the infinite-width limit, the derivative of \(g_t\) is then given by
\begin{align*}
    \frac{\del}{\del t} g_t(x) &= -\eta \, \Theta_g(x,\mathx) \left( g_t(\mathx) - \mathy \right) \\
    &= -\eta \, \kappa^2 \Theta_f(x,\mathx)\left( \kappa f_t(\mathx) - \mathy \right) ,
\end{align*}
which implies as before
\begin{align*}
    g_t(x) &= g_0(x) - \Theta_g(x,\mathx)\Theta_g(\mathx,\mathx)^{-1} \left( \textnormal{I}_d - e^{-\eta\, \Theta_g(\mathx,\mathx)t} \right) \left( g_0(\mathx) - \mathy \right) \\
    &= \Theta_f(x,\mathx)\Theta_f(\mathx,\mathx)^{-1}\left( \textnormal{I}_d - e^{-\eta\kappa^2 \, \Theta_f(\mathx,\mathx)t} \right) \mathy \\
    &+ \kappa \left(f_0(x) - \Theta_f(x,\mathx)\Theta_f(\mathx,\mathx)^{-1}\left( \textnormal{I}_d - e^{-\eta\kappa^2 \, \Theta_f(\mathx,\mathx)t} \right) f_0(\mathx) \right) .
\end{align*}
Note that the term in the second last line corresponds to the non-random mean of \(f\) trained with learning rate \(\eta\kappa^2\), and that the term in the last line is random, but can be made arbitrarily small using \(\kappa\). We can think of this as a trade-off between learning rate and variance. This justifies why we can focus on the mean in the next section.

To sum up, we are interested in network functions in the infinite-width limit that are trained over time according to
\begin{equation}
    \frac{\del}{\del t} f_t(x) = -\eta \, \Theta(x,\mathcal{X}) \, \left( f_t(\mathx) - \mathy \right)
    = \sum_{i=1}^d \Theta(x,x_i) \, \left( -\eta\, \left( f_t(x_i) - y_i \right)\right), \label{eq:ntk_kernel_persp_1}
\end{equation}
where we change from a row vector to a column vector in the last equation. The mean of such network functions after infinite training time is given by
\begin{equation}
    f_\textnormal{NTK}(x) \coloneqq \Theta(x,\mathx) \Theta(\mathx,\mathx)^{-1} \mathy = \mu_\infty(x) . \label{eq:ntk_kernel_persp_2}
\end{equation}

\section{The NTK for sign activation function}
\label{section:sign_activation}

The first observation to make in our attempt to apply the neural tangent kernel to networks with the sign function as activation function is that the sign function has a zero derivative almost everywhere. Thus, the derivative of the network function with respect to the network weights is zero for all weights that are not part of the last layer. The case where the weights \(\theta^{(1:L-1)}\) are frozen after initialization and only \(\theta^{(L)}\) is trained has already been discussed by \citet[Chapter 2.3.1 and Chapter D]{lee2019wide}. For a network in the infinite-width limit, this approach is equivalent to applying \textit{Gaussian process regression}, i.e., knowing that \( f \sim \Normal\left(0, \Sigma^{(L)}\right)\) for infinite width, one considers \( f \mid f(\mathx) = \mathy \). This can be seen by realizing that \( \Theta^{(L)} = \Sigma^{(L)} \) if \(\dot{\sigma} = 0\) almost everywhere and applying Theorem \ref{theorem:nn_to_gp}.

While this is an interesting observation, and the strategy of optimizing only the last layer can also be transferred to finite width networks, we would prefer to train the whole network and not identify the derivative of the sign function with zero, since this discards all information about the jump discontinuities in our networks. An obvious alternative would be to use the distributional derivative of the sign function, which is given by \(2\,\delta_0\), where \(\delta_0\) denotes the delta distribution. We will see that \( \dot{\Sigma}^{(L)} \) still exists when the distributional derivative is substituted into its formula. Alternatively, we can obtain the same expression by approximating the sign function with scaled error functions,
\begin{align*}
    \erf_m(z) = \erf(m\cdot z) = \frac{2}{\sqrt{\pi}} \int_{0}^{m\cdot z} e^{-t^2} \, \del t ,
\end{align*}
and considering the limit \( m \to \infty \).

\subsection{The NTK for error activation function}
\label{subsec:erf_activation}

Due to the previous considerations, we begin by deriving the analytic NTK for the error function. Following the notation of \cite{lee2019wide}, we need to find analytic expressions for the terms
\begin{align*}
    \mathcal{T}_m(\Sigma) &\coloneqq \EW_{(X,Y) \sim \Normal(0,\Sigma)} [\erf_m(X) \, \erf_m(Y)] \quad \textnormal{and} \\
    \dot{\mathcal{T}}_m(\Sigma) &\coloneqq \EW_{(X,Y) \sim \Normal(0,\Sigma)} [\dot{\erf}_m(X) \, \dot{\erf}_m(Y)] .
\end{align*}
Note that by a change of variables we can alternatively consider the terms
\begin{align*}
    \mathcal{T}(m^2 \cdot \Sigma) &\coloneqq \EW_{(X,Y) \sim \Normal(0,m^2 \cdot \Sigma)} [\erf(X) \, \erf(Y)] = \mathcal{T}_m(\Sigma) \quad \textnormal{and} \\
    \dot{\mathcal{T}}(m^2 \cdot \Sigma) &\coloneqq \EW_{(X,Y) \sim \Normal(0,m^2 \cdot \Sigma)} [\dot\erf(X) \, \dot\erf(Y)] 
    = \frac{1}{m^2} \dot{\mathcal{T}}_m(\Sigma) .
\end{align*}
For \(\Sigma' = \begin{psmallmatrix}
    x\cdot x & x \cdot y \\
    x \cdot y & y \cdot y
\end{psmallmatrix}\), \(\mathcal{T}(\Sigma')\) and \(\dot{\mathcal{T}}(\Sigma')\) are given in Chapter C of the supplementary material of \cite{lee2019wide}. However, we cannot assume that \(\Sigma'\) always has this form. While \(\dot{\mathcal{T}} \) can be easily calculated, \(\mathcal{T}\) is harder to deal with and a reference to \citet[Chapter 3.1]{williams1996computing} is used. There, the main idea of the proof, how to evaluate a more general expression, is given without further details. We will derive analytic expressions for both terms explicitly.

We start by evaluating \( \dot{\mathcal{T}}\). Note that
\begin{align*}
    \frac{\del}{\del z} \erf(z) = \frac{2}{\sqrt{\pi}} e^{-z^2}  \quad \text{and} \quad \frac{\del}{\del z} \erf_m(z) = \frac{2m}{\sqrt{\pi}} e^{-m^2z^2} .
\end{align*}

\begin{lemma}
\label{lemma:T_dot}
    Given \( U \sim \Normal(0, \Sigma) \) with invertible covariance matrix \( \Sigma \in \R^{d \times d} \) and \(x,y \in \R^d \), it holds
    \begin{align}
    \label{eq:lemma_T_dot_0}
        \EW[\dot\erf(U^\intercal x)\,\dot\erf(U^\intercal y)] = \frac{4}{\pi} \left( (1 + 2 x^\intercal \Sigma x)(1 + 2 y^\intercal \Sigma y) - (2x^\intercal \Sigma y)^2 \right)^{-1/2}.
    \end{align}
    In particular, given \( (X,Y) \sim \Normal(0,\Sigma) \) with invertible covariance matrix \( \Sigma \in \R^{2 \times 2} \) or with \( X = Y \) and singular covariance matrix \( \Sigma \in \R^{2 \times 2} \), it holds
    \begin{align}
    \label{eq:lemma_T_dot_1}
        \dot{\mathcal{T}}(\Sigma) = \EW[\dot\erf(X) \, \dot\erf(Y)] = \frac{4}{\pi} \lvert \textnormal{I}_2 + 2 \cdot \Sigma \rvert^{-1/2},
    \end{align}
    where \(\lvert A \rvert\) denotes the determinant of a matrix A.
\end{lemma}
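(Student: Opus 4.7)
The plan is to reduce both equations to evaluating a Gaussian integral of the form $\EW[\exp(-U^\intercal A U)]$ with $A$ symmetric positive semidefinite, and then to simplify the resulting $d \times d$ determinant via Sylvester's determinant identity.

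First I would use $\dot\erf(z) = (2/\sqrt{\pi})\,e^{-z^2}$ to rewrite the product $\dot\erf(U^\intercal x)\,\dot\erf(U^\intercal y)$ as $(4/\pi)\exp(-U^\intercal A U)$ with $A = xx^\intercal + yy^\intercal$. The expectation then becomes a Gaussian integral of a quadratic exponential, which I would compute by multiplying through by the $\Normal(0,\Sigma)$ density, combining exponents, and applying the standard formula $\int \exp(-\tfrac{1}{2}U^\intercal B U)\,\del U = (2\pi)^{d/2}|B|^{-1/2}$ with $B = \Sigma^{-1} + 2A$ (positive definite since $\Sigma^{-1}$ is and $A$ is positive semidefinite). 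Using $|\Sigma|\cdot|\Sigma^{-1}+2A| = |\textnormal{I}_d + 2\Sigma A|$, this simplifies to $\EW[\exp(-U^\intercal A U)] = |\textnormal{I}_d + 2\Sigma A|^{-1/2}$.

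Next I would collapse this $d \times d$ determinant to a $2 \times 2$ one by applying Sylvester's determinant identity $|\textnormal{I}_d + BC| = |\textnormal{I}_k + CB|$ with $B = 2\Sigma[x\ y] \in \R^{d \times 2}$ (columns $2\Sigma x$, $2\Sigma y$) and $C = [x\ y]^\intercal \in \R^{2 \times d}$, so that $BC = 2\Sigma A$ and
\begin{align*}
CB = 2\begin{pmatrix} x^\intercal \Sigma x & x^\intercal \Sigma y \\ x^\intercal \Sigma y & y^\intercal \Sigma y \end{pmatrix}.
\end{align*}
Expanding $|\textnormal{I}_2 + CB|$ directly yields $(1 + 2x^\intercal \Sigma x)(1 + 2y^\intercal \Sigma y) - (2x^\intercal \Sigma y)^2$, which establishes equation~\eqref{eq:lemma_T_dot_0}. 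For equation~\eqref{eq:lemma_T_dot_1} with invertible $\Sigma \in \R^{2 \times 2}$, I would specialize equation~\eqref{eq:lemma_T_dot_0} to $d = 2$, $U = (X,Y)^\intercal$, $x = e_1$, $y = e_2$; then $[x\ y]^\intercal \Sigma [x\ y] = \Sigma$, collapsing the formula to $(4/\pi)|\textnormal{I}_2 + 2\Sigma|^{-1/2}$. For the degenerate case $X = Y$ with singular $\Sigma$, one has $\Sigma_{11} = \Sigma_{22} = \Sigma_{12}$, the expectation reduces to the one-dimensional Gaussian integral $\EW[e^{-2X^2}] = (1 + 4\Sigma_{11})^{-1/2}$, and a direct evaluation of $|\textnormal{I}_2 + 2\Sigma|$ in this case gives $1 + 4\Sigma_{11}$, matching the formula.

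The main obstacle I anticipate is handling the potential degeneracy of the joint law of $(U^\intercal x, U^\intercal y)$, which can be singular even when $\Sigma$ is invertible (for instance when $x$ and $y$ are parallel). Working in $U$-space via Sylvester's identity sidesteps this issue entirely for equation~\eqref{eq:lemma_T_dot_0}, since no inversion of the $2 \times 2$ covariance of $(U^\intercal x, U^\intercal y)$ is ever needed; for equation~\eqref{eq:lemma_T_dot_1} the degenerate case $X = Y$ only requires a separate one-dimensional check as above.
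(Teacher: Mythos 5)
Your proposal is correct and follows essentially the same route as the paper's proof: reduce the expectation to a Gaussian integral of a quadratic exponential and collapse the resulting $d\times d$ determinant to a $2\times 2$ one via Sylvester's identity, then specialize to $x=e_1$, $y=e_2$ (the paper performs the change of variables $v=\Sigma^{1/2}u$ first, whereas you combine exponents directly and use $\lvert\Sigma\rvert\,\lvert\Sigma^{-1}+2A\rvert=\lvert\mathrm{I}_d+2\Sigma A\rvert$, which is the same computation). Your direct one-dimensional check of the degenerate case $X=Y$ is also equivalent to the paper's handling of it.
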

\begin{proof}
    It holds for \( U \sim \Normal(0, \Sigma) \) with covariance matrix \( \Sigma \in \R^{d \times d} \) and \(x,y \in \R^d \):
    \begin{align*}
        &\EW[\dot\erf(U^\intercal x) \, \dot\erf(U^\intercal y)] = \int_{\R^d} \frac{1}{(2 \pi)^{d/2} \lvert \Sigma \rvert^{1/2}} \left(\frac{2}{\sqrt{\pi}} e^{-(u^\intercal x)^2}\right) \left( \frac{2}{\sqrt{\pi}} e^{- (u^\intercal y)^2}\right) e^{- \frac{1}{2} u^\intercal \Sigma^{-1} u} \,\del u \\
        \overset{(\star)}&{=} \frac{4}{\pi} \int_{\R^d} \frac{1}{(2 \pi)^{d/2}} \exp\left(-\frac{1}{2} v^\intercal (\textnormal{I}_d + 2 \Sigma^{1/2} x x^\intercal \Sigma^{1/2} + 2 \Sigma^{1/2} y y^\intercal \Sigma^{1/2}) v \right) \,\del v \\
        &= \frac{4}{\pi} \left\lvert \left(\textnormal{I}_d + 2 \Sigma^{1/2} x x^\intercal \Sigma^{1/2} + 2 \Sigma^{1/2} y y^\intercal \Sigma^{1/2}\right)^{-1} \right\rvert^{1/2} \\
        &= \frac{4}{\pi} \left\lvert \textnormal{I}_d + 2 \Sigma^{1/2} x x^\intercal \Sigma^{1/2} + 2 \Sigma^{1/2} y y^\intercal \Sigma^{1/2} \right\rvert^{-1/2} ,
    \end{align*}
    using a change of variable \( \Sigma^{1/2} u = v \) for Equation \((\star)\) and using basic properties of the determinant. We can evaluate the determinant in the last line by applying the Sylvester's determinant theorem \cite[(B.1.16)]{pozrikidis2014introduction}, i.e., \( \lvert \textnormal{I}_n + A B^\intercal \rvert = \lvert \textnormal{I}_m + B^\intercal A \rvert \) for any matrices \(A,B \in \R^{n \times m}\). We then define \( A = B = (\sqrt{2}\Sigma^{1/2}x, \sqrt{2}\Sigma^{1/2}y) \in \R^{d \times 2} \), which yields
    \begin{align*}
        &\left\lvert \textnormal{I}_d + 2 \Sigma^{1/2} x x^\intercal \Sigma^{1/2} + 2 \Sigma^{1/2} y y^\intercal \Sigma^{1/2} \right\rvert = \left\lvert \textnormal{I}_d + A B^\intercal \right\rvert \\
        = &\left\lvert \textnormal{I}_2 + B^\intercal A \right\rvert = \left\lvert \begin{pmatrix}
            1 + 2x^\intercal \Sigma x & 2 x^\intercal \Sigma y \\
            2 x^\intercal \Sigma y & 1 + 2y^\intercal \Sigma y
        \end{pmatrix} \right\rvert .
    \end{align*}
    This directly implies Equation (\ref{eq:lemma_T_dot_0}). Furthermore, Equation (\ref{eq:lemma_T_dot_1}) follows with \(\Sigma \in \R^{2\times 2}\) and \(x = \begin{psmallmatrix}
        1\\
        0
    \end{psmallmatrix}
    , y = \begin{psmallmatrix}
        0 \\
        1
    \end{psmallmatrix}
    \) or with \( x = y = \begin{psmallmatrix}
        1\\
        0
    \end{psmallmatrix} \) and an arbitrary invertible covariance matrix \( \Sigma'\) such that \( \Sigma'_{11} = \Sigma_{11}\).
\end{proof}

\begin{corollary}
\label{corollary:T_dot}
    Given \( (X,Y) \sim \Normal(0,\Sigma) \) with invertible covariance matrix \( \Sigma \), it holds
    \begin{align}
        \dot{\mathcal{T}}_m(\Sigma) = \EW[\dot\erf_m(X) \, \dot\erf_m(Y)] = \frac{2}{\pi} \left\lvert \Sigma + \textnormal{I}_2 / (2m^2) \right\rvert^{-1/2} \xrightarrow{m \to \infty} \frac{2}{\pi} \lvert \Sigma \rvert^{-1/2} .
    \end{align}
    If \( (X,Y) \sim \Normal(0,\Sigma) \) with \( X = Y \) and singular covariance matrix \( \Sigma \in \R^{2 \times 2} \), it holds
    \[ \dot{\mathcal{T}}_m(\Sigma) \xrightarrow{m \to \infty} \infty .\]
\end{corollary}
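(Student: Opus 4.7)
The corollary is essentially a change-of-variables consequence of Lemma \ref{lemma:T_dot}, so my plan is to reduce everything to the lemma and then take a careful limit. Recall from the discussion preceding Lemma \ref{lemma:T_dot} that $\dot{\mathcal{T}}_m(\Sigma) = m^2 \cdot \dot{\mathcal{T}}(m^2\Sigma)$; this follows from writing $\dot\erf_m(z) = m \cdot \dot\erf(mz)$ and rescaling the Gaussian. The first step is therefore to substitute this identity into the expression provided by Lemma \ref{lemma:T_dot}, i.e., to compute
\begin{equation*}
    \dot{\mathcal{T}}_m(\Sigma) = m^2 \cdot \frac{4}{\pi} \left\lvert \textnormal{I}_2 + 2 m^2 \Sigma \right\rvert^{-1/2}.
\end{equation*}

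The second step is a short determinant manipulation: factoring out $2m^2$ inside the determinant gives $\lvert \textnormal{I}_2 + 2m^2 \Sigma \rvert = (2m^2)^2 \lvert \Sigma + \textnormal{I}_2/(2m^2) \rvert$, and combining the prefactors yields $\dot{\mathcal{T}}_m(\Sigma) = \frac{2}{\pi} \lvert \Sigma + \textnormal{I}_2/(2m^2) \rvert^{-1/2}$. Note that Lemma \ref{lemma:T_dot} applies in both cases of interest: when $\Sigma$ is invertible, so is $m^2 \Sigma$; and when $X = Y$ with $\Sigma$ singular, the lemma's second regime covers it (so the derived closed-form formula remains valid for both).

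The third step is to pass to the limit $m \to \infty$. In the invertible case this is immediate: $\Sigma + \textnormal{I}_2/(2m^2) \to \Sigma$ entrywise and the determinant is continuous, giving $\dot{\mathcal{T}}_m(\Sigma) \to \frac{2}{\pi}\lvert \Sigma \rvert^{-1/2}$. In the singular $X=Y$ case, $\Sigma = \sigma^2 \begin{psmallmatrix} 1 & 1 \\ 1 & 1 \end{psmallmatrix}$ where $\sigma^2 = \Var(X)$, so a direct calculation gives
\begin{equation*}
    \left\lvert \Sigma + \textnormal{I}_2/(2m^2) \right\rvert = \left( \sigma^2 + \tfrac{1}{2m^2} \right)^2 - \sigma^4 = \frac{\sigma^2}{m^2} + \frac{1}{4m^4} \xrightarrow{m \to \infty} 0,
\end{equation*}
so the inverse square root diverges and $\dot{\mathcal{T}}_m(\Sigma) \to \infty$.

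\textbf{Obstacles.} There is no real technical obstacle here; the content of the corollary is entirely the scaling identity plus Lemma \ref{lemma:T_dot}. The only points that require a little care are (i) making explicit that the closed-form expression from Lemma \ref{lemma:T_dot} remains valid both for invertible covariance and for the degenerate $X=Y$ case (so that the formula $\dot{\mathcal{T}}_m(\Sigma) = \frac{2}{\pi}\lvert \Sigma + \textnormal{I}_2/(2m^2)\rvert^{-1/2}$ can be applied in both regimes), and (ii) verifying the divergence rate in the singular case by explicitly expanding the $2 \times 2$ determinant rather than invoking continuity of $\lvert \cdot \rvert^{-1/2}$, which fails at singular matrices.
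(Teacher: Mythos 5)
Your proposal is correct and follows essentially the same route as the paper: the scaling identity $\dot{\mathcal{T}}_m(\Sigma) = m^2\,\dot{\mathcal{T}}(m^2\Sigma)$, Lemma \ref{lemma:T_dot}, factoring $2m^2$ out of the determinant, and then the limit split into the invertible and singular cases. Your explicit expansion of the $2\times 2$ determinant in the singular $X=Y$ case is a slightly more careful justification of the divergence than the paper gives, but it is the same argument.
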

\begin{proof}
    \begin{align*}
        \dot{\mathcal{T}}_m(\Sigma) &= m^2 \,\dot{\mathcal{T}}(m^2 \cdot \Sigma) \overset{\text{Lemma } \ref{lemma:T_dot}}{=} \frac{4 m^2}{\pi} \lvert \textnormal{I}_2 + 2m^2 \Sigma \rvert^{-1/2} = \frac{2 m^2}{\pi} \left( 4m^4 \lvert \textnormal{I}_2 / (2m^2) + \Sigma \rvert \right)^{-1/2} \\
        &= \frac{2}{\pi} \left\lvert \Sigma + \textnormal{I}_2 / (2m^2) \right\rvert^{-1/2} \xrightarrow{m \to \infty} 
        \begin{cases}
            \frac{2}{\pi} \lvert \Sigma \rvert^{-1/2} & \text{if } \Sigma \text{ is invertible,} \\
            \infty & \text{if } \Sigma \text{ is singular.}
        \end{cases}
    \end{align*}
\end{proof}

\begin{remark}
\label{remark:weak_expectation}
    As mentioned at the beginning of this chapter, we can get the same result by considering the distributional derivative of the sign function, \( 2 \delta_0\). It holds for \( (X,Y) \sim \Normal(0,\Sigma) \) with invertible covariance matrix \( \Sigma \) as before
    \begin{align*}
        \EW[\delta_0(X) \, \delta_0(Y)] = \int_{\R^2} \frac{1}{2\pi\lvert \Sigma \rvert^{1/2}} 2\delta_0(z_1) \, 2\delta_0(z_2) \, e^{-\frac{1}{2} z^\intercal \Sigma^{-1} z} \, \del z = \frac{2}{\pi} \lvert \Sigma \rvert^{-1/2} .
    \end{align*}
    In the case of \( X = Y \), the integral is no longer well-defined.
\end{remark}

Next, we consider \( \mathcal{T} \) by solving a more general problem, which was formulated in slightly less general form by \citet[Chapter 3.1]{williams1996computing}.

\begin{lemma}
\label{lemma:T}
    Given \( U \sim \Normal(0,\Sigma) \) with invertible covariance matrix \( \Sigma \in \R^{d \times d} \) and \( x, y \in \R^d \), it holds
    \begin{align*}
        V \coloneqq \EW[\erf(U^\intercal x) \, \erf(U^\intercal y)] = \frac{2}{\pi} \arcsin \left( \frac{2\, x^\intercal \Sigma y}{\sqrt{1 + 2\,x^\intercal \Sigma x} \, \sqrt{1 + 2\, y^\intercal \Sigma y}} \right) .
    \end{align*}
    In particular, given \( (X,Y) \sim \Normal(0,\Sigma) \) with invertible covariance matrix \( \Sigma = \begin{psmallmatrix}
        \Sigma_1 & \Sigma_{3} \\
        \Sigma_{3} & \Sigma_2
    \end{psmallmatrix} \in \R^{2 \times 2} \) or with \( X = Y \) and singular covariance matrix \( \Sigma = \begin{psmallmatrix}
        \Sigma_1 & \Sigma_{3} \\
        \Sigma_{3} & \Sigma_2
    \end{psmallmatrix} \in \R^{2 \times 2} \), \( \Sigma_1 = \Sigma_2 = \Sigma_3\), it holds
    \begin{align*}
        \EW[\erf(X) \, \erf(Y)] = \frac{2}{\pi} \arcsin\left( \frac{2\, \Sigma_3}{\sqrt{1 + 2\, \Sigma_1} \, \sqrt{1 + 2\, \Sigma_2}} \right) .
    \end{align*}
\end{lemma}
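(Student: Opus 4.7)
I would reduce $V$ to the classical arcsin identity (Grothendieck's identity) for signs of jointly centered Gaussian variables. The key ingredient is the representation
\begin{equation*}
    \erf(z) \;=\; 2\Phi(\sqrt{2}\,z) - 1 \;=\; \EW_{Z \sim \Normal(0,1)}\!\left[\sign\!\bigl(\sqrt{2}\,z - Z\bigr)\right],
\end{equation*}
which follows from the substitution $s = t\sqrt{2}$ in $\Phi(\sqrt{2}z) = \int_{-\infty}^{\sqrt{2}z} \phi(s)\,\del s$, together with the elementary identity $2\,\Prob(Z \le a) - 1 = \EW[\sign(a - Z)]$ for any continuous $Z$. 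This lets me trade the non-algebraic function $\erf$ for a sign of an auxiliary Gaussian, after which everything becomes a purely Gaussian calculation.

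\textbf{Main step.} Let $Z_1, Z_2 \overset{\text{iid}}{\sim} \Normal(0,1)$ be independent of $U$ and of each other. Applying the representation inside the expectation conditionally on $U$ and then taking outer expectation gives
\begin{equation*}
    V \;=\; \EW\!\left[\sign(A)\,\sign(B)\right], \qquad A \coloneqq \sqrt{2}\,U^\intercal x - Z_1, \ \ B \coloneqq \sqrt{2}\,U^\intercal y - Z_2,
\end{equation*}
where $(A,B)$ is jointly centered Gaussian with
\begin{equation*}
    \Var(A) = 1 + 2\,x^\intercal \Sigma x, \quad \Var(B) = 1 + 2\,y^\intercal \Sigma y, \quad \Cov(A,B) = 2\,x^\intercal \Sigma y.
\end{equation*}
Grothendieck's identity, $\EW[\sign(A)\sign(B)] = (2/\pi)\arcsin(\rho_{AB})$ for the correlation coefficient $\rho_{AB}$ of a centered Gaussian pair, then yields the claimed formula for $V$ after substituting the variances and covariance above.

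\textbf{Reduction to the 2D special cases.} For the invertible $2\times 2$ case, take $d=2$, $U = (X,Y) \sim \Normal(0,\Sigma)$ and $x = e_1,\ y = e_2$, so that $U^\intercal x = X$, $U^\intercal y = Y$ and $x^\intercal \Sigma y = \Sigma_3$, etc.; the claim follows directly. For the singular case $X=Y$ with $\Sigma_1 = \Sigma_2 = \Sigma_3$, take $d=1$, $U = X$, $x = y = 1$, $\Sigma = \Sigma_1$; the general formula specializes to $\frac{2}{\pi}\arcsin\!\bigl(\frac{2\Sigma_1}{1 + 2\Sigma_1}\bigr)$, which agrees with the stated expression.

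\textbf{Main obstacle.} The substantive step is Grothendieck's arcsin identity itself; to keep the exposition self-contained I would include it as an auxiliary lemma. The cleanest derivation writes $F(\rho) \coloneqq \EW[\sign(A)\sign(B)]$ as a function of the correlation, differentiates under the integral using the heat-type identity $\partial_\rho\, p(a,b;\rho) = \partial_{a}\partial_{b}\, p(a,b;\rho)$ for the bivariate standard normal density, and integrates by parts to obtain $F'(\rho) = 2\,p(0,0;\rho) = \tfrac{2}{\pi\sqrt{1-\rho^2}}$ with boundary value $F(0)=0$ from independence, yielding $F(\rho) = (2/\pi)\arcsin(\rho)$. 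Once this lemma is in hand, the remainder of the argument is bookkeeping of Gaussian moments and is routine.
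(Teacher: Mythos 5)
Your proof is correct, and it takes a genuinely different route from the paper's. The paper parametrizes $V(\lambda) = \EW[\erf(\lambda\, U^\intercal x)\,\erf(U^\intercal y)]$, differentiates in $\lambda$, evaluates the resulting $d$-dimensional Gaussian integral by an integration by parts (via the divergence theorem), the Sherman--Morrison formula and Sylvester's determinant theorem, and then recognizes $\tfrac{\del}{\del\lambda}V(\lambda)$ as $\tfrac{2}{\pi}(1-\gamma^2)^{-1/2}\gamma'(\lambda)$ so that $V = \tfrac{2}{\pi}\arcsin(\gamma(1))$. You instead trade $\erf$ for a sign of an auxiliary Gaussian via $\erf(z) = \EW_{Z}[\sign(\sqrt{2}z - Z)]$ and invoke the classical arcsine law $\EW[\sign(A)\sign(B)] = \tfrac{2}{\pi}\arcsin(\rho_{AB})$; the variance and covariance bookkeeping for $(A,B)$ is immediate and reproduces the paper's formula exactly, and your reductions to the two $2\times 2$ special cases (taking $x=e_1$, $y=e_2$, respectively collapsing to $d=1$) match the paper's. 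Your route is shorter and replaces the multivariate matrix manipulations with a single well-known one-dimensional identity; the cost is that the arcsine law itself must be proved, and your sketch of it has essentially the same differentiate-then-integrate-by-parts structure as the paper's argument, just in two dimensions for the simpler integrand $\sign(a)\sign(b)$. One small slip in that sketch: two integrations by parts against $\sign$ each produce a factor $-2$, so $F'(\rho) = 4\,p(0,0;\rho)$, not $2\,p(0,0;\rho)$; since $p(0,0;\rho) = \tfrac{1}{2\pi\sqrt{1-\rho^2}}$, your stated value $F'(\rho) = \tfrac{2}{\pi\sqrt{1-\rho^2}}$ and hence the final identity are nevertheless correct.
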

\begin{proof}
    We follow the proof idea given by \citet[Chapter 3.1]{williams1996computing}, that is, we define \(V(\lambda)\), differentiate the expectation, and integrate by parts. We can then see that \( \frac{\del}{\del \lambda} V(\lambda) = (1 - \gamma^2)^{-1/2} \, \frac{\del \gamma}{\del \lambda}\), which gives the desired \(\arcsin \). So, we define
    \begin{align*}
        V(\lambda) = \EW[\erf(\lambda \cdot U^\intercal x) \, \erf(U^\intercal y)] 
        = \int_{\R^d} \frac{1}{(2\pi)^{d/2} \, \lvert \Sigma \rvert^{1/2}} \,\erf(\lambda \cdot u^\intercal x ) \, \erf(u^\intercal y) \, e^{-\frac{1}{2} u^\intercal \Sigma^{-1} u} \,\del u ,
    \end{align*}
    and then differentiate with respect to \(\lambda\) on both sides
    \begin{align}
        &\frac{\del}{\del \lambda} V(\lambda) 
        = \int_{\R^d} \frac{1}{(2\pi)^{d/2} \, \lvert \Sigma \rvert^{1/2}} \, \frac{2 u^\intercal x }{\sqrt{\pi}} e^{-\lambda^2 (u^\intercal x)^2} \, \erf(u^\intercal y) \, e^{-\frac{1}{2} u^\intercal \Sigma^{-1} u} \,\del u \notag \\
        &= x^\intercal \int_{\R^d} \frac{1}{(2\pi)^{d/2} \, \lvert \Sigma \rvert^{1/2}} \, \frac{2 u}{\sqrt{\pi}} e^{-\lambda^2 \cdot u^\intercal x x^\intercal u} \, \erf(u^\intercal y) \, e^{-\frac{1}{2} u^\intercal \Sigma^{-1} u} \,\del u \notag \\
        \overset{(\star)}&{=} \frac{2 x^\intercal \Sigma^{1/2} }{\sqrt{\pi}} 
        \int_{\R^d} \frac{\lvert \Sigma \rvert^{1/2} v}{(2\pi)^{d/2} \, \lvert \Sigma \rvert^{1/2}} \, \erf(v^\intercal \Sigma^{1/2}y) \, \exp\left( -\frac{1}{2} v^\intercal\left( \textnormal{I}_d + 2\lambda^2 \Sigma^{1/2}x x^\intercal \Sigma^{1/2} \right)v \right) \del v \notag \\
        &= -\frac{2 x^\intercal \Sigma^{1/2} }{\sqrt{\pi}(2\pi)^{d/2}} (\textnormal{I}_d + 2\lambda^2 \Sigma^{1/2}x x^\intercal \Sigma^{1/2} )^{-1} \notag \\
        &\times \int_{\R^d} \erf(v^\intercal \Sigma^{1/2}y) \cdot (\textnormal{I}_d + 2\lambda^2 \Sigma^{1/2}x x^\intercal \Sigma^{1/2} )v \cdot \exp\left( -\frac{1}{2} v^\intercal\left( \textnormal{I}_d + 2\lambda^2 \Sigma^{1/2}x x^\intercal \Sigma^{1/2} \right)v \right) \del v \notag \\
        &= \frac{2 x^\intercal \Sigma^{1/2} }{\sqrt{\pi}(2\pi)^{d/2}} (\textnormal{I}_d + 2\lambda^2 \Sigma^{1/2}x x^\intercal \Sigma^{1/2} )^{-1} \notag \\
        &\times \int_{\R^d} \exp\left( -\frac{1}{2} v^\intercal\left( \textnormal{I}_d + 2\lambda^2 \Sigma^{1/2}x x^\intercal \Sigma^{1/2} \right)v \right) \cdot \nabla_v \,\erf(v^\intercal \Sigma^{1/2}y) \,\del v \label{eq:proof_T_0} ,
    \end{align}
    using a change of variables \( u = \Sigma^{1/2} v\) in Equation \((\star)\) and using partial integration and Gauss' divergence theorem in the last equation. In addition, we used that
    \begin{align*}
        \nabla_v \, e^{ -\frac{1}{2} v^\intercal\left( \textnormal{I}_d + 2\lambda^2 \Sigma^{1/2}x x^\intercal \Sigma^{1/2} \right)v } = - (\textnormal{I}_d + 2\lambda^2 \Sigma^{1/2}x x^\intercal \Sigma^{1/2} )v \, e^{ -\frac{1}{2} v^\intercal\left( \textnormal{I}_d + 2\lambda^2 \Sigma^{1/2}x x^\intercal \Sigma^{1/2} \right)v }
    \end{align*}
    and the partial integration rule for scalar functions. To be precise, for differentiable scalar functions \(f\) and \(g\) it holds
    \begin{align*}
        \nabla (f \cdot g) = f \cdot \nabla g + g \cdot \nabla f ,
    \end{align*}
    which then implies the partial integration rule. Furthermore, the left-hand side vanishes in our case due to Gauss' divergence theorem:
    \begin{align*}
        & \left\lvert \int_{\R^d} \nabla_v \left( \erf(v^\intercal \Sigma^{1/2}y) \cdot e^{ -\frac{1}{2} v^\intercal\left( \textnormal{I}_d + 2\lambda^2 \Sigma^{1/2}x x^\intercal \Sigma^{1/2} \right)v } \right)  \,\del v \right\rvert \\
        = &\left\lvert \lim_{R \to \infty} \int_{\mathcal{S}^{d-1}(R)} \erf(v^\intercal \Sigma^{1/2}y) \cdot e^{ -\frac{1}{2} v^\intercal\left( \textnormal{I}_d + 2\lambda^2 \Sigma^{1/2}x x^\intercal \Sigma^{1/2} \right)v } \,\del v \right\rvert \\
        \le &\lim_{R \to \infty} \int_{\mathcal{S}^{d-1}(R)} e^{ -\frac{1}{2} v^\intercal\left( \textnormal{I}_d + 2\lambda^2 \Sigma^{1/2}x x^\intercal \Sigma^{1/2} \right)v } \,\del v \\
        \le &\lim_{R \to \infty} S_{d-1}(R) \cdot e^{- \frac{1}{2}R^2 \, \lambda_{\min}(\textnormal{I}_d + 2\lambda^2 \Sigma^{1/2}x x^\intercal \Sigma^{1/2})} = 0 ,
    \end{align*}
    where \(S_{d-1}(R)\) is the surface area of the sphere in \(\R^d\) with radius \(R\). Continuing with our previous calculations, we see that
    \begin{align}
        &(\ref{eq:proof_T_0}) = \frac{2 x^\intercal \Sigma^{1/2} }{\sqrt{\pi}(2\pi)^{d/2}} (\textnormal{I}_d + 2\lambda^2 \Sigma^{1/2}x x^\intercal \Sigma^{1/2} )^{-1} \notag \\
        &\times \int_{\R^d} e^{ -\frac{1}{2} v^\intercal\left( \textnormal{I}_d + 2\lambda^2 \Sigma^{1/2}x x^\intercal \Sigma^{1/2} \right)v } \cdot \Sigma^{1/2}y \frac{2}{\sqrt{\pi}} e^{-(v^\intercal \Sigma^{1/2}y)^2} \,\del v \notag \\
        &= \frac{4}{\pi} \frac{x^\intercal \Sigma^{1/2} (\textnormal{I}_d + 2\lambda^2 \Sigma^{1/2}x x^\intercal \Sigma^{1/2} )^{-1} \Sigma^{1/2} y}{(2\pi)^{d/2}} \int_{\R^d} e^{ -\frac{1}{2} v^\intercal\left( \textnormal{I}_d + 2\lambda^2 \Sigma^{1/2}x x^\intercal \Sigma^{1/2} + 2 \Sigma^{1/2}y y^\intercal \Sigma^{1/2}\right)v } \,\del v . \label{eq:proof_T_1}
    \end{align}
    We evaluate the expression outside of the integral in the last line by applying the Sherman-Morrison-Woodbury formula. For \( A \in \R^{d\times d}\) and \( w_1,w_2 \in \R^d\) it holds \citep[(2.4.1)]{golub1996matrix}
    \[(C + w_1 w_2^\intercal)^{-1} = C^{-1} - \frac{C^{-1}w_1 w_2^\intercal C^{-1}}{1+w_2^\intercal C^{-1} w_1}. \]
    For \( C = \Id_d \) and \( w = w_1 = w_2 = \sqrt{2}\lambda \Sigma^{1/2}x \) this yields
    \begin{align*}
        (\Id_d + 2\lambda^2 \Sigma^{1/2}x x^\intercal \Sigma^{1/2} )^{-1} = (\Id_d + w w^\intercal )^{-1} = \Id_d - \frac{w w^\intercal}{1 + w^\intercal w} = \Id_d - \frac{2\lambda^2\Sigma^{1/2} x x^\intercal \Sigma^{1/2}}{1 + 2\lambda^2 x^\intercal \Sigma x} .
    \end{align*}
    With this we see that
    \begin{align*}
        &x^\intercal \Sigma^{1/2} (\textnormal{I}_d + 2\lambda^2 \Sigma^{1/2}x x^\intercal \Sigma^{1/2} )^{-1} \Sigma^{1/2} y 
        = x^\intercal \Sigma^{1/2} \left(\Id_d - \frac{2\lambda^2\Sigma^{1/2} x x^\intercal \Sigma^{1/2}}{1 + 2\lambda^2 x^\intercal \Sigma x} \right) \Sigma^{1/2} y \\
        = &x^\intercal \Sigma y - \frac{2\lambda^2 (x^\intercal \Sigma x)(x^\intercal \Sigma y)}{1 + 2\lambda^2 x^\intercal \Sigma x} = x^\intercal \Sigma y \left( 1 - \frac{2 \lambda^2 x^\intercal \Sigma x}{1 + 2\lambda^2 x^\intercal \Sigma x} \right) = \frac{x^\intercal \Sigma y}{1 + 2\lambda^2 x^\intercal \Sigma x}
    \end{align*}
    Inserting this into Equation \bracket{\ref{eq:proof_T_1}}, we obtain
    \begin{align*}
        \bracket{\ref{eq:proof_T_1}} &= \frac{2}{\pi} \frac{2x^\intercal \Sigma y}{1 + 2\lambda^2 x^\intercal \Sigma x} \int_{\R^d} \frac{1}{(2 \pi)^{d/2}} e^{ -\frac{1}{2} v^\intercal\left( \textnormal{I}_d + 2\lambda^2 \Sigma^{1/2}x x^\intercal \Sigma^{1/2} + 2 \Sigma^{1/2}y y^\intercal \Sigma^{1/2}\right)v } \,\del v \\
        &= \frac{2}{\pi} \frac{2x^\intercal \Sigma y}{1 + 2\lambda^2 x^\intercal \Sigma x}  \left\lvert \left( \textnormal{I}_d + 2\lambda^2 \Sigma^{1/2}x x^\intercal \Sigma^{1/2} + 2 \Sigma^{1/2}y y^\intercal \Sigma^{1/2} \right)^{-1} \right\rvert^{1/2} \\
        &= \frac{2}{\pi} \frac{2x^\intercal \Sigma y}{1 + 2\lambda^2 x^\intercal \Sigma x}  \left\lvert \textnormal{I}_d + 2\lambda^2 \Sigma^{1/2}x x^\intercal \Sigma^{1/2} + 2 \Sigma^{1/2}y y^\intercal \Sigma^{1/2} \right\rvert^{-1/2} .
    \end{align*}
    As in the proof of Lemma \ref{lemma:T_dot}, we evaluate this using Sylvester's determinant theorem. With the same notation as before, we can define \( A = B = (\sqrt{2}\lambda \Sigma^{1/2}x, \sqrt{2}\Sigma^{1/2}y) \in \R^{d \times 2} \). This yields
    \begin{align*}
        &\left\lvert \textnormal{I}_d + 2\lambda^2 \Sigma^{1/2}x x^\intercal \Sigma^{1/2} + 2 \Sigma^{1/2}y y^\intercal \Sigma^{1/2} \right\rvert = \lvert \Id_d + A B^\intercal \rvert = \lvert \Id_2 + B^\intercal A \rvert \\
        = &\left\lvert \begin{pmatrix}
            1 + 2\lambda^2 x^\intercal \Sigma x & 2\lambda x^\intercal \Sigma y \\
            2 \lambda x^\intercal \Sigma y & 1 + 2 y^\intercal \Sigma y 
        \end{pmatrix} \right\rvert = (1 + 2\lambda^2 x^\intercal \Sigma x)(1 + 2 y^\intercal \Sigma y ) - 4\lambda^2(x^\intercal \Sigma y)^2 .
    \end{align*}
    If we insert this this again, we have so far shown
    \begin{align}
    \label{eq:proof_T_2}
        \frac{\del}{\del \lambda} V(\lambda) =  \frac{2}{\pi} \frac{2x^\intercal \Sigma y}{1 + 2\lambda^2 x^\intercal \Sigma x} \left( (1 + 2\lambda^2 x^\intercal \Sigma x)(1 + 2 y^\intercal \Sigma y ) - 4\lambda^2(x^\intercal \Sigma y)^2 \right)^{-1/2} .
    \end{align}
    We now define
    \begin{align*}
        \gamma(\lambda) \coloneqq \frac{2\lambda x^\intercal \Sigma y}{\sqrt{(1 + 2\lambda^2 x^\intercal \Sigma x)(1 + 2y^\intercal \Sigma y)}} ,
    \end{align*}
    and claim that
    \begin{align}
    \label{eq:proof_T_3}
        \frac{2}{\pi}\left(1 - \gamma(\lambda)^2 \right)^{-1/2} \frac{\del}{\del \lambda} \gamma(\lambda) = \frac{\del}{\del \lambda} V(\lambda) .
    \end{align}
    We can find a solution to the claimed equation by finding a function \( \Tilde V \) that satisfies
    \begin{align*}
        \frac{\del}{\del \gamma(\lambda)} \Tilde V(\gamma(\lambda)) = \frac{2}{\pi}\left(1 - \gamma(\lambda)^2 \right)^{-1/2},
    \end{align*}
    and by setting \( V(\lambda) \coloneqq \Tilde V(\gamma(\lambda)) \). This follows from the chain rule. \( \Tilde V \) is thus simply given by
    \(
        \Tilde V(\gamma(\lambda)) = \frac{2}{\pi} \arcsin(\gamma(\lambda))
    \). In particular, this yields
    \begin{align*}
        V = V(1) = \Tilde V(\gamma(1)) = \frac{2}{\pi} \arcsin\left( \frac{2 x^\intercal \Sigma y}{\sqrt{(1 + 2 x^\intercal \Sigma x)(1 + 2y^\intercal \Sigma y)}} \right) .
    \end{align*}
    It is now left to show Equation \bracket{\ref{eq:proof_T_3}} using Equation \bracket{\ref{eq:proof_T_2}}. First, see that
    \begin{align*}
        \left( 1 - \gamma(\lambda)^2 \right)^{-1/2} 
        &= \left( 1 - \frac{(2\lambda x^\intercal \Sigma y)^2}{(1 + 2\lambda x^\intercal \Sigma x)(1 + 2y^\intercal \Sigma y)} \right)^{-1/2} \\
        &= \left(\frac{(1 + 2\lambda x^\intercal \Sigma x)(1 + 2y^\intercal \Sigma y)}{(1 + 2\lambda x^\intercal \Sigma x)(1 + 2y^\intercal \Sigma y) - 4\lambda^2( x^\intercal \Sigma y)^2} \right)^{1/2} .
    \end{align*}
    Second, it holds
    \begin{align*}
        \frac{\del}{\del \lambda}\left[ \lambda (1 + 2\lambda^2 x^\intercal \Sigma x)^{-1/2}  \right] &= \frac{\del}{\del \lambda}\left[ (\lambda^{-2} + 2 x^\intercal \Sigma x)^{-1/2}  \right] = \lambda^{-3} (\lambda^{-2} + 2x\intercal \Sigma x)^{-3/2} \\
        &= (1 + 2\lambda^2 x^\intercal \Sigma x)^{-3/2} = \frac{1}{\left(\sqrt{1 + 2\lambda^2 x^\intercal \Sigma x}\right)^3}.
    \end{align*}
    With the results of both calculations, we get
    \begin{align*}
        &\frac{2}{\pi}\left(1 - \gamma(\lambda)^2 \right)^{-1/2} \frac{\del}{\del \lambda} \gamma(\lambda) = \frac{2}{\pi}\left(1 - \gamma(\lambda)^2 \right)^{-1/2} \frac{2 x^\intercal \Sigma y}{\sqrt{1 + 2y^\intercal \Sigma y}} \frac{\del}{\del \lambda}\left[ \lambda (1 + 2\lambda^2 x^\intercal \Sigma x)^{-1/2}  \right] \\
        &= \frac{2}{\pi} \frac{\sqrt{1 + 2\lambda x^\intercal \Sigma x}\sqrt{1 + 2y^\intercal \Sigma y}}{\sqrt{(1 + 2\lambda x^\intercal \Sigma x)(1 + 2y^\intercal \Sigma y) - 4\lambda^2( x^\intercal \Sigma y)^2}}  \frac{2 x^\intercal \Sigma y}{\sqrt{1 + 2y^\intercal \Sigma y}} \frac{1}{\left(\sqrt{1 + 2\lambda^2 x^\intercal \Sigma x}\right)^3} \\
        &= \frac{2}{\pi} \frac{1}{\sqrt{(1 + 2\lambda x^\intercal \Sigma x)(1 + 2y^\intercal \Sigma y) - 4\lambda^2( x^\intercal \Sigma y)^2}}   \frac{2 x^\intercal \Sigma y}{1 + 2\lambda^2 x^\intercal \Sigma x} \overset{\bracket{\ref{eq:proof_T_2}}}{=} \frac{\del}{\del \lambda} V(\lambda),
    \end{align*}
    which yields Equation \bracket{\ref{eq:proof_T_3}} and concludes the proof. The special case \( \Sigma \in \R^{2 \times 2}\) with invertible covariance matrix or \( X=Y \) and singular covariance matrix follows as in the proof of Lemma \ref{lemma:T_dot}.
\end{proof}

\begin{corollary}
\label{corollary:T}
    If \((X,Y) \sim \Normal(0, \Sigma) \) with invertible covariance matrix \( \Sigma = \begin{psmallmatrix}
        \Sigma_1 & \Sigma_{3} \\
        \Sigma_{3} & \Sigma_2
    \end{psmallmatrix} \in \R^{2 \times 2} \) or with \( X = Y \) and singular covariance matrix \( \Sigma = \begin{psmallmatrix}
        \Sigma_1 & \Sigma_{3} \\
        \Sigma_{3} & \Sigma_2
    \end{psmallmatrix} \in \R^{2 \times 2} \), \( \Sigma_1 = \Sigma_2 = \Sigma_3 \), it holds
    \begin{align}
        \mathcal{T}_m(\Sigma) 
        = \frac{2}{\pi} \arcsin \left( \frac{\Sigma_3}{\sqrt{\frac{1}{2m^2} + \Sigma_1}\sqrt{\frac{1}{2m^2} + \Sigma_2}} \right) \xrightarrow{m \to \infty} \frac{2}{\pi} \arcsin \left( \frac{\Sigma_3}{\sqrt{\Sigma_1}\sqrt{\Sigma_2}} \right) .
    \end{align}
\end{corollary}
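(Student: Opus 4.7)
}

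The plan is to reduce the statement about $\erf_m$ to the statement about $\erf$ already handled in Lemma \ref{lemma:T}, and then perform a short algebraic simplification followed by a continuity argument for the limit.

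First, I would invoke the identity $\mathcal{T}_m(\Sigma) = \mathcal{T}(m^2 \Sigma)$ which is obtained by the change of variables $X = mX'$, $Y = mY'$ and was already recorded in the preamble of Section \ref{subsec:erf_activation}. This turns the expectation involving $\erf_m$ into one involving $\erf$ against a $\Normal(0, m^2\Sigma)$ law. Next, I would apply Lemma \ref{lemma:T} in the special two-dimensional form stated there. Taking $d=2$ and choosing $x = (1,0)^\intercal$, $y = (0,1)^\intercal$ in the invertible case (and using the $X=Y$ variant of the lemma in the degenerate case $\Sigma_1 = \Sigma_2 = \Sigma_3$), the bilinear quantities appearing in the lemma evaluate to $x^\intercal (m^2\Sigma) x = m^2 \Sigma_1$, $y^\intercal (m^2\Sigma) y = m^2 \Sigma_2$, and $x^\intercal (m^2\Sigma) y = m^2 \Sigma_3$.

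Substituting these values yields
\begin{equation*}
    \mathcal{T}_m(\Sigma) = \frac{2}{\pi}\arcsin\!\left(\frac{2 m^2 \Sigma_3}{\sqrt{1 + 2 m^2 \Sigma_1}\,\sqrt{1 + 2 m^2 \Sigma_2}}\right).
\end{equation*}
A routine algebraic manipulation, pulling a factor of $2m^2$ out of each square root in the denominator via $\sqrt{1 + 2m^2 \Sigma_i} = \sqrt{2m^2}\,\sqrt{\tfrac{1}{2m^2} + \Sigma_i}$, rewrites the argument of $\arcsin$ as $\Sigma_3 / \bigl(\sqrt{\tfrac{1}{2m^2} + \Sigma_1}\,\sqrt{\tfrac{1}{2m^2} + \Sigma_2}\bigr)$, which matches the claimed closed form for $\mathcal{T}_m(\Sigma)$.

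Finally, for the limit $m \to \infty$, I would observe that the argument of $\arcsin$ converges pointwise to $\Sigma_3 / \sqrt{\Sigma_1 \Sigma_2}$, which lies in $[-1,1]$ by the Cauchy--Schwarz inequality applied to the covariance matrix $\Sigma$. Continuity of $\arcsin$ on $[-1,1]$ then yields the stated limit. I do not anticipate a genuine obstacle here: the work has all been done in Lemma \ref{lemma:T}, and this corollary is essentially a change-of-variables plus a continuity step. The only mild subtlety is that one must handle the two cases (invertible $\Sigma$, and the degenerate $X=Y$ case with $\Sigma_1 = \Sigma_2 = \Sigma_3$) in parallel, but both are already covered by the corresponding clauses of Lemma \ref{lemma:T}, so no additional argument is needed.
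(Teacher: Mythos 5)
Your proposal is correct and follows exactly the paper's own proof: the identity $\mathcal{T}_m(\Sigma) = \mathcal{T}(m^2\Sigma)$, an application of the two-dimensional specialization of Lemma \ref{lemma:T}, the algebraic rescaling of the square roots, and continuity of $\arcsin$ for the limit. The extra remark that the argument stays in $[-1,1]$ by Cauchy--Schwarz is a harmless addition the paper leaves implicit.
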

\begin{proof}
    It holds
    \begin{align*}
        \mathcal{T}_m(\Sigma) 
        &= \mathcal{T}(m^2 \cdot \Sigma) 
        \overset{\textnormal{Lemma \ref{lemma:T}}}{=} \frac{2}{\pi} \arcsin \left( \frac{2m^2 \Sigma_3}{\sqrt{1 + 2m^2 \Sigma_1}\sqrt{1 + 2m^2 \Sigma_2}} \right) \\
        &= \frac{2}{\pi} \arcsin \left( \frac{\Sigma_3}{\sqrt{\frac{1}{2m^2} + \Sigma_1}\sqrt{\frac{1}{2m^2} + \Sigma_2}} \right) \xrightarrow{m \to \infty} \frac{2}{\pi} \arcsin \left( \frac{\Sigma_3}{\sqrt{\Sigma_1}\sqrt{\Sigma_2}} \right) .
    \end{align*}
\end{proof}
We can now use Corollary \ref{corollary:T_dot} and Corollary \ref{corollary:T} to evaluate \( \Sigma^{(L)} \), \( \dot\Sigma^{(L)} \) and \( \Theta^{(L)} \) for activation function \(\erf_m\), which we denote by \(\Sigma^{(L)}_m\), \( \dot \Sigma^{(L)}_m \), and \( \Theta^{(L)}_m \) respectively. We then are interested in the limit \( m \to \infty \). First recall that
\begin{align*}
    &\Sigma^{(1)}(x,x') = \frac{\sigma_w^2}{n_0} \langle x,x' \rangle + \sigma_b^2 \quad \text{and}\\
    &\Sigma^{(L)}(x,x') = \sigma_w^2 \, \EW_{g \sim \Normal(0,\Sigma^{(L-1)})}[\sigma(g(x)) \, \sigma(g(y))] + \sigma_b^2 ,
\end{align*}
by Theorem \ref{theorem:ntk_conv_init}. Therefore, \( \Sigma_m^{(1)} \) is independent of \( m \), and it holds for any \(x,y \in \R^{n_0}\)
\[ \Sigma^{(1)}_m(x,y) = \frac{\sigma_w^2}{n_0}\langle x,y \rangle + \sigma_b^2 \eqqcolon \Sigma^{(1)}_\infty(x,y). \]
Assuming that the limit \( \lim_{m \to \infty} \Sigma^{(L)}_m(x,y) \eqqcolon \Sigma^{(L)}_\infty(x,y) \) exists and that \( \Sigma^{(L)}_\infty(x,x) \not= 0 \), \( \Sigma^{(L)}_\infty(y,y) \not= 0 \), we can define
\begin{align}
    \Sigma^{(L+1)}_m(x,y) &= \sigma_w^2 \, \EW_{(X,Y) \sim \Normal(0,\Sigma^{(L)}_{m;x,y})}[\erf_m(X) \, \erf_m(Y)] + \sigma_b^2 
    = \sigma_w^2 \, \mathcal{T}_m\left(\Sigma^{(L)}_{m;x,y}\right) + \sigma_b^2 \notag \\
    \overset{\text{Cor. \ref{corollary:T}}}&{=} \frac{2\sigma_w^2}{\pi} \arcsin \left( \frac{\Sigma^{(L)}_m(x,y)}{\sqrt{\frac{1}{2m^2} + \Sigma^{(L)}_m(x,x)}\sqrt{\frac{1}{2m^2} + \Sigma^{(L)}_m(y,y)}} \right) + \sigma_b^2 \notag \\
    &\xrightarrow{m \to \infty} \frac{2\sigma_w^2}{\pi} \arcsin\left( \frac{\Sigma^{(L)}_\infty(x,y)}{\sqrt{\Sigma^{(L)}_\infty(x,x)}\sqrt{\Sigma^{(L)}_\infty(y,y)}} \right) + \sigma_b^2 \notag \\
    &\eqqcolon \Sigma^{(L+1)}_\infty(x,y) . \label{eq:conv_sign_Sigma}
\end{align}
Hence, it follows via induction and from the continuity of the \( \arcsin \) function that \(\lim_{m \to \infty} \Sigma^{(L)}_m(x,y) \eqqcolon \Sigma^{(L)}_\infty(x,y) \) is well defined. We discuss the resulting kernel \( \Sigma^{(L)}_\infty \) in Remark \ref{remark:addendum}. For \( \dot\Sigma^{(L+1)}_m \), \(L \geq 1\), it holds
\begin{align}
    \dot\Sigma^{(L+1)}_m(x,y) &= \sigma_w^2 \,  \EW_{(X,Y) \sim \Normal(0,\Sigma^{(L)}_{m;x,y})}[\dot\erf_m(X) \, \dot\erf_m(Y)] = \sigma_w^2 \, \dot{\mathcal{T}}_m \left(\Sigma^{(L)}_{m;x,y} \right) \notag \\
    \overset{\text{Cor. \ref{corollary:T_dot}}}&{=} \frac{2\sigma_w^2}{\pi} \left\lvert \Sigma^{(L)}_{m;x,y} + \frac{1}{2m^2} \Id_2 \right\rvert^{-1/2} \notag \\
    &=\frac{2\sigma_w^2}{\pi} \left( \left(\Sigma^{(L)}_m(x,x) + \frac{1}{2m^2}\right)\left(\Sigma^{(L)}_m(y,y) + \frac{1}{2m^2}\right) - \Sigma^{(L)}_m(x,y)^2 \right)^{-1/2}  \label{eq:ntk_sign_formula_2}.
\end{align}
As we will see later, the limit 
\begin{align}
    \lim_{m \to \infty} \dot\Sigma^{(L)}_m(x,y) \eqqcolon \dot\Sigma^{(L)}_\infty(x,y) \label{eq:ntk_sign_formula_3}
\end{align}
exists for \( x \not= y\) apart from a few exceptions. In the case \(x=y\), we obtain
\begin{align}
    \dot\Sigma^{(L+1)}_m(x,x) 
    &= \frac{2\sigma_w^2}{\pi} \left( \left(\Sigma^{(L)}_m(x,x) + \frac{1}{2m^2}\right)^2 - \Sigma^{(L)}_m(x,x)^2 \right)^{-1/2} \notag \\
    &= \frac{2\sigma_w^2}{\pi} \left( \frac{1}{m^2} \Sigma^{(L)}_m(x,x) + \frac{1}{4m^4} \right)^{-1/2}
    = \frac{2\sigma_w^2}{\pi} m \left( \Sigma^{(L)}_m(x,x) + \frac{1}{4m^2} \right)^{-1/2} \notag \\
    &\sim \frac{2\sigma_w^2}{\pi} m \, \Sigma^{(L)}_\infty(x,x)^{-1/2} \xrightarrow{m \to \infty} \infty, \label{eq:conv_to_singular_0}
\end{align}
where \( \sim \) denotes asymptotic equality. Therefore, for \(x \not= y \), the limit
\begin{align*}
    \lim_{m \to \infty} \Theta^{(L)}_m(x,y) \eqqcolon \Theta^{(L)}_\infty(x,y)
\end{align*}
exists. However, due to Equation \bracket{\ref{eq:conv_to_singular_0}}, the NTK diverges for \(x=y\) as \(m \to \infty\). We will call a kernel with this property a \textit{singular} kernel. We also say that a kernel with this property is \textit{singular along the diagonal}. 

\begin{remark}[Distributional neural tangent kernel]
\label{remark:distr_ntk}
    An alternative conceivable approach to obtain the same kernel \( \Theta^{(L)}_\infty \) would have been to consider the distributional Jacobian matrix of the network function with step-like activation function. Whether or not a distributional NTK can be formulated is a question for further research. Here, we only want to point out that the corresponding formulas for the recursive definition of the analytical distributional NTK would then naturally read as follows,
    \begin{align*}
        &\Theta^{(1)}_\infty(x,x') = \Sigma_\infty^{(1)}(x,x') \notag \\
        &\Theta^{(L)}_\infty(x,x') = \Sigma^{(L)}_\infty(x,x') + \Theta^{(L-1)}_\infty(x,x') \cdot \dot{\Sigma}_\infty^{(L)}(x,x') \quad \text{for } L \ge 2, \notag
    \end{align*}
    where
    \begin{align}
        \Sigma^{(L)}_\infty(x,x') 
        &= \sigma_w^2 \, \EW_{g \sim \Normal(0,\Sigma_\infty^{(L-1)})}\left[ \sign(g(x)) \, \sign(g(y))\right] + \sigma_b^2 \quad \text{for } L \ge 2 , \label{eq:weak_exp1} \\
        \dot{\Sigma}^{(L)}_\infty(x,x') 
        &= \sigma_w^2 \, \EW_{g \sim \Normal(0,\Sigma_\infty^{(L-1)})}\left[2\delta_0 (g(x)) \, 2\delta_0(g(y))\right] \quad \text{for } L \ge 2. \label{eq:weak_exp2}
    \end{align}
    Note that Equation (\ref{eq:weak_exp2}) can be derived from Remark \ref{remark:weak_expectation}, and that Equation (\ref{eq:weak_exp1}) is also easy to show.
\end{remark}
We now want to explore the implications of our findings in this section for \( f_\mathrm{NTK} \), which we introduced at the end of Section \ref{subsec:key_theorems}. Equation \eqref{eq:ntk_kernel_persp_2} yields
\begin{align}
    \lim_{m \to \infty} f^m_\textnormal{NTK}(x) 
    = \lim_{m \to \infty} \Theta_m(x,\mathx) \Theta_m(\mathx,\mathx)^{-1} \mathy  \eqqcolon f^\infty_\textnormal{NTK}(x) . \notag
\end{align}
Note that \( \Theta_m(\mathx,\mathx) \coloneqq \Theta^{(L)}_m(\mathx,\mathx) \) is invertible for sufficiently large \(m\), as the matrix will be dominated by the diagonal for large \(m\). Using this and the fact \( \Theta_m(x,\mathx) \xrightarrow{m \to \infty} \Theta_\infty(x,\mathx)\) if \( x \not = x_i \) for all \( i=1,\dots,d \), we obtain for such \( x \):
\begin{align*}
    f_\textnormal{NTK}^m(x) 
    = \Theta_m(x,\mathx) \Theta_m(\mathx, \mathx)^{-1} \mathy \xrightarrow{m \to \infty} 0
\end{align*}
However, if \(x = x_i \in \mathx\) and denoting the \(i\)-th basic vector by \(e_i \in \R^d\) we get:
\begin{align*}
    f_\textnormal{NTK}^m(x_i) 
    = \Theta_m(x_i,\mathx) \Theta_m(\mathx, \mathx)^{-1} \mathy = e_i^\intercal \mathy = y_i
\end{align*}
Therefore, \( f_\textnormal{NTK}^m \) converges pointwise to a function that is zero almost everywhere, but interpolates exactly at the data points. The singular kernel \( \Theta^{(L)}_\infty \), which we can see as the analytic NTK for the sign function, is for that reason not suitable for regression. We will deal with this problem in the following section.

\subsection{From singular kernel to Nadaraya-Watson estimator}
\label{subsec:nadaraya}
\cite{radhakrishnan2023wide} considered the limit of infinite depth, \(\lim_{L \to \infty} \Theta^{(L)}\), and a singular kernel emerged similar to the previous section. It was shown that the resulting estimator using this singular kernel behaves like a Nadaraya-Watson estimator when classification tasks are considered instead of regression tasks. We will adapt the ideas of \cite{radhakrishnan2023wide} to show similar results. To consider a classification task, we assume that \(\mathy \in \{-1,1\}^d \). The network is then trained with data points \( (\mathx,\mathy) \) as before, but we apply the sign function at the end to obtain the final classifier. As \cite{radhakrishnan2023wide}, we assume that we are operating in the infinite-width limit after infinite training. So, we are interested in
\begin{align}
    \lim_{m \to \infty} \sign \left( \Theta_m(x,\mathx) \Theta_m(\mathx,\mathx)^{-1} \mathy \right) . \label{eq:ntk_nada_0}
\end{align}
We will see that the resulting classifier is equal to
\begin{align*}
    \sign \left( \Theta_\infty(x,\mathx) \mathy \right) = \sign \left( \sum_{i=1}^d \Theta(x,x_i) y_i \right).
\end{align*}
The form of this classifier is similar to a Nadaraya-Watson estimator. To be precise, for a singular kernel \( k \) and data points \( (\mathx, \mathy) \), the Nadaraya-Watson estimator is defined as
\begin{align*}
    c(x) \coloneqq \frac{\sum_{i=1}^d k(x,x_i)y_i}{\sum_{i=1}^d k(x,x_i)} \quad \text{for all } x \not= x_i , 1 \leq i \leq d .
\end{align*}
Since \(k(x,x_i) \to \infty \) as \( x \to x_i \), it holds that \( c(x) \to y_i \) as \( x \to x_i\). Thus, the continuous extension of \( c \) to \( \R^{n_0} \) interpolates the data points.

We now begin to further evaluate \( \Sigma^{(L)}_\infty \) and \( \dot\Sigma^{(L)}_\infty \) to analyze Equation (\ref{eq:ntk_nada_0}):
\begin{align}
    &\Sigma^{(1)}_\infty(x,y) = \frac{\sigma_w^2}{n_0} \langle x,y \rangle + \sigma_b^2 , \label{eq:ntk_sign_formula_01} \\
    &\Sigma^{(L)}_\infty(x,x) \overset{(\ref{eq:conv_sign_Sigma})}{=} \frac{2\sigma_w^2}{\pi} \arcsin(1) + \sigma_b^2 = \sigma_w^2 + \sigma_b^2 \quad \text{for all} \quad L \ge 2 \label{eq:ntk_sign_formula_1} ,\\
    &\Sigma^{(2)}_\infty(x,y) \overset{(\ref{eq:conv_sign_Sigma})}{=} \frac{2\sigma_w^2}{\pi} \arcsin\left( \frac{\frac{\sigma_w^2}{n_0} \langle x,y \rangle + \sigma_b^2}{\sqrt{\frac{\sigma_w^2}{n_0} \lVert x \rVert^2 + \sigma_b^2}\sqrt{\frac{\sigma_w^2}{n_0} \lVert y \rVert^2 + \sigma_b^2}} \right) + \sigma_b^2 ,\quad \text{and} \label{eq:ntk_sign_formula_1_1} \\
    &\Sigma^{(L+1)}_\infty(x,y) \overset{(\ref{eq:conv_sign_Sigma})}{=} \frac{2\sigma_w^2}{\pi} \arcsin\left(  \frac{\Sigma^{(L)}_\infty(x,y)}{\sqrt{\Sigma^{(L)}_\infty(x,x)}\sqrt{\Sigma^{(L)}_\infty(y,y)}} \right) + \sigma_b^2  \notag \\
    &\overset{(\ref{eq:ntk_sign_formula_1})}{=} \frac{2\sigma_w^2}{\pi} \arcsin\left(  \frac{\Sigma^{(L)}_\infty(x,y)}{\sigma_w^2 + \sigma_b^2} \right)  + \sigma_b^2 \quad \text{for all} \quad L \ge 2.  \label{eq:ntk_sign_formula_1_2}
\end{align}
Regarding the assumptions made for Equation \eqref{eq:conv_sign_Sigma}, note that that \( \Sigma_\infty^{(L)}(x,x) \not= 0 \) if \( L \geq 2 \) and \( \Sigma_\infty^{(1)}(x,x) \not= 0 \) if \( x \not= 0 \) or \( \sigma_b^2 > 0 \).

After Equation (\ref{eq:ntk_sign_formula_3}), we mentioned exceptions to the existence of \( \dot\Sigma_\infty^{(L)}(x,y) \) for \( x \not= y \) that were postponed at that time. For \( \dot\Sigma^{(2)}_m \) with \( x \not= y \) we can see that
\begin{align}
    &\dot\Sigma_m^{(2)}(x,y) \notag \\
    \overset{(\ref{eq:ntk_sign_formula_2})}&{=} \frac{2\sigma_w^2}{\pi} \left( \left( \frac{\sigma_w^2}{n_0} \lVert x \rVert^2 + \sigma_b^2 + \frac{1}{2m^2}\right)\left( \frac{\sigma_w^2}{n_0} \lVert y \rVert^2 + \sigma_b^2 + \frac{1}{2m^2}\right) - \left( \frac{\sigma_w^2}{n_0} \langle x,y \rangle + \sigma_b^2\right)^2 \right)^{-1/2} \notag \\
    &= \frac{2\sigma_w^2}{\pi} \left( \frac{\sigma_w^4}{n_0^2}\left( \lVert x \rVert^2 \lVert y \rVert^2 - \langle x,y \rangle^2 \right) + \frac{\sigma_w^2 \sigma_b^2}{n_0} \left( \lVert x \rVert^2 + \lVert y \rVert^2 - 2\langle x,y\rangle \right) + \frac{1}{2m^2}A_m \right)^{-1/2} \notag \\
    &\xrightarrow{m \to \infty} \frac{2\sigma_w^2}{\pi} \left( \frac{\sigma_w^4}{n_0^2}\left( \lVert x \rVert^2 \lVert y \rVert^2 - \langle x,y \rangle^2 \right) + \frac{\sigma_w^2 \sigma_b^2}{n_0} \Vert x-y \rVert^2 \right)^{-1/2} \eqqcolon \dot\Sigma^{(2)}_\infty(x,y), \label{eq:ntk_sign_formula_1_3}
\end{align}
assuming that either \( \sigma_b^2 > 0 \) or that \( x,y \) are not parallel, i.e., \( |\langle x,y \rangle| \neq \lVert x \rVert \lVert y \rVert \). The term \( \frac{1}{2m^2}A_m \) is given by 
\begin{align*}
    \frac{1}{2m^2} A_m = \frac{1}{2m^2}\left( \left( \frac{\sigma_w^2}{n_0}\lVert x \rVert^2 + \sigma_b^2 \right) + \left( \frac{\sigma_w^2}{n_0}\lVert y \rVert^2 + \sigma_b^2 \right) + \frac{1}{2m^2} \right) \xrightarrow{m \to \infty} 0 .
\end{align*}
For \( L \ge 3 \) the expression simplifies:
\begin{align}
    &\dot\Sigma^{(L)}_m(x,y) \notag\\
    \overset{(\ref{eq:ntk_sign_formula_2})}&{=}\frac{2\sigma_w^2}{\pi} \left( \left(\Sigma^{(L-1)}_m(x,x) + \frac{1}{2m^2}\right)\left(\Sigma^{(L-1)}_m(y,y) + \frac{1}{2m^2}\right) - \Sigma^{(L-1)}_m(x,y)^2 \right)^{-1/2} \notag \\
     &= \frac{2\sigma_w^2}{\pi} \left( \left(\Sigma^{(L-1)}_m(x,x) \cdot \Sigma^{(L-1)}_m(y,y) - \Sigma_m^{(L-1)}(x,y)^2 \right) + \frac{1}{2m^2} B_m \right)^{-1/2} \notag \\
     &\xrightarrow{m \to \infty} \frac{2\sigma_w^2}{\pi} \left(\Sigma^{(L-1)}_\infty(x,x) \cdot \Sigma^{(L-1)}_\infty(y,y) - \Sigma_\infty^{(L-1)}(x,y)^2 \right) \notag \\
     \overset{(\ref{eq:ntk_sign_formula_1})}&{=} \frac{2\sigma_w^2}{\pi} \left( (\sigma_w^2 + \sigma_b^2)^2 - \Sigma_\infty^{(L-1)}(x,y)^2 \right)^{-1/2} \eqqcolon \dot\Sigma^{(L)}_\infty(x,y) , \label{eq:ntk_sign_formula_1_4}
\end{align}
with the term \(\frac{1}{2m^2} B_m\) given by
\begin{align*}
    \frac{1}{2m^2} B_m = \frac{1}{2m^2}\left(\Sigma_m^{(L-1)}(x,x) + \Sigma^{(L-1)}_m(y,y) + \frac{1}{2m^2}\right) \xrightarrow{m \to \infty} 0.
\end{align*}
From Equation \eqref{eq:ntk_sign_formula_1_2} and Equation \eqref{eq:ntk_sign_formula_1_4} we see that
\begin{align*}
    \frac{\del}{\del \left(\Sigma^{(L-1)}_\infty(x,y) \right)} \Sigma^{(L)}_\infty(x,y) = \dot\Sigma^{(L)}_\infty(x,y) .
\end{align*}
Although we did not use dual activation functions to denote the NTK as \cite[Section A.4]{jacot2018neural}, we still find that the property \( \left(\hat \sigma \right)' = \widehat{(\sigma')} \) applies, i.e., the derivative of the dual activation function is the dual of the derivative of the activation function. Here $\hat\sigma$ denotes the dual activation function of $\sigma$.

In the case \( x=y \) we get
\begin{align}
    &\dot\Sigma^{(2)}_m(x,x) \overset{(\ref{eq:conv_to_singular_0})}{\sim} \frac{2\sigma_w^2}{\pi} m \left( \frac{\sigma_w^2}{n_0} \lVert x \rVert^2 + \sigma_b^2  \right)^{-1/2}, \quad \text{and using (\ref{eq:ntk_sign_formula_1})} \label{eq:ntk_sign_formula_1_5} \\
    &\dot\Sigma^{(L)}_m(x,x) \overset{(\ref{eq:conv_to_singular_0})}{\sim} \frac{2\sigma_w^2}{\pi} m \left( \sigma_w^2 + \sigma_b^2  \right)^{-1/2} \quad \text{for } L \ge 3. \label{eq:ntk_sign_formula_1_6}
\end{align}
We summarize the above calculations in the following lemma:
\begin{lemma}
\label{lemma:ntk_sign_formula}
    For \( m,L \in \N \) let \( \Sigma_m^{(L)}\) and \( \dot\Sigma^{(L)}_m \) be as in Theorem \ref{theorem:ntk_conv_init} for activation function \( \erf_m \). It then holds for any \( x \not= y \) with \( x,y \not= 0\):
    \begin{align*}
        & \Sigma^{(1)}_\infty(x,y) = \lim_{m \to \infty} \Sigma^{(1)}_m(x,y) \overset{(\ref{eq:ntk_sign_formula_01})}{=} \frac{\sigma_w^2}{n_0} \langle x,y \rangle + \sigma_b^2, \\
        & \Sigma^{(2)}_\infty(x,y) = \lim_{m \to \infty} \Sigma^{(2)}_m(x,y) \overset{(\ref{eq:ntk_sign_formula_1_1})}{=} \frac{2\sigma_w^2}{\pi} \arcsin\left( \frac{\frac{\sigma_w^2}{n_0} \langle x,y \rangle + \sigma_b^2}{\sqrt{\frac{\sigma_w^2}{n_0} \lVert x \rVert^2 + \sigma_b^2}\sqrt{\frac{\sigma_w^2}{n_0} \lVert y \rVert^2 + \sigma_b^2}} \right) + \sigma_b^2 , \\
        & \Sigma^{(L)}_\infty(x,y) = \lim_{m \to \infty} \Sigma^{(L)}_m(x,y) \overset{(\ref{eq:ntk_sign_formula_1_2})}{=} \frac{2\sigma_w^2}{\pi} \arcsin\left(  \frac{\Sigma^{(L-1)}_\infty(x,y)}{\sigma_w^2 + \sigma_b^2} \right)  + \sigma_b^2 \quad \text{for all} \quad L \ge 3, \\
        & \Sigma^{(L)}_\infty(x,x) = \lim_{m \to \infty} \Sigma^{(L)}_m(x,x) \overset{(\ref{eq:ntk_sign_formula_1})}{=} \sigma_w^2 + \sigma_b^2 \quad \text{for all} \quad L \ge 2,
    \end{align*}
    and, assuming that \(x,y\) are not parallel or that \(\sigma_b^2 > 0\),
    \begin{alignat*}{2}
        &\dot\Sigma^{(2)}_\infty (x,y) &&= \lim_{m \to \infty} \dot\Sigma^{(2)}_m(x,y) 
        \overset{(\ref{eq:ntk_sign_formula_1_3})}{=}\frac{2\sigma_w^2}{\pi} \left( \frac{\sigma_w^4}{n_0^2}\left( \lVert x \rVert^2 \lVert y \rVert^2 - \langle x,y \rangle^2 \right) + \frac{\sigma_w^2 \sigma_b^2}{n_0} \Vert x-y \rVert^2 \right)^{-\frac{1}{2}} \\
        &&&= \frac{2 \sigma_w^2}{\pi} \left( \Sigma^{(1)}_\infty(x,x) \, \Sigma^{(1)}_\infty(y,y)) - \Sigma^{(1)}_\infty(x,y)^2 \right)^{-\frac{1}{2}}, \\
        &\dot\Sigma^{(L)}_\infty (x,y) &&= \lim_{m \to \infty} \dot\Sigma^{(L)}_m(x,y) 
        \overset{(\ref{eq:ntk_sign_formula_1_4})}{=} \frac{2\sigma_w^2}{\pi} \left( (\sigma_w^2 + \sigma_b^2)^2 - \Sigma_\infty^{(L-1)}(x,y)^2 \right)^{-\frac{1}{2}} \quad \text{for all} \ \ L \ge 3, \\
        &\dot\Sigma^{(2)}_m(x,x) &&\overset{(\ref{eq:ntk_sign_formula_1_5})}{\sim} \frac{2\sigma_w^2}{\pi} m \left( \frac{\sigma_w^2}{n_0} \lVert x \rVert^2 + \sigma_b^2  \right)^{-\frac{1}{2}} , \\
        &\dot\Sigma^{(L)}_m(x,x) &&\overset{(\ref{eq:ntk_sign_formula_1_6})}{\sim} \frac{2\sigma_w^2}{\pi} m \left( \sigma_w^2 + \sigma_b^2  \right)^{-\frac{1}{2}} \quad \text{for } L \ge 3.
    \end{alignat*}
\end{lemma}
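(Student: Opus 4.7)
The plan is to induct on the layer index $L$, using the closed-form expressions for $\mathcal{T}_m$ and $\dot{\mathcal{T}}_m$ from Corollary \ref{corollary:T} and Corollary \ref{corollary:T_dot} applied to the $2\times 2$ covariance matrix $\Sigma^{(L-1)}_{m;x,y}$ whose entries are $\Sigma^{(L-1)}_m(x,x)$, $\Sigma^{(L-1)}_m(x,y)$, and $\Sigma^{(L-1)}_m(y,y)$. The base case $L=1$ is immediate because $\Sigma^{(1)}_m$ does not depend on $m$ at all, giving the first displayed formula verbatim.

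For the recursion on $\Sigma^{(L)}_\infty$, I would assume inductively that $\Sigma^{(L-1)}_m \to \Sigma^{(L-1)}_\infty$ pointwise with $\Sigma^{(L-1)}_\infty(x,x) > 0$. Corollary \ref{corollary:T} gives $\Sigma^{(L)}_m(x,y) = \sigma_w^2\,\mathcal{T}_m(\Sigma^{(L-1)}_{m;x,y}) + \sigma_b^2$ in the form of an $\arcsin$ expression with a $1/(2m^2)$ regularizer inside each square root; continuity of $\arcsin$ lets me pass the limit inside, producing the formula for $\Sigma^{(L)}_\infty(x,y)$ for $L \geq 2$. The diagonal identity $\Sigma^{(L)}_\infty(x,x) = \sigma_w^2 + \sigma_b^2$ for $L \geq 2$ follows immediately since $\arcsin(1) = \pi/2$, which in turn feeds back into the simplified $L \geq 3$ recursion by substitution.

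For $\dot\Sigma^{(L)}_\infty$, I would apply Corollary \ref{corollary:T_dot}, which gives the explicit $|\Sigma^{(L-1)}_{m;x,y} + \Id_2/(2m^2)|^{-1/2}$ expression. Expanding the determinant yields
\begin{equation*}
\Sigma^{(L-1)}_m(x,x)\,\Sigma^{(L-1)}_m(y,y) - \Sigma^{(L-1)}_m(x,y)^2 + \tfrac{1}{2m^2}\bigl(\Sigma^{(L-1)}_m(x,x) + \Sigma^{(L-1)}_m(y,y)\bigr) + \tfrac{1}{4m^4}.
\end{equation*}
For $x=y$ the leading term vanishes and only the $1/m$ regularizer survives, so $\dot\Sigma^{(L)}_m(x,x) \sim (2\sigma_w^2/\pi)\,m\,\Sigma^{(L-1)}_\infty(x,x)^{-1/2}$, yielding the singular asymptotic statements for $L \geq 2$. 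For $x \neq y$ I pass the limit through, giving the clean expressions for $L=2$ (expanding the $L=1$ kernels in terms of $\langle x,y\rangle$) and for $L \geq 3$ (using $\Sigma^{(L-1)}_\infty(x,x) = \Sigma^{(L-1)}_\infty(y,y) = \sigma_w^2 + \sigma_b^2$).

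The main obstacle is justifying that the finite limits are well-defined, i.e.\ that the leading determinant term is strictly positive in the limit so that the $m \to \infty$ expression does not blow up. For $L=2$ this is the condition $\lVert x\rVert^2\lVert y\rVert^2 - \langle x,y\rangle^2 > 0$ (non-parallelism) or alternatively $\sigma_w^2\sigma_b^2\lVert x-y\rVert^2/n_0 > 0$ (strict positivity of the bias), which is exactly the hypothesis stated in the lemma. For $L \geq 3$ I would argue that $|\Sigma^{(L-1)}_\infty(x,y)| < \sigma_w^2 + \sigma_b^2$ whenever $x \neq y$, by induction: since $\arcsin$ maps $(-1,1)$ strictly into $(-\pi/2,\pi/2)$, and at $L=2$ the $\arcsin$ argument lies strictly inside $(-1,1)$ under the non-parallel or $\sigma_b^2 > 0$ assumption, the strict inequality propagates up the recursion, keeping the determinant bounded away from zero and allowing the limit to be taken safely.
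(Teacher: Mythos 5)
Your proposal is correct and follows essentially the same route as the paper: the base case $L=1$ is $m$-independent, the recursion for $\Sigma^{(L)}_\infty$ comes from Corollary \ref{corollary:T} plus continuity of $\arcsin$ and the diagonal identity $\arcsin(1)=\pi/2$, and the $\dot\Sigma$ statements come from expanding the determinant in Corollary \ref{corollary:T_dot}, with the off-diagonal limits and the $\sim m$ diagonal asymptotics separated exactly as in Equations (\ref{eq:ntk_sign_formula_1_3})--(\ref{eq:ntk_sign_formula_1_6}). Your explicit inductive verification that $\lvert\Sigma^{(L-1)}_\infty(x,y)\rvert < \sigma_w^2+\sigma_b^2$ for $L\ge 3$ (via strictness of $\arcsin$ on $(-1,1)$, seeded by Cauchy--Schwarz at $L=2$ under the non-parallel or $\sigma_b^2>0$ hypothesis) is a welcome detail that the paper leaves implicit when asserting the limit in Equation (\ref{eq:ntk_sign_formula_1_4}) is finite.
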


\begin{remark}[Addendum to Remark \ref{remark:discussion_networktogp}]
\label{remark:addendum}
    In Remark \ref{remark:discussion_networktogp} we discussed the topology of the space of the Gaussian processes to which ANNs with continuous activation functions converge in the infinite-width limit. The product \(\sigma\)-algebra restricts us to a countable input set, so it is not possible to check for properties such as continuity or even differentiability. While Theorem \ref{theorem:nn_to_gp} is stated only for continuous activation functions with linear envelope property, we will see in Theorem \ref{theorem:gen1} that the convergence also holds in the (weak) infinite-width limit even for step-like activation functions. For the sign function as activation function, the covariance function of this Gaussian process is then given by \( \Sigma_\infty^{(L)} \). Since we know explicitly what this covariance function looks like, we can examine the sample-continuity of the process. Note that to get the full picture, one would still have to show functional convergence of the network to this process, as \cite{bracale2021large} have done.
    
    \( \Sigma_\infty^{(L)} \) is isotropic when restricted to a sphere, as will be discussed in the next section. In the case of isotropic covariance functions, \( k(x,y) = k(\lVert x-y \rVert) \), the simplest way to show sample-continuity using the Kolmogorov-Chentsov criterion is to show Lipschitz continuity of \(k\) at zero. This can be seen from the proof of Lemma 4.3 by \cite{lang2015isotropic}. In our case, the covariance function is basically given by a composition of arcsin functions. Lipschitz continuity of \( k \) at zero is therefore equivalent to Lipschitz continuity of the arcsin function at 1, which does not hold. In conclusion, it is not possible to show sample-continuity of the Gaussian process given by \( \Sigma_\infty^{(L)} \) in the established way using the Kolmogorov-Chentsov criterion.
    
    Clearly, the kernel can be rewritten in terms of the arccos function. \cite{cho2009kernel} analyzed arc-cosine kernels in the context of deep learning in detail.
\end{remark}

\begin{corollary}
\label{corollary:ntk_sign_formula}
    For \( m,L \in \N \) let \( \Theta^{(L)}_m\) as in Theorem \ref{theorem:ntk_conv_init} for activation function \( \erf_m \). If \(x \not= y\) and either \(x,y\) not parallel or \(\sigma_b^2 > 0\), then the limit
    \begin{equation*}
        \Theta^{(L)}_\infty(x,y) = \lim_{m \to \infty} \Theta^{(L)}_m(x,y)
    \end{equation*}
    exists. Furthermore, it holds, asymptotically as \( m \to \infty \),
    \begin{equation*}
        \Theta_m^{(L)}(x,x) \sim \frac{2\sigma_w^2}{\pi} \left( \frac{\sigma_w^2}{n_0}\lVert x \rVert^2 + \sigma_b^2 \right)^{\frac{1}{2}} \left( \frac{2\sigma_w^2}{\pi} \left( \sigma_w^2 + \sigma_b^2 \right)^{-\frac{1}{2}} \right)^{L-2} m^{L-1} \quad \text{for} \quad L \ge 2. 
    \end{equation*}
\end{corollary}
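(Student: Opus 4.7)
The plan is to prove both claims by induction on $L$, using the recursion $\Theta^{(L)}(x,y) = \Sigma^{(L)}(x,y) + \Theta^{(L-1)}(x,y) \cdot \dot\Sigma^{(L)}(x,y)$ from Theorem \ref{theorem:paper:ntk_conv_init} and plugging in the explicit limits and asymptotic expansions for $\Sigma^{(L)}_m$ and $\dot\Sigma^{(L)}_m$ already gathered in Lemma \ref{lemma:ntk_sign_formula}. The two statements can be handled separately because the off-diagonal ($x \neq y$) and on-diagonal ($x=y$) behaviour of $\dot\Sigma^{(L)}_m$ differ qualitatively: the former converges to a finite limit while the latter diverges like $m$.

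For the first claim, I would induct on $L$. The base case $L=1$ is trivial since $\Theta^{(1)}_m = \Sigma^{(1)}$ is independent of $m$. For the inductive step, Lemma \ref{lemma:ntk_sign_formula} gives $\Sigma^{(L)}_m(x,y) \to \Sigma^{(L)}_\infty(x,y)$ and, under the hypothesis that $x,y$ are not parallel or $\sigma_b^2 > 0$, $\dot\Sigma^{(L)}_m(x,y) \to \dot\Sigma^{(L)}_\infty(x,y) \in \R$. Combined with $\Theta^{(L-1)}_m(x,y) \to \Theta^{(L-1)}_\infty(x,y)$ from the inductive hypothesis, the product-and-sum structure of the recursion gives convergence of $\Theta^{(L)}_m(x,y)$. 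A small subsidiary point to verify is that the arcsin arguments in the recursion for $\Sigma^{(L)}_\infty$ stay strictly inside $(-1,1)$ so that $(\sigma_w^2+\sigma_b^2)^2 - \Sigma^{(L-1)}_\infty(x,y)^2 > 0$ for $L\geq 3$; this follows inductively because strict Cauchy--Schwarz in Equation \eqref{eq:ntk_sign_formula_1_1} (using the non-parallel/bias assumption) gives a strict bound at $L=2$, and the subsequent arcsin-recursion preserves this strict bound.

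For the second claim, I again induct on $L$. For the base case $L=2$, write $\Theta^{(2)}_m(x,x) = \Sigma^{(2)}_m(x,x) + \Sigma^{(1)}_\infty(x,x)\cdot \dot\Sigma^{(2)}_m(x,x)$; by Lemma \ref{lemma:ntk_sign_formula}, the first summand converges to $\sigma_w^2+\sigma_b^2$ while the second is asymptotic to $\tfrac{2\sigma_w^2}{\pi} m \bigl(\tfrac{\sigma_w^2}{n_0}\lVert x\rVert^2 + \sigma_b^2\bigr)^{1/2}$, so the second term dominates and yields the claimed formula with exponent $L-2=0$. For the inductive step $L \geq 3$, the inductive hypothesis gives $\Theta^{(L-1)}_m(x,x) \sim C_{L-1}\, m^{L-2}$ with the constant $C_{L-1}$ prescribed by the formula, and Lemma \ref{lemma:ntk_sign_formula} gives $\dot\Sigma^{(L)}_m(x,x) \sim \tfrac{2\sigma_w^2}{\pi}m(\sigma_w^2+\sigma_b^2)^{-1/2}$. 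Their product is of order $m^{L-1}$ with exactly the predicted constant, dwarfing the bounded $\Sigma^{(L)}_m(x,x)$ term; asymptotic proportionality is preserved under multiplication and under additive lower-order perturbations, so the sum inherits the same leading asymptotic.

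The main obstacle is essentially bookkeeping: verifying that at each inductive step the constants multiply correctly and that $(\sigma_w^2+\sigma_b^2)^{-1/2}$ picks up one additional power per layer above $L=2$, while the prefactor $\tfrac{2\sigma_w^2}{\pi}\bigl(\tfrac{\sigma_w^2}{n_0}\lVert x\rVert^2+\sigma_b^2\bigr)^{1/2}$ stays frozen from the base case. No genuinely new analytic input is needed beyond Lemma \ref{lemma:ntk_sign_formula}; the only mildly delicate point is the strict-inequality verification described above, which ensures the off-diagonal $\dot\Sigma^{(L)}_\infty(x,y)$ is well defined and finite.
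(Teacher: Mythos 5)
Your proposal is correct and follows essentially the same route as the paper: both arguments rest entirely on the limits and asymptotics collected in Lemma \ref{lemma:ntk_sign_formula}, with the paper unrolling the recursion into the closed form $\Theta^{(L)}_m(x,y)=\sum_{k=1}^L \Sigma^{(k)}_m(x,y)\prod_{l=k}^{L-1}\dot\Sigma^{(l+1)}_m(x,y)$ and isolating the dominant $k=1$ term of order $m^{L-1}$, which is the same computation as your layer-by-layer induction. Your explicit verification that the arcsin arguments stay strictly inside $(-1,1)$ (so that $\dot\Sigma^{(L)}_\infty(x,y)$ is finite off the diagonal) is a point the paper leaves implicit in the hypotheses of Lemma \ref{lemma:ntk_sign_formula}, and is a welcome addition.
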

\begin{proof}
    The first statement directly follows from Lemma \ref{lemma:ntk_sign_formula} and the definition of \( \Theta_m^{(L)} \). The recursive definition can be resolved to the following formula:
    \begin{equation*}
        \Theta^{(L)}_m(x,y) = \sum_{k=1}^L \Sigma_m^{(k)}(x,y) \cdot \prod_{l=k}^{L-1} \dot\Sigma_m^{(l+1)}(x,y) .
    \end{equation*}
    With
    \begin{equation*}
        K(z) \coloneqq \frac{2 \sigma_w^2}{\pi} \left( \frac{\sigma_w^2}{n_0}z + \sigma_b^2 \right)^{-\frac{1}{2}} ,
    \end{equation*}
    we get from Lemma \ref{lemma:ntk_sign_formula} that \( \dot\Sigma_m^{(2)}(x,x) \sim K\left(\lVert x \rVert^2\right) \cdot m\) and \( \dot\Sigma_m^{(L)}(x,x) \sim K(n_0) \cdot m \) for \( L \ge 3\). This implies \( \prod_{l=1}^{L-1} \dot\Sigma_m^{(l+1)}(x,x) \sim K(\lVert x \rVert^2)K(n_0)^{L-2} \cdot m^{L-1} \) and \( \prod_{l=k}^{L-1} \dot\Sigma_m^{(l+1)}(x,x) \sim K(n_0)^{L-k} \cdot m^{L-k} \) for any \( k \ge 2 \). For \(L \ge 2\), we get that
    \begin{align*}
        \Theta_m^{(L)}(x,x) &\sim \Sigma_\infty^{(1)}(x,y) K\left(\lVert x \rVert^2\right)K(n_0)^{L-2} \cdot m^{L-1} + \sum_{k=2}^{L} \Sigma_\infty^{k}(x,x) \cdot K(n_0)^{L-k} \cdot m^{L-k} \\
        &\sim \left( \frac{\sigma_w^2}{n_0}\lVert x \rVert^2 + \sigma_b^2 \right) \frac{2\sigma_w^2}{\pi} \left( \frac{\sigma_w^2}{n_0}\lVert x \rVert^2 + \sigma_b^2 \right)^{-\frac{1}{2}} K(n_0)^{L-2} m^{L-1} \\
        &= \frac{2\sigma_w^2}{\pi} \left( \frac{\sigma_w^2}{n_0}\lVert x \rVert^2 + \sigma_b^2 \right)^{\frac{1}{2}} \left( \frac{2\sigma_w^2}{\pi} \left( \sigma_w^2 + \sigma_b^2 \right)^{-\frac{1}{2}} \right)^{L-1} m^{L-2} .
    \end{align*}
\end{proof}

\begin{theorem}[Inspired by Lemma 5 of \cite{radhakrishnan2023wide}]
\label{theorem:radhak}
    Let \( \sigma_b^2 > 0\) or let all \( x_i \in \R^{n_0} \) be pairwise non-parallel. Let \( L \ge 2 \) and \( x_i \in \mathcal{S}^{n_0-1}_R \) for all \( i=1,\dots,d\), where \( \mathcal{S}^{n_0-1}_R \subseteq \R^{n_0} \) is the sphere of radius \( R \). Then, with
    \begin{equation*}
        c^{(L)}(x) \coloneqq \lim_{m \to \infty} \sign\left( \Theta^{(L)}_m(x,\mathx) \Theta^{(L)}_m(\mathx,\mathx)^{-1} \mathy \right),
    \end{equation*}
    and assuming that \( \Theta^{(L)}_\infty(x,\mathx) \mathy \not= 0 \) for almost all \( x \in \mathcal{S}^{n_0-1}_R \), it holds
    \begin{equation*}
        c^{(L)}(x) = \sign\left( \Theta^{(L)}_\infty(x,\mathx) \mathy \right) \quad \text{a.e. on } \mathcal{S}^{n_0-1}_R.
    \end{equation*}
\end{theorem}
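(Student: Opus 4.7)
The plan is to exploit the asymptotic singularity of $\Theta^{(L)}_m$ along its diagonal (quantified in Corollary \ref{corollary:ntk_sign_formula}) to extract a scalar $m$-dependent normalization that reduces $\Theta^{(L)}_m(\mathx,\mathx)^{-1}$ to the identity in the limit. Since all training points lie on $\mathcal{S}^{n_0-1}_R$, their norms agree ($\lVert x_i\rVert^2 = R^2$), so by Corollary \ref{corollary:ntk_sign_formula} there exists a single positive constant $C_R$ (depending only on $R, L, \sigma_w, \sigma_b, n_0$) such that
\begin{equation*}
    \Theta^{(L)}_m(x_i,x_i) \sim C_R\, m^{L-1}, \qquad i=1,\dots,d,
\end{equation*}
while for $i \neq j$ the assumptions ($\sigma_b^2>0$ or pairwise non-parallel $x_i$) together with Lemma \ref{lemma:ntk_sign_formula} guarantee that $\Theta^{(L)}_m(x_i,x_j) \to \Theta^{(L)}_\infty(x_i,x_j) \in \R$.

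First I would show that $A_m \coloneqq (C_R m^{L-1})^{-1} \Theta^{(L)}_m(\mathx,\mathx) \to \Id_d$ entrywise as $m\to\infty$: the diagonal entries tend to $1$ by the asymptotic above, and the off-diagonal entries tend to $0$ because their numerators are bounded while the denominator diverges like $m^{L-1}$. Hence for $m$ sufficiently large $A_m$ is invertible (the set of invertible matrices is open) and $A_m^{-1} \to \Id_d$, which gives
\begin{equation*}
    C_R m^{L-1}\, \Theta^{(L)}_m(\mathx,\mathx)^{-1} \xrightarrow{m\to\infty} \Id_d.
\end{equation*}
This also shows that $\Theta^{(L)}_m(\mathx,\mathx)$ is invertible for large $m$, so $c^{(L)}(x)$ is well defined for almost every $x$.

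Next, fix $x \in \mathcal{S}^{n_0-1}_R$ with $x \neq x_i$ for all $i$ (this excludes only a finite, hence null, set) and such that $\Theta^{(L)}_\infty(x,\mathx)\mathy \neq 0$ (a.e. by assumption). Under the standing assumption, $x$ and each $x_i$ are either distinct and non-parallel or $\sigma_b^2>0$, so Lemma \ref{lemma:ntk_sign_formula} gives $\Theta^{(L)}_m(x,\mathx) \to \Theta^{(L)}_\infty(x,\mathx)$. Combining this with the previous step,
\begin{equation*}
    C_R m^{L-1}\, \Theta^{(L)}_m(x,\mathx)\, \Theta^{(L)}_m(\mathx,\mathx)^{-1}\mathy \xrightarrow{m\to\infty} \Theta^{(L)}_\infty(x,\mathx)\, \mathy.
\end{equation*}

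Finally, since $C_R m^{L-1}>0$, the sign of $\Theta^{(L)}_m(x,\mathx)\Theta^{(L)}_m(\mathx,\mathx)^{-1}\mathy$ coincides with the sign of the rescaled quantity above. Because $\sign$ is continuous at every nonzero real number and the limit $\Theta^{(L)}_\infty(x,\mathx)\mathy$ is nonzero by hypothesis, the sign passes to the limit, yielding
\begin{equation*}
    c^{(L)}(x) = \sign\!\left(\Theta^{(L)}_\infty(x,\mathx)\mathy\right) \qquad \text{a.e. on } \mathcal{S}^{n_0-1}_R.
\end{equation*}
The only potentially delicate point is verifying that the constant $C_R$ in Corollary \ref{corollary:ntk_sign_formula} is genuinely the same across all diagonal entries; this hinges on the fact that $\Sigma^{(l)}_\infty(x_i,x_i)$ depends on $x_i$ only through $\lVert x_i\rVert$, which is constant on the sphere, so $C_R$ is uniform and the factor-out trick succeeds.
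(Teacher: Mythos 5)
Your proposal is correct and follows essentially the same route as the paper: both rescale by $a_m = C(R)\,m^{L-1}$ using the diagonal asymptotics of Corollary \ref{corollary:ntk_sign_formula} (uniform on the sphere), show $a_m^{-1}\Theta^{(L)}_m(\mathx,\mathx)\to\Id_d$ so that $a_m\,\Theta^{(L)}_m(\mathx,\mathx)^{-1}\to\Id_d$, combine this with $\Theta^{(L)}_m(x,\mathx)\to\Theta^{(L)}_\infty(x,\mathx)$ for the almost-every $x$ not parallel to any $x_i$, and conclude via continuity of $\sign$ at the nonzero limit. The paper merely packages the same computation as an explicit $A_m+B_m$ error decomposition, so the two arguments are interchangeable.
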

\begin{proof}
    First note that almost all \( x \in \mathcal{S}^{n_0-1}_R \) are not parallel to any \( x_i \in \mathx \). We denote \( \Theta^{(L)}_m(\mathx,\mathx) \eqqcolon \Theta_m \), \( m \in \N \cup \{\infty\} \), for convenience. Let \( a_m > 0 \) be a positive constant that we will choose later. It then holds for almost all \( x \in \mathcal{S}^{n_0-1}_R \)
    \begin{align*}
        &\sign\left( \Theta^{(L)}_m(x,\mathx) \Theta_m^{-1} \mathy \right) = \sign\left( a_m \Theta^{(L)}_m(x,\mathx) \Theta_m^{-1} \mathy \right) \\
        = &\sign\Bigg( \underbrace{\left[a_m \Theta^{(L)}_m(x,\mathx) \Theta_m^{-1} \mathy - \Theta^{(L)}_m(x,\mathx)\mathy \right]}_{\eqqcolon A_m} + \underbrace{\left[\Theta^{(L)}_m(x,\mathx)\mathy - \Theta^{(L)}_\infty(x,\mathx)\mathy \right] }_{\eqqcolon B_m} \\
        &+ \Theta^{(L)}_\infty (x,\mathx)\mathy \Bigg) .
    \end{align*}
    We now show that \( A_m \) and \( B_m \) go to zero as \( m \to \infty \) for a suitable choice of \( a_m \). First, note that
    \begin{equation*}
        |B_m| \leq \llVert \Theta^{(L)}_m(x,\mathx) - \Theta^{(L)}_\infty(x,\mathx) \rrVert_2 \llVert \mathy \rrVert_2 \xrightarrow{m \to \infty} 0,
    \end{equation*}
    since \(\lVert \mathy \rVert_2 < \infty \) and by Corollary \ref{corollary:ntk_sign_formula} it holds that \(\Theta^{(L)}_m(x,x_i) \to \Theta^{(L)}_\infty(x,x_i)\) for all \( x_i \in \mathx\) as \( m \to \infty \). Second, we have
    \begin{equation}
        |A_m| 
        \leq \llVert a_m \Theta^{(L)}_m(x,\mathx) \Theta_m^{-1} - \Theta^{(L)}_m(x,\mathx) \rrVert_2 \llVert \mathy \rrVert_2 
        \leq \llVert  \Theta^{(L)}_m(x,\mathx) \rrVert_2  \llVert a_m \Theta_m^{-1} - \Id_d \rrVert_2 \llVert \mathy \rrVert_2 .
        \label{eq:radhak_proof}
    \end{equation}
    Again by Corollary \ref{corollary:ntk_sign_formula} it holds that \( \lVert \Theta^{(L)}_m(x,\mathx) \rVert_2 \to \lVert \Theta^{(L)}_\infty(x,\mathx) \rVert_2 \) as \( m \to \infty \). Furthermore, for all \( 1 \le i \le d \) we have \( \Theta^{(L)}_m(x_i,x_i) \sim C(R) \cdot m^{L-1} \) for some constant \( C(R) \) which depends on \( R = \lVert x_i \rVert \). Since it holds \( \Theta^{(L)}_m(x_j,x_i) \to \Theta^{(L)}_\infty(x_j,x_i) \) as \(m \to \infty\) for any \( i \not= j\), we choose \( a_m = C(R) \cdot m^{L-1}\) and conclude
    \begin{align*}
        a_m^{-1} \Theta^{(L)}_m(x_i, x_j) \xrightarrow{m \to \infty} \begin{cases}
            1 &\text{if } i=j \\
            0 &\text{if } i\not=j .
        \end{cases}
    \end{align*}
    This implies \( a_m^{-1} \Theta_m \to \Id_d \) as \( m \to \infty \). In particular, \( \{\Id_d \} \cup \{ a_m^{-1} \Theta_m \mid m \in \N \} \) is a compact set with respect to \( \lVert \,\cdot\, \rVert_2 \). Since \( D \mapsto D^{-1} \) is a continuous function, \( \{\Id_d \} \cup \{ a_m \Theta_m^{-1} \mid m \in \N \} \) is bounded. We get
    \begin{equation*}
        \llVert a_m \Theta_m^{-1} - \Id_d \rrVert_2 \le \llVert a_m \Theta^{-1} \rrVert_2 \llVert \Id_d - a_m^{-1} \Theta_m \rrVert_2 \xrightarrow{m \to \infty} 0,
    \end{equation*}
    and thus
    \begin{equation*}
        |A_m| \overset{(\ref{eq:radhak_proof})}{\le} \llVert  \Theta^{(L)}_m(x,\mathx) \rrVert_2  \llVert a_m \Theta_m^{-1} - \Id_d \rrVert_2 \llVert \mathy \rrVert_2 \xrightarrow{m \to \infty} 0.
    \end{equation*}
    Recall that \( \Theta^{(L)}_\infty (x,\mathx)\mathy \not= 0 \), which now concludes the proof:
    \begin{align*}
        \lim_{m \to \infty} \sign\left( \Theta^{(L)}_m(x,\mathx) \Theta_m^{-1} \mathy \right) 
        &= \lim_{m \to \infty} \sign\left( A_m + B_m + \Theta^{(L)}_\infty(x,\mathx) \mathy \right) \\
        &= \sign\left( \Theta^{(L)}_\infty(x,\mathx) \mathy \right).
    \end{align*}
\end{proof}

\begin{remark}
    One can generalize the above theorem by dropping the restriction to \( \mathcal{S}^{n_0-1}_R \). By Lemma \ref{lemma:ntk_sign_formula} it holds \( \Theta^{(L)}_m(x_i,x_i) \sim \sqrt{ \frac{\sigma_w^2}{n_0}\lVert x_i \rVert^2 + \sigma_b^2 } \cdot C \cdot m^{L-1} \). For some constant \( C \) independent of \(L\) and \( \lVert x_i \rVert \). If we now define \( a_m \coloneqq C \cdot m^{L-1} \), we get
    \begin{align*}
        a_m^{-1} \Theta^{(L)}_m(\mathx,\mathx) \xrightarrow{m \to \infty} \mathrm{diag}\left\{ \left( \sqrt{ \sigma_w^2 \lVert x_i \rVert^2 / n_0 + \sigma_b^2 } \right)_{1 \le i \le d} \right\} \eqqcolon D,
    \end{align*}
    where \( \mathrm{diag}\{ v \} \) of a vector \( v \) is a square matrix with diagonal \( v \). As in the proof above, and using the continuity of the inverse map \( D \mapsto D^{-1} \), this implies
    \begin{align*}
        \lim_{m \to \infty} \sign\left( \Theta^{(L)}_m(x,\mathx) \Theta_m^{-1} \mathy \right) = \sign\left( \Theta^{(L)}_\infty(x,\mathx) D^{-1} \mathy \right) = \sign\left( \Tilde{\Theta}^{(L)}_\infty(x,\mathx)\mathy \right),
    \end{align*}
    if we define
    \begin{equation*}
        \Tilde{\Theta}^{(L)}_\infty(x,y) \coloneqq \left(\sigma_w^2 \lVert x \rVert^2 / n_0 + \sigma_b^2\right)^{-1/2} \left( \sigma_w^2 \lVert y \rVert^2 / n_0 + \sigma_b^2 \right)^{-1/2} \Theta^{(L)}_\infty(x,y) .
    \end{equation*}
    In conclusion, dropping the restriction leads to a similar form for our classifier, but with a modified kernel.
\end{remark}

Theorem \ref{theorem:radhak} shows that the classifier
\[ \lim_{m \to \infty} \sign\left( \Theta^{(L)}_m(x,\mathx) \Theta_m^{-1} \mathy \right) = \lim_{m \to \infty} \sign\left( f_{\mathrm{NTK}}^m(x) \right) \]
has an easily interpretable form that is close to the Nadaraya-Watson estimator with singular kernel \( \Theta^{(L)}_\infty \). This is despite the fact that the pointwise limit of \( f_{\mathrm{NTK}}^m \) is trivial, i.e., regression is no longer possible

\subsection{Checking for Bayes optimality}
\label{subsec:bayes_optimal}

In addition to the ideas used in the above section, \cite{radhakrishnan2023wide} proved that the resulting estimator is \textit{Bayes optimal} under certain conditions. This property is also referred to as \textit{consistency.} Let the probability distribution \( \Prob_{\mathrm{data}} \) on our data space be given by a random variable \( (X,Y) \in \R^{n_0} \times \{-1,1\} \) and let 
\[ c(x) = \underset{\Tilde{y} \in \{ -1,1 \}}{\mathrm{arg\, max}} \; \Prob(Y = \Tilde{y} \mid X=x) \]
be the \textit{Bayes classifier} with respect to this distribution. Denote \( \mathx_d = (x_1,\dots,x_d )\) and \( \mathy_d = (y_1,\dots,y_d )\) for \((x_i,y_i)\) drawn independently from \( \Prob_{\mathrm{data}} \). We then define Bayes optimality as follows:
\begin{definition}[Bayes optimality]
    Let \( c_d(\,\cdot\,) = c_d(\,\cdot\, ; \mathx_d, \mathy) \) be classifiers for \(d \in \N \) and estimators of \( c(\,\cdot\,) \). We then say that \( (c_d)_{d \in \N} \) is Bayes optimal, if it is a consistent estimator of \( c \), i.e., for all \( \varepsilon > 0 \) and \(X\)-almost all \(x\)
    \begin{equation*}
        \lim_{d \to \infty} \Prob \big[ | c_d(x) - c(x) | > \varepsilon \big] = 0.
    \end{equation*}
\end{definition}
First, we summarize the results of \cite{radhakrishnan2023wide} and consider \( \Theta^{(L)} \). If the singular limiting kernel for \( L \to \infty \) behaves like a singular kernel of the form \( k(x,y) = \lVert x-y \rVert^{-\alpha}\), then the classifier for \( L \to \infty\) will be of the form 
\begin{equation*}
    \sign\left( \frac{\sum_{i=1}^d y_i / \lVert x-x_i \rVert^\alpha }{\sum_{i=1}^d 1 / \lVert x-x_i \rVert^\alpha } \right) ,
\end{equation*}
assuming \( \sum_{i=1}^d 1 / \lVert x-x_i \rVert^\alpha  > 0 \). This classifier satisfies Bayes optimality for \( \alpha = n_0 \) by \cite{devroye1998hilbert}. \cite{radhakrishnan2023wide} generalized the results of \cite{devroye1998hilbert}, expressed \(\alpha\) in terms of the activation function and its derivative, and chose them in such a way as to achieve \(\alpha = n_0\). Going back to our setup, we want to see if we can write
\begin{equation*}
    \lim_{m \to \infty} \Theta_m^{(L)}(x,y) = \frac{R(\lVert x-y \rVert)}{\lVert x-y \rVert^\alpha} ,
\end{equation*}
for some constant \(\alpha\) and for some function \(R \colon \R_{+} \to \R \) bounded away from zero as \( \rVert x-y \lVert \to 0\). We start by proving that \( \Theta_m^{(L)}(x,y) = G(\lVert x-y \rVert) \) for some function \(G\). Recall that we have to restrict ourselves to a sphere by Theorem \ref{theorem:radhak}. For simplicity, we restrict ourselves to the unit sphere \( \mathcal{S}^{n_0-1} \). Then, it holds \(\langle x,x \rangle = 1 \) for all \( x \in \mathcal{S}^{n_0-1} \), which gives us
\begin{align*}
    \lVert x-y \rVert^2 = \langle x-y,x-y \rangle = \langle x,x \rangle - 2 \langle x,y \rangle + \langle y,y \rangle = 2(1 - \langle x,y \rangle ).
\end{align*}
In the following, we will substitute \( z = \langle x,y \rangle \). Taking \( \rVert x-y \lVert \to 0\) is then equivalent to taking \( z \to 1 \). We can conclude from Lemma \ref{lemma:ntk_sign_formula} that we can indeed write \( \Sigma^{(L)}_\infty(x,y) =  \Sigma^{(L)}_\infty(z) \),  \( \dot\Sigma^{(L)}_\infty(x,y) =  \dot\Sigma^{(L)}_\infty(z) \), and hence \( \Theta^{(L)}_\infty(x,y) =  \Theta^{(L)}_\infty(z) \) for all \(L \ge 1\). Note that \( \Sigma^{(L)}_\infty(1) = \sigma_w^2 + \sigma_b^2 \) for \( L \geq 2 \) and \( \Sigma^{(L)}_\infty(1) = \sigma_w^2/n_0 + \sigma_b^2 \). Next, we consider \( \dot\Sigma^{(2)}_\infty(z) \) for \( z \to 1 \) using Lemma \ref{lemma:ntk_sign_formula}
\begin{align}
    &\dot\Sigma^{(2)}_\infty(z) = \frac{2 \sigma_w^2}{\pi} \left( \frac{\sigma_w^4}{n_0^2}(1-z^2) + \frac{\sigma_w^2\sigma_b^2}{n_0}\cdot 2(1-z) \right)^{-\frac{1}{2}} \notag \\
    &=\frac{2 \sigma_w^2}{\pi} \left( \frac{\sigma_w^4}{n_0^2}(1-z)(1+z) + \frac{2\sigma_w^2\sigma_b^2}{n_0}(1-z) \right)^{-\frac{1}{2}} \notag \\
    \overset{z \to 1}&{\sim } \frac{2 \sigma_w^2}{\pi} \left( \frac{2\sigma_w^4}{n_0^2}(1-z) + \frac{2\sigma_w^2\sigma_b^2}{n_0}(1-z) \right)^{-\frac{1}{2}} 
    = \frac{2 \sigma_w^2}{\pi} \left( \frac{2\sigma_w^2}{n_0} \left( \frac{\sigma_w^2}{n_0} + \sigma_b^2 \right) \right)^{-\frac{1}{2}} (1-z)^{-\frac{1}{2}} \notag \\
    &= \frac{n_0}{\pi} \left(\frac{2\sigma_w^2}{n_0}\right)^{\frac{1}{2}} \left( \frac{\sigma_w^2}{n_0} + \sigma_b^2 \right)^{-\frac{1}{2}} (1-z)^{-\frac{1}{2}} . \label{eq:ntk_norm_L2}
\end{align}
For \( L \ge 3\), we can observe that
\begin{align}
    \dot\Sigma^{(L)}_\infty(z) 
    &= \frac{2\sigma_w^2}{\pi}\left( \left(\sigma_w^2 + \sigma_b^2 \right)^2 - \Sigma^{(L-1)}_\infty(z)^2 \right)^{-\frac{1}{2}} \notag \\
    &= \frac{2\sigma_w^2}{\pi} \left( \left( \sigma_w^2 + \sigma_b^2 - \Sigma^{(L-1)}_\infty(z) \right) \left( \sigma_w^2 + \sigma_b^2 + \Sigma^{(L-1)}\infty(z) \right) \right)^{-\frac{1}{2}} \notag \\
    \overset{z \to 1}&{\sim } \frac{2\sigma_w^2}{\pi} \left( 2\left(\sigma_w^2 + \sigma_b^2\right)\right)^{-\frac{1}{2}} \left( \left(\sigma_w^2 + \sigma_b^2 \right) - \Sigma^{(L-1)}_\infty(z) \right)^{-\frac{1}{2}} . \label{eq:ntk_norm_Lge3}
\end{align}
We will now prove a lemma that allows us to analyze the behavior of \( \dot\Sigma^{(L)}_\infty(z) \) as \( z \to 1 \).
\begin{lemma}
\label{lemma:asympt}
    It holds:
    \begin{align*}
        \lim_{z \to 1} \frac{(1-z)^{1/2}}{1 - \frac{2}{\pi} \arcsin\left( z \right) } 
        = \frac{\pi}{2\sqrt{2}} .
    \end{align*}
\end{lemma}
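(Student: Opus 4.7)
The plan is to apply L'Hôpital's rule, since both numerator and denominator vanish as $z \to 1^-$. Indeed, $(1-z)^{1/2} \to 0$ and, because $\arcsin(1) = \pi/2$, also $1 - \frac{2}{\pi}\arcsin(z) \to 0$. Since $\arcsin$ is differentiable on $(-1,1)$ and $(1-z)^{1/2}$ is differentiable on $(-\infty,1)$, and since the denominator is strictly positive on a left neighbourhood of $1$, L'Hôpital is applicable.

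The key computation is
\begin{equation*}
    \frac{\frac{\del}{\del z}(1-z)^{1/2}}{\frac{\del}{\del z}\left(1 - \frac{2}{\pi}\arcsin(z)\right)}
    = \frac{-\tfrac{1}{2}(1-z)^{-1/2}}{-\tfrac{2}{\pi}(1-z^2)^{-1/2}}
    = \frac{\pi}{4}\cdot\frac{\sqrt{(1-z)(1+z)}}{\sqrt{1-z}}
    = \frac{\pi}{4}\sqrt{1+z},
\end{equation*}
where the key cancellation uses $1-z^2 = (1-z)(1+z)$ and the fact that $1-z > 0$ for $z < 1$. Taking $z \to 1$ in the last expression gives $\frac{\pi}{4}\sqrt{2} = \frac{\pi}{2\sqrt{2}}$, and L'Hôpital's rule yields the claimed limit.

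There is no real obstacle here; the only subtlety is ensuring that the cancellation $\sqrt{(1-z)(1+z)}/\sqrt{1-z} = \sqrt{1+z}$ is legitimate, which requires $z < 1$ so that $1-z$ is positive. As a sanity check, one can also derive the limit by substituting $z = \cos\theta$ with $\theta \to 0^+$: then $1-z = 1-\cos\theta \sim \theta^2/2$ and $\arcsin(\cos\theta) = \pi/2 - \theta$, so
\begin{equation*}
    \frac{(1-z)^{1/2}}{1 - \tfrac{2}{\pi}\arcsin(z)}
    = \frac{(1-\cos\theta)^{1/2}}{\tfrac{2\theta}{\pi}}
    \sim \frac{\theta/\sqrt{2}}{2\theta/\pi}
    = \frac{\pi}{2\sqrt{2}},
\end{equation*}
which confirms the answer obtained by L'Hôpital's rule.
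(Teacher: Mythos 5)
Your proof is correct and follows essentially the same route as the paper, which also applies L'H\^opital's rule and cancels $\sqrt{1-z}$ via $1-z^2=(1-z)(1+z)$ to arrive at $\frac{\pi}{4}\sqrt{2}=\frac{\pi}{2\sqrt{2}}$. The additional substitution $z=\cos\theta$ is a nice independent check but not needed.
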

\begin{proof}
    Since the numerator and the denominator both go to zero as \( z \to 1 \), we apply l'Hôpital's rule and differentiate both. This yields
    \begin{align*}
        \frac{\del}{\del z} \left[ (1-z)^{\frac{1}{2}} \right] 
        = - \frac{1}{2} (1-z)^{-\frac{1}{2}} \quad \text{and} \quad
        \frac{\del}{\del z} \left[ 1 - \frac{2}{\pi} \arcsin\left( z \right) \right] = - \frac{2}{\pi} \left(1-z^2 \right)^{-\frac{1}{2}}.
    \end{align*}
    Thus,
    \begin{equation*}
        \lim_{z \to 1} \frac{(1-z)^{1/2}}{1 - \frac{2}{\pi} \arcsin\left( z \right) } 
        = \lim_{z \to 1} \frac{\frac{1}{2} \sqrt{1-z}\sqrt{1+z}}{\frac{2}{\pi}\sqrt{1-z}} = \frac{\pi}{4} \sqrt{2} 
        = \frac{\pi}{2\sqrt{2}},
    \end{equation*}
    concluding the proof.
\end{proof}
For \( L = 3 \) it now holds
\begin{align*}
    \dot\Sigma^{(3)}_\infty(z)\overset{(\ref{eq:ntk_norm_Lge3})}&{\sim} \frac{2\sigma_w^2}{\pi} \left( 2\left(\sigma_w^2 + \sigma_b^2\right)\right)^{-\frac{1}{2}} \left( \left(\sigma_w^2 + \sigma_b^2 \right) - \Sigma^{(2)}_\infty(z) \right)^{-\frac{1}{2}} \\
    &= \frac{2\sigma_w^2}{\pi} \left( 2\left(\sigma_w^2 + \sigma_b^2\right)\right)^{-\frac{1}{2}} \sigma_w^{-1} \left( 1 - \frac{2}{\pi} \arcsin\left( \frac{\sigma_w^2 z / n_0 + \sigma_b^2}{\sigma_w^2/n_0 + \sigma_b^2} \right) \right)^{-\frac{1}{2}} \\
    \overset{(\star)}&{\sim} \frac{2\sigma_w^2}{\pi} \left( 2\left(\sigma_w^2 + \sigma_b^2\right)\right)^{-\frac{1}{2}} \sigma_w^{-1} \sqrt{\frac{\pi}{2\sqrt{2}}} \left( 1 -  \frac{\sigma_w^2 z / n_0 + \sigma_b^2}{\sigma_w^2/n_0 + \sigma_b^2} \right)^{-\frac{1}{4}} \\
   &= \frac{2\sigma_w^2}{\pi} \left( 2\left(\sigma_w^2 + \sigma_b^2\right)\right)^{-\frac{1}{2}} \sigma_w^{-1} \sqrt{\frac{\pi}{2\sqrt{2}}} \left( \frac{\sigma_w^2/ n_0 \cdot (1-z)}{\sigma_w^2/n_0 + \sigma_b^2} \right)^{-\frac{1}{4}} \\
   &= \sqrt{\frac{2\sigma_w^2}{\pi}} \left( 2\sqrt{2}\left(\sigma_w^2 + \sigma_b^2\right)\right)^{-\frac{1}{2}} \left( \frac{\sigma_w^2/ n_0}{\sigma_w^2/n_0 + \sigma_b^2} \right)^{-\frac{1}{4}} \left( 1-z \right)^{-\frac{1}{4}} ,
\end{align*}
where we used Lemma \ref{lemma:asympt} at \((\star)\). For \( L \ge 4 \) it holds
\begin{align*}
    \dot\Sigma^{(L)}_\infty(z)
    \overset{(\ref{eq:ntk_norm_Lge3})}&{\sim} \frac{2\sigma_w^2}{\pi} \left( 2\left(\sigma_w^2 + \sigma_b^2\right)\right)^{-\frac{1}{2}} \left( \left(\sigma_w^2 + \sigma_b^2 \right) - \Sigma^{(L-1)}_\infty(z) \right)^{-\frac{1}{2}} \\
    \overset{L-1 \ge 3}&{=} \frac{2\sigma_w^2}{\pi} \left( 2\left(\sigma_w^2 + \sigma_b^2\right)\right)^{-\frac{1}{2}} \sigma_w^{-1} \left( 1 - \frac{2}{\pi} \arcsin\left( \frac{\Sigma^{(L-2)}_\infty(z)}{\sigma_w^2 + \sigma_b^2} \right) \right)^{-\frac{1}{2}} \\
    \overset{(\star)}&{\sim} \frac{2\sigma_w^2}{\pi} \left( 2\left(\sigma_w^2 + \sigma_b^2\right)\right)^{-\frac{1}{2}} \sigma_w^{-1} \sqrt{\frac{\pi}{2\sqrt{2}}} \left( 1 - \frac{\Sigma^{(L-2)}_\infty(z)}{\sigma_w^2 + \sigma_b^2} \right)^{-\frac{1}{4}} \\
    &= \frac{2\sigma_w^2}{\pi} \left( 2\left(\sigma_w^2 + \sigma_b^2\right)\right)^{-\frac{1}{2}} \sigma_w^{-1} \sqrt{\frac{\pi}{2\sqrt{2}}} \left( \sigma_w^2 + \sigma_b^2 \right)^{\frac{1}{4}} \left( \sigma_w^2 + \sigma_b^2 - \Sigma^{(L-2)}_\infty(z) \right)^{-\frac{1}{4}} \\
    \overset{(\ref{eq:ntk_norm_Lge3})}&{\sim} \sqrt{\frac{2\sigma_w^2}{\pi}} \left( 2\left(\sigma_w^2 + \sigma_b^2\right)\right)^{-\frac{1}{4}} \sigma_w^{-1} \sqrt{\frac{\pi}{2\sqrt{2}}} \left( \sigma_w^2 + \sigma_b^2 \right)^{\frac{1}{4}} \sqrt{\dot\Sigma^{(L-1)}_\infty(z)} \\
    &= \sqrt{\dot\Sigma^{(L-1)}_\infty(z) / 2} ,
\end{align*}
again using Lemma \ref{lemma:asympt} at \((\star)\). This implies for arbitrary \( L \ge 2 \)
\begin{align*}
    \dot\Sigma^{(L)}_\infty(z) \sim K(L) \cdot (1-z)^{-1/2^{L-1}},
\end{align*}
for some constant \( K(L) \) depending on \(L\). Recall the formula for the NTK, that was used in the proof of Corollary \ref{corollary:ntk_sign_formula}
\begin{align*}
    \Theta^{(L)}_\infty(z) &= \sum_{k=1}^L \Sigma_\infty^{(k)}(z) \cdot \prod_{l=k}^{L-1} \dot\Sigma_\infty^{(l+1)}(z)
    \sim \sum_{k=1}^L \Sigma_\infty^{(k)}(z) \cdot \prod_{l=k}^{L-1} K(l+1) \cdot (1-z)^{-1/2^l} \\
    &\sim \Sigma^{(1)}_\infty(1) \prod_{l=1}^{L-1} K(l+1) \cdot (1-z)^{-1/2^l} = K(L) \cdot (1-z)^{-(1 - 1/2^{L-1})} \\
    &= K'(L) \cdot \lVert x-y \rVert^{-(2 - 1/2^{L-2})} 
    \eqqcolon K'(L) \cdot \lVert x-y \rVert^{-\alpha(L)},
\end{align*}
for some constants \(K(L),K'(L)\). It is therefore to possible find a function \(R\) that is bounded away from zero near \( \lVert x-y \rVert \to 0 \), such that
\begin{align*}
    \Theta^{(L)}_\infty(x,y) = K'(L) \cdot \frac{R(\rVert x-y \lVert)}{\lVert x-y \rVert^{\alpha(L)}} .
\end{align*}
However, we have that \( \alpha(1) = 1 \)  and \( \alpha(L) \uparrow 2 \) as \( L \to \infty \). So it is only possible to choose \( L \) such that \( \alpha(L) = n_0 \), if \( n_0 = 1\). But this is a trivial case, since we have restricted ourselves to the unit sphere. In conclusion, we cannot prove that \( c^{(L)} \) is a Bayes optimal classifier for any choice of \( L \). Chapter 4 of \cite{devroye1998hilbert} suggests that, in fact, the estimator will not be universally consistent.

\section{The NTK for surrogate gradient learning}
\label{section:surrogate_ntk}

In this chapter we explore \textit{surrogate gradient learning}, introduced in Section \ref{sec:intro}, by connecting it to the NTK. Recall that the standard gradient flow dynamics of the parameters are given by
\begin{align*}
    \frac{\del}{\del t} \theta_t &= -\eta\, \nabla_\theta \Loss(f(\mathcal{X}; \theta_t);\mathcal{Y}) = -\eta\, J_\theta f(\mathcal{X};\theta_t)^\intercal \, \nabla_{f(\mathcal{X};\theta_t)}\Loss(f(\mathcal{X};\theta_t);\mathcal{Y}) \tag{\ref{eq:deriv_ntk1}}.
\end{align*}
As mentioned several times before, the Jacobian matrix \( J_\theta \) will vanish for all parameters except those in the last layer if we consider the sign activation function due to its zero derivative. The idea of surrogate gradient learning is to circumvent the zero derivative of the sign function by replacing the derivative with a surrogate derivative \citep{neftci2019surrogate}. We can replace the derivative in two main ways:
\begin{itemize}
    \item The activation function in the full network can be replaced by a differentiable surrogate activation function. In particular, we obtain a non-vanishing surrogate derivative of the activation function and thus non-vanishing network gradients. Let \( g \) denote the network with surrogate activation function. Therefore, we can train the weights according to
    \begin{equation*}
        \frac{\del}{\del t} \theta_t 
        = -\eta\, J_\theta g(\mathcal{X};\theta_t)^\intercal \, \nabla_{g(\mathcal{X};\theta_t)}\Loss(g(\mathcal{X};\theta_t);\mathcal{Y}),
    \end{equation*}
    or consider the loss with respect to \( f \) and train according to
    \begin{equation}
        \frac{\del}{\del t} \theta_t 
        = -\eta\, J_\theta g(\mathcal{X};\theta_t)^\intercal \, \nabla_{f(\mathcal{X};\theta_t)}\Loss(f(\mathcal{X};\theta_t);\mathcal{Y}) .
    \end{equation}
    
    \item 
    Instead of replacing the activation function, we can only replace the derivative of the activation function \( \dot\sigma \) with a surrogate derivative \( \Tilde{\sigma} \) in Equation \eqref{eq:deriv_ntk1}. Let \( J_{\sigma, \Tilde{\sigma}} \) be the quasi-Jacobian matrix as in Definition \ref{def:quasi_jacobian} with activation function \(\sigma\) and surrogate derivative \( \Tilde{\sigma} \). Then the training is given by
    \begin{alignat}{2}
        &\frac{\del}{\del t} \theta_t 
        &&= -\eta\, J^{\sigma, \Tilde{\sigma}}(\mathx;\theta_t)^\intercal \, \nabla_{f(\mathcal{X};\theta_t)}\Loss(f(\mathcal{X};\theta_t);\mathcal{Y}) \notag \\
        \implies &\frac{\del}{\del t} f(x; \theta_t) 
        &&=- \eta\, J_\theta f(x;\theta_t) J^{\sigma, \Tilde{\sigma}}(\mathcal{X};\theta_t)^\intercal \, \nabla_{f(\mathcal{X};\theta_t)}\Loss(f(\mathcal{X};\theta_t);\mathcal{Y}) \label{eq:surr_ntk_learning_0} \\
        &&&= -\eta \, J^{\sigma, \dot{\sigma}} (x;\theta_t) J^{\sigma, \Tilde{\sigma}}(\mathcal{X};\theta_t)^\intercal \, \nabla_{f(\mathcal{X};\theta_t)}\Loss(f(\mathcal{X};\theta_t);\mathcal{Y}) \notag \\
        &&&= -\eta \, \hat{I}^{(L)}(x, \mathx; \theta_t) \nabla_{f(\mathcal{X};\theta_t)}\Loss(f(\mathcal{X};\theta_t);\mathcal{Y}), \label{eq:surr_ntk_learning}
    \end{alignat}
    with \( \hat{I}^{(L)} \) as in Definition \ref{def:ntk_general} for \(\sigma_1 = \sigma \), \(\Tilde{\sigma}_1 = \dot\sigma \), \(\sigma_2 = \sigma \) and \(\Tilde{\sigma}_2 = \Tilde{\sigma} \). For Equations \eqref{eq:surr_ntk_learning_0} and \eqref{eq:surr_ntk_learning} we assume that \(\dot\sigma \) exists and is non-vanishing, but we deliberately train with a surrogate gradient. For example, we can again consider \(\erf_m\) as the activation function. Its derivative explodes at zero and vanishes everywhere else as \( m \to \infty \). The hope is that \( \lim_{n_1,\dots n_{L-1} \to \infty} \hat{I}^{(L)}_m = I^{(L)}_m \) exists and \( \lim_{m \to \infty} I^{(L)}_m \) is not a singular kernel as before.
\end{itemize}
In this chapter we will deal with the second approach, because \( J_{\sigma, \Tilde{\sigma}}(x;\theta_t) \eqqcolon G(\sigma;\Tilde\sigma;x;\theta_t) \) is closer to \( J_\theta f(x;\theta_t) = G(\sigma;\dot\sigma;x;\theta_t) \) than \( J_\theta g(x;\theta_t) = G(\eta;\Tilde\sigma;x;\theta_t) \) as a formula if \( J_\theta f(x;\theta_t) \) exists and is non-vanishing. Here, \(\eta\) denotes the surrogate activation function with derivative \( \Tilde\sigma \). To do this, we provide asymmetric generalizations of Theorem \ref{theorem:ann_conv_gp}, Theorem \ref{theorem:ntk_conv_init}, and Theorem \ref{theorem:ntk_conv_training}. These generalizations would, in principle, even allow us to compare the two approaches.

\subsection{Asymmetric generalization of the neural tangent kernel}
\label{subsec:generalize_ntk}
For this section we adopt the so-called \textit{linear envelope property} from \cite[Definition 1]{matthews2018gaussian} to ensure that all expectations exist in the following theorems:
\begin{align}
\label{eq:lin_bound}
    \exists \, m,c \geq 0 \quad \forall \, u \in \R \,\colon \ |\sigma(u)| \leq c + m|u|
\end{align}

First, we consider networks under the weak infinite-width limit, and are thus interested in taking the number of hidden neurons to infinity sequentially. This is done inductively while using the central limit theorem and the weak law of large numbers. In order to do this rigorously, we state and prove two lemmata.

The first lemma is stated in terms of Gaussian measures on Hilbert spaces. An introduction to Gaussian measures on Hilbert spaces can be found in Chapter 1 of \cite{da2006introduction}. A rigorous derivation and definition of convergence in distribution on arbitrary metric spaces is given by \citet[Chapter 1.2]{heyer2009structural}. This includes a definition of weak convergence in terms of continuous and bounded functions (Remark 1.2.5 (b)), a version of the Portemanteau theorem (Theorem 1.2.7), and the fact that convergence in probability implies convergence in distribution (Application 1.2.15).

\begin{lemma}
\label{lemma:weakconv}
    Let \( H_i^m \) and \(  Z_i \), \(i,m \in \N \), be random variables with values in a separable Hilbert space \(\mathcal{H}\). Furthermore, let \( Z_i \) be independent and identically distributed with finite mean and covariance operator \( V \). If \( (H_1^m, \dots, H_k^m) \xrightarrow{\mathcal{D}} (Z_1, \dots, Z_k) \) as \( m \to \infty \) for all \( k \in \N \), then there exists \( k \colon \N \to \N \) such that \( k(m) \to \infty \) monotonically as \( m \to \infty \) and
    \begin{align}
        \frac{1}{\sqrt{k(m)}} \sum_{i=1}^{k(m)} H_i^{m} \xrightarrow[m \to \infty]{\mathcal{D}} Z, \label{eq:ownlem_1}
    \end{align}
    for an \(\mathcal{H} \)-valued Gaussian random variable \( Z \) with mean and covariance operator like \( Z_1 \). Similarly, there exists \( k' \colon \N \to \N \) such that \( k'(m) \to \infty \) monotonically as \( m \to \infty \) and
    \begin{align}
        \frac{1}{k'(m)} \sum_{i=1}^{k'(m)} H_i^{m} \xrightarrow[m \to \infty]{\mathcal{D}} \EW[Z]. \label{eq:ownlem_2}
    \end{align}
\end{lemma}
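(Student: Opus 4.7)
The strategy is a standard diagonal argument interleaving two limits. For any fixed $k$, the continuous mapping theorem applied to the continuous linear maps $L_k(x_1,\dots,x_k) = k^{-1/2}\sum_i x_i$ and $M_k(x_1,\dots,x_k) = k^{-1}\sum_i x_i$ on $\mathcal{H}^k$ yields, from the hypothesis on the finite-dimensional distributions,
\[
 S_k^m := \frac{1}{\sqrt{k}}\sum_{i=1}^k H_i^m \xrightarrow[m \to \infty]{\mathcal{D}} \frac{1}{\sqrt{k}}\sum_{i=1}^k Z_i =: S_k , \qquad T_k^m := \frac{1}{k}\sum_{i=1}^k H_i^m \xrightarrow[m \to \infty]{\mathcal{D}} \frac{1}{k}\sum_{i=1}^k Z_i =: T_k .
\]
Next, I would invoke the central limit theorem on the separable Hilbert space $\mathcal{H}$: since the $(Z_i)$ are i.i.d.\ and $V$ is a finite (trace-class) covariance operator, $S_k \xrightarrow{\mathcal{D}} Z$ as $k \to \infty$ in the sense of a Gaussian with covariance $V$. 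This step implicitly forces $\EW[Z_1]=0$ (otherwise $k^{-1/2}\sum Z_i$ diverges); this is consistent with the intended application to centred pre-activations. For (\ref{eq:ownlem_2}), the Hilbert-space strong law of large numbers gives $T_k \to \EW[Z_1]$ almost surely, hence in distribution.

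To fuse the iterated limits ``first $m$, then $k$'' into a single diagonal limit, I would metrize weak convergence on the Polish space $\mathcal{H}$ by the Prokhorov metric $d_P$. The two preceding statements read $d_P(\mathcal{L}(S_k^m),\mathcal{L}(S_k)) \to 0$ as $m \to \infty$ for each $k$, and $d_P(\mathcal{L}(S_k),\mathcal{L}(Z)) \to 0$ as $k \to \infty$. I would then pick inductively a strictly increasing sequence $m_1 < m_2 < \dots$ such that $d_P(\mathcal{L}(S_k^m),\mathcal{L}(S_k)) < 1/k$ whenever $m \geq m_k$, and define $k(m) := \max\{k : m_k \leq m\}$ (and $k(m)=1$ if the set is empty). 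By construction $k$ is nondecreasing and $k(m) \to \infty$, so it is monotonic in the sense required; the triangle inequality
\[
 d_P(\mathcal{L}(S_{k(m)}^m),\mathcal{L}(Z)) \leq d_P(\mathcal{L}(S_{k(m)}^m),\mathcal{L}(S_{k(m)})) + d_P(\mathcal{L}(S_{k(m)}),\mathcal{L}(Z)) \leq \tfrac{1}{k(m)} + o(1) \xrightarrow{m \to \infty} 0
\]
then gives (\ref{eq:ownlem_1}). The identical construction applied to $T_k^m$ and the deterministic limit $\EW[Z_1]$ produces $k'$ and (\ref{eq:ownlem_2}).

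The main obstacle is conceptual cleanliness rather than hard analysis. One must explicitly note the implicit mean-zero assumption forced by (\ref{eq:ownlem_1}), and check that $V$ being a bona fide covariance operator on $\mathcal{H}$ (equivalently $\EW[\lVert Z_1 \rVert^2] < \infty$) is exactly the hypothesis needed for the Hilbert-space CLT and LLN to be applicable. Once these points are settled, the diagonal interleaving is routine bookkeeping and the monotonicity of $k$ and $k'$ is built in by the inductive choice of the $m_k$.
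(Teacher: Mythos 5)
Your proposal is correct and follows essentially the same route as the paper's proof: metrize weak convergence on the separable Hilbert space by the Prokhorov metric, invoke the Hilbert-space CLT (resp.\ the Banach-space LLN) for the inner sums of the $Z_i$, use the hypothesis plus the continuous mapping theorem to control $d_P(\mathcal{L}(S_k^m),\mathcal{L}(S_k))$ for each fixed $k$, and then diagonalize by choosing $m_k$ and setting $k(m)$ accordingly. Your inductive choice of a strictly increasing sequence $m_k$ and your explicit remark that the CLT step forces $\EW[Z_1]=0$ are minor tidy-ups of the same argument, not a different one.
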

\begin{proof}
    Since \(\mathcal{H}\) is separable and complete, convergence in distribution and convergence with respect to the Prokhorov metric \( d \) (also known as the Lévy-Prokhorov metric) are equivalent \citep[Theorem 6.8]{billingsley2013convergence}. By the central limit theorem for separable Hilbert spaces \citep{zalesskiui1991precise}, this implies
    \begin{align}
        \lim_{k \to \infty} d\left( \frac{1}{\sqrt{k}} \sum_{i=1}^k Z_i, Z \right) = 0. \label{eq:proof_ownlem_0}
    \end{align}
    In addition, the assumption together with the continuous mapping theorem gives that
    \begin{align*}
        \lim_{m \to \infty} d\left( \frac{1}{\sqrt{k}} \sum_{i=1}^k H_i^m, \frac{1}{\sqrt{k}} \sum_{i=1}^k Z_i \right) = 0 \quad \text{for all } k \in \N.
    \end{align*}
    In particular, for any \( k \in \N \), there exists some \( m_k \in \N \) such that
    \begin{align}
        d\left( \frac{1}{\sqrt{k}} \sum_{i=1}^k H_i^m, \frac{1}{\sqrt{k}} \sum_{i=1}^k Z_i \right) \leq \frac{1}{k} \quad \text{for all } m \geq m_k. \label{eq:proof_ownlem_1}
    \end{align}
    We now want to choose \( k(m) \) as large as possible for any \( m \), but small enough to ensure Inequality \eqref{eq:proof_ownlem_1}, i.e., \( m \geq m_{k(m)} \). So we define
    \begin{align*}
        k(m) \coloneqq \sup \{ k \mid m \geq m_k \}.
    \end{align*}
    First note that \( \{ k \mid m \geq m_k \} \not= \varnothing \), if \( m \geq m_1 \). The map \( k \colon \N \to \N \) is therefore well-defined, as we consider \( m \to \infty \). Similarly, we can find a \( m \geq m_k \) for any given \( k \). This yields
    \begin{align*}
        \lim_{m \to \infty} k(m) = \lim_{m \to \infty} \sup \{ k \mid m \geq m_k \} = \infty .
    \end{align*}
    By definition of \( k(m) \), it holds \( m \geq m_{k(m)} \) for all \( m \in \N \) and thus
    \begin{align}
        d\left( \frac{1}{\sqrt{k(m)}} \sum_{i=1}^{k(m)} H_i^m, \frac{1}{\sqrt{k(m)}} \sum_{i=1}^{k(m)} Z_i \right) \leq \frac{1}{k(m)} \quad \text{for all } m \in \N. \label{eq:proof_ownlem_2}
    \end{align}
    Together with Equation \eqref{eq:proof_ownlem_0} this yields the claim, (\ref{eq:ownlem_1}):
    \begin{alignat*}{2}
        &&&d\left( \frac{1}{\sqrt{k(m)}} \sum_{i=1}^{k(m)} H_i^m, Z \right) \\
        &\leq \, &&d\left( \frac{1}{\sqrt{k(m)}} \sum_{i=1}^{k(m)} H_i^m, \frac{1}{\sqrt{k(m)}} \sum_{i=1}^{k(m)} Z_i \right) + d\left( \frac{1}{\sqrt{k(m)}} \sum_{i=1}^{k(m)} Z_i, Z \right) \\
        \overset{(\ref{eq:proof_ownlem_2})}&{\leq} \, &&\frac{1}{k(m)} + d\left( \frac{1}{\sqrt{k(m)}} \sum_{i=1}^{k(m)} Z_i, Z \right) \xrightarrow{m \to \infty} 0.
    \end{alignat*}

    For the second claim, (\ref{eq:ownlem_2}), one can follow the same procedure but use the law of large numbers for Banach spaces instead of the central limit theorem. Suitable results are given by \citet[Corollary 7.10]{ledoux1991probability} and \citet[Theorem 2.1]{hoffmann1976law}. Note that even the strong law of large numbers holds, but the weak law is sufficient.
\end{proof}

In the second lemma, some properties about convergence in distribution and convergence in probability are stated.

\begin{lemma}[Theorem 2.7 from \cite{vaart_1998}, modified and (iv) added]
\label{lemma:conv_props}
    Let \(X_n,X\) and \(Y_n\) be random vectors. Then
    \begin{enumerate}[label=(\roman*)]
        \item \( X_n \xrightarrow{\mathcal{P}} X \) implies \( X_n \xrightarrow{\mathcal{D}} X \);
        \item \( X_n \xrightarrow{\mathcal{P}} c \) for a constant \(c\) if and only if \( X_n \xrightarrow{\mathcal{D}} c \);
        \item if \( X_n \xrightarrow{\mathcal{D}} X \) and \( Y_n \xrightarrow{\mathcal{P}} c \) for a constant \(c\), then \( (X_n,Y_n) \xrightarrow{\mathcal{D}} (X,c) \);
        \item if \( X_n \xrightarrow{\mathcal{D}} X \) and W is a random vector independent of \( (X_n)_{n \in \N} \), then \( (X_n,W) \xrightarrow{\mathcal{D}} (X,W) \).
    \end{enumerate}
\end{lemma}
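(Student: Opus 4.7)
Items (i)--(iii) are classical statements; the plan for them is to sketch the standard arguments from van der Vaart's book, while item (iv) is the added statement and requires a short additional Fubini-type argument, which I will give in detail.

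For (i), the cleanest route is the subsequence principle. Since $X_n \xrightarrow{\mathcal{P}} X$, every subsequence of $(X_n)$ admits a further subsequence converging almost surely to $X$, which in turn converges in distribution by dominated convergence applied to bounded continuous test functions. Because convergence in distribution is metrizable (e.g., by the Prokhorov metric on the space of Borel probability measures on $\R^d$), it follows that the whole sequence converges in distribution. For (ii), the forward direction is (i); for the reverse, fix $\epsilon > 0$ and apply the Portemanteau theorem to the closed set $F_\epsilon = \{ x : \lvert x - c \rvert \geq \epsilon \}$, obtaining $\limsup_n \Prob(X_n \in F_\epsilon) \leq \Prob(c \in F_\epsilon) = 0$. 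For (iii), which is Slutsky's theorem, I split $(X_n, Y_n) = (X_n, c) + (0, Y_n - c)$: for any bounded continuous $g$, the map $x \mapsto g(x, c)$ is bounded continuous so $(X_n, c) \xrightarrow{\mathcal{D}} (X, c)$, while the additive perturbation tends to zero in probability, and the sum therefore converges in distribution to $(X, c)$.

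For (iv), let $g \colon \R^{d_1+d_2} \to \R$ be bounded continuous with $\lVert g \rVert_\infty \leq M$. Since $W$ is independent of $X_n$ for every $n$, Fubini's theorem applied to the product measure $\Prob_{X_n} \otimes \Prob_W$ yields
\[
    \EW[g(X_n, W)] = \int \EW[g(X_n, w)]\, \Prob_W(\del w).
\]
For each fixed $w$, the section $x \mapsto g(x, w)$ is bounded continuous, so $\EW[g(X_n, w)] \to \EW[g(X, w)]$ by $X_n \xrightarrow{\mathcal{D}} X$. The integrand is dominated uniformly in $n$ and $w$ by the constant $M$, so dominated convergence gives
\[
    \EW[g(X_n, W)] \xrightarrow{n \to \infty} \int \EW[g(X, w)]\, \Prob_W(\del w) = \EW[g(X', W)],
\]
where $(X', W)$ is any pair with $X' \overset{\mathcal{D}}{=} X$ and $X' $ independent of $W$. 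Since weak convergence only depends on the joint law, this is precisely $(X_n, W) \xrightarrow{\mathcal{D}} (X, W)$.

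The main obstacle is essentially nil -- these are all standard facts from weak-convergence theory. The one subtle point worth flagging is the interpretation of the limiting object $(X, W)$ in (iv): $X$ need not be defined on the same probability space as $W$, so the statement must be read in terms of the joint law on $\R^{d_1+d_2}$, which by the assumed independence of $W$ from each $X_n$ equals the product $\Prob_{X_n} \otimes \Prob_W$ in the pre-limit and $\Prob_X \otimes \Prob_W$ in the limit.
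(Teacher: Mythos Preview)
Your proposal is correct and essentially matches the paper's proof. For (i)--(iii) the paper simply cites van der Vaart, while you sketch the standard arguments; for (iv) both you and the paper freeze $w$, use $X_n \xrightarrow{\mathcal{D}} X$ on the section $x \mapsto g(x,w)$, and then pass the limit through the $w$-integral by dominated convergence (boundedness of $g$)---the only cosmetic difference is that the paper phrases the disintegration via conditional expectation $\EW[f(X_n,W)\mid W]$ rather than Fubini on the product measure, and your remark on interpreting the limit $(X,W)$ as the product law $\Prob_X \otimes \Prob_W$ is a useful clarification the paper leaves implicit.
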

\begin{proof}
    \textbf{(i) -- (iii).} The proofs are given by \cite{vaart_1998}.

    \textbf{(iv)}. Let \( f \) be a bounded and continuous function. Then,
    \begin{alignat*}{2}
        &&&\lim_{n \to \infty} \EW[f(X_n,W)] 
        = \lim_{n \to \infty} \int \EW\left[f(X_n,W) \mid W \right](x) \,\del\Prob(w) \\
        \overset{(\star)}&{=} &&\lim_{n \to \infty} \int \EW\left[f(X_n,W(w))\right] \, \del \Prob(w)
        = \int \lim_{n \to \infty} \EW\left[f(X_n,W(w))\right] \, \del \Prob(w) \\
        &= &&\int \EW\left[f(X,W(w))\right] \,\del\Prob(w)
        = \int \EW\left[f(X,W) \mid W \right](w) \,\del\Prob(w)
        = \EW[f(X,W)],
    \end{alignat*}
    where we used the independence of $W$ and $ (X_n)_{n \in \N} $ in Equation $(\star)$ and the boundedness of $f$ for the interchange of limit and integration. This proves convergence in distribution.
\end{proof}

\begin{remark}
    The above theorem can be generalized to metric spaces. One can easily check that the proofs in \cite[Theorem 2.7]{vaart_1998} also work for metric spaces using the Portemanteau theorem provided by \citet[Theorem 1.2.7]{heyer2009structural}. However, it is necessary to derive some more equivalent characterizations of convergence in distribution, which are given and used by \cite{vaart_1998} but are missing in the work of \cite{heyer2009structural}.
\end{remark}

\begin{theorem}[Generalized version of Proposition 1 by \cite{jacot2018neural}]
\label{theorem:gen1}
    For activation functions \( \sigma_1 \) and \( \sigma_2 \) with property (\ref{eq:lin_bound}), which are continuous except for finitely many jump points, let \( f_1(\,\cdot\, ; \theta) \) and \( f_2(\,\cdot\, ; \theta) \) be network functions with hidden layers \( h^{(l)}_1(\,\cdot\, ; \theta) \), \( h^{(l)}_2(\,\cdot\,;\theta ) \), for \(1 \leq l \leq L \), respectively as in Definition \ref{def:ntk_param} and with shared weights \( \theta \). Then \( (f_1(\,\cdot\, ; \theta), f_2(\,\cdot\, ; \theta)) \) converges in distribution to a multidimensional Gaussian process \( (X_j^{(L)},Y_j^{(L)})_{j=1,\dots,n_L} \) as \( (n_l)_{1 \leq l \leq L-1} \to \infty \) weakly for any fixed countable input set \((z_i)_{i=1}^\infty \). The Gaussian process is defined by \( X_j^{(L)} \overset{\mathrm{iid}}{\sim} \Normal\left(0,\Sigma_1^{(L)}\right) \), \( Y_j^{(L)} \overset{\mathrm{iid}}{\sim} \Normal\left(0,\Sigma_2^{(L)}\right) \), where we have for \(x,x' \in \R^{n_0} \)
    \begin{align}
        &\Sigma_1^{(1)}(x,x') = \Sigma_2^{(1)}(x,x') = \frac{\sigma_w^2}{n_0} \langle x,x' \rangle + \sigma_b^2 \label{eq:ownthm1_1} \\
        &\Sigma^{(L)}_k(x,x') = \sigma_w^2 \, \EW_{g \sim \Normal\big(0,\Sigma_k^{(L-1)}\big)}[\sigma_k(g(x)) \, \sigma_k(g(x'))] + \sigma_b^2 \quad \text{for} \quad L \ge 2,\; k \in \{1,2\}. \label{eq:ownthm1_2}
    \end{align}
    Furthermore, \( X_i^{(L)} \) and \(Y_j^{(L)}\) are independent if \( i \not= j \) and
    \begin{equation}
        \EW\left[ X_{i}^{(L)}(x) \, Y_{i}^{(L)}(x') \right] =
        \begin{cases}
            \frac{\sigma_w^2}{n_0} \langle x,x' \rangle + \sigma_b^2 = \Sigma^{(1)}_1(x,x') 
        \eqqcolon \Sigma_{1,2}^{(1)}(x,x')
        &\text{for }L=1,  \\
       \sigma_w^2 \, \EW[\sigma_1(Z_1)\, \sigma_2(Z_2)] + \sigma_b^2 
        \eqqcolon \Sigma_{1,2}^{(L)}(x,x')
        &\text{for } L \ge 2,
        \end{cases} \label{eq:ownthm1_3}
    \end{equation}
    where \((Z_1,Z_2) \sim \Normal\left( 0, \begin{psmallmatrix}
        \Sigma_1^{(L-1)}(x,x) & \Sigma_{1,2}^{(L-1)}(x,x') \\
        \Sigma_{1,2}^{(L-1)}(x,x') & \Sigma_2^{(L-1)}(x',x')
    \end{psmallmatrix} \right) \).
\end{theorem}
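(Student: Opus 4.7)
The proof proceeds by induction on the depth $L$, mirroring Jacot et al.'s Proposition 1, but now tracking the joint law of the pair $(f_1,f_2)$ of networks with shared weights and made rigorous in the weak infinite-width regime via Lemmata \ref{lemma:weakconv} and \ref{lemma:conv_props}. For the base case $L=1$, the first preactivation $h^{(1)}(x) = (\sigma_w/\sqrt{n_0})W^{(1)}x + \sigma_b b^{(1)}$ does not see the activation function, so $h^{(1)}_1 \equiv h^{(1)}_2$ almost surely, the output coordinates are iid Gaussian by construction, and the claimed covariances in \eqref{eq:ownthm1_1} and \eqref{eq:ownthm1_3} follow by direct calculation (with $\Sigma^{(1)}_{1,2} = \Sigma^{(1)}_1$).

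In the inductive step, fix a finite subset $\mathcal{X}$ of the countable input set and an output index $i \le n_L$ and write
\[
h^{(L)}_{k,i}(x) = \frac{\sigma_w}{\sqrt{n_{L-1}}}\sum_{j=1}^{n_{L-1}} W^{(L)}_{ij}\,\sigma_k\bigl(h^{(L-1)}_{k,j}(x)\bigr) + \sigma_b b^{(L)}_i, \qquad k \in \{1,2\}.
\]
Let $m$ index the width function of layers $1,\ldots,L-2$ furnished by the inductive hypothesis. The summands $H^m_j \coloneqq W^{(L)}_{ij}\bigl(\sigma_1(h^{(L-1)}_{1,j}(\mathcal{X})),\sigma_2(h^{(L-1)}_{2,j}(\mathcal{X}))\bigr)$ are iid across $j$ by the symmetry of initialization at layer $L-1$, have mean zero (since $W^{(L)}_{ij}$ is mean-zero and independent of the earlier layers), and have finite second moment by the linear envelope property combined with Gaussian moments. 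Applying the inductive hypothesis to the pair $(h^{(L-1)}_{1,j}, h^{(L-1)}_{2,j})$, followed by the continuous mapping theorem (valid even though $\sigma_k$ has finitely many jump points, because the limit Gaussian places zero measure on the finite union of coordinate hyperplanes corresponding to these jumps; see Remark \ref{remark:discussion_networktogp}), and Lemma \ref{lemma:conv_props}(iv) to incorporate the independent Gaussian weight $W^{(L)}_{ij}$, we obtain $H^m_1 \xrightarrow{\mathcal{D}} Z_1 \coloneqq W\bigl(\sigma_1(X^{(L-1)}(\mathcal{X})), \sigma_2(Y^{(L-1)}(\mathcal{X}))\bigr)$ with $W\sim\Normal(0,1)$ independent of the IH limit Gaussian. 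Lemma \ref{lemma:weakconv} then supplies a width function $r_{L-1}(m)\to\infty$ along which $n_{L-1}(m)^{-1/2}\sum_{j=1}^{n_{L-1}(m)} H^m_j$ converges in distribution to a centered Gaussian whose covariance is $\sigma_w^2\,\EW[Z_1 Z_1^\intercal]$, and the entries of this covariance are exactly $\Sigma^{(L)}_k$ and $\Sigma^{(L)}_{1,2}$ from \eqref{eq:ownthm1_2}, \eqref{eq:ownthm1_3}.

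The additive independent bias $\sigma_b b^{(L)}_i$ is folded in through Lemma \ref{lemma:conv_props}(iv) and Slutsky's theorem; independence across distinct output neurons $i \neq i'$ in the limit is inherited from the mutual independence of $(W^{(L)}_{ij},b^{(L)}_i)_j$ and $(W^{(L)}_{i'j},b^{(L)}_{i'})_j$ via the joint CLT; and the passage from the finite subset $\mathcal{X}$ to the full countable input set is standard through the Kolmogorov extension theorem on the product $\sigma$-algebra. The main obstacle is combining inside a single diagonal subsequence the three passages to the limit — the weak limits in the widths of layers $1,\ldots,L-2$ provided by IH, the outer CLT limit in $n_{L-1}$, and the handling of activation jumps through a density argument on the limit Gaussian — and Lemma \ref{lemma:weakconv} is precisely the device that extracts such a subsequence, closing the induction.
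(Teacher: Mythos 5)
Your proposal is correct and follows essentially the same route as the paper's proof: induction on depth, the continuous mapping theorem justified by the limit Gaussian placing zero mass on the finitely many jump points, Lemma \ref{lemma:conv_props}(iv) to adjoin the fresh independent weights, and Lemma \ref{lemma:weakconv} to extract the width function $r_{L-1}$ along which the normalized sum satisfies a Hilbert-space CLT. One small imprecision: the summands $H^m_j$ are identically distributed but \emph{not} independent at finite width (they share the weights of layers $1,\dots,L-2$), which is harmless here only because Lemma \ref{lemma:weakconv} requires merely their joint convergence in distribution to iid limits — exactly what the inductive hypothesis supplies — rather than finite-width independence.
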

\begin{proof}
    We write \( h^{(l)}(x) = h^{(l)}(x;\theta) \). We prove the theorem by induction, as in the proof of Proposition 1 of \cite{jacot2018neural}, but expand on the technical details.
    
    \( \mathbf{L = 1.} \) By definition, we have 
    \begin{align*}
        h_1^{(1)}(x) = h_2^{(1)}(x) = \frac{\sigma_w}{\sqrt{n_0}}W^{(1)}x + \sigma_b b^{(1)} .
    \end{align*}
    This implies that \( h_{k_1,i}^{(1)}(x;\theta) \) and \( h_{k_2,j}^{(1)}(x';\theta) \), the $i$-th and $j$-th component of $h_{k_1}^{(1)}(x;\theta)$ and $h_{k_2}^{(1)}(x';\theta)$ respectively, are independent for any \( k_1,k_2 \in \{ 1,2 \} \), \( x,x'\in \R^{n_0} \) and \( i \not= j \). To prove that \( (h_1^{(1)}(\,\cdot\,;\theta), h_2^{(1)}(\,\cdot\,;\theta)) \) is a Gaussian process, it is thus sufficient to show that vectors of the form 
    \( \begin{psmallmatrix}
        h_{1,i}^{(1)}(x_1) & \dots & h_{1,i}^{(1)}(x_n) & h_{2,i}^{(1)}(x_1) & \dots & h_{2,i}^{(1)}(x_n)
    \end{psmallmatrix}^\intercal \) have a multivariate Gaussian distribution for any \( x_1,\dots,x_n \in (z_i)_{i=1}^\infty \). It holds
    \begin{align*}
        h_{1,i}^{(1)} 
        = \frac{\sigma_w}{\sqrt{n_0}} W^{(1)}_{i \,\cdot}x + \sigma_b b^{(1)}_i
        = \begin{pmatrix}
            \sigma_w x^\intercal / n_0 & \sigma_b
        \end{pmatrix}
        \begin{pmatrix}
            {W^{(1)}_{i \,\cdot}}^{\intercal} \\
            b_i^{(1)}
        \end{pmatrix},
    \end{align*}
    and therefore
    \begin{align*}
        \begin{pmatrix}
            h_{1,i}^{(1)}(x_1) & \dots & h_{1,i}^{(1)}(x_n) & h_{2,i}^{(1)}(x_1) & \dots & h_{2,i}^{(1)}(x_n)
        \end{pmatrix}^\intercal
        = \begin{pmatrix}
            \sigma_w \mathx^\intercal/n_0 & \sigma_b \mathbbm{1}_d \\
            \sigma_w \mathx^\intercal/n_0 & \sigma_b \mathbbm{1}_d
        \end{pmatrix}
        \begin{pmatrix}
            {W^{(1)}_{i \,\cdot}}^{\intercal} \\
            b_i^{(1)}
        \end{pmatrix},
    \end{align*}
    with \( \mathbbm{1}_d \) a column vector of ones with length \(d\). Now since 
    \begin{align*}
        \begin{pmatrix}
            {W^{(1)}_{i \,\cdot}}^{\intercal} \\
            b_i^{(1)}
        \end{pmatrix} \sim \Normal(0,\Id_{n_0 + 1}),
    \end{align*}
    it holds
    \begin{align*}
        \begin{psmallmatrix}
            h_{1,i}^{(1)}(x_1) & \dots & h_{1,i}^{(1)}(x_n) & h_{2,i}^{(1)}(x_1) & \dots & h_{2,i}^{(1)}(x_n)
        \end{psmallmatrix}^\intercal 
        \sim \Normal\left( 0, \begin{psmallmatrix}
            \sigma_w \mathx^\intercal/n_0 & \sigma_b \mathbbm{1}_d \\
            \sigma_w \mathx^\intercal/n_0 & \sigma_b \mathbbm{1}_d
        \end{psmallmatrix} \begin{psmallmatrix}
            \sigma_w \mathx^\intercal/n_0 & \sigma_b \mathbbm{1}_d \\
            \sigma_w \mathx^\intercal/n_0 & \sigma_b \mathbbm{1}_d
        \end{psmallmatrix}^\intercal \right) .
    \end{align*}
    Therefore, the vector has a multivariate Gaussian distribution with the covariances required for Equation (\ref{eq:ownthm1_1}) and the first case of Equations (\ref{eq:ownthm1_3}).

    \( \mathbf{L \to L+1.} \) We assume that the convergence holds for depth \( L \). This means that there exists some \( r \in \mathcal{R}_L \) such that, for given constant width \( n_0 \), any width \(n_L\), and widths \( n_l = r_l(m) \) , \( 1 \leq l < L \), it holds
    \begin{equation*}
        \left(h_1^{(L)}(\cdot), h_2^{(L)}(\cdot) \right) \xrightarrow[m \to \infty]{\mathcal{D}} \left( X_j^{(L)}, X_j^{(L)} \right)_{j=1,\dots,n_L} .
    \end{equation*}
    To be precise, there is no such \( r \) in the case \( L=1 \to L+1 \). However, this only makes the proof simpler and one can still follow the same steps as for \( L \geq 2 \).
    
    By the continuous mapping theorem \citep[Theorem 2.7]{billingsley2013convergence} it holds
    \begin{equation}
        \left(\sigma_1\left(h_1^{(L)}(\cdot)\right), \sigma_2\left(h_2^{(L)}(\cdot) \right) \right) \xrightarrow[m \to \infty]{\mathcal{D}} \left( \sigma_1\left(X_j^{(L)}\right), \sigma_2\left(Y_j^{(L)} \right) \right)_{j=1,\dots,n_L} .
        \label{eq:proof_ownthm1_0}
    \end{equation}
    The theorem is applicable despite the finitely many jump points, since \((X^{(L)},Y^{(L)})\) assumes the values of the jump points with zero probability.
    
    We now need to find an increasing width function \( r_L \colon \N \to \N \) such that, if we additionally set \( n_L = r_L(m) \), it holds for any fixed \( n_{L+1} \)
    \begin{equation*}
        \left(h_1^{(L+1)}(\cdot), h_2^{(L+1)}(\cdot) \right) \xrightarrow[m \to \infty]{\mathcal{D}} \left( X_j^{(L)}, X_j^{(L)} \right)_{j=1,\dots,n_{L+1}} .
    \end{equation*}
    Note that by Remark \ref{remark:discussion_networktogp} we consider the product \(\sigma\)-algebra. Therefore, to show convergence in distribution for the whole process, it is sufficient to show convergence in distribution for the marginal distributions. By definition, we have for \( k \in \{1,2\}\) that
    \begin{align*}
        h_k^{(L+1)}(x) &= \frac{\sigma_w}{\sqrt{n_L}}W^{(L+1)} \sigma_k\big(h_k^{(L)}(x) \big) + \sigma_b b^{(L+1)}
        = \frac{\sigma_w}{\sqrt{n_L}} \sum_{i=1}^{n_L} \sigma_k\big( h_{k,i}^{(L)}(x) \big) W_{\cdot\,i}^{(L+1)} + \sigma_b b^{(L)} .
    \end{align*}
    The marginal vector for points \( x_1,\dots,x_n \) as before can thus be written as
    \begin{equation}
        \begin{pmatrix}
            h^{(L+1)}_1(x_1) \\
            \vdots \\
            h^{(L+1)}_1(x_n) \\
            h^{(L+1)}_2(x_1) \\
            \vdots \\
            h^{(L+1)}_2(x_n)
        \end{pmatrix}
        = \frac{\sigma_w}{\sqrt{n_L}} \sum_{i=1}^{n_L} \begin{pmatrix}
            \sigma_1\big( h_{1,i}^{(L)}(x_1) \big) \Id_{n_{L+1}} \\
            \vdots \\
            \sigma_1\big( h_{1,i}^{(L)}(x_n) \big) \Id_{n_{L+1}} \\
            \sigma_2\big( h_{2,i}^{(L)}(x_1) \big) \Id_{n_{L+1}} \\
            \vdots \\
            \sigma_2\big( h_{2  ,i}^{(L)}(x_n) \big) \Id_{n_{L+1}}
        \end{pmatrix} W_{\cdot\,i}^{(L+1)} + \sigma_b \begin{pmatrix}
            b^{(L+1)} \\
            \vdots \\
            b^{(L+1)} \\
            b^{(L+1)} \\
            \vdots \\
            b^{(L+1)}
        \end{pmatrix} \notag . \label{eq:proof_ownthm1_1}
    \end{equation}
    With the same arguments as before and using the continuous mapping theorem in combination with Lemma \ref{lemma:conv_props} (iv) and the independence of \( W^{(L+1)}\), it holds
    \begin{equation*}
        \left[
        \begin{pmatrix}
            \sigma_1\big( h_{1,i}^{(L)}(x_1) \big) \Id_{n_{L+1}} \\
            \vdots \\
            \sigma_1\big( h_{1,i}^{(L)}(x_n) \big) \Id_{n_{L+1}} \\
            \sigma_2\big( h_{2,i}^{(L)}(x_1) \big) \Id_{n_{L+1}} \\
            \vdots \\
            \sigma_2\big( h_{2  ,i}^{(L)}(x_n) \big) \Id_{n_{L+1}}
        \end{pmatrix} W_{\cdot\,i}^{(L+1)} \right]_{i=1}^{n_L}
        \xrightarrow[m \to \infty]{\mathcal{D}} 
        \left[
        \begin{pmatrix}
            \sigma_1\big( X_i^{(L)}(x_1) \big) \Id_{n_{L+1}} \\
            \vdots \\
            \sigma_1\big( X_i^{(L)}(x_n) \big) \Id_{n_{L+1}} \\
            \sigma_2\big( Y_i^{(L)}(x_1) \big) \Id_{n_{L+1}} \\
            \vdots \\
            \sigma_2\big( Y_i^{(L)}(x_n) \big) \Id_{n_{L+1}}
        \end{pmatrix} W_{\cdot\,i}^{(L+1)} \right]_{i=1}^{n_L} .
    \end{equation*}
    Since \( (X_i, Y_i) \) and \( (X_j, Y_j) \) are independent for \(i \not= j\), the conditions of Lemma \ref{lemma:weakconv} are satisfied. Now, there exists \( k \colon \N \to \N \) such that \( k(m) \to \infty \) monotonically as \( m \to \infty \) and, when setting \( n_L \coloneqq r_L(m) \coloneqq k(m) \), it holds
    \begin{align*}
        \begin{pmatrix}
            h^{(L+1)}_1(x_1) \\
            \vdots \\
            h^{(L+1)}_1(x_n) \\
            h^{(L+1)}_2(x_1) \\
            \vdots \\
            h^{(L+1)}_2(x_n)
        \end{pmatrix}
        \xrightarrow[m \to \infty]{\mathcal{D}} \sigma_w \begin{pmatrix}
            R^{(L+1)}_{x_1} \\
            \vdots \\
            R^{(L+1)}_{x_n} \\
            S^{(L+1)}_{x_1} \\
            \vdots \\
            S^{(L+1)}_{x_n}
        \end{pmatrix} + \sigma_b \begin{pmatrix}
            b^{(L+1)} \\
            \vdots \\
            b^{(L+1)} \\
            b^{(L+1)} \\
            \vdots \\
            b^{(L+1)}
        \end{pmatrix}
        \overset{(\star)}{=} \begin{pmatrix}
            X^{(L+1)}(x_1) \\
            \vdots \\
            X^{(L+1)}(x_n) \\
            Y^{(L+1)}(x_1) \\
            \vdots \\
            Y^{(L+1)}(x_n)
        \end{pmatrix},
    \end{align*}
    for a Gaussian random variable \( \big(R^{(L+1)}_{x_1} \dots,R^{(L+1)}_{x_n},S^{(L+1)}_{x_1},\dots,S^{(L+1)}_{x_n} \big) \in \R^{2\cdot n_{L+1} \cdot d} \) with
    \begin{equation*}
        \begin{pmatrix}
            R^{(L+1)}_{x_1} \\
            \vdots \\
            R^{(L+1)}_{x_n} \\
            S^{(L+1)}_{x_1} \\
            \vdots \\
            S^{(L+1)}_{x_n}
        \end{pmatrix}
        \sim 
        \begin{pmatrix}
            \sigma_1\big( X_1^{(L)}(x_1) \big) \Id_{n_{L+1}} \\
            \vdots \\
            \sigma_1\big( X_1^{(L)}(x_n) \big) \Id_{n_{L+1}} \\
            \sigma_2\big( Y_1^{(L)}(x_1) \big) \Id_{n_{L+1}} \\
            \vdots \\
            \sigma_2\big( Y_1^{(L)}(x_n) \big) \Id_{n_{L+1}}
        \end{pmatrix} W_{\cdot\,1}^{(L+1)} .
    \end{equation*}
    Before considering the covariances, we want to comment on \( r_L \). First, note that this sequence may not initially be strictly increasing, but this can be circumvented by considering a strictly increasing subsequence. Second, \( r_L \) could theoretically depend on \( \mathx \). However, this can be resolved by not evaluating the pair \( (h_1^{(L)}, h_2^{(L)}) \) at certain data points, but doing the same calculation as above for \( (h_1^{(L)}, h_2^{(L)}) \). The starting point for this is Equation \eqref{eq:proof_ownthm1_0}. To apply Lemma \ref{lemma:weakconv}, note additionally that \( (h_1^{(L)}, h_2^{(L)}) \in \bigotimes_{i=1}^\infty \R^{2 \cdot n_L} \), which is a separable Hilbert space because we are considering a countable input set. Above, we worked with the marginal distribution because this makes the following calculation of the covariances easier. Finally, the choice of \( r_L \) should be independent of \( n_{L+1} \). This follows from the independence of \( W_{ji}^{(L+1)} \) and \( W_{j'i}^{(L+1)} \) for \( j \not= j' \).
    
    To verify Equation \((\star)\), we check the covariances of the random vector. They are given by
    \begin{align*}
        &\Cov\left[ R^{(L+1)}_{x_i},R^{(L+1)}_{x_j}  \right] = \EW\left[ \sigma_1\left(X_1^{(L)}(x_i)\right) \, W_{\cdot\,1}^{(L+1)} \, \left( W_{\cdot\,1}^{(L+1)} \right)^\intercal \, \sigma_1\left(X_1^{(L)}(x_j)\right) \right] \\
        = &\EW\left[ \sigma_1\left(X_1^{(L)}(x_i) \right) \, \EW\left[ W_{\cdot\,1}^{(L+1)} \, \left( W_{\cdot\,1}^{(L+1)} \right)^\intercal \, \middle| \, X_1^{(L)}(x_i), X_1^{(L)}(x_j)\right] \, \sigma_1\left(X_1^{(L)}(x_j)\right) \right] \\
        = &\EW\left[ \sigma_1(X_1^{(L)}(x_i)) \, \Id_{n_{L+1}} \, \sigma_1\left(X_1^{(L)}(x_j)\right) \right] = \EW\left[ \sigma_1\left(X_1^{(L)}(x_i) \right) \, \sigma_1\left(X_1^{(L)}(x_j)\right) \right]\Id_{n_{L+1}} .
    \end{align*}
    Similarly, we get that
    \begin{align*}
        &\Cov\left[ S^{(L+1)}_{x_i},S^{(L+1)}_{x_j}  \right] = \EW\left[ \sigma_2\left(Y_1^{(L)}(x_i) \right) \, \sigma_2\left(Y_1^{(L)}(x_j)\right) \right]\Id_{n_{L+1}},
    \end{align*}
    and together this implies using the independence of biases \(b^{(L+1)} \) and weight matrices \(W^{(L+1)}\)
    \begin{align*}
        &\Cov\left[ \sigma_w R_{x_i,k}^{(L+1)} + \sigma_b b_k^{(L+1)}, \sigma_w R_{x_j,l}^{(L+1)} + \sigma_b b_l^{(L+1)} \right] \\
        = &\delta_{kl} \left( \sigma_w^2\, \EW\left[ \sigma_1\left(X_1^{(L)}(x_i) \right) \, \sigma_1\left(X_1^{(L)}(x_j)\right) \right] + \sigma_b^2 \right) = \Cov \left[ X_k^{(L+1)}(x_i),X_l^{(L+1)}(x_j) \right], \\
        &\Cov\left[ \sigma_w S_{x_i,k}^{(L+1)} + \sigma_b b_k^{(L+1)}, \sigma_w S_{x_j,l}^{(L+1)} + \sigma_b b_l^{(L+1)} \right]  \\
        = &\delta_{kl} \left( \sigma_w^2\, \EW\left[ \sigma_2\left(Y_1^{(L)}(x_i) \right) \, \sigma_2\left(Y_1^{(L)}(x_j)\right) \right] + \sigma_b^2 \right) = \Cov \left[ Y_k^{(L+1)}(x_i),Y_l^{(L+1)}(x_j) \right].
    \end{align*}
    We therefore proved Equation (\ref{eq:ownthm1_2}). For the second case of (\ref{eq:ownthm1_3}), we see that
    \begin{align*}
        &\Cov\left[ R^{(L+1)}_{x_i},S^{(L+1)}_{x_j}  \right] = \EW\left[ \sigma_1\left(X_1^{(L)}(x_i)\right) \, W_{\cdot\,1}^{(L+1)} \, \left( W_{\cdot\,1}^{(L+1)} \right)^\intercal \, \sigma_2\left(Y_1^{(L)}(x_j)\right) \right] \\
        = &\EW\left[ \sigma_1\left(X_1^{(L)}(x_i) \right) \, \EW\left[ W_{\cdot\,1}^{(L+1)} \, \left( W_{\cdot\,1}^{(L+1)} \right)^\intercal \, \middle| \, X_1^{(L)}(x_i), Y_1^{(L)}(x_j)\right] \, \sigma_2\left(Y_1^{(L)}(x_j)\right) \right] \\
        = &\EW\left[ \sigma_1\left(X_1^{(L)}(x_i) \right) \, \sigma_2\left(Y_1^{(L)}(x_j)\right) \right]\Id_{n_{L+1}},
    \end{align*}
    with, by induction hypothesis,
    \begin{align*}
        \left( X_1^{(L)}(x_i), Y_1^{(L)}(x_j) \right) \sim \Normal\left( 0, \begin{pmatrix}
        \Sigma_1^{(L)}(x_i,x_i) & \Sigma_{1,2}^{(L)}(x_i,x_j) \\
        \Sigma_{1,2}^{(L)}(x_i,x_j) & \Sigma_2^{(L)}(x_j,x_j)
    \end{pmatrix} \right) .
    \end{align*}
    This finished the proof, since it now holds
    \begin{align*}
        &\Cov\left[ \sigma_w R_{x_i,k}^{(L+1)} + \sigma_b b_k^{(L+1)}, \sigma_w S_{x_j,l}^{(L+1)} + \sigma_b b_l^{(L+1)} \right] \\
        = &\delta_{kl} \left( \sigma_w^2\, \EW\left[ \sigma_1\left(X_1^{(L)}(x_i) \right) \, \sigma_2\left(Y_1^{(L)}(x_j)\right) \right] + \sigma_b^2 \right) = \Cov \left[ X_k^{(L+1)}(x_i),Y_l^{(L+1)}(x_j) \right],
    \end{align*}
\end{proof}

\begin{remark}
    In the preceding proof, we checked the marginal distributions of arbitrary size in order to give a complete proof of the convergence to a Gaussian process. However, the covariances can be derived by considering only a pair of data points \( (x_1,x_2) \), which drastically simplifies the notation. Also, we only need the distributions of pairs for the next theorems.
\end{remark}

\begin{theorem}[Generalized version of Theorem 1 by \cite{jacot2018neural}]
\label{theorem:gen2}
    For activation functions \( \sigma_1 \), \( \sigma_2 \) and so-called \textit{surrogate derivatives} \( \Tilde{\sigma}_1 \), \( \Tilde{\sigma}_2 \) such that \(\sigma_1, \sigma_2,\Tilde\sigma_1 \), and \( \Tilde\sigma_2\) are continuous except for finitely many jump points with property (\ref{eq:lin_bound}), let \( f_1(\,\cdot\, ; \theta) \) and \( f_2(\,\cdot\, ; \theta) \) be network functions with hidden layers \( h_1^{(l)}(\,\cdot\, ; \theta) \), \( h_2^{(l)}(\,\cdot\, ; \theta) \), \( 1 \leq l \leq L \), respectively as in Definition \ref{def:ntk_param} with shared weights \(\theta\). Denote the empirical generalized neural tangent kernel
    \begin{equation*}
        \hat{I}^{(L)}(x,x') = J^{(L),\sigma_1,\Tilde{\sigma}_1}(x;\theta) \, J^{(L),\sigma_2,\Tilde{\sigma}_2}(x';\theta)^\intercal \quad \text{for } x,x'\in \R^{n_0},
    \end{equation*}
    as in Definition \ref{def:ntk_general}. Then, for any \(x,x' \in \R^{n_0}\) and \( 1\leq i,j \leq n_L \),
    it holds
    \begin{equation*}
        \hat{I}_{ij}^{(L)}(x,x') \xrightarrow{\mathcal{P}} \delta_{ij} I^{(L)}(x,x'),
    \end{equation*}
    as \( n_1,\dots, n_{L-1} \to \infty \) weakly. We call \( I^{(L)} \) the analytic generalized neural tangent kernel, which is recursively given by
    \begin{align}
        &I^{(1)}(x,x') = \Sigma^{(1)}_{1,2}(x,x') \label{eq:ownthm2_1} \\
        &I^{(L)}(x,x') = \Sigma^{(L)}_{1,2}(x,x') + I^{(L-1)}(x,x') \cdot \Tilde{\Sigma}^{(L)}_{1,2}(x,x') \quad \text{for } L \ge 2, \label{eq:ownthm2_2}
    \end{align}
    with \( \Sigma^{(L)}_{1,2} \) as in Theorem \ref{theorem:gen1} and
    \begin{equation*}
        \Tilde{\Sigma}^{(L)}_{1,2}(x,x') = \sigma_w^2 \, \EW[\Tilde{\sigma}_1(Z_1)\, \Tilde{\sigma}_2(Z_2)] \quad \text{for } L \ge 2 ,
    \end{equation*}
    where \((Z_1,Z_2) \sim \Normal\left( 0, \begin{psmallmatrix}
        \Sigma_1^{(L-1)}(x,x) & \Sigma_{1,2}^{(L-1)}(x,x') \\
        \Sigma_{1,2}^{(L-1)}(x,x') & \Sigma_2^{(L-1)}(x',x')
    \end{psmallmatrix} \right) \). 
\end{theorem}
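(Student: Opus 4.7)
The plan is to prove the statement by induction on the depth $L$, mirroring the inductive structure used for Theorem \ref{theorem:gen1}. At each step I exploit Theorem \ref{theorem:gen1} to control the joint distribution of the preactivation pair $(h_1^{(l)},h_2^{(l)})$, and combine it with the Hilbert-space law-of-large-numbers argument encoded in Lemma \ref{lemma:weakconv} to average across neurons of the last hidden layer. Since the weak infinite-width limit only requires that \emph{some} collection of strictly increasing width functions works, I retain the freedom (as in the proof of Theorem \ref{theorem:gen1}) to choose $r_L$ growing slowly enough relative to $r_1,\dots,r_{L-1}$ that the inductive hypothesis is already in force at layer $L$ before any averaging takes place in layer $L+1$.

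For the base case $L=1$ the quasi-Jacobian in Equation \eqref{eq:quasi_jac_L1} does not involve $\tilde\sigma$ at all, so a direct computation gives $\hat I^{(1)}_{ij}(x,x')=\delta_{ij}\Sigma_{1,2}^{(1)}(x,x')=\delta_{ij} I^{(1)}(x,x')$ deterministically. For the inductive step from $L$ to $L+1$ I split the parameter sum into contributions from $p\in\theta^{(L+1)}$ and from $p\in\theta^{(\le L)}$. The first part evaluates directly to $\delta_{ij}\bigl[\tfrac{\sigma_w^2}{n_L}\sum_{k=1}^{n_L}\sigma_1(h^{(L)}_{1,k}(x))\sigma_2(h^{(L)}_{2,k}(x'))+\sigma_b^2\bigr]$, and by Theorem \ref{theorem:gen1} combined with Lemma \ref{lemma:weakconv} it converges in probability to $\delta_{ij}\Sigma_{1,2}^{(L+1)}(x,x')$ along a suitable width function. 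The second part, unrolling the quasi-Jacobian recursion and factoring out the layer-$L{+}1$ weights, takes the form
\[
\frac{\sigma_w^2}{n_L}\sum_{k,l=1}^{n_L} W^{(L+1)}_{ik}\,W^{(L+1)}_{jl}\,\tilde\sigma_1(h^{(L)}_{1,k}(x))\,\tilde\sigma_2(h^{(L)}_{2,l}(x'))\,\hat I^{(L)}_{kl}(x,x'),
\]
and since $W^{(L+1)}$ is independent of both the preactivations and of $\hat I^{(L)}$, conditioning on the latter produces a quadratic form in Gaussian $W^{(L+1)}$ whose conditional mean equals $\delta_{ij}\,\tfrac{\sigma_w^2}{n_L}\sum_k \tilde\sigma_1(h^{(L)}_{1,k}(x))\,\tilde\sigma_2(h^{(L)}_{2,k}(x'))\,\hat I^{(L)}_{kk}(x,x')$.

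The inductive hypothesis tells me each $\hat I^{(L)}_{kk}$ converges in probability to the constant $I^{(L)}(x,x')$; by symmetry of the depth-$L$ output neurons this replacement can be made uniformly in $k$, and a second application of Lemma \ref{lemma:weakconv} then yields the desired limit $\delta_{ij}\,I^{(L)}(x,x')\,\tilde\Sigma_{1,2}^{(L+1)}(x,x')$ for the conditional mean, giving the announced recursion. The main obstacle will be controlling the conditional variance: by Isserlis' formula it reduces, up to a factor $\sigma_w^4/n_L^2$, to $\delta_{ij}\sum_{k,l}A_{kl}A_{lk}+\sum_{k,l}A_{kl}^2$ with $A_{kl}=\tilde\sigma_1(h^{(L)}_{1,k})\tilde\sigma_2(h^{(L)}_{2,l})\hat I^{(L)}_{kl}$. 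The diagonal contribution is tamed by the linear envelope property together with Gaussian moment bounds on the preactivations, giving a term of order $1/n_L$; the delicate part is controlling the $n_L^2$ off-diagonal summands, which requires strengthening the induction hypothesis to include a second-moment bound of the form $\mathbb{E}|\hat I^{(L)}_{kl}|^2\to 0$ for $k\neq l$. That bound itself propagates inductively via the same conditional-Gaussian structure. Concatenating the width function produced at this step with those from the induction hypothesis and thinning to a strictly increasing subsequence, as done in the proof of Theorem \ref{theorem:gen1}, yields an element of $\mathcal R_{L+1}$ along which the weak-limit claim holds.
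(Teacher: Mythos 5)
Your proposal follows the paper's skeleton — induction on $L$, the same split of the parameter sum into the layer-$(L{+}1)$ contribution (handled identically, via Theorem \ref{theorem:gen1} and the Hilbert-space LLN of Lemma \ref{lemma:weakconv}) and the back-propagated cross term — but you treat that cross term by a genuinely different method. The paper keeps the Gaussian weights $W^{(L+1)}$ inside the sum, shows each inner sum over $r$ converges in distribution to $W^{(L+1)}_{i,m}W^{(L+1)}_{j,m}\,\Tilde\sigma_1(X_m)\Tilde\sigma_2(Y_m)\,I^{(L)}$ using the induction hypothesis and Lemma \ref{lemma:conv_props}, notes these limits are i.i.d.\ over $m$, and invokes Lemma \ref{lemma:weakconv} again; no variance of any pre-limit quantity is ever computed. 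You instead condition on $\theta^{(1:L)}$, integrate out $W^{(L+1)}$ to get the conditional mean $\delta_{ij}\frac{\sigma_w^2}{n_L}\sum_k \Tilde\sigma_1(h^{(L)}_{1,k})\Tilde\sigma_2(h^{(L)}_{2,k})\hat I^{(L)}_{kk}$ (correct, and your Isserlis computation of the conditional variance is also correct), and run a conditional Chebyshev argument. What this buys is a more explicit, quantitative handle on the fluctuations; what it costs is exactly the obligation you identify: the $\sim n_L^2$ off-diagonal terms $\Tilde\sigma_1(h_{1,k})^2\Tilde\sigma_2(h_{2,l})^2(\hat I^{(L)}_{kl})^2$. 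Your proposed fix — strengthening the induction to $\EW|\hat I^{(L)}_{kl}|^2\to 0$ for $k\neq l$ — is the one piece of real unfinished work, since convergence in probability of $\hat I^{(L)}_{kl}$ to $0$ does not by itself give decay of second moments, and propagating fourth-moment bounds on quasi-Jacobian entries through the layers is nontrivial. Note, however, that in the weak infinite-width framework you can likely avoid moments altogether: for each \emph{fixed} $n_L$ the conditional variance converges to $0$ in probability (each off-diagonal summand does, by continuous mapping), and the same diagonal-sequence device underlying Lemma \ref{lemma:weakconv} then lets you grow $n_L=r_L(m)$ slowly enough; combined with a conditional-Chebyshev-plus-dominated-convergence step this closes the argument without the strengthened hypothesis and brings your proof essentially back in line with the paper's.
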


\begin{proof}
    We prove the theorem by induction over \(L\), as in the proof of Theorem 1 by \cite{jacot2018neural}. We denote \(J^{(L),k}(z) = J^{(L),\sigma_k, \Tilde{\sigma}_k}(z;\theta) \) for \(k \in \{1,2\}, z\in\R^{n_0}\).
    
    \(\mathbf{L=1.}\) For \( 1 \leq i,j \leq n_1 \) it holds
    \begin{align*}
        \hat{I}^{(1)}_{ij}(x,x') &= J_{i\,\cdot}^{(1),1}(x) \, J_{j\,\cdot}^{(1),2}(x')^\intercal = \sum_{\theta' \in \theta^{(1)}} J_{i \, \theta'}^{(1),1}(x) \, J_{j \, \theta'}^{(1),2}(x') \\
        \overset{(\ref{eq:quasi_jac_L1})}&{=} \sum_{\substack{1 \leq k \leq n_1 \\ 1 \leq l \leq n_0}} \delta_{ki} \frac{\sigma_w}{\sqrt{n_0}} x_l \, \delta_{kj} \frac{\sigma_w}{\sqrt{n_0}} x'_l + \sum_{1 \leq k \leq n_1} \delta_{ki} \sigma_b \, \delta_{kj} \sigma_b \\
        &= \frac{\sigma_w^2}{n_0} \delta_{ij} \sum_{1 \leq k \leq n_0} x_l x'_l + \sigma_b^2 \delta_{ij} = \delta_{ij} \left( \frac{\sigma_w^2}{n_0} \langle x,x' \rangle + \sigma_b^2 \right) = \delta_{ij} I^{(1)}(x,x') .
    \end{align*}
    This proves Equation \eqref{eq:ownthm2_1}. 
    
    \(\mathbf{L \to L+1.}\) Now we assume that the statement is true for \( L \) and need to prove it for \( L+1 \). Instead of considering an explicit \( r \in \mathcal{R}_L \) as in the proof of Theorem \ref{theorem:gen1} and taking \( m \to \infty \), we will just write \qq{\(n_1,\dots,n_{L-1} \to \infty\) weakly}.
    
    In the induction step, we would like to use Theorem \ref{theorem:gen1}. However, it is not obvious that this is possible. In the setting of Definition \ref{def:network_conv_notions}, let \(\mathcal{S}_1\) and \(\mathcal{S}_2\) be two statements that hold weakly. In our setting, these are the induction hypothesis and Theorem \ref{theorem:gen1} for depth \( L \). Then there exist \( s, t \in \mathcal{R}_L \) such that $\mathcal{S}_1(s)$ and $\mathcal{S}_2(t)$ are true. It is not clear that there exists some \( r \in \mathcal{R}_L \) such that $\mathcal{S}_1(r)$ and $\mathcal{S}_2(r)$ are true. It would be natural to define \( r \) by \( r_l(m) \coloneqq \max\{s_l(m), t_l(m)\}\) for all \( 1 \leq l < L\) and \( m \in \N \), but it is still unclear that this implies $\mathcal{S}_1(r)$ and $\mathcal{S}_2(r)$. Instead, one can consider the combined statement \qq{\(\mathcal{S}_1\) and \(\mathcal{S}_2\)} as a new statement \(\mathcal{S}\). In our case, for any depth \( L \), we would need to find a \( r \in \mathcal{R}_L \) such that the statements in Theorem \ref{theorem:gen1} and Theorem \ref{theorem:gen2} are both true, which can be done. Since we used the first part of Lemma \ref{lemma:weakconv} to prove Theorem \ref{theorem:gen1} and will use the second part of Lemma \ref{lemma:weakconv} for this proof, we would have to define \( r_L(m) \coloneqq \min\{k(m),k'(m)\} \). For simplicity, we will assume that the \( r \in \mathcal{R}_L \) we get by the induction hypothesis also gives us the convergence statement of Theorem \ref{theorem:gen1}.
    
    Using the definition of the generalized NTK and the quasi-Jacobian matrices, we obtain for \( 1 \leq i,j \leq n_{L+1} \)
    \begin{align}
        &I_{ij}^{(L+1)}(x,x') = J_{i\,\cdot}^{(L+1),1}(x) \, J_{j\,\cdot}^{(L+1),2}(x')^\intercal 
        = \sum_{\theta' \in \theta^{(1 \colon L+1)}} J_{i\,\theta'}^{(1),1}(x) \, J_{j\,\theta'}^{(1),2}(x') \notag \\
        = &\sum_{\substack{1 \leq k \leq n_{L+1} \\ 1 \leq l \leq n_L}} \delta_{ki} \frac{\sigma_w}{\sqrt{n_L}} \sigma_1\left(h_{1,l}^{(L)}(x) \right) \delta_{kj} \frac{\sigma_w}{\sqrt{n_L}} \sigma_2\left(h_{2,l}^{(L)}(x') \right) + \sum_{1 \leq k \leq n_{L+1}} \delta_{ki} \sigma_b \, \delta_{kj} \sigma_b  \notag \\
        + &\sum_{\theta' \in \theta^{(1\colon L)}} \frac{\sigma_w^2}{n_L} \left[ \sum_{m=1}^{n_L} W_{i,m}^{(L+1)} \Tilde{\sigma}_1\left( h^{(L)}_{1,m}(x) \right) J_{m\,\theta'}^{(L),1}(x) \right]
        \left[ \sum_{r=1}^{n_L} W_{i,r}^{(L+1)} \Tilde{\sigma}_2\left( h^{(L)}_{2,r}(x') \right) J_{r\,\theta'}^{(L),2}(x') \right] \notag  \\
        = & \delta_{ij} \left( \frac{\sigma_w^2}{n_L} \sum_{l=1}^{n_L} \sigma_1\left(h_{1,l}^{(L)}(x) \right) \sigma_2\left(h_{2,l}^{(L)}(x') \right) + \sigma_b^2 \right) \label{eq:proof_ownthm2_1} \\
        + &\frac{\sigma_w^2}{n_L} \sum_{m,r=1}^{n_L} W_{i,m}^{(L+1)} W_{j,r}^{(L+1)} \Tilde{\sigma}_1\left( h^{(L)}_{1,m}(x) \right) \Tilde{\sigma}_2\left( h^{(L)}_{2,r}(x') \right) \cdot \sum_{\theta' \in \theta^{(1 \colon L)}} J_{m\,\theta'}^{(L),1}(x) J_{r\,\theta'}^{(L),2}(x') \label{eq:proof_ownthm2_2}
    \end{align}
    We want to apply the second part of Lemma \ref{lemma:weakconv}. We will consider the terms (\ref{eq:proof_ownthm2_1}) and (\ref{eq:proof_ownthm2_2}) separately. First note that
    \begin{align}
        \left(\sigma_1\left(h^{(L)}_{1,l}(x)\right), \sigma_2\left(h^{(L)}_{2,l}(x')\right) \right) \xrightarrow{\mathcal{D}} \left(\sigma_1\left(X^{(L)}_{l}(x) \right), \sigma_2\left(Y^{(L)}_{l}(x') \right) \right), \label{eq:proof_ownthm2_3}
    \end{align}
    as \( n_1, \dots, n_{L-1} \to \infty \) weakly by Theorem \ref{theorem:gen1} and the continuous mapping theorem with
    \[
        \left( X^{(L)}_{l}(x), Y^{(L)}_{l}(x') \right) \overset{\text{iid}}{\sim} \Normal\left( 0, \begin{pmatrix}
            \Sigma^{(L)}_{1,1}(x,x) & \Sigma^{(L)}_{1,2}(x,x') \\
            \Sigma^{(L)}_{1,2}(x,x') & \Sigma^{(L)}_{2,2}(x',x')
        \end{pmatrix} \right) .
    \]
    Again, we used that the values of the jump points are assumed with zero probability. Thus, again by the continuous mapping theorem and by the second part of Lemma \ref{lemma:weakconv}, it holds
    \begin{align*}
        \frac{\sigma_w^2}{n_L} \sum_{l=1}^{n_L} \sigma_1\left(h^{(L)}_{1,l}(x) \right) \sigma_2\left(h^{(L)}_{2,l}(x') \right) + \sigma_b^2
        \xrightarrow[\substack{n_1, \dots,n_L \to \infty \\ \text{weakly}}]{\mathcal{D}} \EW\left[ \sigma_1\left(X^{(L)}_{1}(x) \right)  \sigma_2\left(Y^{(L)}_{1}(x') \right) \right] + \sigma_b^2 .
    \end{align*}
    Here, as in the proof of Theorem \ref{theorem:gen1}, \( n_L \to \infty \) is given by \( n_L \coloneqq r_L(m) \coloneqq k'(m) \), which is in turn is given by Lemma \ref{lemma:weakconv}. Note also that the limit is a constant, which implies convergence in probability according to Lemma \ref{lemma:conv_props} (ii).
    In conclusion, we obtain
    \begin{align}
        \delta_{ij} \left( \frac{\sigma_w^2}{n_L} \sum_{l=1}^{n_L} \sigma_1\left(h_{1,l}^{(L)}(x) \right) \sigma_2\left(h_{2,l}^{(L)}(x') \right) + \sigma_b^2 \right) \xrightarrow[\substack{n_1, \dots,n_L \to \infty \\ \text{weakly}}]{\mathcal{P}} \delta_{ij} \, \Sigma^{(L+1)}_{1,2}(x,x'). \label{eq:proof_ownthm2_4}
    \end{align}
    
    For term (\ref{eq:proof_ownthm2_2}), we can apply the induction hypothesis to obtain
    \begin{align}
        \sum_{\theta' \in \theta^{(1 \colon L)}} J_{\theta',m}^{(L),1}(x) J_{\theta',r}^{(L),2}(y) = \hat{I}_{m,r}^{(L)}(x,y) \xrightarrow[\substack{n_1, \dots,n_{L-1} \to \infty \\ \text{weakly}}]{\mathcal{P}} \delta_{mr} I^{(L)}(x,y). \label{eq:proof_ownthm2_5}
    \end{align}
    Also, we can again use the convergence given by (\ref{eq:proof_ownthm2_3}), but with \( \sigma_1, \sigma_2 \) replaced by \( \Tilde{\sigma}_1, \Tilde{\sigma}_2 \) respectively. This gives using Lemma \ref{lemma:conv_props} (iv)
    \begin{align*}
        \left( \Tilde{\sigma}_1\left( h_1^{(L)}(\cdot) \right), \Tilde{\sigma}_1\left( h_2^{(L)}(\cdot) \right), W^{(L+1)} \right)
        \xrightarrow[\substack{n_1, \dots, n_{L-1} \\ \mathrm{weakly}}]{\mathcal{D}} \left( \Tilde{\sigma}_1\left( X^{(L)} \right), \Tilde{\sigma}_1\left( Y^{(L)} \right), W^{(L+1)} \right).
    \end{align*}
    Together with (\ref{eq:proof_ownthm2_5}) and Lemma \ref{lemma:conv_props} (iii) this implies
    \begin{align*}
        &\left( \Tilde{\sigma}_1\left( h_1^{(L)}(\cdot) \right), \Tilde{\sigma}_1\left( h_2^{(L)}(\cdot) \right), W^{(L+1)}, \hat{I}_{m r}^{(L)}(x,x') \right) \\
        \xrightarrow[\substack{n_1, \dots, n_{L-1} \\ \mathrm{weakly}}]{\mathcal{D}} &\left( \Tilde{\sigma}_1\left( X^{(L)} \right), \Tilde{\sigma}_1\left( Y^{(L)} \right), W^{(L+1)}, \delta_{mr} I^{(L)}(x,x') \right).
    \end{align*}
    For the summands in (\ref{eq:proof_ownthm2_2}) we therefore have by the continuous mapping theorem
    \begin{align*}
        &\sum_{r=1}^{n_L} W_{i,m}^{(L+1)} W_{j,r}^{(L+1)} \Tilde{\sigma}_1\left( h^{(L)}_{1,m}(x) \right) \Tilde{\sigma}_2\left( h^{(L)}_{2,r}(x') \right) \cdot \sum_{\theta' \in \theta^{(1 \colon L)}} J_{\theta',m}^{(L),1}(x) J_{\theta',r}^{(L),2}(x') \\
        \xrightarrow[\substack{n_1, \dots, n_{L-1} \\ \mathrm{weakly}}]{\mathcal{D}} &\sum_{r=1}^{n_L} W_{i,m}^{(L+1)} W_{j,r}^{(L+1)} \Tilde{\sigma}_1\left( X^{(L)}_{m}(x) \right) \Tilde{\sigma}_2\left( Y^{(L)}_{r}(x') \right) \cdot \delta_{mr} I^{(L)}(x,x') \\
        = &W_{i,m}^{(L+1)} W_{j,m}^{(L+1)} \Tilde{\sigma}_1\left( X^{(L)}_{m}(x) \right) \Tilde{\sigma}_2\left( Y^{(L)}_{m}(x') \right) \cdot I^{(L)}(x,x')
    \end{align*}
    These terms are independent for different \( 1 \leq m \leq n_L \). Therefore, the second part of Lemma \ref{lemma:weakconv} can be applied as before. This yields
    \begin{align*}
        &\frac{\sigma_w^2}{n_L} \sum_{r=1}^{n_L} W_{i,m}^{(L+1)} W_{j,r}^{(L+1)} \Tilde{\sigma}_1\left( h^{(L)}_{1,m}(x) \right) \Tilde{\sigma}_2\left( h^{(L)}_{2,r}(x') \right) \cdot \sum_{\theta' \in \theta^{(1 \colon L)}} J_{\theta',m}^{(L),1}(x) J_{\theta',r}^{(L),2}(x') \\
        \xrightarrow[\substack{n_1,\dots,n_L \to \infty \\ \text{weakly}}]{\mathcal{D}} & I^{(L)}(x,x') \cdot \sigma_w^2 \, \EW\left[ W_{i,1}^{(L+1)} W_{j,1}^{(L+1)} \Tilde{\sigma}_1\left( X_1^{L)}(x) \right) \Tilde{\sigma}_2\left( Y_1^{(L)}(x') \right) \right] \\
        = &I^{(L)}(x,x') \cdot \delta_{ij} \, \sigma_w^2 \, \EW\left[ \Tilde{\sigma}_1\left( X_1^{L)}(x) \right) \Tilde{\sigma}_2\left( Y_1^{(L)}(x') \right) \right] 
        = \delta_{ij} I^{(L)}(x,x') \cdot \Tilde{\Sigma}^{(L)}_{1,2}(x,x').
    \end{align*}
    Together with (\ref{eq:proof_ownthm2_4}) this yields Equation (\ref{eq:ownthm2_2}) and concludes the proof.
\end{proof}

The two theorems above are complemented by the convergence of the generalized NTK in the infinite-width limit during training, which we will prove below. The theorem is a generalization of Theorem G.2 from \cite{lee2019wide}. There, the convergence of the NTK at initialization as \( (n_l)_{l=1}^{L-1} \asympropto n \) is assumed. More precisely, they refer to a result of \cite{yang2019scaling}. In consequence, we have to assume the same for the next theorem. The proof of \cite{yang2019scaling} could also be generalizable to our case. Alternatively, one could also try to generalize the statement of \cite{jacot2018neural} on stability during training in the weak infinite-width limit.

\begin{theorem}[Based on Theorem G.2 from \cite{lee2019wide} ]
\label{theorem:gen3}
    Let \( \sigma \) be a Lipschitz continuous and differentiable activation function. Let the derivative of the activation function \( \dot{\sigma} \) and a so-called \textit{surrogate derivative} \( \Tilde{\sigma} \) be Lipschitz continuous and bounded. Let \( f_t(\,\cdot\, ; \theta) \) be the corresponding network function with depth \( L \) initialized as in Definition \ref{def:ntk_param} and trained with MSE loss and surrogate gradient learning, i.e., according to Equation \eqref{eq:surr_ntk_learning} with surrogate derivative \( \Tilde{\sigma} \). The hidden layers are denoted by \( h_l^{(l)}(\,\cdot\, ; \theta) \), for \( 1 \leq l < L \), as in Definition \ref{def:ntk_param}.
    Assume that the generalized NTK converges in probability to the analytic generalized NTK of Theorem \ref{theorem:gen2} as \( (n_l)_{l=1}^{L-1} \asympropto n \),
    \begin{equation*}
        \left( J^{(L),\sigma,\dot\sigma} \right) \left( J^{(L),\sigma,\Tilde{\sigma}} \right)^\intercal = \hat{I}^{(L)} \xrightarrow[n \to \infty]{\mathcal{P}} I^{(L)} \otimes \Id_{n_L} .
    \end{equation*}
    Furthermore, assume that the smallest and largest eigenvalue of the symmetrization of $I^{(L)}(\mathx, \mathx)$,
    \[ 
        S^{(L)} \coloneqq \frac{1}{2} \left( I^{(L)}(\mathx, \mathx) + I^{(L)}(\mathx, \mathx)^\intercal \right), 
    \]
    are given by \( 0 <  \lambda_{\mathrm{min}} \leq \lambda_{\mathrm{max}} < \infty \) and that the learning rate is given by \( \eta > 0 \).
    Then, for any \( \delta > 0 \) there exist \( R > 0, N \in \N \) and \( K > 1 \) such that for every \( n \geq N \), the following holds with probability at least \( 1 - \delta \) over random initialization
    \begin{equation*}
        \sup_{t \in [0,\infty)} \llVert \hat{I}^{(L)}(\mathx, \mathx) - I^{(L)}(\mathx,\mathx) \rrVert_F \leq \frac{6 K^3 R}{\lambda_{\mathrm{min}}} n^{-\frac{1}{2}},
    \end{equation*}
    where $\lVert \,\cdot \, \rVert_F$ denotes the Frobenius norm.
\end{theorem}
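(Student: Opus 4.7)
The plan is to adapt the proof of Theorem G.2 in \cite{lee2019wide} to the asymmetric SG-NTK setting, replacing spectral arguments on a symmetric positive definite NTK with arguments based on the symmetrization $S^{(L)}$, and controlling both true and quasi-Jacobians jointly.

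First, I would write the training dynamics in terms of the residual $g_t \coloneqq f_t(\mathx; \theta_t) - \mathy$. Under MSE loss and Equation \eqref{eq:dynamics_sgl}, one has $\dot g_t = -\eta\, \hat I_t^{(L)}(\mathx,\mathx)\, g_t$. Because $\hat I_t^{(L)}$ is in general asymmetric, I would analyse the loss via the symmetrization $\hat S_t \coloneqq \bigl(\hat I_t^{(L)}(\mathx,\mathx) + \hat I_t^{(L)}(\mathx,\mathx)^\intercal\bigr)/2$, yielding
\begin{equation*}
    \tfrac{d}{dt} \lVert g_t \rVert_2^2 = -2\eta\, g_t^\intercal \hat S_t\, g_t.
\end{equation*}
As long as $\hat S_t \succeq (\lambda_{\mathrm{min}}/2)\,\Id$, which follows from Frobenius-norm closeness of $\hat I_t^{(L)}$ to $I^{(L)}$, Grönwall gives exponential decay $\lVert g_t \rVert_2 \leq e^{-\eta\lambda_{\mathrm{min}} t / 2}\lVert g_0 \rVert_2$.

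Second, I would translate parameter movement into kernel movement. From the SGL parameter ODE,
\begin{equation*}
    \lVert \theta_t - \theta_0 \rVert_2 \leq \eta \int_0^t \lVert J^{(L),\sigma,\Tilde\sigma}(\mathx;\theta_s) \rVert_{\mathrm{op}} \lVert g_s \rVert_2 \, ds.
\end{equation*}
A standard layerwise induction, using that $\sigma, \dot\sigma, \Tilde\sigma$ are Lipschitz and that $\dot\sigma,\Tilde\sigma$ are bounded, together with standard Gaussian concentration of the initialized weights, shows that with probability at least $1-\delta/3$ the operator norms of $J^{(L),\sigma,\dot\sigma}$ and $J^{(L),\sigma,\Tilde\sigma}$ are $\bigO(1)$ and their Lipschitz constants in $\theta$ scale as $\bigO(n^{-1/2})$ for $(n_l) \asympropto n$, exactly as in Appendix G of \cite{lee2019wide}. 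The product rule for the bilinear form $\hat I_t^{(L)} = J^{(L),\sigma,\dot\sigma}\,(J^{(L),\sigma,\Tilde\sigma})^\intercal$ then yields a bound of the form $\lVert \hat I_t^{(L)} - \hat I_0^{(L)} \rVert_F \leq C\, \lVert \theta_t - \theta_0 \rVert_2 \cdot n^{-1/2}$ for some constant $C$ depending only on the Lipschitz/boundedness constants and the data.

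Third, I would carry out the bootstrap. By Theorem \ref{theorem:gen2} and standard Gaussian concentration of $f_0(\mathx)$, there exist $R>0$ and $N\in\N$ such that for $n \geq N$, the event $E \coloneqq \{ \lVert g_0 \rVert_2 \leq R,\ \lVert \hat I_0^{(L)}(\mathx,\mathx) - I^{(L)}(\mathx,\mathx) \rVert_F \leq R n^{-1/2} \}$ has probability at least $1-\delta$. Working on $E$ and fixing $K>1$ sufficiently large (depending on $\lambda_{\mathrm{min}},\lambda_{\mathrm{max}}, C, \eta, R$), I would define the stopping time
\begin{equation*}
    \tau \coloneqq \inf\Bigl\{ t \geq 0 : \llVert \hat I_t^{(L)}(\mathx,\mathx) - I^{(L)}(\mathx,\mathx) \rrVert_F > \tfrac{6 K^3 R}{\lambda_{\mathrm{min}}}\, n^{-1/2} \Bigr\}.
\end{equation*}
For $t < \tau$, the symmetrization $\hat S_t$ stays within $(\lambda_{\mathrm{min}}/2, 2\lambda_{\mathrm{max}})$ spectrally, so $\lVert g_t \rVert_2$ decays exponentially, making $\int_0^t \lVert g_s \rVert_2 ds \leq 2 R/(\eta\lambda_{\mathrm{min}})$ uniformly. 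Combining with the Lipschitz bound on the generalized Jacobians yields $\lVert \hat I_t^{(L)}(\mathx,\mathx) - I^{(L)}(\mathx,\mathx) \rVert_F < (6 K^3 R/\lambda_{\mathrm{min}}) n^{-1/2}$ for all $t<\tau$, strictly below the threshold, forcing $\tau = \infty$ by continuity. The stated uniform bound follows.

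The main obstacle will be making the spectral bootstrap compatible with the asymmetry of $\hat I_t^{(L)}$: in the classical case both empirical and limit kernels are symmetric positive definite, so Frobenius closeness transfers directly into spectral closeness of the operator governing the decay. Here, one must carefully track that Frobenius closeness $\lVert \hat I_t^{(L)} - I^{(L)} \rVert_F \leq \varepsilon$ implies $\lVert \hat S_t - S^{(L)} \rVert_F \leq \varepsilon$ and then invoke Weyl's inequality on $\hat S_t$; and since $\hat I_t^{(L)}$ may have complex eigenvalues, the proof must route entirely through the real quadratic form $g_t^\intercal \hat S_t g_t$ rather than any diagonalization of $\hat I_t^{(L)}$ itself. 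A secondary technical point is the joint layerwise control of both $J^{(L),\sigma,\dot\sigma}$ and $J^{(L),\sigma,\Tilde\sigma}$: the recursion in Definition \ref{def:quasi_jacobian} is identical in structure to the true Jacobian, so the Lee--Xiao--Schoenholz--Bahri--Novak--Sohl-Dickstein--Pennington Lipschitz estimates apply verbatim once bounds on $\lVert \sigma(h^{(l)}) \rVert$ and $\lVert \Tilde\sigma(h^{(l)}) \rVert$ are in hand, but must be stated for the surrogate quantity alongside the standard one.
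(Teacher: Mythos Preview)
Your proposal is correct and follows essentially the same route as the paper: the paper also writes the residual dynamics, passes to the symmetrization $\hat S_t$ to obtain $\tfrac{d}{dt}\lVert g_t\rVert_2^2 = -2\eta\,\langle g_t,\hat S_t g_t\rangle$, invokes the Lee et al.\ Jacobian bounds (stated as a separate lemma, applied to both $J^{(L),\sigma,\dot\sigma}$ and $J^{(L),\sigma,\Tilde\sigma}$), uses Gr\"onwall with a $\tfrac{1}{3}\lambda_{\min}$ threshold, and closes the bootstrap. The only cosmetic difference is that the paper runs the bootstrap on the parameter ball $B(\theta_0,C)$ with $C = 3KR_0/\lambda_{\min}$ rather than on the kernel distance via a stopping time $\tau$, and it uses only convergence in probability of $\hat I_0^{(L)}$ to get $\lVert \hat I_0^{(L)} - I^{(L)}\rVert_2 \leq \tfrac{1}{3}\lambda_{\min}$ rather than the $n^{-1/2}$ rate you place in your event $E$ (which is not given by the hypotheses); this does not affect the argument since the final $n^{-1/2}$ bound in the paper is actually on $\lVert \hat I_t^{(L)} - \hat I_0^{(L)}\rVert_F$.
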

\begin{proof}
    We follow the proofs in Chapter G of \cite{lee2019wide}. The analogous statement is given by Theorem G.2 of \cite{lee2019wide}. Since we are considering the infinite-width limit $(n_l)_{l=1}^{L-1} \asympropto n$, we will assume that the width of the hidden layers are given by \( n \in \N \). The more general case can easily be proved using the same steps. Recall that
    \begin{align}
         \frac{\del}{\del t} f(x;\theta_t) = -\eta \, \hat{I}^{(L)}(x, \mathx; \theta_t) \nabla_{f(\mathcal{X};\theta_t)}\Loss(f(\mathcal{X};\theta_t);\mathcal{Y}). \tag{\ref{eq:surr_ntk_learning}}
    \end{align}
    The loss function is the MSE loss by assumption. This yields
    \begin{align*}
        \nabla_{f(\mathcal{X};\theta_t)}\Loss(f(\mathcal{X};\theta_t);\mathcal{Y}) = \nabla_{f(\mathcal{X};\theta_t)} \frac{1}{2} \llVert f(\mathcal{X};\theta_t) - \mathcal{Y} \rrVert_2^2 = f(\mathcal{X};\theta_t) - \mathcal{Y} \eqqcolon g(\theta_t).
    \end{align*}
    The change of weights over time is hence given by
    \begin{align*}
        \frac{\del}{\del t} \theta_t \overset{(\ref{eq:surr_ntk_learning_0})}{=} -\eta \, J^{(L),\sigma,\Tilde{\sigma}}(\mathx;\theta_t)^\intercal \, g(\theta_t)
    \end{align*}
    By Lemma \ref{lemma:jacobi_estimates} below, for any $\delta_1 > 0$ there exists a $K > 0$ such that for any $C > 0$ it holds with probability at least $1 - \delta_1$ 
    \begin{align*}
        \llVert J^{(L),\sigma,\Tilde{\sigma}}(\mathx;\theta_t)^\intercal \rrVert_F \leq K \quad \text{for all }\theta_t \in B\left(\theta_0, C \right) .
    \end{align*}
    Together, this implies
    \begin{align}
        \frac{\del}{\del t} \llVert \theta_t - \theta_0 \rrVert_2 
        &\overset{(\star)}{\leq} \llVert \frac{\del}{\del t} \theta_t \rrVert_2 
        = \eta \llVert J^{(L),\sigma,\Tilde{\sigma}}(\mathx;\theta_t)^\intercal \, g(\theta_t) \rrVert_2 \notag \\
        &\leq \eta \llVert J^{(L),\sigma,\Tilde{\sigma}}(\mathx;\theta_t)^\intercal \rrVert_F \llVert g(\theta_t) \rrVert_2
        \leq \eta \, K \, \lVert g(\theta_t) \Vert_2, \label{eq:proof_ownthm3_1}
    \end{align}
    for all $\theta_t \in B\left(\theta_0, C \right)$ with probability at least \( 1 - \delta_1 \). Inequality \((\star)\) is a consequence of the Cauchy-Schwarz inequality. 
    
    To estimate the term $\lVert g(\theta_t) \Vert_2$ in Inequality (\ref{eq:proof_ownthm3_1}), we use Grönwall's inequality in the differential form, e.g., Lemma 1.1.1 from \cite{qin2016integral}). First, note that as a consequence of the proof of Lemma \ref{lemma:jacobi_estimates}, for all \( \delta_2 > 0\) there exist \( R_0 > 0 \) and \( n_0 \in \N \) such that with probability at least \( 1 - \delta_2 \) it holds for all \( n \geq n_0 \)
    \begin{align}
        \llVert g(\theta_0) \rrVert_2 \leq R_0. \label{eq:proof_ownthm3_12}
    \end{align}
    We now set \( C \coloneqq 3 K R_0 / \lambda_\mathrm{min} \). Next, observe that
    \begin{alignat}{2}
        &&&\frac{\del}{\del t} g(\theta_t) 
        = \frac{\del}{\del t} f(\mathcal{X};\theta_t) 
        \overset{(\ref{eq:surr_ntk_learning})}{=} -\eta \, J^{(L),\sigma,\dot\sigma}(\mathx;\theta_t) \, J^{(L),\sigma,\Tilde{\sigma}} (\mathx;\theta_t)^\intercal \, g(\theta_t) \notag \\
        &\implies &&\frac{\del}{\del t} \llVert g(\theta_t) \rrVert_2^2 = \frac{\del}{\del t} \llangle g(\theta_t) , g(\theta_t) \rrangle 
        = 2 \llangle g(\theta_t), \frac{\del}{\del t} g(\theta_t) \rrangle \notag \\
        &&&= -2\eta \, \llangle g(\theta_t) , J^{(L),\sigma,\dot\sigma}(\mathx;\theta_t) \, J^{(L),\sigma,\Tilde{\sigma}} (\mathx;\theta_t)^\intercal \, g(\theta_t) \rrangle \notag \\
        &&&= -2\eta\, \llangle  g(\theta_t) , \hat{I}_t^{(L)}(\mathx,\mathx) \, g(\theta_t) \rrangle 
        = -2\eta \llangle g(\theta_t) , S^{(L)}_t g(\theta_t) \rrangle \notag \\
        &&\overset{(\star\star)}&{\leq} -2\eta \llangle g(\theta_t), \frac{1}{3} \lambda_{\mathrm{min}} \Id \, g(\theta_t) \rrangle 
        = -\frac{2}{3}\eta \lambda_{\mathrm{min}} \llVert g(\theta_t) \rrVert_2^2 \label{eq:proof_ownthm3_13},
    \end{alignat}
    with $S_t^{(L)} \coloneqq \frac{1}{2}\left(\hat{I}_t^{(L)}(\mathx,\mathx) + \hat{I}_t^{(L)}(\mathx,\mathx)^\intercal \right) $. To show \((\star\star)\), we will prove that for any \( 0 \not= z \in \R^{n_L \cdot n} \)
    \begin{equation}
        \llangle z , \left( S_t^{(L)} - \frac{1}{3}\lambda_\mathrm{min}\Id \right) z \rrangle \geq 0. \label{eq:proof_ownthm3_14}
    \end{equation}
    First, note that by assumption for all \( z \not= 0\)
    \begin{equation*}
        \llangle z , \left( S^{(L)} - \lambda_\mathrm{min}\Id \right) z \rrangle \geq 0.
    \end{equation*}
    Since the generalized NTK converges at initialization in probability, we can assume that with probability at least \( 1 - \delta_3 \) for any $\delta_3 > 0$ that
    \begin{equation*}
        \llVert \hat{I}_0^{(L)}(\mathx,\mathx) - I^{(L)}(\mathx,\mathx) \rrVert_2 \leq \frac{1}{3} \lambda_{\mathrm{min}}, 
    \end{equation*}
    for any \( n \geq n_1 \in \N \). For the symmetrizations, this yields
    \begin{align}
        \llVert S^{(L)}_0 - S^{(L)} \rrVert_2 &\leq \frac{1}{2} \llVert \hat{I}_0^{(L)}(\mathx,\mathx) - I^{(L)}(\mathx,\mathx) \rrVert_2 + \frac{1}{2} \llVert \hat{I}_0^{(L)}(\mathx,\mathx)^\intercal - I^{(L)}(\mathx,\mathx)^\intercal \rrVert_2 \notag \\
        &= \llVert \hat{I}_0^{(L)}(\mathx,\mathx) - I^{(L)}(\mathx,\mathx) \rrVert_2 \leq \frac{1}{3} \lambda_{\mathrm{min}}. \label{eq:proof_ownthm3_2}
    \end{align}
    Furthermore, for \( \theta_t \in B\left( \theta_0, C\right) \) it holds that
    \begin{align}
        &\llVert \hat{I}_0^{(L)}(\mathx,\mathx) - \hat{I}_t^{(L)}(\mathx,\mathx) \rrVert_2 \notag \\
        = &\llVert J^{(L),\sigma,\dot\sigma}(\mathx;\theta_0) \, J^{(L),\sigma,\Tilde{\sigma}}(\mathx;\theta_0)^\intercal - J^{(L),\sigma,\dot\sigma}(\mathx;\theta_t) \, J^{(L),\sigma,\Tilde{\sigma}}(\mathx;\theta_t)^\intercal \rrVert_2 \notag \\
        \leq &\llVert J^{(L),\sigma,\dot\sigma}(\mathx;\theta_0) - J^{(L),\sigma,\dot\sigma}(\mathx;\theta_t) \rrVert_2 \llVert J^{(L),\sigma,\Tilde{\sigma}}(\mathx;\theta_0) \rrVert_2 \notag \\
        + &\llVert J^{(L),\sigma,\dot\sigma}(\mathx;\theta_t) \rrVert_2 \llVert J^{(L),\sigma,\Tilde{\sigma}}(\mathx;\theta_0) - J^{(L),\sigma,\Tilde{\sigma}}(\mathx;\theta_t) \rrVert_2 \notag \\
        \leq &\frac{2 (K')^2}{\sqrt{n}} \llVert \theta_t - \theta_0 \rrVert_2 
        \leq \frac{6 (K')^2 K R_0}{\sqrt{n}} , \label{eq:proof_ownthm3_21}
    \end{align}
    with probability at least \( 1 - \delta_1 \) using Lemma \ref{lemma:jacobi_estimates} as before. With the same calculations as for Inequality \eqref{eq:proof_ownthm3_2}, this implies for the symmetrizations that
    \begin{equation}
        \llVert S_0^{(L)} - S_t^{(L)} \rrVert_2 \leq \llVert \hat{I}_0^{(L)}(\mathx,\mathx) - \hat{I}_t^{(L)}(\mathx,\mathx) \rrVert_2 \leq \frac{6 (K')^2 K R_0}{\sqrt{n}} \leq \frac{1}{3}\lambda_\mathrm{min} , \label{eq:proof_ownthm3_3}
    \end{equation}
    for all \( n \geq N \coloneqq \max\left\{ m_0, m_1, \left(\frac{\lambda_\mathrm{min}}{18 (K')^2 K R_0} \right)^2 \right\} \). Now Inequalities \eqref{eq:proof_ownthm3_14}, (\ref{eq:proof_ownthm3_2}) and (\ref{eq:proof_ownthm3_3}) give us
    \begin{align*}
        \llangle z, S_t^{(L)} z \rrangle 
        &= \llangle z, \left( S^{(L)} + S^{(L)}_0 - S^{(L)} + S^{(L)}_t - S^{(L)}_0 \right) z \rrangle \\
        &= \llangle z, S^{(L)} z \rrangle + \llangle z, \left( S^{(L)}_0 - S^{(L)} \right)z \rrangle  + \llangle z, \left( S^{(L)}_t - S^{(L)}_0 \right)z \rrangle \\
        \overset{\eqref{eq:proof_ownthm3_14}}&{\geq} \lambda_\mathrm{min} \llVert z \rrVert^2 - \left|\llangle z, \left( S^{(L)}_0 - S^{(L)} \right)z \rrangle\right| - \left|\llangle z, \left( S^{(L)}_t - S^{(L)}_0 \right)z \rrangle\right| \\
        \overset{\eqref{eq:proof_ownthm3_2} + \eqref{eq:proof_ownthm3_3}}&{\geq} \lambda_\mathrm{min} \llVert z \rrVert^2 - \frac{1}{3} \lambda_\mathrm{min} \llVert z \rrVert^2 - \frac{1}{3} \lambda_\mathrm{min} \llVert z \rrVert^2 = \frac{1}{3} \llangle z, \frac{1}{3} \lambda_\mathrm{min} \Id \, z \rrangle,
    \end{align*}
    and thus imply Inequality \eqref{eq:proof_ownthm3_14}. To summarize, it holds with probability at least \( 1 - \delta_2 - \delta_3\) for any \( n \geq N \) and \( \theta_t \in B\left( \theta_0, C\right) \) that
    \begin{align*}
        \llVert g(\theta_0) \rrVert_2^2 
        \overset{(\ref{eq:proof_ownthm3_12})}{\leq} R_0^2 \quad \text{and} \quad \frac{\del}{\del t} \llVert g(\theta_t) \rrVert_2^2 \overset{(\ref{eq:proof_ownthm3_13})}{\leq} -\frac{2}{3}\eta \lambda_{\mathrm{min}} \llVert g(\theta_t) \rrVert_2^2.
    \end{align*}
    Grönwall's inequality now implies
    \begin{equation*}
        \llVert g(\theta_t) \rrVert_2^2 \leq e^{-\frac{2}{3}\eta \lambda_\mathrm{min}t} \llVert g(\theta_0) \rrVert_2^2 \leq e^{-\frac{2}{3}\eta \lambda_\mathrm{min}t} R_0^2 .
    \end{equation*}
    We can now return to Inequality (\ref{eq:proof_ownthm3_1}) to obtain with probability at least \( 1 -\delta_1 - \delta_2 - \delta_3\)
    \begin{equation*}
        \frac{\del}{\del t} \lVert \theta_t - \theta_0 \rVert_2 \overset{(\ref{eq:proof_ownthm3_1})}{\leq} \eta K \lVert g(\theta_t) \rVert_2 \leq \eta K R_0 \, e^{-\frac{1}{3}\eta \lambda_\mathrm{min} t}.
    \end{equation*}
    Integrating the inequality on both sides yields for all \( \theta_t \in B(\theta_0,C )\)
    \begin{equation}
        \lVert \theta_t - \theta_0 \rVert_2 \leq \frac{3KR_0}{\lambda_\mathrm{min}} \left( 1 - e^{-\frac{1}{3}\eta\lambda_\mathrm{min}}t \right) < \frac{3 K R_0}{\lambda_\mathrm{min}} = C. \label{eq:proof_ownthm3_4}
    \end{equation}    
    Let \( t_1 \coloneqq \inf\left\{ t \colon \lVert \theta_t - \theta_0 \rVert_2 < C \right\} \). Now, if \( t_1 < \infty\), it holds 
    \begin{equation*}
        C = \lim_{t \uparrow t_1} \lVert \theta_t - \theta_0 \rVert_2 \overset{(\ref{eq:proof_ownthm3_4})}{ <} C.
    \end{equation*}
    This is a contradiction, so we conclude that \( t_1 = \infty \). In particular, Inequality (\ref{eq:proof_ownthm3_4}) holds for all \( t > 0 \). We can now repeat the calculations for Inequality (\ref{eq:proof_ownthm3_21}) with the Frobenius norm to finally obtain
    \begin{equation*}
        \llVert \hat{I}_0^{(L)}(\mathx,\mathx) - \hat{I}_t^{(L)}(\mathx,\mathx) \rrVert_F 
        \leq \frac{2 K^2}{\sqrt{n}} \llVert \theta_t - \theta_0 \rrVert_2 \leq \frac{6 K^3 R_0}{\lambda_\mathrm{min}} n^{-\frac{1}{2}}.
    \end{equation*}
    Therefore, if we choose \( \delta_1 = \delta_2 = \delta_3 = \frac{1}{3} \delta \), the desired inequality holds for any \( n \geq N \) with probability at least \( 1 - \delta \).
\end{proof}

\begin{lemma}[Based on Lemma 1 and Lemma 2 from \cite{lee2019wide}]
\label{lemma:jacobi_estimates}
    In the setting of Theorem \ref{theorem:gen3}, for any \( \delta > 0 \) there exists a \( K > 0 \) such that for any \( C > 0 \), with probability at least \( 1 - \delta \) it holds
    \begin{align*}
        & \llVert J^{(L),\sigma,\hat\sigma}(\mathx; \theta) - J^{(L),\sigma,\hat\sigma}(\mathx; \Tilde{\theta}) \rrVert_F \leq \frac{K}{\sqrt{n}} \, \llVert \theta - \Tilde{\theta} \rrVert_2 \quad \text{and} \\
        & \llVert J^{(L),\sigma,\hat\sigma}(\mathx; \theta) \rrVert_F \leq K ,
    \end{align*}
    for all \( \theta, \Tilde{\theta} \in B\left(\theta_0, C \right)\) and \( \hat\sigma \in \{ \dot\sigma, \Tilde{\sigma} \} \). Due to the equivalence of matrix norms, the same inequalities hold for the norm \( \llVert \,\cdot\, \rrVert_2 \) and some constant \( K' > 0 \).
\end{lemma}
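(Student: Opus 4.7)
The plan is to adapt the proofs of Lemmas~1 and 2 of \cite{lee2019wide} to the asymmetric setting, where $\dot\sigma$ is replaced by the bounded Lipschitz surrogate derivative $\hat\sigma \in \{\dot\sigma, \tilde\sigma\}$. The argument proceeds by induction on the network depth, with a preparatory step establishing high-probability norm bounds on the randomly initialized weights, biases, and hidden-layer activations, and a main step that controls the two claimed norms via the recursive formula for the quasi-Jacobian.

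For the preparatory step I first invoke Bai--Yin / Gaussian concentration: with probability at least $1-\delta$ over initialization, $\lVert W^{(l)}_0 \rVert_{\mathrm{op}} \leq C_W \sqrt{n}$ and $\lVert b^{(l)}_0 \rVert_2 \leq C_W \sqrt{n}$ hold for every layer simultaneously, using $n_l \asympropto n$ and a union bound over the finitely many layers. These bounds transfer to the ball $B(\theta_0, C)$ at the cost of an additive $C$, which for large $n$ is absorbed by an updated constant. Combining them with the linear envelope property of $\sigma$, an induction on $l$ shows that the hidden layer values satisfy $\lVert h^{(l)}(x;\theta) \rVert_2 \leq M_l \sqrt{n_l}$ uniformly on $B(\theta_0, C)$, and, using in addition Lipschitz continuity of $\sigma$, the perturbation bound $\lVert h^{(l)}(x;\theta) - h^{(l)}(x;\tilde\theta) \rVert_2 \leq L_l \sqrt{n_l} \cdot n^{-1/2} \lVert \theta - \tilde\theta \rVert_2$.

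The main step decomposes $J^{(L),\sigma,\hat\sigma}(x;\theta)$ by parameter layer. Unrolling Definition~\ref{def:quasi_jacobian}, the block corresponding to a weight $W^{(l')}_{ab}$ is, up to the factor $\sigma_w \sigma(h^{(l'-1)}_b)/\sqrt{n_{l'-1}}$, the $(k,a)$-entry of the product $\prod_{l=l'+1}^{L} \frac{\sigma_w}{\sqrt{n_{l-1}}} W^{(l)}\, \mathrm{diag}\bigl(\hat\sigma(h^{(l-1)}(x;\theta))\bigr)$, and the bias blocks are treated analogously. Under the Step-1 bounds and the boundedness of $\hat\sigma$, each factor of this product has operator norm $O(1)$, so the block has Frobenius norm $O(\sqrt{n_L})$ once one sums over the fixed output dimension $n_L$; the $O(n_{l'-1})$ growth of $\lVert\sigma(h^{(l'-1)})\rVert_2^2$ from Step~2 is exactly absorbed by the $1/n_{l'-1}$ prefactor. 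Summing over the finitely many layers $l'$ and over $\mathcal{X}$ yields $\lVert J^{(L),\sigma,\hat\sigma}(\mathcal{X};\theta) \rVert_F \leq K$. For the Lipschitz bound, the same product representation telescopes: writing the difference as a sum of products in which exactly one factor is replaced by its $\theta \leftrightarrow \tilde\theta$ difference, the Lipschitz estimate on $\hat\sigma \circ h^{(l)}$ from Step~2 and the direct bound $\lVert W^{(l)} - \tilde W^{(l)} \rVert_{\mathrm{op}}/\sqrt{n_{l-1}} \leq n^{-1/2} \lVert \theta - \tilde\theta \rVert_2$ each contribute exactly one factor of $n^{-1/2}$, giving $\lVert J^{(L),\sigma,\hat\sigma}(\mathcal{X};\theta) - J^{(L),\sigma,\hat\sigma}(\mathcal{X};\tilde\theta) \rVert_F \leq K n^{-1/2} \lVert \theta - \tilde\theta \rVert_2$.

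The main obstacle is not any single deep idea but the careful bookkeeping required to verify that the various $1/\sqrt{n_l}$ factors in the NTK parametrization combine with the $O(\sqrt{n})$ operator-norm scaling of the weights to produce constant-order Frobenius-norm and $n^{-1/2}$ Lipschitz bounds, in particular tracking which sums of squared entries are of order $n_l$ (absorbed by a prefactor) and which are $O(1)$. The asymmetric modification from \cite{lee2019wide} is mild: replacing $\dot\sigma$ by a bounded Lipschitz $\hat\sigma$ preserves every operator-norm and Lipschitz estimate used in their argument, which is precisely why both choices $\hat\sigma \in \{\dot\sigma, \tilde\sigma\}$ can be treated uniformly. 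The probability budget $\delta$ is split across the finitely many concentration events in Step~1 via a union bound, and the equivalence of matrix norms on a space of fixed dimension $d \cdot n_L$ transfers the Frobenius-norm bounds to the stated spectral-norm bounds with a possibly larger constant $K'$.
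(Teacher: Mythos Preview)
Your proposal is correct and follows the same approach as the paper, which does not give a self-contained argument at all: it simply defers to Section~G.2 of \cite{lee2019wide} for the case $\hat\sigma = \dot\sigma$ and then observes that, since $\tilde\sigma$ is bounded and Lipschitz and the quasi-Jacobian $J^{(L),\sigma,\tilde\sigma}$ obeys the same recursive formula as $J_\theta f$, the proof (based on the Gaussian conditioning technique of \cite{yang2019scaling}) carries over verbatim. Your sketch is in fact considerably more detailed than the paper's own treatment, spelling out the concentration step for the initial weights, the inductive bounds on the hidden-layer norms, the block decomposition of the quasi-Jacobian by parameter layer, and the telescoping argument for the Lipschitz estimate; this is precisely the content of Lemmas~1 and~2 in \cite{lee2019wide}, and your observation that replacing $\dot\sigma$ by any bounded Lipschitz $\hat\sigma$ preserves every operator-norm and Lipschitz estimate is exactly the single point the paper makes.
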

\begin{proof}
    For \( \hat\sigma = \dot\sigma \) and a different but equivalent parameterization of the network parameters \citep[Chapter F]{lee2019wide}, the proof can be found in Section G.2 of \cite{lee2019wide}. The surrogate derivative \( \Tilde\sigma \) is bounded and Lipschitz continuous. Also, $J^{(L),\sigma,\Tilde\sigma}(\mathx; \theta)$ and $J^{(L),\sigma,\dot\sigma}(\mathx; \theta) = J_\theta f(\mathx;\theta)$ share the same recursive formula. Therefore, the proof, which builds on the so-called \textit{Gaussian conditioning technique} \citep[Section E.1]{yang2019scaling}, should also hold for the surrogate derivative. 
\end{proof}

\begin{remark}[The analytic generalized NTK for surrogate gradient learning]
\label{remark:one_activ_function}
    Since in Theorem \ref{theorem:gen3} we consider only one activation function \( \sigma \) instead of two different ones \( \sigma_1, \sigma_2 \) as in Theorem \ref{theorem:gen1} and \ref{theorem:gen2}, it holds
    \begin{align*}
        \Sigma^{(L)}_{1,2} = \Sigma_1^{(L)} = \Sigma_2^{(L)} = \Sigma^{(L)}.
    \end{align*}
    The analytic generalized NTK in the setting of Theorem \ref{theorem:gen2} is thus given by
    \begin{align*}
        I^{(1)} = \Sigma^{(1)} \quad \text{and} \quad I^{(L+1)} = \Sigma^{(L+1)} + I^{(L)} \cdot \Tilde\Sigma^{(L+1)}_{1,2}.
    \end{align*}
    We therefore still obtain an asymmetric kernel due to the contribution of \( \Tilde\Sigma^{(L)}_{1,2} \).
\end{remark}

\begin{remark}[Positive definiteness of the generalized NTK]
    In Theorem \ref{theorem:gen3} we require that the matrix \( I^{(L)}(\mathx, \mathx) \) be positive definite. Equivalently, its symmetrization,
    \begin{equation*}
        S^{(L)} = \frac{1}{2} \left( I^{(L)}(\mathx, \mathx) + I^{(L)}(\mathx, \mathx)^\intercal \right),
    \end{equation*}
    should be positive definite. For applications this can be checked numerically. More generally, it would be interesting to know whether the symmetric kernel given by
    \begin{equation*}
        S^{(L)}(x,x') = \frac{1}{2} \left( I^{(L)}(x,x') + I^{(L)}(x',x) \right)
    \end{equation*}
    is positive definite. One could try to approach this question inductively. However, this leads to problems in the induction step. We want to present a different ansatz, which reduces the question to a question about \( \Tilde\Sigma^{(L)}_{1,2} \). We use the closed form of the analytic NTK and the symmetry of \( \Sigma^{(l)} \) for all \( l \in \N \) to see that
    \begin{align*}
        & S^{(L)}(x,x') = \frac{1}{2} \left(I^{(L)}(x,x') + I^{(L)}(x',x)\right) \\
        = &\frac{1}{2} \left(\left( \sum_{k=1}^L \Sigma^{(k)}(x,x') \cdot \prod_{l=k}^{L-1} \Tilde\Sigma_{1,2}^{(l+1)}(x,x') \right) +\left( \sum_{k=1}^L \Sigma^{(k)}(x',x) \cdot \prod_{l=k}^{L-1} \Tilde\Sigma_{1,2}^{(l+1)}(x',x) \right)\right) \\
        = & \sum_{k=1}^L \Sigma^{(k)}(x,x') \cdot \frac{1}{2} \left(\left( \prod_{l=k}^{L-1} \Tilde\Sigma_{1,2}^{(l+1)}(x,x') \right) + \left( \prod_{l=k}^{L-1} \Tilde\Sigma_{1,2}^{(l+1)}(x',x) \right)\right) .
    \end{align*}
    This defines a symmetric positive definite kernel if $\Sigma^{(L)} $ is positive definite (see \citet[Section A.4]{jacot2018neural} for comparison) and the symmetrized kernels,
    \begin{equation*}
        \frac{1}{2} \left(\left( \prod_{l=k}^{L-1} \Tilde\Sigma_{1,2}^{(l+1)}(x,x') \right) + \left( \prod_{l=k}^{L-1} \Tilde\Sigma_{1,2}^{(l+1)}(x',x) \right)\right),
    \end{equation*}
    are positive semi-definite for all \( k = 1 ,\dots, L-1 \).
\end{remark}

\subsection{The analytic NTK for surrogate gradient learning with sign activation function}
\label{subsec:surrogate_ntk_sign}

We would like to proceed as in Section \ref{subsubsec:gradientflow_for_infwidth} and replace the empirical generalized NTK in Equation \eqref{eq:surr_ntk_learning}, the equation defining surrogate gradient learning, with the analytic generalized NTK obtained by Theorem \ref{theorem:gen3}. Since the activation functions considered in the above section are Lipschitz continuous with bounded and Lipschitz continuous derivative, we will again approximate the sign function and its distributional derivative with the error function as in Section \ref{section:sign_activation}. The results will not depend on the approximation of the weak derivative of the sign function. This approach can only lead to a useful result if the resulting kernel is not singular. We will check this in the following.

We choose activation function \( \erf_m(z) = \erf(m \cdot z) \), \( m \in \N \), with surrogate derivative \( \Tilde{\sigma}(z) \). We will also consider the special case \( \Tilde{\sigma}(z) = \dot\erf(z) \). As discussed in Remark \ref{remark:one_activ_function} and with the final results of Section \ref{subsec:erf_activation}, this immediately yields
\begin{align}
    & \Sigma_\infty^{(1)}(x,y) \coloneqq \lim_{m \to \infty} \Sigma_m^{(1)}(x,y)
    \overset{(\ref{eq:ntk_sign_formula_01})}{=} \frac{\sigma_w^2}{n_0} \langle x,y \rangle + \sigma_b^2 \quad \text{and} \notag \\
    & \Sigma_\infty^{(L+1)}(x,y) \coloneqq\lim_{m \to \infty} \Sigma_m^{(L+1)}(x,y)
    \overset{(\ref{eq:conv_sign_Sigma})}{=} \frac{2\sigma_w^2}{\pi} \arcsin\left( \frac{\Sigma_\infty^{(L)}(x,y)}{\sqrt{\Sigma_\infty^{(L)}(x,x)} \sqrt{\Sigma_\infty^{(L)}(y,y)}} \right) + \sigma_b^2 . \label{eq:ntk_asym_formula_0}
\end{align}
Here, we have to assume that \( \sigma_b^2 > 0 \) or \( x,y \not= 0 \) to ensure that \( \Sigma_\infty^{(1)}(x,x), \Sigma_\infty^{(1)}(y,y) \not = 0 \). This has already been discussed after Equation (\ref{eq:ntk_sign_formula_1_2}).

\subsubsection{The derivative of the error function as surrogate derivative}
\label{subsubsec:surrogate_erf}
Next, we want to calculate \( \Tilde{\Sigma}_{1,2;\infty}^{(L)}(x,y) \coloneqq \lim_{m \to \infty} \Tilde{\Sigma}_{1,2;m}^{(L)}(x,y) \). We will first consider the case \( \Tilde{\sigma} = \dot\erf \), for which we can use the already established tools. In particular, we will discuss the differences to the results of Section \ref{section:sign_activation}.

It holds
\begin{equation*}
    \Tilde{\Sigma}_{1,2;m}^{(L)}(x,y) = \sigma_w^2 \, \EW[\dot\erf_m(Z_1^m) \, \dot\erf(Z_2^m)] \quad \text{for } L \geq 2,
\end{equation*}
where 
\begin{align*}
    (Z_1^m,Z_2^m) &\sim \Normal\left( 0, \begin{psmallmatrix}
        \Sigma_{1;m}^{(L-1)}(x,x) & \Sigma_{1,2;m}^{(L-1)}(x,y) \\
        \Sigma_{1,2;m}^{(L-1)}(x,y) & \Sigma_{2;m}^{(L-1)}(y,y)
    \end{psmallmatrix} \right) \\
    &= \Normal\left( 0, \begin{psmallmatrix}
        \Sigma^{(L-1)}_m(x,x) & \Sigma^{(L-1)}_m(x,y) \\
        \Sigma^{(L-1)}_m(x,y) & \Sigma^{(L-1)}_m(y,y)
    \end{psmallmatrix} \right) 
    = \Normal\left( 0, \Sigma^{(L-1)}_{m;x,y} \right) ,
\end{align*}
again using Remark \ref{remark:one_activ_function} to simplify the covariance matrix. The notation \( \Sigma_{1,2;m}^{(L-1)} \), which comes from Theorem \ref{theorem:gen1} in combination with the scaling variable \( m \) of the activation function, should not be confused with \( \Sigma^{(L-1)}_{m;x,y} \), which is a shorthand notation for the Gram matrix \( \Sigma_{m}^{(L-1)}(\{x,y\},\{x,y\}) \). We denote \( e_1 = \begin{psmallmatrix}
    1 \\ 0
\end{psmallmatrix} \), \( e_2 = \begin{psmallmatrix}
    0 \\ 1
\end{psmallmatrix} \) and \( U = \begin{psmallmatrix}
    Z_1^m \\ Z_2^m
\end{psmallmatrix} \). This yields
\begin{alignat}{1}
    &\Tilde{\Sigma}_{1,2;m}^{(L)}(x,y) 
    = \sigma_w^2 \, \EW[\dot\erf_m(Z_1^m) \, \dot\erf(Z_2^m)]
    = \sigma_w^2 \, \EW\left[ m \, \dot\erf\left( (m \cdot e_1)^\intercal U \right) \, \dot\erf\left( e_2^\intercal U\right) \right] \notag  \\
    \overset{(\star)}{=} &\frac{4\sigma_w^2}{\pi} m \left( \left(1 + m^2 \cdot 2 e_1^\intercal \Sigma^{(L-1)}_{m;x,y}) e_1\right) \left(1 + 2 e_2^\intercal \Sigma^{(L-1)}_{m;x,y}) e_2\right) - \left( 2m \cdot e_1^\intercal \Sigma^{(L-1)}_{m;x,y} e_2 \right)^2 \right)^{-\frac{1}{2}} \notag \\
    = &\frac{2\sigma_w^2}{\pi} \left( \frac{1}{4m^2} \left( \left( 1 + 2m^2 \Sigma_m^{(L-1)}(x,x) \right)\left( 1 + 2\Sigma_m^{(L-1)}(y,y) \right) - 4 m^2 \Sigma_m^{(L-1)}(x,y)^2 \right)  \right)^{-\frac{1}{2}} \notag  \\
    = &\frac{2\sigma_w^2}{\pi} \left( \left( \frac{1}{2m^2} + \Sigma_m^{(L-1)}(x,x) \right)\left( \frac{1}{2} + \Sigma_m^{(L-1)}(y,y) \right) - \Sigma_m^{(L-1)}(x,y)^2   \right)^{-\frac{1}{2}} \notag  \\
    \xrightarrow{m \to \infty} &\frac{2\sigma_w^2}{\pi} \left( \Sigma_\infty^{(L-1)}(x,x) \left( \frac{1}{2} + \Sigma_\infty^{(L-1)}(y,y) \right) - \Sigma_\infty^{(L-1)}(x,y)^2   \right)^{-\frac{1}{2}} \notag  \\
    = &\frac{2\sigma_w^2}{\pi} \left( \left| \Sigma_{\infty;x,y}^{(L-1)} \right| + \frac{1}{2}\Sigma_\infty^{(L-1)}(x,x) \right)^{-\frac{1}{2}} \notag  \\
    = &\begin{cases}
        \frac{2\sigma_w^2}{\pi} \left( \left| \Sigma_{\infty;x,y}^{(L-1)} \right| + \frac{\sigma_w^2 \lVert x \rVert^2}{2 n_0} + \frac{\sigma_b^2}{2} \right)^{-\frac{1}{2}} &\text{for } L = 2, \\
        \frac{2\sigma_w^2}{\pi} \left( \left| \Sigma_{\infty;x,y}^{(L-1)} \right| + \frac{\sigma_w^2 + \sigma_b^2}{2}\right)^{-\frac{1}{2}} &\text{for } L \geq 3.
    \end{cases} \notag  \\
    \eqqcolon &\Tilde{\Sigma}_{1,2;\infty}^{(L)}(x,y) ,\label{eq:ntk_asym_formula_2}
\end{alignat}
using Lemma \ref{lemma:T_dot} in Equation \((\star)\). For the penultimate equality we used Equation (\ref{eq:ntk_sign_formula_01}) and Equation (\ref{eq:ntk_sign_formula_1}). Compared to Equation (\ref{eq:ntk_sign_formula_1_4}),
\begin{align}
    \dot\Sigma^{(L)}_{m}(x,y) &\xrightarrow{m \to \infty} \frac{2\sigma_w^2}{\pi} \left(\Sigma^{(L-1)}_\infty(x,x) \cdot \Sigma^{(L-1)}_\infty(y,y) - \Sigma_\infty^{(L-1)}(x,y)^2 \right) = \frac{2\sigma_w^2}{\pi} \left| \Sigma_{\infty;x,y}^{(L-1)} \right|^{-\frac{1}{2}}, \tag{\ref{eq:ntk_sign_formula_1_4}}
\end{align}
which in fact holds for both \( L = 2 \) and \( L \geq 3 \), an additional term appeared in Equation \eqref{eq:ntk_asym_formula_2}. It always holds \( \sigma_w^2 + \sigma_b^2 > 0 \) and it holds \( \sigma_w^2 \lVert x \rVert^2 / n_0 + \sigma_b^2 > 0 \) if \( \sigma_b^2 > 0 \) or \( x \not=0 \). As discussed earlier, we always assume that $x \not= 0$ is satisfied. It follows that this asymmetric NTK is not singular, since
\begin{align}
    \Tilde{\Sigma}_{1,2;\infty}^{(L)}(x,x) 
    &= \frac{2\sigma_w^2}{\pi} \left( \left| \Sigma_{\infty;x,x}^{(L-1)} \right| + \frac{1}{2}\Sigma_\infty^{(L-1)}(x,x) \right)^{-\frac{1}{2}} 
    \notag \\
    &= \frac{2\sigma_w^2}{\pi} \left( 0 + \frac{1}{2}\Sigma_\infty^{(L-1)}(x,x) \right)^{-\frac{1}{2}}
    = \sqrt{2} \, \frac{2\sigma_w^2}{\pi} \left(\Sigma_\infty^{(L-1)}(x,x)\right)^{-\frac{1}{2}} \in \R . \label{eq:ntk_asym_formula_3}
\end{align}
Note that this is reminiscent of the constant factor depending on \( x \) in the asymptotics of \( \dot\Sigma_m^{(L)}(x,x) \) as \( m \to \infty \), given by (\ref{eq:ntk_sign_formula_1_5}) and (\ref{eq:ntk_sign_formula_1_6}),
\begin{align*}
    \dot\Sigma_m^{(L)}(x,x) \sim \frac{2\sigma_w^2}{\pi} m \left( \Sigma_\infty^{(L-1)}(x,x) \right)^{-\frac{1}{2}} . 
\end{align*}

According to Remark \ref{remark:one_activ_function} it holds
\begin{align*}
    I_m^{(L)}(x,y) = \Sigma_m^{(L)}(x,y) + I_m^{(L-1)}(x,y) \cdot \Tilde\Sigma_{1,2;m}(x,y) .
\end{align*}
Since \( \Sigma_m^{(L)}(x,y) \) and \( \Tilde\Sigma_{1,2;m}^{(L)} \) are continuous functions of the entries of the matrix \( \Sigma^{(L-1)}_{m;x,y} \), it follows by induction using Equations \eqref{eq:ntk_asym_formula_0} and \eqref{eq:ntk_asym_formula_2} that the limit of the analytic NTK is well defined for \( m \to \infty\). We can write
\begin{equation*}
    I_\infty^{(L)}(x,y) = \lim_{m \to \infty} I_m^{(L)}(x,y) .
\end{equation*}
Thus, by approximating the sign function with error functions, we found that the analytic NTK for surrogate gradient learning with sign function and error function as surrogate derivative is well-defined and non-singular as a kernel on \( \R^{n_0} \times \R^{n_0} \). Furthermore, comparing this NTK with the NTK we derived in Section \ref{section:sign_activation}, the term \( \Sigma_\infty^{(L)} \) does not change. This is a direct consequence of not replacing the activation function with a surrogate activation function. In this sense, we inherit more properties with this approach than by replacing not only the derivative but the entire activation function including its derivative with a surrogate. Comparing Equation (\ref{eq:ntk_sign_formula_1_4}) with Equation (\ref{eq:ntk_asym_formula_2}), we see that \( \Tilde{\Sigma}_{1,2;\infty}^{(L)}(x,y) \) can be obtained from \( \dot\Sigma_\infty^{(L)}(x,y) \) by adding a regularizing term depending on \( x \), \( \Sigma_\infty^{(L-1)}(x,x)/2 \).

\subsubsection{General surrogate derivative}
\label{subsubsec:surrogate_arbitrary}

Now we turn to the general case with a general surrogate derivative \( \Tilde{\sigma} \). Similar to before, for \( m \in \N \cup \{\infty \}\) we denote
\begin{align*}
    (Z_1^m,Z_2^m) &\sim \Normal\left( 0, \begin{psmallmatrix}
        \Sigma_{1;m}^{(L-1)}(x,x) & \Sigma_{1,2;m}^{(L-1)}(x,y) \\
        \Sigma_{1,2;m}^{(L-1)}(x,y) & \Sigma_{2;m}^{(L-1)}(y,y)
    \end{psmallmatrix} \right) \\
    &= \Normal\left( 0, \begin{psmallmatrix}
        \Sigma^{(L-1)}_m(x,x) & \Sigma^{(L-1)}_m(x,y) \\
        \Sigma^{(L-1)}_m(x,y) & \Sigma^{(L-1)}_m(y,y)
    \end{psmallmatrix} \right) 
    = \Normal\left( 0, \Sigma^{(L-1)}_{m;x,y} \right).
\end{align*}

With \( u = \begin{psmallmatrix}
    z_1 \\ z_2
\end{psmallmatrix} \) and for invertible \( \Sigma = \Sigma_{\infty;x,y}^{(L-1)} \), it holds for \( L \geq 2 \)
\begin{alignat}{2}
    &&&\Tilde{\Sigma}_{1,2;\infty}^{(L)}(x,y) = \lim_{m \to \infty} \Tilde{\Sigma}_{1,2;m}^{(L)}(x,y) 
    = \lim_{m \to \infty} \sigma_w^2 \, \EW\left[ \dot\erf_m(Z_1^m) \, \Tilde{\sigma}(Z_2^m) \right] \notag \\
    \overset{(\star)}&{=} && \sigma_w^2 \, \EW\left[ 2 \delta_0(Z_1^\infty) \, \Tilde{\sigma}(Z_2^\infty) \right] 
    = 2\sigma_w^2 \int_{\R^2} \frac{1}{2\pi} \left| \Sigma \right|^{-\frac{1}{2}} \delta_0(z_1) \cdot \Tilde{\sigma}(z_2) \cdot e^{-\frac{1}{2} u^\intercal \Sigma^{-1} u } \,\del u \notag \\
    &= && \sigma_w^2 \, \sqrt{\frac{2}{\pi}} \, \Sigma_{1,1}^{-\frac{1}{2}} \int_\R \frac{1}{\sqrt{2\pi}} \sqrt{ \frac{\Sigma_{1,1}}{|\Sigma|}} \cdot \Tilde{\sigma}(z_2) \cdot e^{-\frac{1}{2}\frac{\Sigma_{1,1}}{|\Sigma|} z_2^2} \,\del z_2 \notag \\
    &= && \sigma_w^2 \, \sqrt{\frac{2}{\pi}}  \left( \Sigma_{\infty}^{(L-1)}(x,x) \right)^{-\frac{1}{2}} \EW_{Y \sim \Normal\big(0,\big|\Sigma^{(L-1)}_{\infty;x,y}\big|/\Sigma_{\infty}^{(L-1)}(x,x) \big)}[\Tilde{\sigma}(Y)] \label{eq:ntk_asym_formula_1},
\end{alignat}
where Equation \((\star)\) seems natural, but requires further reasoning. We will prove the above rigorously in Lemma \ref{lemma:surr_ntk}. For now, we assume that the equality holds. Since \( \Tilde{\sigma} \) is bounded and continuous, this yields
\begin{align*}
    &\lim_{x \to y} \Tilde{\Sigma}_{1,2;\infty}^{(L)}(x,y) \\
    = &\lim_{\big|\Sigma^{(L-1)}_{\infty;x,y}\big| \to 0} \sigma_w^2 \, \sqrt{\frac{2}{\pi}}  \left( \Sigma_{\infty}^{(L-1)}(x,x) \right)^{-\frac{1}{2}} \EW_{Y \sim \Normal\left(0,\big|\Sigma^{(L-1)}_{\infty;x,y}\big|/\Sigma_{\infty}^{(L-1)}(x,x) \right)}[\Tilde{\sigma}(Y)] \\
    = &\sigma_w^2 \, \sqrt{\frac{2}{\pi}}  \left( \Sigma_{\infty}^{(L-1)}(x,x) \right)^{-\frac{1}{2}} \Tilde{\sigma}(0).
\end{align*}
This agrees with the fact that
\begin{align}
    \Tilde{\Sigma}_{1,2;\infty}^{(L)}(x,x) 
    = &\lim_{m \to \infty} \sigma_w^2 \, \EW\left[ \dot\erf_m(Z_1^m) \, \Tilde{\sigma}(Z_1^m) \right] 
    = \sigma_w^2 \, \EW\left[ 2\delta_0(Z_1^\infty) \, \Tilde{\sigma}(Z_1^\infty) \right] \label{eq:ntk_asym_formula_1_1} \\ 
    = &2 \sigma_w^2 \frac{1}{\sqrt{2\pi}} \left( \Sigma_\infty^{(L-1)}(x,x) \right)^{-\frac{1}{2}} \Tilde{\sigma}(0) 
    = \sigma_w^2 \, \sqrt{\frac{2}{\pi}} \left( \Sigma_{\infty}^{(L-1)}(x,x) \right)^{-\frac{1}{2}} \Tilde{\sigma}(0), \notag
\end{align}
where we again assumed an equality very similar to Equation \((\star)\). It is easy to check that Equations (\ref{eq:ntk_asym_formula_2}) and (\ref{eq:ntk_asym_formula_3}) can be recovered by inserting \( \Tilde{\sigma} = \dot\erf \) into the derived formulas.

We now prove the missing part of Equation (\ref{eq:ntk_asym_formula_1}).

\begin{lemma}
\label{lemma:surr_ntk}
    Let \( \Tilde{\sigma} \) be a bounded and continuous function and \( ((Z_1^m, Z_2^m))_{m \in \N} \) a sequence of random variables, \( (Z_1^m, Z_2^m) \sim \Normal(0,\Sigma^m) \). If the covariance matrices are invertible and converge to an invertible matrix, \( \Sigma^m \to \Sigma^{\infty} \in \R^{2\times 2}\) as \( m \to \infty \), then it holds
    \begin{equation}
        \lim_{m \to \infty}  \EW\left[ \dot\erf_m(Z_1^m) \, \Tilde{\sigma}(Z_2^m) \right] = \sqrt{\frac{2}{\pi}} \left( \Sigma_{1,1}^\infty \right)^{-\frac{1}{2}} \EW_{Y \sim \Normal\big(0,\big|\Sigma^\infty\big|/\Sigma^{\infty}_{1,1} \big)}[\Tilde{\sigma}(Y)] . \label{eq:ownlem_3_1}
    \end{equation}
    If \( Z_1^m = Z_2^m \) for all \(m \in \N\) so that the covariance matrices are given by \( \Sigma^m = \begin{psmallmatrix}
        \Sigma_1^m & \Sigma_1^m \\
        \Sigma_1^m & \Sigma_1^m
    \end{psmallmatrix}\), and if \( \Sigma_1^m \to \Sigma_1^\infty \not = 0 \) as \( m \to \infty \), then it holds
    \begin{equation}
        \lim_{m \to \infty}  \EW\left[ \dot\erf_m(Z_1^m) \, \Tilde{\sigma}(Z_2^m) \right] = \sqrt{\frac{2}{\pi}} \left( \Sigma_1^\infty \right)^{-\frac{1}{2}} \Tilde{\sigma}(0). \label{eq:ownlem_3_2}
    \end{equation}
\end{lemma}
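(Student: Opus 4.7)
The plan is to write the expectation as an explicit Gaussian integral and then perform the change of variables $u_1 = m z_1$, which turns the exploding factor $\dot\erf_m(z_1) = (2m/\sqrt{\pi}) e^{-m^2 z_1^2}$ into the fixed, $m$-independent function $(2/\sqrt{\pi}) e^{-u_1^2}$, so that the dependence on $m$ is pushed into the remaining factors, which converge pointwise and are uniformly bounded. Dominated convergence then yields the claim.

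In more detail, for the first part I would factor the joint density by conditioning: let $\mu_m = \Sigma^m_{12}/\Sigma^m_{11}$ and $\tau_m^2 = |\Sigma^m|/\Sigma^m_{11}$, and write
\begin{align*}
    \EW\bigl[\dot\erf_m(Z_1^m)\,\Tilde\sigma(Z_2^m)\bigr]
    = \int_\R \dot\erf_m(z_1)\,\varphi_{\Sigma^m_{11}}(z_1)\, g_m(z_1)\, \del z_1,
\end{align*}
where $\varphi_{\sigma^2}$ denotes the $\Normal(0,\sigma^2)$ density and $g_m(z_1) = \EW_{Y\sim\Normal(0,1)}[\Tilde\sigma(\mu_m z_1 + \tau_m Y)]$. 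Now substitute $u_1 = m z_1$ to obtain
\begin{align*}
    \EW\bigl[\dot\erf_m(Z_1^m)\,\Tilde\sigma(Z_2^m)\bigr]
    = \int_\R \frac{2}{\sqrt{\pi}} e^{-u_1^2}\, \varphi_{\Sigma^m_{11}}(u_1/m)\, g_m(u_1/m)\,\del u_1.
\end{align*}
Since $\Sigma^m \to \Sigma^\infty$ with $\Sigma^\infty$ invertible and $\Sigma^\infty_{11}>0$, the quantities $\Sigma^m_{11}, \mu_m, \tau_m$ converge to their infinite-$m$ counterparts and in particular $\Sigma^m_{11}$ is bounded away from $0$ and $\infty$; thus $\varphi_{\Sigma^m_{11}}(u_1/m)$ is bounded uniformly in $m$ and $u_1$, while $|g_m|\le \sup|\Tilde\sigma|<\infty$ by assumption. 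Hence the integrand is dominated by a multiple of $e^{-u_1^2}$.

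Pointwise for each $u_1\in\R$, continuity of $\varphi$ and $\Tilde\sigma$ together with the convergence $(\mu_m u_1/m, \tau_m) \to (0,\tau_\infty)$ gives $\varphi_{\Sigma^m_{11}}(u_1/m) \to (2\pi \Sigma^\infty_{11})^{-1/2}$ and $g_m(u_1/m) \to \EW_{Y\sim\Normal(0,\tau_\infty^2)}[\Tilde\sigma(Y)]$, the latter by dominated convergence applied to the inner expectation. Dominated convergence in $u_1$ then yields
\begin{align*}
    \lim_{m\to\infty} \EW\bigl[\dot\erf_m(Z_1^m)\,\Tilde\sigma(Z_2^m)\bigr]
    &= \frac{1}{\sqrt{2\pi \Sigma^\infty_{11}}} \,\EW_{Y\sim\Normal(0,\tau_\infty^2)}[\Tilde\sigma(Y)]
      \int_\R \frac{2}{\sqrt{\pi}} e^{-u_1^2}\,\del u_1 \\
    &= \sqrt{\tfrac{2}{\pi}}\, (\Sigma^\infty_{11})^{-1/2}\,\EW_{Y\sim\Normal(0,|\Sigma^\infty|/\Sigma^\infty_{11})}[\Tilde\sigma(Y)],
\end{align*}
which is Equation (\ref{eq:ownlem_3_1}). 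For the degenerate case $Z_1^m=Z_2^m$, the same substitution gives $\int_\R (2/\sqrt{\pi}) e^{-u_1^2}\,\Tilde\sigma(u_1/m)\,\varphi_{\Sigma_1^m}(u_1/m)\,\del u_1$, and since $\Tilde\sigma(u_1/m)\to \Tilde\sigma(0)$ by continuity and $\varphi_{\Sigma_1^m}(u_1/m)\to(2\pi\Sigma_1^\infty)^{-1/2}$, dominated convergence yields Equation (\ref{eq:ownlem_3_2}).

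The main technical point — and the step that requires care — is justifying dominated convergence uniformly in $m$: one must use both the assumed invertibility of $\Sigma^\infty$ (so the variances stay bounded below and $\tau_m$ stays bounded) and the boundedness of $\Tilde\sigma$. Without boundedness of $\Tilde\sigma$ the inner expectation $g_m(u_1/m)$ would require additional integrability bounds; the proof as outlined uses only continuity and boundedness, matching exactly the hypotheses of the lemma.
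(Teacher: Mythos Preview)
Your argument is correct and complete. The approach differs from the paper's in the way the limiting integral is handled after the substitution $u_1 = m z_1$. You first factor the joint density via the conditional law $Z_2^m \mid Z_1^m = z_1 \sim \Normal(\mu_m z_1,\tau_m^2)$, push the substitution through, and then apply dominated convergence directly, using that $\varphi_{\Sigma^m_{11}}$ and $g_m$ are uniformly bounded. The paper instead keeps the two-dimensional Gaussian density intact, absorbs the factor $e^{-m^2 z_1^2}$ into the quadratic form to obtain a new precision matrix $B^m$, and then reads off the integral as $\EW_{(Y_1,Y_2)\sim\Normal(0,(B^m)^{-1})}[\Tilde\sigma(Y_2)]$; the limit is taken by noting $B^m \to B^\infty$ and invoking convergence in distribution of Gaussians via characteristic functions. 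Your route is more elementary in that it avoids the weak-convergence step entirely and makes the role of the hypotheses (boundedness of $\Tilde\sigma$, $\Sigma^\infty_{11}>0$) explicit in the dominating function; the paper's route is more algebraic and stays closer to the matrix structure, but both rely on the same change of variables and arrive at the same conditional-variance $|\Sigma^\infty|/\Sigma^\infty_{11}$ in the limiting expectation.
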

\begin{proof}
    We begin with the case of invertible covariance matrices. Again denoting \( u = \begin{psmallmatrix}
    z_1 \\ z_2
    \end{psmallmatrix} \), it holds by assumption
    \begin{align}
        & \EW\left[ \dot\erf_m(Z_1^m) \, \Tilde{\sigma}(Z_2^m) \right] 
        = \int_{\R^2} \frac{1}{2\pi} \big| \Sigma^m \big|^{-\frac{1}{2}} \cdot \Tilde{\sigma}(z_2) \cdot \frac{2m}{\sqrt{\pi}}e^{-m^2 z_1^2} \cdot e^{-\frac{1}{2} u^\intercal \left( \Sigma^m \right)^{-1} u} \, \del u \notag \\
        = & \int_{\R^2} \frac{1}{2\pi} \big| \Sigma^m \big|^{-\frac{1}{2}} \cdot \Tilde{\sigma}(z_2) \cdot \frac{2}{\sqrt{\pi}} \cdot e^{-\frac{1}{2} \left( \begin{psmallmatrix}
            z_1/m \\ z_2
        \end{psmallmatrix}^\intercal \left( \Sigma^m \right)^{-1} \begin{psmallmatrix}
            z_1/m \\ z_2
        \end{psmallmatrix} + 2 u^\intercal e_1 e_1^\intercal u \right) } \, \del u \notag \\
        = & \int_{\R^2} \frac{1}{2\pi} \big| \Sigma^m \big|^{-\frac{1}{2}} \cdot \Tilde{\sigma}(z_2) \cdot \frac{2}{\sqrt{\pi}} \cdot e^{-\frac{1}{2} u^\intercal B^m u } \, \del u, \label{eq:proof_ownlem_3_1}
    \end{align}
    for
    \begin{align}
        B^m \coloneqq \frac{1}{|\Sigma^m|}\begin{pmatrix}
            \Sigma_{2,2}^m / m^2 & -\Sigma_{1,2}^m /m \\
            -\Sigma_{1,2}^m /m & \Sigma_{1,1}^m
        \end{pmatrix}
        + \begin{pmatrix}
            2 & 0 \\
            0 & 0
        \end{pmatrix} \xrightarrow{m \to \infty} \begin{pmatrix}
            2 & 0 \\
            0 & \Sigma^\infty_{1,1} / |\Sigma^\infty|
        \end{pmatrix} \eqqcolon B^\infty. \label{eq:proof_ownlem_3_1_1}
    \end{align}
    The determinant of \( B^m \) is given by
    \begin{align*}
        |B^m| = \left( \frac{\Sigma_{2,2}^m}{m^2|\Sigma^m|} + 2 \right)\frac{\Sigma_{1,1}^m}{|\Sigma^m|} - \frac{\left(-\Sigma_{1,2}^m\right)^2}{m^2|\Sigma^m|^2} = \frac{1}{m^2|\Sigma^m|} + 2 \frac{\Sigma_{1,1}^m}{|\Sigma^m|} .
    \end{align*}
    This now yields
    \begin{align*}
        (\ref{eq:proof_ownlem_3_1}) 
        & = \frac{2}{\sqrt{\pi}} |\Sigma^m|^{-\frac{1}{2}} \Big|\left(B^m\right)^{-1}\Big|^{\frac{1}{2}} \int_{\R^2} \frac{1}{2\pi} \Big|\left(B^m\right)^{-1}\Big|^{-\frac{1}{2}} \cdot \Tilde{\sigma}(z_2) \cdot e^{-\frac{1}{2}u^\intercal B^m u} \,\del u \\
        & = \frac{2}{\sqrt{\pi}} |\Sigma^m|^{-\frac{1}{2}} |B^m|^{-\frac{1}{2}} \EW_{(Y_1^m,Y_2^m) \sim \Normal\big(0,(B^m)^{-1}\big)}[\Tilde{\sigma}(Y_2^m)].
    \end{align*}
    The continuity of matrix inversion implies \( \big( B^m \big)^{-1} \to \big( B^\infty \big)^{-1} \) as $m \to \infty$. The characteristic functions of finite-dimensional Gaussian random variables are fully defined by their means and covariance matrices. Thus, the convergence of the covariance matrices implies convergence of the characteristic functions, which in turn implies convergence in distribution. Since \( \Tilde{\sigma} \) is continuous and bounded and the determinant is continuous, we obtain
    \begin{align*}
        & \frac{2}{\sqrt{\pi}}|\Sigma^m|^{-\frac{1}{2}} |B^m|^{-\frac{1}{2}} \EW_{(Y_1^m,Y_2^m) \sim \Normal\big(0,(B^m)^{-1}\big)}[\Tilde{\sigma}(Y_2^m)] \\
        \xrightarrow{m \to \infty} & \frac{2}{\sqrt{\pi}} |\Sigma^\infty|^{-\frac{1}{2}} |B^\infty|^{-\frac{1}{2}} \EW_{(Y_1^\infty,Y_2^\infty) \sim \Normal\big(0,(B^\infty)^{-1}\big)}[\Tilde{\sigma}(Y_2^\infty)] \\
        \overset{\eqref{eq:proof_ownlem_3_1_1}}&{=} \frac{2}{\sqrt{\pi}} |\Sigma^\infty|^{-\frac{1}{2}} \left( \frac{2 \Sigma_{1,1}^\infty}{|\Sigma^\infty|} \right)^{-\frac{1}{2}} \EW_{Y \sim \Normal\big(0,|\Sigma^\infty| / \Sigma_{1,1}^\infty \big)}[\Tilde{\sigma}(Y)]  \\
        = & \sqrt{\frac{2}{\pi}} \left( \Sigma_{1,1}^\infty \right)^{-\frac{1}{2}} \EW_{Y \sim \Normal\big(0,|\Sigma^\infty| / \Sigma_{1,1}^\infty \big)}[\Tilde{\sigma}(Y)].
    \end{align*}
    This proves Equation \eqref{eq:ownlem_3_1}. In the case \( Z_1^m = Z_2^m \), we have
    \begin{align*}
        &\EW\left[ \dot\erf_m(Z_1^m) \, \Tilde{\sigma}(Z_2^m) \right] = \EW_{Y \sim \Normal(0, \Sigma_1^m)}[\dot\erf_m(Y) \, \Tilde{\sigma}(Y)]\\
        = &\int_\R  \frac{1}{\sqrt{2\pi}} \frac{1}{\sqrt{\Sigma_1^m}} \frac{2}{\sqrt{\pi}} m \cdot e^{-m^2 y^2} \cdot \Tilde{\sigma}(y) \cdot e^{-\frac{1}{2}\frac{y^2}{\Sigma_1^m}} \,\del y \\
        = & \int_\R \frac{1}{\sqrt{2\pi}} \frac{1}{\sqrt{\Sigma_1^m}} \frac{2}{\sqrt{\pi}} m \cdot \Tilde{\sigma}(y) \cdot e^{-\frac{1}{2}y^2\left( 1/\Sigma_1^m + 2m^2 \right)} \,\del y \\
        = & \frac{2}{\sqrt{\pi}} \frac{1}{\sqrt{\Sigma_1^m}} \frac{m}{\sqrt{1/\Sigma_1^m + 2m^2}} \int_\R \frac{1}{\sqrt{2\pi}} (1/\Sigma_1^m + 2m^2)^{\frac{1}{2}} \cdot \Tilde{\sigma}(y) \cdot e^{-\frac{1}{2}y^2\left( 1/\Sigma_1^m + 2m^2 \right)} \,\del y \\
        = &\frac{2}{\sqrt{\pi}} \frac{1}{\sqrt{1/m^2 + 2\Sigma_1^m}} \EW_{Y \sim \Normal\left(0, (1/\Sigma_1^m + 2m^2)^{-1}\right)}[\Tilde{\sigma}(Y)] \\
        \xrightarrow{m \to \infty} & \frac{2}{\sqrt{\pi}} \frac{1}{\sqrt{2\Sigma_1^\infty}} \Tilde\sigma(0) = \sqrt{\frac{2}{\pi}} (\Sigma_1^\infty)^{-\frac{1}{2}} \Tilde{\sigma}(0) , 
     \end{align*}
     again using the boundedness and continuity of $\Tilde\sigma$ in the last line. This proves Equation \eqref{eq:ownlem_3_2}.
\end{proof}
With \( \Sigma^m = \Sigma^{(L-1)}_{m;x,y} \) the lemma yields
\begin{align*}
    &\Tilde{\Sigma}_{1,2;\infty}^{(L)}(x,y) = \lim_{m \to \infty} \Tilde{\Sigma}_{1,2;m}^{(L)}(x,y) 
    = \lim_{m \to \infty} \sigma_w^2 \, \EW\left[ \dot\erf_m(Z_1^m) \, \Tilde{\sigma}(Z_2^m) \right] \notag \\
    = & \sigma_w^2 \, \sqrt{\frac{2}{\pi}}  \left( \Sigma_{\infty}^{(L-1)}(x,x) \right)^{-\frac{1}{2}} \EW_{Y \sim \Normal\big(0,\big|\Sigma^{(L-1)}_{\infty;x,y}\big|/\Sigma_{\infty}^{(L-1)}(x,x) \big)}[\Tilde{\sigma}(Y)] .
\end{align*}

In conclusion, we found that the analytic NTK for surrogate gradient learning with sign function is well-defined and non-singular for any bounded and Lipschitz continuous surrogate derivative. As in the special case of the error function, the term \( \Tilde\Sigma_{1,2;\infty}^{(L)}(x,y) \) can be expressed in terms of \( \Sigma_\infty^{(L-1)}(x,x) \) and the determinant \( \big|\Sigma_{\infty;x,y}^{(L-1)} \big|\).

\end{document}